\DeclareMathSymbol{\shortminus}{\mathbin}{AMSa}{"39}
\DeclareMathOperator\loc{loc}
\DeclareMathOperator\Aff{Aff}
\DeclareMathOperator\sgn{sgn}
\DeclareMathOperator\LU{LU}
\DeclareMathOperator\Id{Id}
\DeclareMathOperator\GL{GL}
\DeclareMathOperator\FNN{FNN}
\DeclareMathOperator\op{op}
\DeclareMathOperator\NF{NF}
\DeclareMathOperator\ReLU{ReLU}
\DeclareMathOperator\diag{diag}
\DeclareMathOperator\swish{swish}
\DeclareMathOperator\interior{interior}
\DeclareMathOperator\mon{mon}
\DeclareMathOperator\AEN{AE}
\DeclareMathOperator\ABS{ABS}
\DeclareMathOperator\STEP{STEP}
\DeclareMathOperator\PLCSM{PLCSM}
\DeclareMathOperator\PLC{PLC}
\DeclareMathOperator\FLOOR{FLOOR}
\renewenvironment{proof}[1][Proof]{\par\noindent{\bf #1. }\ignorespaces}{\hfill$\blacksquare$\par}
\crefname{lemma}{lemma}{lemmas}
\Crefname{lemma}{Lemma}{Lemmas}
\crefname{proposition}{proposition}{propositions}
\Crefname{proposition}{Proposition}{Propositions}
\crefname{corollary}{corollary}{corollaries}
\Crefname{corollary}{Corollary}{Corollaries}
\crefname{definition}{definition}{definitions}
\Crefname{definition}{Definition}{Definitions}
\crefname{example}{example}{examples}
\Crefname{example}{Example}{Examples}
\crefname{remark}{remark}{remarks}
\Crefname{remark}{Remark}{Remarks}
\begin{document}
\thispagestyle{empty}

\title{New advances in universal approximation with neural networks of minimal width}

\author{\name Dennis Rochau \email rochau@math.tu-berlin.de \\
       \addr Institute of Mathematics\\
       Technische Universität Berlin\\
       10629 Berlin, Straße des 17. Juni 136, Germany
       \AND
       \name Robin Chan \email chan@math.tu-berlin.de \\
       \addr Institute of Mathematics\\
       Technische Universität Berlin\\
       10629 Berlin, Straße des 17. Juni 136, Germany
       \AND
       \name Hanno Gottschalk \email gottschalk@math.tu-berlin.de \\
       \addr Institute of Mathematics\\
       Technische Universität Berlin\\
       10629 Berlin, Straße des 17. Juni 136, Germany}

\editor{}

\maketitle

\begin{abstract}
     We prove several universal approximation results at minimal or near-minimal width for approximation of $L^p(\mathbb{R}^{d_x}, \mathbb{R}^{d_y})$ and $C^0(\mathbb{R}^{d_x}, \mathbb{R}^{d_y})$ on compact sets. Our approach uses a unified coding scheme that yields explicit constructions relying only on standard analytic tools. We show that feedforward neural networks with two leaky ReLU activations $\sigma_\alpha$, $\sigma_{-\alpha}$ achieve the optimal width $\max\{d_x, d_y\}$ for $L^p$ approximation, while a single leaky ReLU $\sigma_\alpha$ achieves width $\max\{2, d_x, d_y\}$, providing an alternative proof of the results of \cite{cai2023achieve}. By generalizing to stepped leaky ReLU activations, we extend these results to uniform approximation of continuous functions while identifying sets of activation functions compatible with gradient-based training. Since our constructions pass through an intermediate dimension of one, they imply that autoencoders with a one-dimensional feature space are universal approximators. We further show that squashable activations combined with FLOOR achieve width $\max\{3, d_x, d_y\}$ for uniform approximation. We also establish a lower bound of $\max\{d_x, d_y\} + 1$ for networks when all activations are continuous and monotone and $d_y \leq 2d_x$. Moreover, we extend our results to invertible LU-decomposable networks, proving distributional universal approximation for LU-Net normalizing flows and providing a constructive proof of the classical theorem of Brenier and Gangbo on $L^p$ approximation by diffeomorphisms.
\end{abstract}

\begin{keywords}
  universal approximation, minimal width, distributional universal approximation, normalizing flows, LU-Net, leaky ReLU, squashable activations
\end{keywords}



\section{Introduction}

Feedforward neural networks (FNNs) are compositions of affine transformations and activation functions applied component-wise to their inputs. Their structure allows them to transform inputs of any dimension into outputs of any desired dimension. Given the success of FNNs across diverse applications involving the approximation of multidimensional functions, a natural question is whether this success reflects strong theoretical approximation capabilities.

Indeed, this turns out to be true: a rich body of mathematical results, known as \textit{universal approximation} (UA) theorems, demonstrates that FNNs with suitable activation functions can approximate key function classes used in practice, specifically continuous and \(L^p\) functions. Mathematically, universal approximation results characterize the closure of a function class corresponding to a neural network architecture with respect to a chosen norm. This is typically either an \(L^p\) norm with \(p\in[1,\infty)\) for \(L^p\) integrable functions, or the supremum norm for continuous functions. Usually, the function classes are restricted to compact subsets of the input space. For instance, if a neural network architecture is an \(L^p\) \textit{universal approximator}, then for any compact set \(\mathcal{K}\), the \(L^p\)-closure of the functions realizable by the architecture contains all \(L^p\) functions on \(\mathcal{K}\). Similarly, we call a function class a \textit{uniform universal approximator} if its closure with respect to the supremum norm contains all continuous functions. If the approximation holds on the entire input space rather than just compact sets, it is referred to as \textit{global universal approximation}.

\medskip
The classical results are the universal approximation theorems of \citet{Cybenko}, \citet{HornikEtAl89}, \citet{LESHNO1993861}, and \citet{Pinkus_1999}. These analyze \textit{shallow} FNNs, which have either a single hidden layer or a uniformly bounded number of hidden layers, but can have arbitrary width. \citet{HornikEtAl89} proved that shallow FNNs with non-decreasing sigmoidal functions universally approximate continuous functions. Independently, \citet{Cybenko} showed that shallow FNNs with continuous sigmoidal activations universally approximate continuous functions and \(L^1\) integrable functions on compact sets. \citet{LESHNO1993861} proved that shallow FNNs with a continuous activation function are universal approximators of continuous functions if and only if the activation is non-polynomial. \citet{Pinkus_1999} further investigated the degree of approximation and the width needed for UA at a given accuracy.

With the rise of deep neural networks, which have achieved strong results in various applications, universal approximation for \textit{narrow} FNNs has become an active research topic. Narrow FNNs have fixed width but arbitrary depth: they can have any number of layers, but the number of neurons per layer is bounded. Here, the \textit{width} of an FNN refers to the maximum number of neurons across all layers. For given activation functions, target function class, and norm, the \textit{minimal width} \(w_{\min}\) is the smallest width such that the corresponding FNNs can universally approximate the target class. By Lemma~1 of \cite{cai2023achieve}, the minimal width is lower bounded by \(\max\{d_x, d_y\}\) for any activation set, where \(d_x\) and \(d_y\) denote the input and output dimensions; we call this the \textit{optimal minimal width} \(w^*_{\min}\). Additionally, for a fixed FNN, we define the \textit{minimal interior dimension} \(d_{\min}\) as the minimum number of neurons across all hidden layers. An FNN with \(d_{\min}\) smaller than both the input and output dimensions can be viewed as an autoencoder: the network encodes inputs into a lower-dimensional representation until reaching the layer with \(d_{\min}\) neurons, then decodes back to the output dimension.

Modern universal approximation results for narrow FNNs not only establish the existence of finite width bounds for universal approximation, but also provide concrete bounds on \(w_{\min}\). To obtain an upper bound, approximations of arbitrary precision must be constructed for every function in the target class. In contrast, proving a lower bound only requires exhibiting a single function that the architecture cannot approximate below that width. Consequently, proving upper bounds for the minimal width is generally more challenging than proving lower bounds.

\medskip
An important connection exists between the minimal width and invertible network architectures. Invertible neural networks define bijective mappings, requiring square weight matrices and thus fixed-width layers throughout the network, including input and output layers. This structure enables generative learning via maximum likelihood training, as implemented in normalizing flows \citep{dinh2017density, kingma2018glow, Kobyzev_2021}. Consequently, universal approximation results for narrow networks, when the bounds are sufficiently tight, can also establish the universal approximation properties of normalizing flows.

A suitable architecture for this purpose is LU-Net \citep{chan2023lunet}, an invertible neural network designed to resemble a "vanilla" FNN as closely as possible. LU-Net directly inherits the universal approximation properties established for narrow networks while retaining the generative capabilities of normalizing flows. The expressiveness of many normalizing flow architectures has been studied extensively, including volume-preserving and coupling-based flows \citep{dinh2017density, teshima2020couplingbased, draxler2025universalityvolumepreservingcouplingbasednormalizing} and those based on neural ordinary differential equations \citep{chen2018neural, teshima2020universalapproximationpropertyneural, zhang2020approximationcapabilitiesneuralodes}; see also \citet{li2020deeplearningdynamicalsystems, ruiz2023neural, marzouk2024distribution}.

\begin{table}[t]
  \centering
  \begin{threeparttable}
  \scalebox{.92}{
  \begin{minipage}{\textwidth}
  \begin{tabular}{llll}
    \toprule
    Reference     & Function class     & Activation   & \(w_{\min}\) \& \(d_{\min}\) \\ 
    \midrule
    \cite{park2020minimum} & \(L^p(\mathbb{R}^{d_x},\mathbb{R}^{d_y})\) & ReLU & \(w_{\min}=\max\{d_x+1,d_y\}\), \(d_{\min}=1\)\\
    & \(L^p(\mathcal{K},\mathbb{R}^{d_y})\) & cont.\,non-poly. & \(w_{\min}\leq \max\{d_x+2,d_y+1\}\), \(d_{\min}=1\) \\
    \midrule
    \cite{cai2023achieve} & \(L^p(\mathcal{K},\mathbb{R}^{d_y})\) & Arbitrary &  \(w_{\min}^{*}:=\max\{d_x,d_y\}\leq w_{\min}\)  \\
    & \(L^p(\mathcal{K},\mathbb{R}^{d_y})\) & LReLU+ABS & \(w_{\min}=w_{\min}^{*}=\max\{d_x,d_y\}\) \\
    & \(L^p(\mathcal{K},\mathbb{R}^{d_y})\) & LReLU & \(w_{\min}=\max\{2,d_x,d_y\}\) \\
    \midrule
    \cite{kim2024minimumwidthuniversalapproximation} & \(L^p(\mathcal{K},\mathbb{R}^{d_y})\) & ReLU-Like$^{\diamond}$ & \(w_{\min}= \max\{2,d_x,d_y\}\), \(d_{\min}=1\) \\
    \midrule
    \cite{shin2025minimumwidthuniversalapproximation} & \(L^p(\mathcal{K},\mathbb{R}^{d_y})\) & Squashable$^{\S}$ & \(w_{\min}= \max\{2,d_x,d_y\}\), \(d_{\min}=1\) \\
    \midrule
    \textbf{Ours} (Theorem \ref{Theorem-Main1}) & \(L^p(\mathcal{K},\mathbb{R}^{d_y})\) & G-LReLU$^{\bigstar}$ & \(w_{\min}= \max\{d_x,d_y\}\), \(d_{\min}=1\) \\
    (Theorem \ref{Theorem-Main1}) & \(L^p(\mathcal{K},\mathbb{R}^{d_y})\) & LReLU & \(w_{\min}= \max\{2,d_x,d_y\}\), \(d_{\min}=1\) \\
    \bottomrule
  \end{tabular}
  \begin{tablenotes}
      \footnotesize
      \item[$^{\diamond}$] ReLU-Like activations as defined by \citet{kim2024minimumwidthuniversalapproximation}; this class is a subset of squashable functions (see Remark~\ref{Remark-Squashable_functions}).
      \item[$^{\S}$] Squashable activation functions as defined by \citet{shin2025minimumwidthuniversalapproximation} (see Definition~\ref{Definition-Squashable_function}), a large class that includes all non-affine analytic functions as well as certain piecewise functions such as LReLU and Hardswish.
      \item[$^{\bigstar}$] LReLU variants as in Definition~\ref{Definition-LReLU}. Here these refer to single-parameter sets (fixed \(\alpha \in (0,1) \cup (1,\infty)\)): LReLU = \(\{\sigma_\alpha\}\), G-LReLU = \(\{\sigma_{-\alpha}, \sigma_\alpha\}\).
  \end{tablenotes}
  \end{minipage}
  }
  \caption{State-of-the-art results on the minimal width and interior dimension of FNNs for $L^p$ universal approximation, \(p\in[1,\infty)\). The optimal minimal width is \(w_{\min}^*:=\max\{d_x,d_y\}\), where \(d_x\) and \(d_y\) denote the input and output dimensions. Here \(\mathcal{K}\subset\mathbb{R}^{d_x}\) is a non-empty compact set, ABS denotes the absolute value, and ``cont.\ non-poly.'' refers to continuous non-polynomial activations. 
  }
  \label{Table-UA_integrable_d_min}
  \end{threeparttable}
\end{table}

    \medskip

    In recent years, several important UA results have been established for narrow FNNs of arbitrary depth. We discuss the development of results for universal approximation of \(L^p(\mathbb{R}^{d_x},\mathbb{R}^{d_y})\) with respect to the \(L^p\) norm in what follows. Table~\ref{Table-UA_integrable_d_min} summarizes the current state-of-the-art. Early work by \citet{Lu} established upper and lower bounds for the minimum width of ReLU FNNs for global universal approximation of functions in \(L^1(\mathbb{R}^{d_x},\mathbb{R}^{d_y})\), while \citet{lyons_kidger} proved an upper bound for \(L^p(\mathbb{R}^{d_x},\mathbb{R}^{d_y})\). \citet{park2020minimum} fully characterized the minimum width for ReLU FNNs, showing \(w_{\min}=\max\{d_x+1,d_y\}\) for global universal approximation of \(L^p(\mathbb{R}^{d_x},\mathbb{R}^{d_y})\), achieving tighter bounds than previous work. For continuous non-polynomial activation functions, they also derived an upper bound \(w_{\min}\leq\max\{d_x+2,d_y+1\}\) for universal approximation of \(L^p(\mathcal{K},\mathbb{R}^{d_y})\) on compact sets \(\mathcal{K}\subset \mathbb{R}^{d_x}\). \citet{cai2023achieve} established a lower bound of \(\max\{d_x,d_y\}\) for the minimal width on compact domains and showed that LReLU activations achieve \(w_{\min}=\max\{2,d_x,d_y\}\), attaining optimal width when \(\max\{d_x,d_y\}\geq 2\). By augmenting LReLU with the absolute value function, they further achieved \(w_{\min}=\max\{d_x,d_y\}\), optimal even in the one-dimensional case. \citet{kim2024minimumwidthuniversalapproximation} showed that ReLU FNNs of width $\max\{2,d_x,d_y\}$ are universal approximators of $L^p(\mathbb{R}^{d_x},\mathbb{R}^{d_y})$ on compact sets, and then generalized this result to show that FNNs with any ReLU-like activation also achieve $w_{\min}=\max\{2,d_x,d_y\}$ in their Theorem 2. Most recently, \citet{shin2025minimumwidthuniversalapproximation} proved that FNNs with squashable activations achieve $w_{\min}=\max\{2,d_x,d_y\}$ for $L^p$ universal approximation on compact sets. Since the class of squashable functions includes all non-affine analytic functions and piecewise functions such as LReLU, this result generalizes Theorem 2 of \citet{kim2024minimumwidthuniversalapproximation} for ReLU-Like activations to a much larger class of activations, though it does not cover ReLU itself.

\begin{table}[ht]
    \centering
    \begin{threeparttable}
    \scalebox{.83}{
    \begin{minipage}{\textwidth}
  \begin{tabular}{llll}
    \toprule
    Reference     & Function class     & Activation   & \(w_{\min}\) \& \(d_{\min}\) \\
    \midrule
    \cite{hanin2018approximating} & \(C(\mathcal{K},\mathbb{R}^{d_y})\) & ReLU & \(d_x+1\leq w_{\min}\leq d_x+d_y\) \\
    \midrule 
    \cite{lyons_kidger} & \(C(\mathcal{K},\mathbb{R}^{d_y})\) & cont.\,non-poly.$^{1}$ & \(w_{\min}\leq d_x+d_y+1 \)\\ 
    & \(C(\mathcal{K},\mathbb{R}^{d_y})\) & non-aff.\,poly. & \(w_{\min}\leq d_x+d_y+2 \)\\
    \midrule
    \cite{park2020minimum} & \(C(\mathcal{K},\mathbb{R}^{d_y})\) & ReLU+STEP & \(w_{\min}=\max\{d_x+1,d_y\}\), \(d_{\min}=1\) \\
    & \(C([0,1],\mathbb{R}^2)\) & ReLU & \(\max\{d_x+1,d_y\}< w_{\min}=3\)\\
    \midrule 
    \cite{cai2023achieve} & \(C(\mathcal{K},\mathbb{R}^{d_y})\) &  Arbitrary & \(w_{\min}^{*}:=\max\{d_x,d_y\}\leq w_{\min}\)\\ 
     & \(C(\mathcal{K},\mathbb{R}^{d_y})\) & ReLU+\(\FLOOR\) & \(w_{\min}=\max\{2,d_x,d_y\}\)\\
    \midrule
    \cite{li2024minimumwidthleakyreluneural} & \(C(\mathcal{K},\mathbb{R})\) & LReLU & \(w_{\min} =d_x+1\)\\ 
    \midrule
    \textbf{Ours} (Theorem~\ref{Theorem-Main_sup}) & \(C(\mathcal{K},\mathbb{R}^{d_y})\) &  SG-LReLU$^{\bigstar}$ & \(w_{\min}=\max\{d_x,d_y\}\), \(d_{\min}=1\)\\
     (Theorem~\ref{Theorem-Main_sup}) & \(C(\mathcal{K},\mathbb{R}^{d_y})\) &  G-LReLU$^{\bigstar}$+FLOOR & \(w_{\min}=\max\{d_x,d_y\}\), \(d_{\min}=1\)\\
     (Theorem~\ref{Theorem-Main_sup})& \(C(\mathcal{K},\mathbb{R}^{d_y})\) &  S-LReLU$^{\bigstar}$ & \( w_{\min}=\max\{2,d_x,d_y\}\), \(d_{\min}=1\)\\
     (Theorem~\ref{Theorem-Main_sup})& \(C(\mathcal{K},\mathbb{R}^{d_y})\) &  LReLU+FLOOR & \( w_{\min}=\max\{2,d_x,d_y\}\), \(d_{\min}=1\)\\
     (Theorem~\ref{Theorem-Squashable})& \(C(\mathcal{K},\mathbb{R}^{d_y})\) &  Squashable$^{\S}$+FLOOR+Id & \( w_{\min}\leq\max\{3,d_x,d_y\}\), \(d_{\min}=1\)\\
    (Theorem~\ref{Theorem-Squashable})& \(C(\mathcal{K},\mathbb{R}^{d_y})\) &  STEP+FLOOR+Id & \(w_{\min}\leq \max\{3,d_x,d_y\}\), \(d_{\min}=1\)\\
     (Theorem~\ref{Theorem-Main5}) & $^{\ddag}$\(C(\mathcal{K},\mathbb{R}^{d_y})\) &  $C_{\mon}^{0}(\mathbb{R},\mathbb{R})^{\dag}$ & \(\max\{d_x,d_y\} +1\leq w_{\min}\)\\
    \bottomrule
  \end{tabular}
    \begin{tablenotes}
        \footnotesize
        \item[$^{1}$] Continuous non-polynomial functions \(f:\mathbb{R}\rightarrow \mathbb{R}\) that are differentiable at least at one point \(z\in \mathbb{R}\) with \(f'(z)\neq 0\).
        \item[$^{\bigstar}$] LReLU variants as in Definition~\ref{Definition-LReLU}. Here these refer to single-parameter sets (fixed \(\alpha \in (0,1) \cup (1,\infty)\)): LReLU = \(\{\sigma_\alpha\}\), G-LReLU = \(\{\sigma_{-\alpha}, \sigma_\alpha\}\), S-LReLU = \(\mathcal{F}_{\alpha,\mathfrak{s}}\), SG-LReLU = \(\mathcal{F}_{\pm\alpha,\mathfrak{s}}\).
        \item[$^{\S}$] Squashable activation functions as defined by \citet{shin2025minimumwidthuniversalapproximation} (see Definition~\ref{Definition-Squashable_function}), a large class that includes all non-affine analytic functions as well as certain piecewise functions such as LReLU and Hardswish.
        \item[$^{\ddag}$] The lower bound holds only for \(d_y\leq 2d_x\) and on compact sets \(\mathcal{K}\subset \mathbb{R}^{d_x}\) with non-empty interior.
        \item[$^{\dag}$] The set \(C^0_{\mon}(\mathbb{R},\mathbb{R})\) consists of all continuous monotone functions. By Remark~\ref{Lemma-Relation_one_to_one_mon}, these are exactly the continuous functions for which a uniformly converging sequence of one-to-one functions exists.
    \end{tablenotes}
    \end{minipage}
    }
    \caption{Summary of known results for the minimal width of FNNs with different activation functions for uniform universal approximation of continuous functions. The minimal interior dimension is included only when non-trivial, i.e., when \(1 \leq d_{\min} < w_{\min}\). Here \(\mathcal{K}\subset \mathbb{R}^{d_x}\) denotes a non-empty compact set.}
    \label{Table-UA_continuous_d_min} 
    \end{threeparttable}
\end{table}

    \medskip
    In this paragraph, we discuss relevant results for uniform universal approximation of continuous functions \(C^0(\mathcal{K},\mathbb{R}^{d_y})\) on compact sets \(\mathcal{K}\subset \mathbb{R}^{d_x}\). Table~\ref{Table-UA_continuous_d_min} summarizes the state-of-the-art results. \citet{hanin2018approximating} derived that \(d_x+1\leq w_{\min}\leq d_x+d_y\) for uniform universal approximation with ReLU FNNs. \citet{lyons_kidger} showed that FNNs with continuous non-polynomial activations achieve \(w_{\min}\leq d_x+d_y+1\), while non-affine polynomial activations achieve \(w_{\min} \leq d_x+d_y+2\). \citet{park2020minimum} showed that FNNs with ReLU+STEP activations achieve \(w_{\min}=\max\{d_x+1,d_y\}\). \citet{cai2023achieve} established that FNNs with ReLU+FLOOR activations achieve $w_{\min}=\max\{2,d_x,d_y\}$, which is optimal when $\max\{d_x,d_y\}\geq 2$. Moreover, \citet{li2024minimumwidthleakyreluneural} showed that LReLU FNNs of width $d_x+1$ can universally approximate $C^0(\mathbb{R}^{d_x},\mathbb{R})$ on compact sets. 
    
    \citet{johnson2018deep} derived a lower bound of \(d_x+1\) on the minimal width for uniformly continuous activations that can be uniformly approximated by one-to-one (injective and continuous) functions when \(d_y=1\), which generalizes directly to \(d_y \leq d_x\). \citet{kim2024minimumwidthuniversalapproximation} extended this to the case \(d_x < d_y \leq 2d_x\), showing the minimal width is lower bounded by \(d_y+1\) for continuous activations that can be uniformly approximated by one-to-one functions, relaxing Johnson's uniformly continuous requirement to merely continuous.

    \paragraph{Our main contributions} are divided into four parts. In the first part, we prove Theorem~\ref{Theorem-Main1}, which establishes upper bounds of \(\max\{2,d_x,d_y\}\) and \(\max\{d_x,d_y\}\) for universal approximation of \(L^p(\mathcal{K},\mathbb{R}^{d_y})\) on arbitrary compact sets \(\mathcal{K} \subset \mathbb{R}^{d_x}\) with FNNs equipped with LReLUs or generalized LReLUs, combining LReLUs with positive and negative parameters, respectively. This theorem can be seen as an alternative proof of results of \cite{cai2023achieve}, however, our proof relies only on elementary analysis and explicitly constructs the approximation sequences using the coding scheme of \citet{park2020minimum}.
    This explicit construction also establishes that the approximators can be realized with minimal interior dimension \(d_{\min}=1\). Building on this foundation, in Theorem \ref{Theorem-Main_sup} we show that by adapting LReLU activations with suitable discontinuities, either by including the FLOOR activation or by adding a step at zero, we can achieve minimal widths of \(\max\{d_x,d_y\}\) or \(\max\{2,d_x,d_y\}\) for uniform universal approximation of \(C^0(\mathbb{R}^{d_x},\mathbb{R}^{d_y})\) on compact sets, depending on whether generalized or standard LReLUs are used. Furthermore, in Theorem \ref{Theorem-Squashable} we prove that FNNs with squashable+FLOOR+\(\Id\) activations achieve minimal width \(\max\{3,d_x,d_y\}\) for uniform universal approximation of continuous functions by exploiting the defining properties of squashable functions to construct STEP functions precisely, then summing suitable STEP functions. The same bound holds for STEP+FLOOR+\(\Id\) FNNs.

\medskip
    In the second part, we generalize the strong universal approximation results of Theorem~\ref{Theorem-Main1} and Theorem 1 of \citet{shin2025minimumwidthuniversalapproximation} to show that LU-decomposable neural networks with invertible leaky ReLU and invertible squashable activations can universally approximate \(L^p(\mathcal{K},\mathbb{R}^{d})\) on arbitrary compact sets \(\mathcal{K} \subset \mathbb{R}^d\). 
    Building on these results, we prove that the normalizing flow LU-Net \citep{chan2023lunet} can transform any absolutely continuous source distribution into a sequence of distributions that converges in law to any predefined target distribution when using LReLU or invertible squashable activations. These results are stated in Theorems~\ref{Theorem-Main3} and~\ref{Theorem-DUAP_squashable}. We complement these theoretical results with numerical experiments demonstrating that the theoretically optimal LReLU activation with a single parameter does not necessarily yield the best numerical performance, and that sets of LReLUs with continuous parameters can improve performance.

\medskip
    In the third part, we prove Theorem~\ref{Theorem-Main4}, which establishes that the set of smooth diffeomorphisms in the form of smoothed LU-decomposable LReLU neural networks, where the LReLU is smoothed with suitable mollifiers, are universal approximators of \(L^p(\mathcal{K},\mathbb{R}^d)\) on any compact set \(\mathcal{K}\subset \mathbb{R}^d\).
    Consequently, Theorem~\ref{Theorem-Main4} immediately implies the well-known Theorem 2.5 (i) of \citet{Brenier_Gangbo}, which establishes that diffeomorphisms are universal approximators of \(L^p(\mathcal{K},\mathbb{R}^d)\) on compact sets. This demonstrates that neural networks are not merely objects of practical interest, but can also serve as theoretical tools to prove complex mathematical results.

\medskip
    In the fourth part, in Theorem~\ref{Theorem-Main5} we establish that \(\max\{d_x,d_y\}+1\) is a lower bound for uniform universal approximation of continuous functions with FNNs that employ monotone continuous activation functions when \(d_y\leq 2d_x\).
    This result strengthens and unifies Theorem 1 of \citet{johnson2018deep} and Theorem 3 of \citet{kim2024minimumwidthuniversalapproximation}. For a nuanced discussion of how our proof relates to these prior works, see Remark~\ref{Remark-Johnson}. An important consequence is that FNNs with commonly used monotone activation functions \citep{dubey2022activationfunctionsdeeplearning,Activation_functions}, such as ReLU, LReLU, sigmoid, hyperbolic tangent, and logistic sigmoid, cannot achieve uniform universal approximation of continuous functions if their width is restricted to \(\max\{d_x,d_y\}\), when \(d_y\leq 2d_x\).

\medskip
        To illustrate the core of ideas and make them more attainable to the reader, we have implemented our LReLU coding scheme constructions in 1D, i.e. target functions \(f:[0,1] \rightarrow [0,1]\). It includes Jupyter notebooks that guide interested readers through various approximation concepts and can be readily adapted 
        to visualize or evaluate these ideas for different functions, or to explore the effect of varying accuracy parameters. Moreover, we conduct numerical 2D experiments with LU-Net, a generative model based on an invertible architecture, to demonstrate the practical impact of the improved capacity presented in this paper.
        The implementations are publicly available at \url{https://github.com/DennisRTUB/universal-approximation-minimal-width.git}.

\paragraph{Structure of the paper}
    Section \ref{Section-Notation_and_main_results} is split into four subchapters corresponding to the contributions listed above. Each subchapter introduces the relevant notation for one of the associated theorems, explains its significance, and presents key implications. Section \ref{Section-coding_scheme} defines and motivates the coding scheme of \citet{park2020minimum}, which is essential for the constructions in our Theorems \ref{Theorem-Main1}, \ref{Theorem-Main_sup}, and \ref{Theorem-Squashable}, and shows that it can uniformly approximate continuous functions on the unit cube. Section \ref{Section-Piecewise_linear} analyzes the capabilities of LReLUs for approximating piecewise linear and zig-zag functions and describes how width-one FNNs using LReLUs with one or two parameters can uniformly approximate LReLUs with arbitrary parameters.
    
    Section \ref{Section-Approximating_coding_scheme} provides several preliminary results showing how leaky ReLU FNNs can approximate the components of the coding scheme. The section is organized according to these components: we first construct the encoder, then the memorizer, and finally the decoder. This is our longest and most detailed section, as our theorems rely on several distinct coding-scheme constructions that must be established rigorously.
    
    In Section \ref{Section-Proof_Main1}, we combine the results of previous sections \ref{Section-coding_scheme}--\ref{Section-Approximating_coding_scheme} 
    to give detailed proofs of Theorems \ref{Theorem-Main1} and \ref{Theorem-Main_sup}. Section \ref{Section-Squash_floor_coding_scheme} then shows that width-one squashable+FLOOR FNNs can be used to exactly construct STEP functions on compact sets and uses this to construct the memorizer exactly, while the encoder and decoder are implemented directly with FLOOR+$\Id$ networks. In Section \ref{Section-Proof_Theorem_Squashable}, we use these constructions to establish that squashable+FLOOR+$\Id$ and STEP+FLOOR+$\Id$ FNNs can uniformly approximate the continuous functions $C^0(\mathbb{R}^{d_x},\mathbb{R}^{d_y})$ on compact sets with width $\max\{3,d_x,d_y\}$.
    
    Section \ref{Section-Proof_main2} proves Theorems \ref{Theorem-Main2} and \ref{Theorem-LU_Squashable}, which extend universal approximation results for $L^p$-integrable functions with LReLU and squashable activations---given by our Theorem \ref{Theorem-Main1} and Theorem~1 of \cite{shin2025minimumwidthuniversalapproximation}---from FNNs to LU-decomposable networks. We complement this theoretical analysis with numerical experiments in Section~\ref{Section-LU_Net_numerical_results}, comparing LU-Net with a single fixed LReLU parameter against LU-Net with learnable LReLU parameters across neurons. The experiments demonstrate that allowing continuous LReLU parameters can significantly improve performance and convergence, despite the single-parameter case being sufficient for universal approximation.
    
    We then apply these results in Section \ref{Section-Proof_main3} to show that the normalizing flow LU-Net has the distributional universal approximation property with suitable LReLU or invertible squashable activations. Section \ref{Section-Proof_main4} further refines these results by smoothing the LReLU activations with mollifiers and generalizing Theorem \ref{Theorem-Main2} to obtain a class of smooth diffeomorphisms realized by smoothed LU-decomposable LReLU networks that universally approximate $L^p(\mathbb{R}^{d},\mathbb{R}^d)$. 
    
    Finally, Section \ref{Section-Proof_main5} provides a formal proof of Theorem \ref{Theorem-Main5}, showing that the minimal width for uniform universal approximation of the continuous functions $C^0(\mathbb{R}^{d_x},\mathbb{R}^{d_y})$ using continuous and monotone activations must be at least $\max\{d_x,d_y\}+1$ whenever $d_y \leq 2d_x$. A comprehensive conclusion and outlook on universal approximation for narrow neural networks and distributional universal approximation with LU-Net is given in Section \ref{Section-Conclusion_and_outlook}.

\section{Notation and main results}\label{Section-Notation_and_main_results}

    \subsection{Leaky ReLU neural networks as universal approximators}

    \begin{definition}
        We use the notation \(\mathbb{N}:=\{1,2,\hdots\}\) and \(\mathbb{N}_0:=\{0\}\cup \mathbb{N}\). Moreover, let \(k,n\in \mathbb{N}_{0}\) with \(k\leq n\), then \([k,n]:=\{k,...,n\}\) and \([n]:=[1,n]:=\{1,..,n\}\). Note that we use this exclusively for denoting ranges of indices and we still use the notation \([a,b]\) for intervals from \(a\in \mathbb{R}\) to \(b\in \mathbb{R}\). Thus it should be clear by the context, which of both is meant.
    \end{definition}

    \begin{definition}\label{Definition-Affine_transformations}
        We define the set of affine transformation from \(\mathbb{R}^{d_x}\) to \(\mathbb{R}^{d_y}\) as \(\Aff(d_x,d_y)=\{W:\mathbb{R}^{d_x}\rightarrow \mathbb{R}^{d_y}, W(x)=Ax+b \,|A\in \mathbb{R}^{d_y\times d_x},b\in \mathbb{R}^{d_y} \}\) and for \(d_x=d_y=d\) we define the shorthand \(\Aff(d):=\Aff(d,d)\). Moreover, we define the set of \(d\)-dimensional invertible affine transformations as \(\Aff_{\GL}(d)=\{W:\mathbb{R}^{d}\rightarrow \mathbb{R}^{d}, W(x)=Ax+b \,|A\in \GL(d),b\in \mathbb{R}^{d} \}\), where \(\GL(d)\) denotes the set of invertible $d\times d$ matrices.
    \end{definition}

    Now, we give a detailed definition of FNNs of different widths and introduce our neural network notation for this paper. 

    \begin{definition}\label{Definition-FNN}
        A feedforward neural network (FNN) with activations in a set \(\mathcal{A}\) is a function of the form
        \begin{align*}
            \phi = W_n \circ \sigma_{n-1} \circ W_{n-1} \circ \cdots \circ \sigma_1 \circ W_1\;,
        \end{align*}
        where \(n \in \mathbb{N}\) is the \textit{depth}, \(W_i \in \Aff(d_{i-1}, d_i)\) for \(i \in [n]\) with \(d_j \in \mathbb{N}\) for \(j \in [0,n]\), and the \(i\)-th activation layer is \(\sigma_i := (\sigma_1^{(i)}, \ldots, \sigma_{d_i}^{(i)})^T\) for \(i \in [n-1]\) with \(\sigma_k^{(i)} \in \mathcal{A}\) for \(k \in [d_i]\). Here \(d_x := d_0\) is the \textit{input dimension} and \(d_y := d_n\) the \textit{output dimension}, so \(\phi: \mathbb{R}^{d_x} \to \mathbb{R}^{d_y}\). When \(\sigma_i: \mathbb{R} \to \mathbb{R}\), we interpret it as acting component-wise. For small activation sets, e.g.\ \(\{\sigma_1, \sigma_2\}\), we refer to such networks as \(\sigma_1 + \sigma_2\) FNNs.
        
        \medskip
        The \textit{width} of \(\phi\) is \(w_{\max}^\phi := \max\{d_i \mid i \in [0,n]\} \geq \max\{d_x, d_y\}\), where the inequality follows from Lemma~1 of \cite{cai2023achieve}. The \textit{minimal interior dimension} is \(d_{\min}^\phi := \min\{d_i \mid i \in [n-1]\}\) if \(n > 1\), and \(d_{\min}^\phi := \min\{d_x, d_y\}\) if \(n = 1\). For a set \(\mathcal{M}\) of FNNs, we define \(w_{\max}^{\mathcal{M}} := \sup_{\phi \in \mathcal{M}} w_{\max}^\phi\) and \(d_{\min}^{\mathcal{M}} := \sup_{\phi \in \mathcal{M}} d_{\min}^\phi\).
        
        \medskip
        We denote by \(\FNN_{\mathcal{A}}^{(w)}(d_x, d_y; n)\) the set of FNNs with input dimension \(d_x\), output dimension \(d_y\), activations in \(\mathcal{A}\), width at most \(w\), and depth \(n\). The set of all such FNNs of arbitrary depth is \(\FNN_{\mathcal{A}}^{(w)}(d_x, d_y) := \bigcup_{n \in \mathbb{N}} \FNN_{\mathcal{A}}^{(w)}(d_x, d_y; n)\). We write \(\FNN_{\mathcal{A}}^{(w,k)}(d_x, d_y)\) for the subset with \(d_{\min}^\phi = k\).
    \end{definition}

        Autoencoders are a special type of FNN that use an encoder to transform a high-dimensional input into a lower-dimensional feature representation. A decoder then maps this feature back to a high-dimensional output that should approximate a target function applied to the original input. Since encoders and decoders are typically FNNs, we formally define autoencoders as FNNs with \(d = d_x = d_y > 1\) and minimal interior dimension less than \(d\).

    \begin{definition}\label{Definition-Autoencoders}
        Let the input and output dimension be given by \(d = d_x = d_y\) with \(d \geq 2\). We define the set of autoencoders of dimension \(d\) with feature dimension \(d_{\min} < d\) and activations in \(\mathcal{A} \subset \{f \mid f:\mathbb{R} \to \mathbb{R}\}\) as 
        \begin{align*}
            \AEN_{\mathcal{A}}^{(d_{\min})}(d) := \FNN_{\mathcal{A}}^{(d, d_{\min})}(d, d)\;.
        \end{align*}
        Thus any autoencoder \(\phi \in \AEN_{\mathcal{A}}^{(d_{\min})}(d)\) is a narrow neural network from \(\mathbb{R}^d\) to \(\mathbb{R}^d\) with width bounded by \(d\) and minimal interior dimension \(d_{\min}^\phi = d_{\min}\).
    \end{definition}
    
    The next definition gives the formal definition of \(L^p\) and uniform universal approximators.
    
    \begin{definition}\label{Definition-Universality_Lp_sup}
        For \(p \in [1,\infty)\) and \(d_x, d_y \in \mathbb{N}\), let \(\mathcal{M}\) and \(\mathcal{G}\) be sets of measurable functions from \(\mathbb{R}^{d_x}\) to \(\mathbb{R}^{d_y}\), with \(\mathcal{G} \subset L^p(\mathbb{R}^{d_x}, \mathbb{R}^{d_y})\). For \(\mathcal{X} \subset \mathbb{R}^{d_x}\) and a bounded function \(f: \mathcal{X} \to \mathbb{R}^{d_y}\), we define the supremum norm by
        \[
            \lVert f \rVert_{\mathcal{X},\sup} := \sup_{x \in \mathcal{X}} \lVert f(x) \rVert_{\max} := \sup_{x \in \mathcal{X}} \max_{i \in [d_y]} |f_i(x)|\;,
        \]
        and write \(\lVert f \rVert_{\sup} := \lVert f \rVert_{\mathbb{R}^{d_x},\sup}\). For \(f \in L^p(\mathcal{X}, \mathbb{R}^{d_y})\), we define the \(L^p\) norm by
        \[
            \lVert f \rVert_{\mathcal{X},p} := \left( \int_{\mathcal{X}} \lVert f(x) \rVert_p^p \, dx \right)^{1/p}, \quad \text{where} \quad \lVert x \rVert_p := \left( \sum_{i=1}^{d_y} |x_i|^p \right)^{1/p} \text{ for } x \in \mathbb{R}^{d_y}\;,
        \]
        and write \(\lVert f \rVert_p := \lVert f \rVert_{\mathbb{R}^{d_x},p}\).
        
        \medskip
        We call \(\mathcal{M}\) an \(L^p\) \textit{universal approximator} of \(\mathcal{G}\) on compact sets if for every \(g \in \mathcal{G}\), every compact \(\mathcal{K} \subset \mathbb{R}^{d_x}\), and every \(\epsilon > 0\), there exists \(f \in \mathcal{M}\) such that
        \[
            \lVert f - g \rVert_{\mathcal{K},p} = \left( \int_{\mathcal{K}} \lVert f(x) - g(x) \rVert_p^p \, dx \right)^{1/p} < \epsilon\;.
        \]
        If the same holds for the supremum norm, we call \(\mathcal{M}\) a \textit{uniform universal approximator} of \(\mathcal{G}\) on compact sets.
        
        \medskip
        When we say \(\mathcal{M}\) is an \(L^p\) universal approximator without specifying \(\mathcal{G}\), we mean universal approximation of \(\mathcal{G} = L^p(\mathbb{R}^{d_x}, \mathbb{R}^{d_y})\) on compact sets. Similarly, if we say \(\mathcal{M}\) is a uniform universal approximator without specifying \(\mathcal{G}\), we mean universal approximation of \(\mathcal{G} = C^0(\mathbb{R}^{d_x}, \mathbb{R}^{d_y})\) on compact sets. Throughout this paper, universal approximation always refers to approximation on compact sets unless explicitly stated otherwise.
    \end{definition}

        Having introduced the terms minimal width and minimal interior dimension for universal approximation in the introduction, we now give their formal mathematical definition.

    \begin{definition}\label{Definition-w_d_min}
        Let \(\mathcal{M}\) be a set of FNNs that universally approximates \(\mathcal{G}\) with respect to a given norm \(\lVert \cdot \rVert\), which is the \(L^p\) norm in the case of \(\mathcal{G} = L^p(\mathbb{R}^{d_x}, \mathbb{R}^{d_y})\) and the supremum norm in the case of \(\mathcal{G} = C^0(\mathbb{R}^{d_x}, \mathbb{R}^{d_y})\). We define the \textit{minimal width} of \(\mathcal{M}\) for universal approximation of \(\mathcal{G}\) as
        \begin{align*}
            w_{\min}(\mathcal{M}, \mathcal{G}, \lVert \cdot \rVert) := \inf\{w_{\max}^{\mathcal{U}} \mid \mathcal{U} \subset \mathcal{M} \text{ is a universal approximator of } \mathcal{G} \text{ on compact sets}\}\;,
        \end{align*}
        and the \textit{minimal interior dimension} of \(\mathcal{M}\) for universal approximation of \(\mathcal{G}\) as
        \begin{align*}
            d_{\min}(\mathcal{M}, \mathcal{G}, \lVert \cdot \rVert) := \inf\{d_{\min}^{\mathcal{U}} \mid \mathcal{U} \subset \mathcal{M} \text{ is a universal approximator of } \mathcal{G} \text{ on compact sets}\}\;.
        \end{align*}
        Since it is usually clear from context what \(\mathcal{M}\), \(\mathcal{G}\), and \(\lVert \cdot \rVert\) are, we often write simply \(w_{\min}\) and \(d_{\min}\).
    \end{definition}
    
    We now introduce the notation for classes of leaky ReLUs used throughout the paper.

    \begin{definition}\label{Definition-LReLU}
        For \(\alpha \in \mathbb{R} \setminus \{-1, 0, 1\}\), the corresponding \textit{leaky rectified linear unit} (LReLU) is defined by
        \begin{align*}
            \sigma_\alpha: \mathbb{R} \to \mathbb{R}, \quad \sigma_\alpha(x) = \begin{cases}
                \alpha x & x < 0\\
                x & x \geq 0
            \end{cases}\;.
        \end{align*}
        For most of our results, we consider LReLU parameters in \((-1,1) \setminus \{0\}\). We define the following sets:
        \begin{itemize}
            \item \(\mathcal{F}_+ := \{\sigma_\alpha \mid \alpha \in (0,1)\}\), the set of \textit{invertible LReLUs},
            \item \(\mathcal{F}_- := \{\sigma_{-\beta} \mid \beta \in (0,1)\}\), and
            \item \(\mathcal{F}_\pm := \mathcal{F}_- \cup \mathcal{F}_+\), the set of \textit{generalized LReLUs} (G-LReLUs).
        \end{itemize}
        
        For \(\alpha \in [0,\infty)\) and \(s \in \mathbb{R}\), we define the \textit{stepped leaky ReLU} (S-LReLU) by
        \begin{align*}
            \rho_{\alpha,s}: \mathbb{R} \to \mathbb{R}, \quad \rho_{\alpha,s}(x) = \begin{cases}
                \alpha x & x < 0 \\
                x + s & x \geq 0
            \end{cases}\;.
        \end{align*}
        We define the following sets of S-LReLUs:
        \begin{itemize}
            \item \(\mathcal{F}_{+,\mathfrak{s}} := \{\rho_{\alpha,s} \mid \alpha \in (0,1),\, s \in [0,1]\}\), the set of S-LReLUs,
            \item \(\mathcal{F}_{\pm,\mathfrak{s}} := \mathcal{F}_{+,\mathfrak{s}} \cup \mathcal{F}_-\), the set of \textit{stepped generalized LReLUs} (SG-LReLUs).
        \end{itemize}
        For a fixed \(\alpha \in (0,1) \cup (1,\infty)\), we define the single-parameter variants:
        \begin{itemize}
            \item \(\mathcal{F}_{\alpha,\mathfrak{s}} := \{\rho_{\alpha,s} \mid s \in [0,1]\}\),
            \item \(\mathcal{F}_{\pm\alpha,\mathfrak{s}} := \{\sigma_{-\alpha}\} \cup \mathcal{F}_{\alpha,\mathfrak{s}}\).
        \end{itemize}
        Finally, we define sets of S-LReLUs with step size restricted to \(\{0,1\}\). These sets require an expanded parameter range \(\alpha \in (0,\infty)\) to enable the construction of arbitrary step sizes from fixed ones (see Corollary~\ref{Corollary-UAP_sup_fixed_stepsize}):
        \begin{itemize}
            \item \(\mathcal{F}_{+,1}^{*} := \{\rho_{\alpha,s} \mid \alpha \in (0,\infty),\, s \in \{0,1\}\}\),
            \item \(\mathcal{F}_{\pm,1}^{*} := \mathcal{F}_{+,1}^{*} \cup \mathcal{F}_-\).
        \end{itemize}
        Note that \(\mathcal{F}_+ \subset \mathcal{F}_{+,1}^{*}\) and \(\mathcal{F}_\pm \subset \mathcal{F}_{\pm,1}^{*}\).
    \end{definition}
    
        The LReLU generalizes ReLU by introducing a non-zero slope for negative inputs. Since LReLUs with \(\alpha \neq 0\) are bijective, they can be used to construct invertible neural networks (INNs), which we later use to extend universal approximation results to diffeomorphisms and LU-Net \citep{chan2023lunet}.
    \medskip
    \begin{definition}\label{Definition-C_0_mon}
        Let \(C_{\mon}^{0}(\mathbb{R},\mathbb{R})\) be the set of continuous monotone functions from \(\mathbb{R}\) to \(\mathbb{R}\), where monotone means either monotonically increasing or monotonically decreasing. A function is \textit{strictly monotone} if it is either strictly monotonically increasing or strictly monotonically decreasing.
    \end{definition}
    
    The following lemma shows that width-one FNNs with LReLU activations of fixed parameters can uniformly approximate LReLUs with arbitrary parameters, and that the approximating sequences can be chosen with even depth for all positive LReLU parameters. This allows us to first establish our main results for variable LReLU parameters and then extend them to FNNs that use only one or two LReLUs with fixed parameters. The proof of Lemma~\ref{Lemma-Single_Parameter_LReLU_approx_mon} is given at the end of Section~\ref{Section-Piecewise_linear}.
    
    \begin{lemma}\label{Lemma-Single_Parameter_LReLU_approx_mon}
        Let \(\alpha \in (0,1) \cup (1,\infty)\), \(\sigma_{\alpha}\) the corresponding LReLU, \(f \in C_{\mon}^{0}(\mathbb{R},\mathbb{R})\), and \(\mathcal{K} \subset \mathbb{R}\) compact. Then there exists \((k_n)_{n \in \mathbb{N}} \subset \mathbb{N}\) with \(\lim_{n \to \infty} k_n = \infty\) and \((\phi_n)_{n \in \mathbb{N}} \subset \FNN_{\{\sigma_{\alpha}\}}^{(1)}(1,1;2k_n)\) such that \,\(\lim_{n \to \infty} \lVert \phi_n - f \rVert_{\mathcal{K},\sup} = 0\). Moreover, for any \(\alpha, \beta \in (0,1) \cup (1,\infty)\) and \(\mathcal{K} \subset \mathbb{R}\) compact, there exists \((\phi_n)_{n \in \mathbb{N}} \subset \FNN_{\{\sigma_{\shortminus\alpha},\sigma_{\alpha}\}}^{(1)}(1,1)\) such that \,\(\lim_{n \to \infty} \lVert \phi_n - \sigma_{-\beta} \rVert_{\mathcal{K},\sup} = 0\).
    \end{lemma}
    
    Our main proof strategy is to first carry out the constructions for sets containing infinitely many LReLU activation functions and then generalize the results to sets consisting of only one or two LReLUs with a fixed parameter by applying Lemma~\ref{Lemma-Single_Parameter_LReLU_approx_mon}.
    
    In our first main result, we show that FNNs with invertible LReLU activations universally approximate \(L^p(\mathcal{K}, \mathbb{R}^{d_y})\) on compact sets \(\mathcal{K} \subset \mathbb{R}^{d_x}\).

     \begin{figure}[t]
        \includegraphics[width=0.5\textwidth]{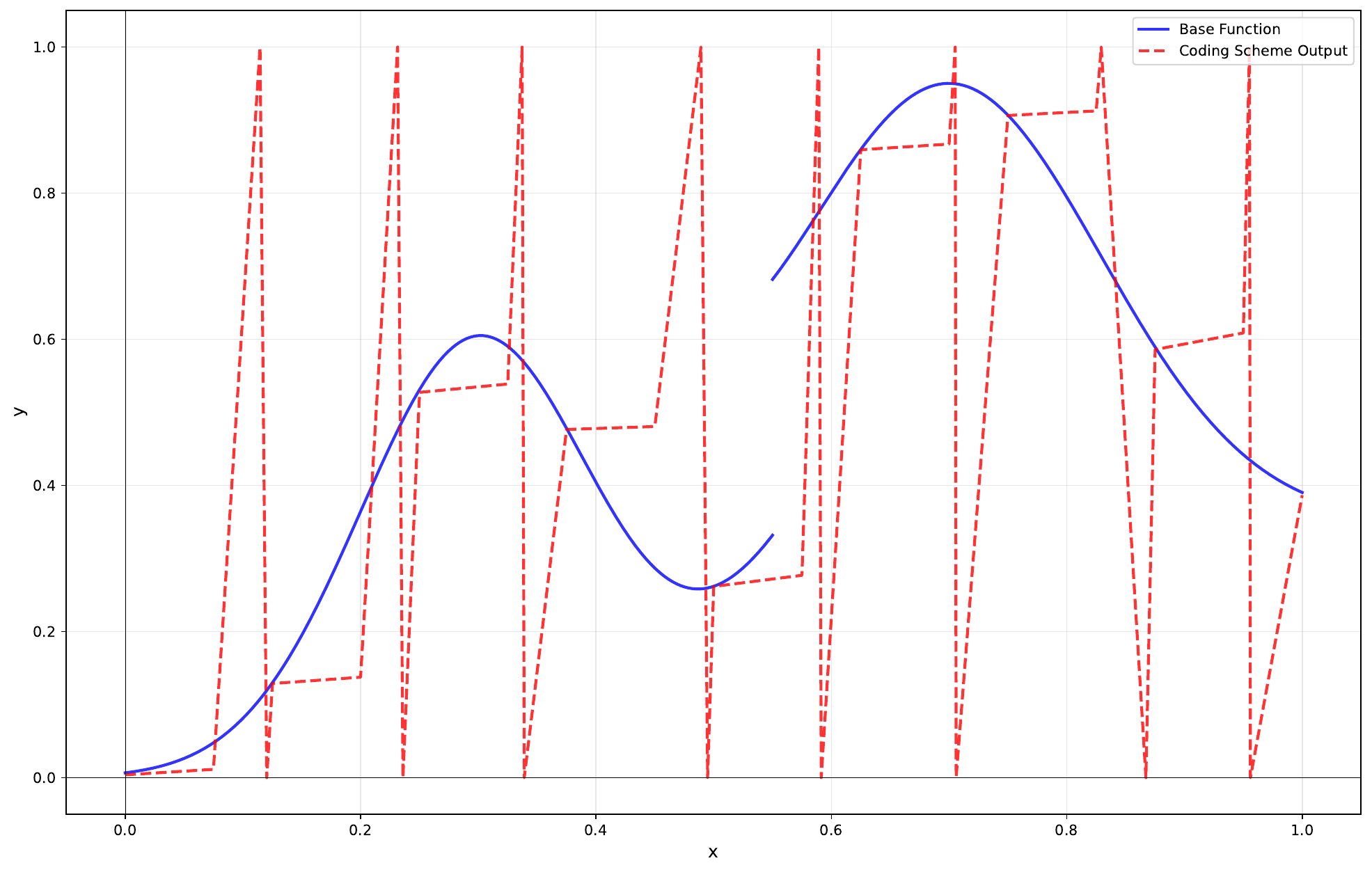}
        \hfill
        \includegraphics[width=0.5\textwidth]{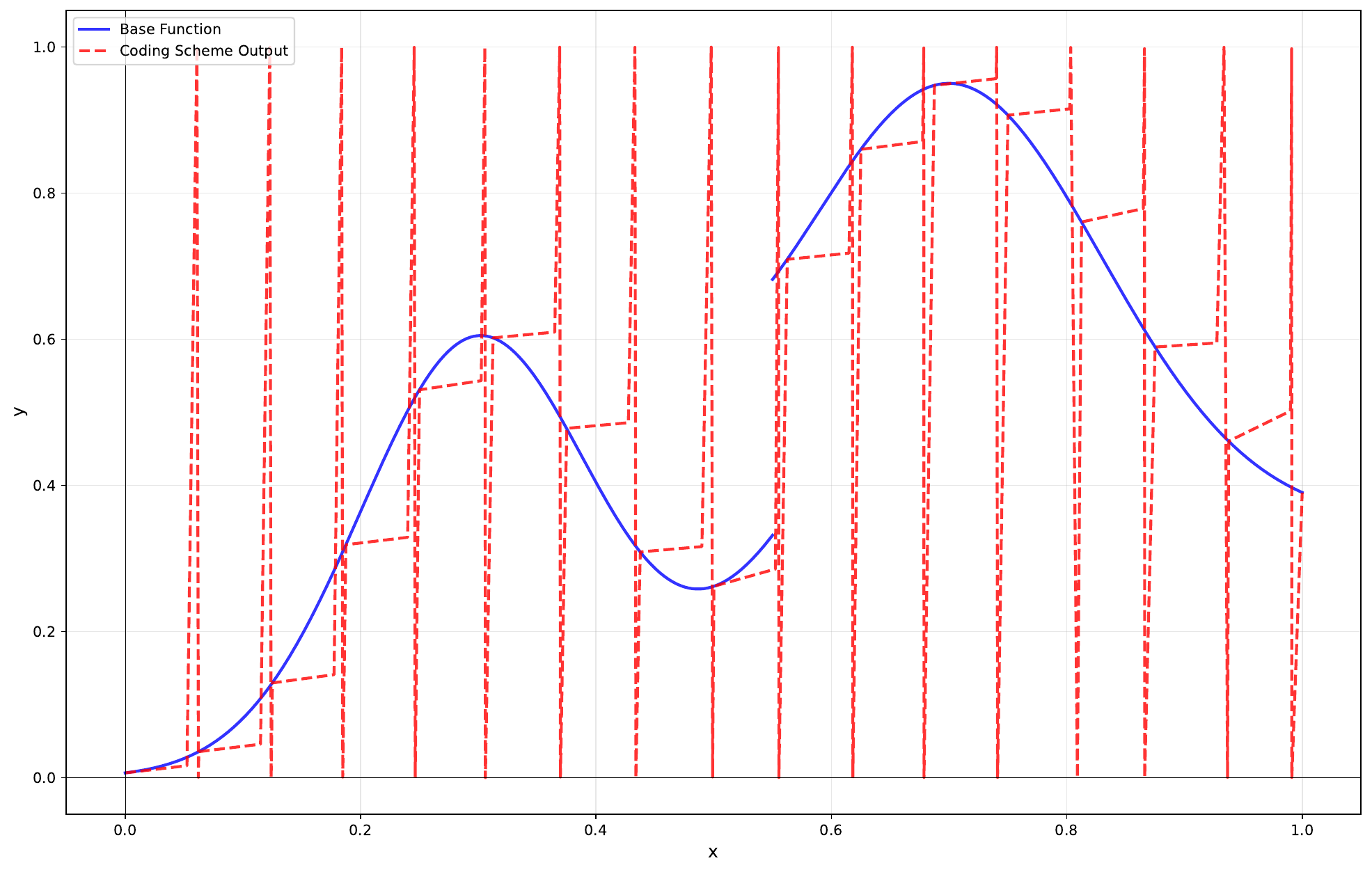}
        \caption{Coding scheme approximations from Theorem~\ref{Theorem-Main1} for \(f(x) = 0.6e^{-50(x-0.3)^2} + 0.6e^{-30(x-0.7)^2} + 0.35\cdot\mathbbm{1}_{[0.55,\infty)}(x)\) using G-LReLU activations from \(\mathcal{F}_{\pm}\) with a FNN of width \(1\). Left: \(K=3\), \(M=8\), \(\gamma=0.05\). Right: \(K=4\), \(M=14\), \(\gamma=0.01\). Here \(K, M\) are the coding scheme parameters (Definition~\ref{Definition-Coding_scheme}) and \(\gamma\) is the width of the exceptional intervals of the quantizer approximation with large error magnitudes (Lemma~\ref{Lemma-Approximate_q_k}).}
        \label{Figure-Main_theorem_lp_1}
    \end{figure}

     \begin{theorem}\label{Theorem-Main1}
        Let \(d_x, d_y \in \mathbb{N}\), \(\alpha \in (0,1) \cup (1,\infty)\), and let \(\sigma_{\alpha}\), \(\sigma_{\shortminus\alpha}\) denote the corresponding (generalized) LReLUs. Then \(\FNN_{\{\sigma_{\shortminus\alpha},\sigma_{\alpha}\}}^{(\max\{d_x,d_y\},1)}(d_x,d_y)\) and \(\FNN_{\{\sigma_{\alpha}\}}^{(\max\{2,d_x,d_y\},1)}(d_x,d_y)\) are \(L^p\)-universal approximators of \(L^p(\mathbb{R}^{d_x},\mathbb{R}^{d_y})\) on compact sets.
    \end{theorem}
    
    Theorem~\ref{Theorem-Main1} provides an alternative proof of Theorems~2 and~3 of \citet{cai2023achieve}. The equivalence follows from the fact that G-LReLUs can express LReLU+ABS activations and vice versa (clear by Lemma \ref{Lemma-LReLU_properties} (2.)), so both activation sets yield the same class of representable functions. Their approach is indirect: it leverages the universal approximation properties of Neural ODEs \citep{ruiz2023neural, li2020deeplearningdynamicalsystems} combined with the result that LReLU FNNs can approximate flow maps of ordinary differential equations \citep{duan2024vanillafeedforwardneuralnetworks}. Since these flow maps are themselves \(L^p\) universal approximators (analogous to the classical diffeomorphism result of \citet{Brenier_Gangbo}), this establishes the desired universal approximation property for LReLU FNNs. In contrast, our proof directly constructs approximating networks via a coding scheme and additionally guarantees minimal interior dimension \(d_{\min} = 1\).

    \begin{figure}[t]
        \includegraphics[width=0.5\textwidth]{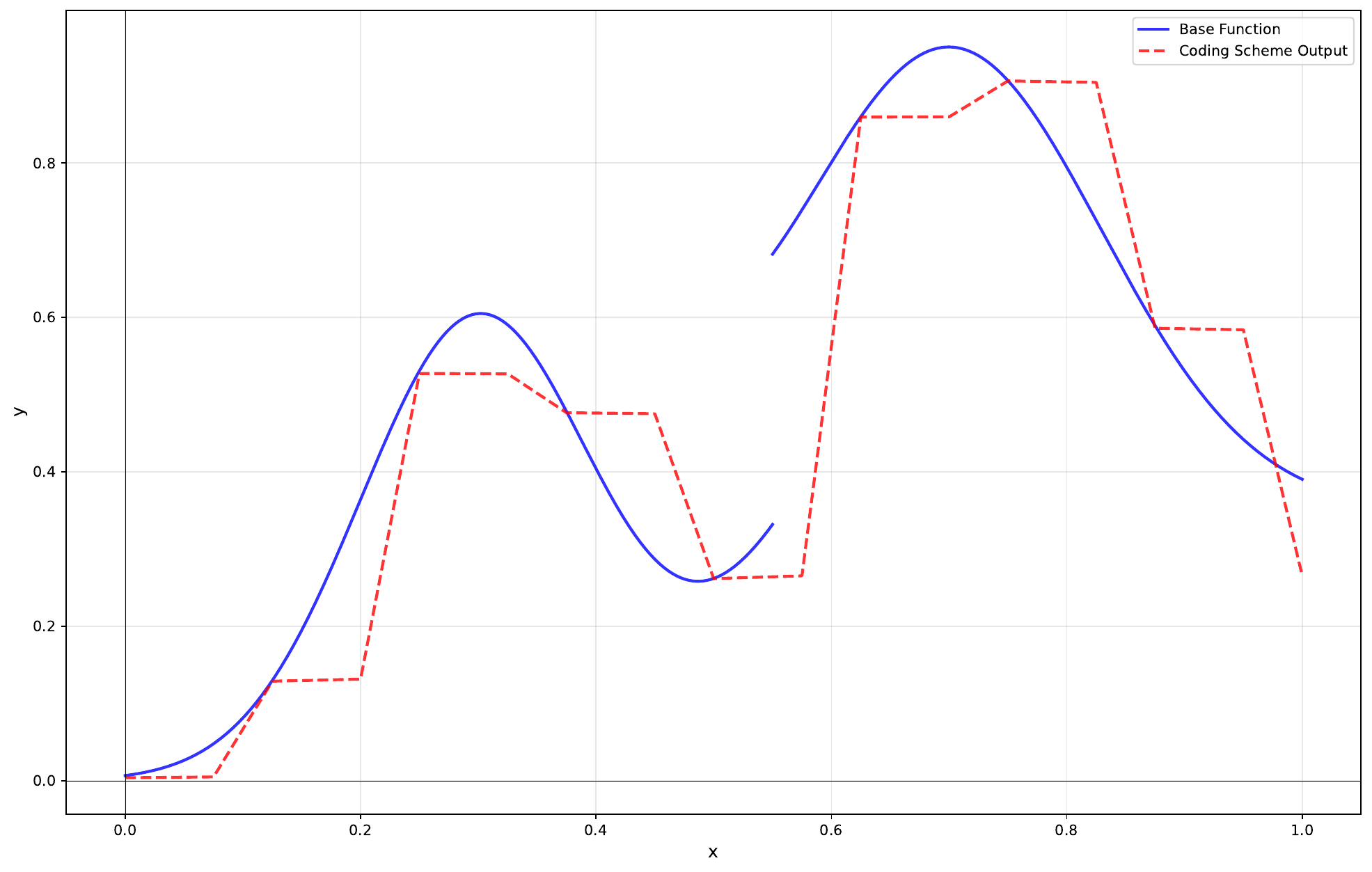}
        \hfill
        \includegraphics[width=0.5\textwidth]{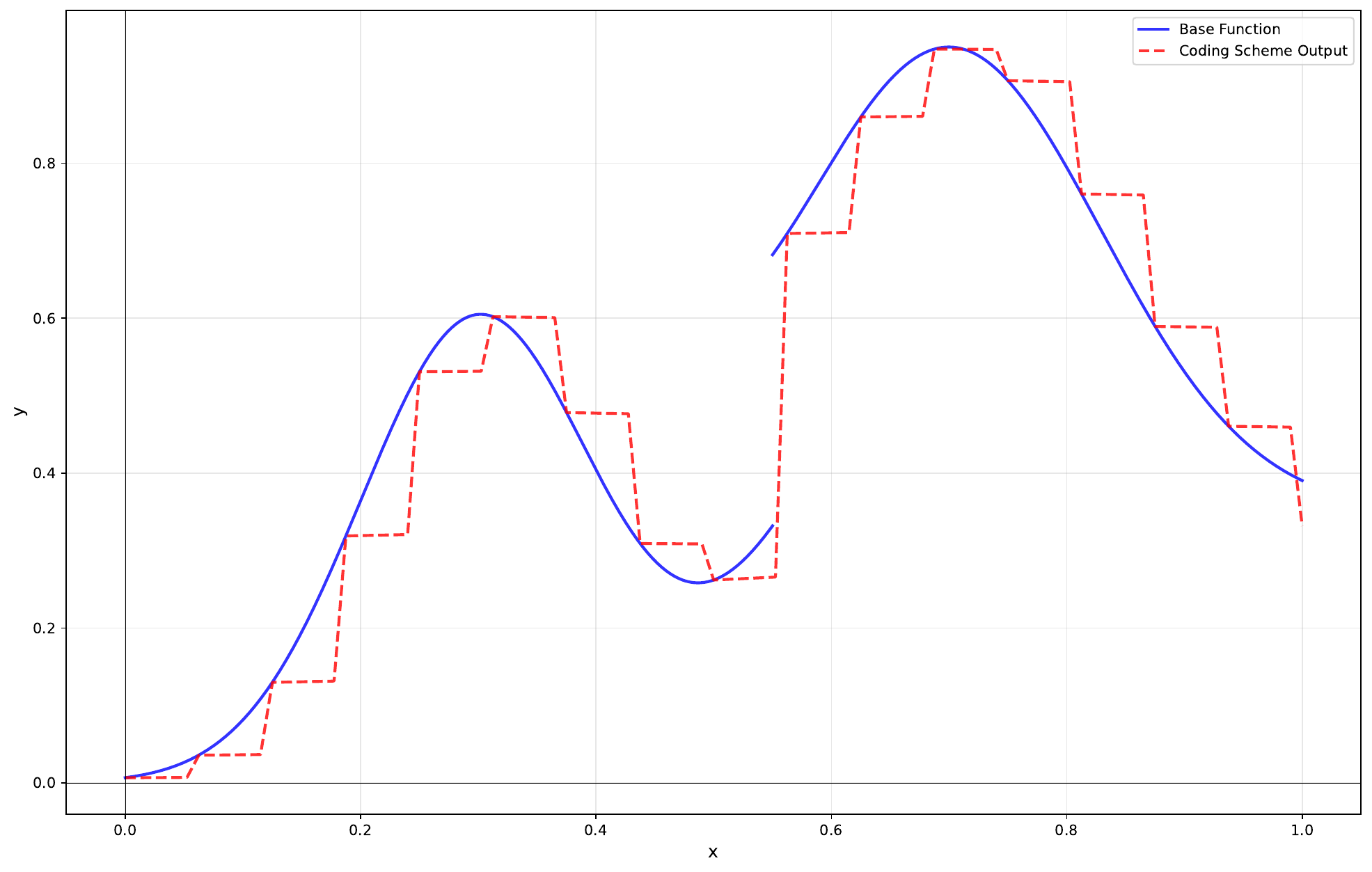}
        \caption{Coding scheme approximations from Theorem~\ref{Theorem-Main1} for \(f(x) = 0.6e^{-50(x-0.3)^2} + 0.6e^{-30(x-0.7)^2} + 0.35\cdot\mathbbm{1}_{[0.55,\infty)}(x)\) using LReLU activations from \(\mathcal{F}_{+}\) and a FNN of width 2. Left: \(K=3\), \(M=8\), \(\gamma=0.05\). Right: \(K=4\), \(M=14\), \(\gamma=0.01\). Here \(K, M\) are the coding scheme parameters (Definition~\ref{Definition-Coding_scheme}) and \(\gamma\) is the width of the exceptional intervals of the quantizer approximation with large error magnitudes (Lemma~\ref{Lemma-Approximate_q_k}).}
        \label{Figure-Main_theorem_lp_2}
    \end{figure}

    \begin{theorem}\label{Theorem-Main_sup}
        Let \(d_x, d_y \in \mathbb{N}\), \(\alpha \in (0,1) \cup (1,\infty)\), and let \(\sigma_{\alpha}\), \(\sigma_{\shortminus\alpha}\) denote the corresponding LReLUs. Let \(\mathcal{A}_1 \in \{\mathcal{F}_{\pm\alpha,\mathfrak{s}}, \{\sigma_{\shortminus\alpha}, \sigma_\alpha, \FLOOR\}\}\) and \(\mathcal{A}_2 \in \{\mathcal{F}_{\alpha,\mathfrak{s}}, \{\sigma_\alpha, \FLOOR\}\}\). Then \(\FNN_{\mathcal{A}_1}^{(\max\{d_x,d_y\},1)}(d_x,d_y)\) and \(\FNN_{\mathcal{A}_2}^{(\max\{2,d_x,d_y\},1)}(d_x,d_y)\) are uniform universal approximators of \(C^0(\mathbb{R}^{d_x},\mathbb{R}^{d_y})\) on compact sets.
    \end{theorem}

    \begin{figure}[t]
        \includegraphics[width=0.5\textwidth]{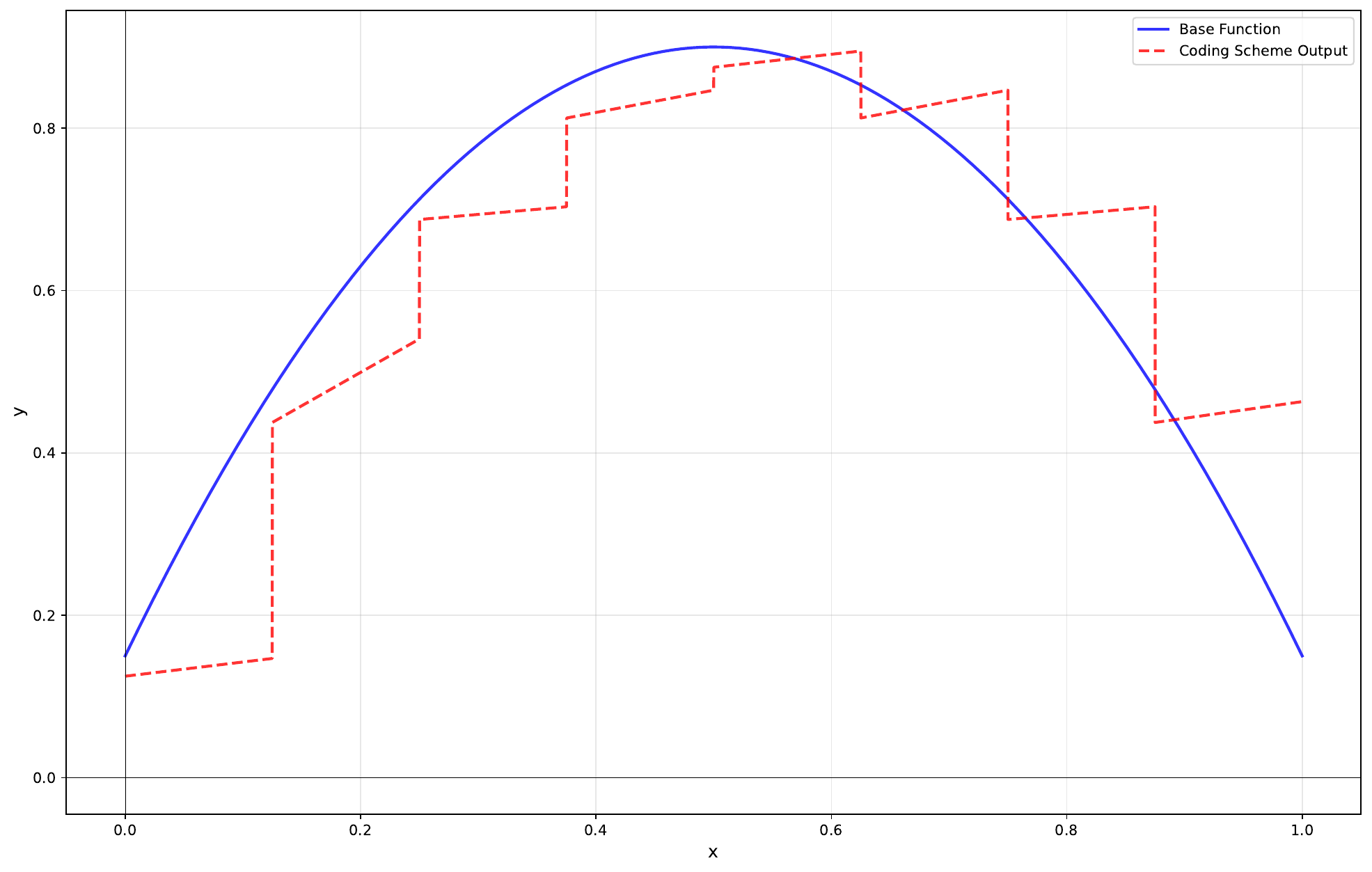}%
        \hfill
        \includegraphics[width=0.5\textwidth]{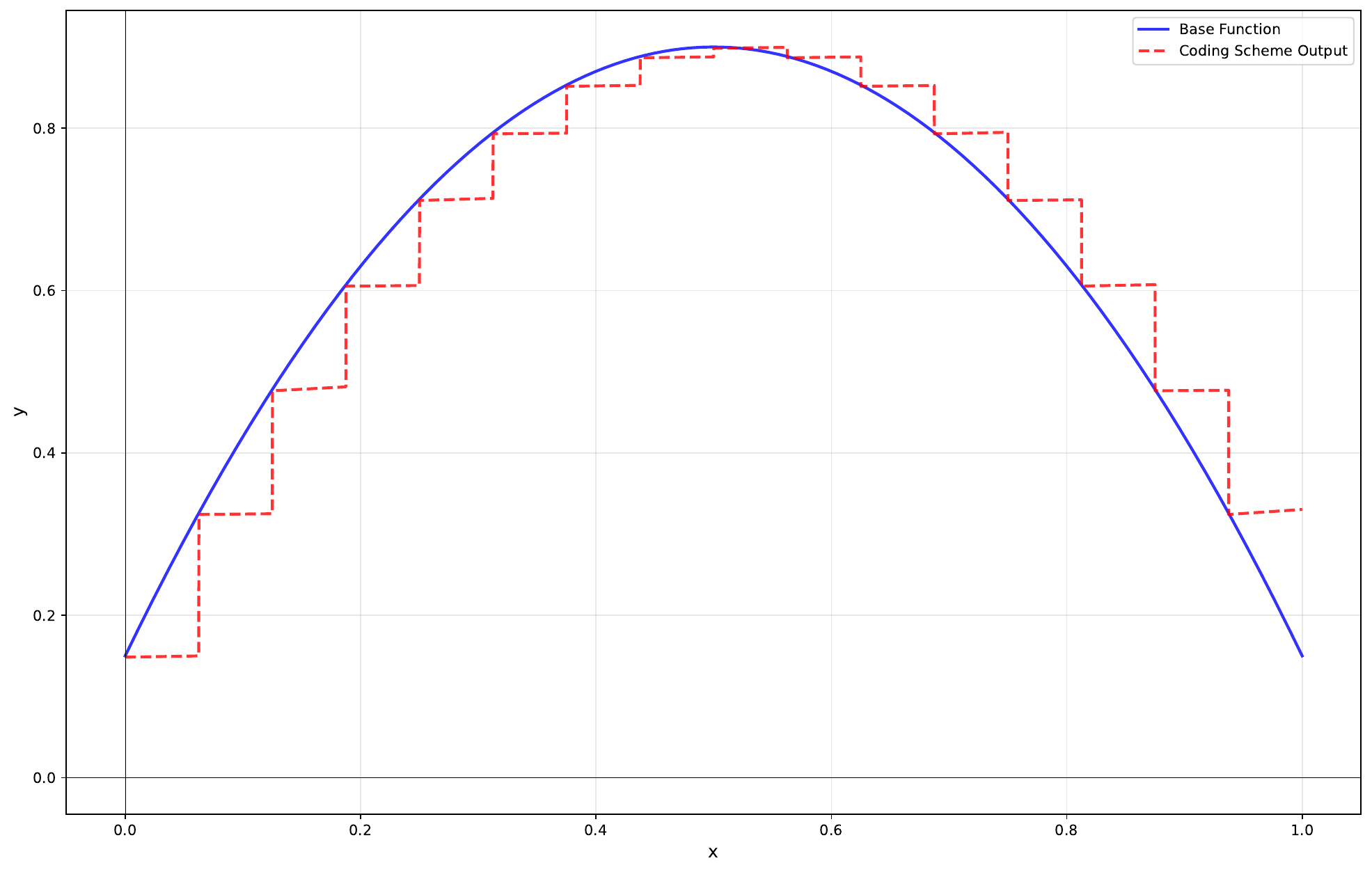}
        \caption{Coding scheme approximations from Theorem~\ref{Theorem-Main_sup} for \(f(x) = -3(x - 0.5)^2 + 0.9\) using SG-LReLU activations from \(\mathcal{F}_{\pm,\mathfrak{s}}\) with a width-1 FNN. Left: \(K=3\), \(M=4\), \(\alpha=0.02\). Right: \(K=4\), \(M=8\), \(\alpha=0.001\). Here \(K, M\) are the coding scheme parameters (Definition~\ref{Definition-Coding_scheme}) and \(\alpha\) is the slope of the quantizer approximation (Lemma~\ref{Lemma-Approximate_q_k_stepped}).}
        \label{Figure-Main_theorem_sup_1}
    \end{figure}

    Note that for discontinuous activations, uniform convergence of a composition does not automatically follow from uniform convergence of its components, as the standard continuity-based arguments break down. Consequently, even when two activation classes appear similar or can mutually approximate each other via width-one FNNs, they may not share the same universal approximation properties when discontinuities are present—the convergence behavior under composition must be established independently for each class. For this reason, Theorem~\ref{Theorem-Main_sup} is proven separately for two activation classes at each minimal width: one combining LReLUs with a step function, and another mixing LReLUs directly with FLOOR. While both achieve the same width bounds, we provide independent proofs to ensure rigor, as the discontinuity structures differ. Additionally, one formulation may prove more suitable for future theoretical extensions or practical architectures.

    The proofs of Theorems~\ref{Theorem-Main_sup} and~\ref{Theorem-Main1} rely heavily on the coding scheme constructions developed in Sections~\ref{Section-Piecewise_linear} and~\ref{Section-Approximating_coding_scheme}, with detailed arguments presented in Section~\ref{Section-Proof_Main1}.
    
    \medskip
    While theoretically appealing, the FLOOR activation suffers from severe information loss: it maps entire intervals to the same output, rendering gradients zero almost everywhere and making gradient-based optimization impractical. S-LReLUs avoid this pathology but introduce a different challenge: each unit requires a distinct step-size parameter that does not participate in gradient flow. Since step sizes do not affect local derivatives, gradient descent cannot learn them.
    
    The following corollary shows that our uniform universal approximation results extend to a more practical setting: function classes using only a single fixed step size combined with infinitely many LReLU slope parameters. Crucially, unlike step sizes, LReLU slopes directly affect the activation's derivative and thus receive meaningful gradient signals. This enables smooth parameterization of the activation class during training, optimization via standard gradient descent, and practical implementation in neural network architectures. In other words, by fixing the step size and allowing gradient descent to explore the space of LReLU slopes, we obtain a trainable activation class that retains the theoretical guarantees of Theorem~\ref{Theorem-Main_sup}. The proof of Corollary~\ref{Corollary-UAP_sup_fixed_stepsize} is presented in Section~\ref{Section-Proof_Main1}.
    
    \begin{corollary}\label{Corollary-UAP_sup_fixed_stepsize}
        Let \(d_x, d_y \in \mathbb{N}\). Then \(\FNN_{\mathcal{F}_{\pm,1}^{*}}^{(\max\{d_x,d_y\},1)}(d_x,d_y)\) and \(\FNN_{\mathcal{F}_{+,1}^{*}}^{(\max\{2,d_x,d_y\},1)}(d_x,d_y)\) are uniform universal approximators of \(C^0(\mathbb{R}^{d_x},\mathbb{R}^{d_y})\) on compact sets. These activation classes use a variable LReLU parameter but a fixed step size in \(\{0, 1\}\), making them amenable to training via gradient descent.
    \end{corollary}
        \medskip
        While the coding scheme approach is highly versatile and applicable to various constructive proofs in theory, its practical utility is limited, as the number of layers increases exponentially with the desired approximation accuracy.
        
        \begin{remark}\label{Remark-First_layer_depth_coding_scheme}
            The number of layers required for the universal approximators in Theorems~\ref{Theorem-Main_sup} and~\ref{Theorem-Main1} grows exponentially as the target accuracy increases when the activation sets are large families with infinitely many parameters. When one considers only the specific activation sets consisting of one or two fixed LReLU parameters, as in the statements of these theorems, the number of layers generally grows even faster than exponentially with increasing accuracy. For a more detailed analysis, see Remark~\ref{Remark-Layer_depth_coding_scheme}.
        \end{remark}
        
        As a direct consequence of Theorems~\ref{Theorem-Main_sup} and~\ref{Theorem-Main1}, the following corollary shows that autoencoders with LReLU and S-LReLU activations and feature dimension \(1\) are universal approximators of \(L^p\)-integrable and continuous functions on compact sets, respectively. Note that these universal approximating autoencoders arise from our constructions, and consequently their depth also grows exponentially with increasing accuracy, as stated in Remark~\ref{Remark-First_layer_depth_coding_scheme}.
        
        \begin{corollary}\label{Corollary-AE}
            Let \(d \in \mathbb{N}\) and \(\alpha \in (0,1) \cup (1,\infty)\). Then \(\AEN_{\{\sigma_\alpha\}}^{(1)}(d)\) is an \(L^p\) universal approximator of \(L^p(\mathbb{R}^{d},\mathbb{R}^{d})\) on compact sets. Moreover, \(\AEN_{\mathcal{F}_{\alpha,\mathfrak{s}}}^{(1)}(d)\) and \(\AEN_{\mathcal{F}_{+,1}^{*}}^{(1)}(d)\) are uniform universal approximators of \(C^0(\mathbb{R}^{d},\mathbb{R}^{d})\) on compact sets. These results generalize directly to any feature dimension \(d_{\min} < d\) by padding the affine transformations with zeros.
        \end{corollary}
        
        \begin{proof}
            This follows directly from Theorems~\ref{Theorem-Main_sup} and~\ref{Theorem-Main1} and Corollary~\ref{Corollary-UAP_sup_fixed_stepsize}, as their corresponding constructions have minimal interior dimension \(1\).
        \end{proof}    
    
        \medskip
        Note that we define autoencoders only when input and output dimensions are equal (\(d = d_x = d_y\)), which is the most common case in practice. However, since Theorems~\ref{Theorem-Main_sup} and~\ref{Theorem-Main1} and Corollary~\ref{Corollary-UAP_sup_fixed_stepsize} hold for arbitrary dimensions, Corollary~\ref{Corollary-AE} extends naturally to the general case.
        
        \begin{remark}
            Corollary~\ref{Corollary-AE} implies the distributional universal approximation property for variational autoencoders (VAEs) with LReLU activation of arbitrary internal dimension \(d_{\min} \geq 1\). As long as layers become narrower with invertible activations, continuous distributions are mapped to continuous distributions layer by layer. Starting with a continuous data distribution, we obtain a continuous distribution internally at the prescribed minimal layer size \(d_{\min}\). We can then continue mapping this internal continuous distribution to a Gaussian distribution, e.g.\ by approximating a continuous transport map, which is known to exist \citep{santambrogio2015optimal, villani2009optimal}. By Corollary~\ref{Corollary-AE}, one approximately reaches an internal normal distribution in dimension \(d_{\min}\) with layers of width \(d_{\min}\) in the sense of weak convergence by approximating the transport map with LReLU networks. This process can then be reversed by approximating the inverse transport map. If this module is inserted in the narrowest layer of an autoencoder that approximates the identity function with internal size \(d_{\min}\), we achieve distributional universal approximation with the VAE. We leave the details to the reader.
        \end{remark}
        
        \begin{definition}\label{Definition-Squashable_function}
            A function \(\sigma: \mathbb{R} \to \mathbb{R}\) is called \textit{squashable} if conditions (C1) and (C2) below hold. Here \(\STEP: \mathbb{R} \to \mathbb{R}\) denotes the step function with \(\STEP(x) = \mathbbm{1}_{[0,\infty)}(x)\). We define \(\mathcal{S}\) as the set of all squashable functions.
            \begin{enumerate}
                \item[\textbf{(C1)}] \textbf{Identity Approximation:} There exists \(z \in \mathbb{R}\) such that \(\sigma\) is continuously differentiable at \(z\) and \(\sigma'(z) \neq 0\).
                
                \item[\textbf{(C2)}] \textbf{Step Approximation:} \(\sigma\) is continuous and for any compact set \(\mathcal{K} \subset \mathbb{R}\) and \(\varepsilon, \zeta > 0\), there exists a width-\(1\) \(\sigma\)-network \(\rho_{\varepsilon,\zeta}: \mathbb{R} \to \mathbb{R}\) satisfying:
                \begin{itemize}
                    \item \(\max_{x \in \mathcal{K} \setminus (-\zeta, \zeta)} |\rho_{\varepsilon,\zeta}(x) - \STEP(x)| \leq \varepsilon\),
                    \item \(\rho_{\varepsilon,\zeta}\) is strictly increasing on \(\mathcal{K}\) and satisfies \(\rho_{\varepsilon,\zeta}(\mathcal{K}) \subseteq [0,1]\).
                \end{itemize}
            \end{enumerate}
        \end{definition}
        
        Note that \citet{shin2025minimumwidthuniversalapproximation} showed that FNNs with squashable activations are \(L^p\) universal approximators with width \(\max\{2, d_x, d_y\}\). We formulate a similar result for uniform universal approximation. The set of squashable functions contains many important examples, which we summarize in the following remark.
        
        \begin{remark}\label{Remark-Squashable_functions}
            By definition, squashable functions can locally approximate the identity (C1) and enable the construction of STEP function approximations when used as activations in width-one FNNs (C2). A broad class of functions is squashable. For instance, all non-affine analytic functions are squashable (see Lemma~4 in \citep{shin2025minimumwidthuniversalapproximation}). Moreover, all strictly monotonic functions that are non-differentiable at some point \(c \in \mathbb{R}\), but differentiable on an open interval \((a,b) \setminus \{c\}\) surrounding that point, and for which the left and right limits of the derivative exist, differ in magnitude, yet share the same sign, are also squashable (see Lemma~5 in \citep{shin2025minimumwidthuniversalapproximation}). This class includes strictly monotone piecewise linear functions with at least one non-differentiable breakpoint, such as LReLU, as well as piecewise-defined functions with a breakpoint at which the derivatives on both sides have the same sign, such as Hardswish. Note that all ReLU-like functions, as defined by \citet{kim2024minimumwidthuniversalapproximation}, are squashable.
        \end{remark}
        
        \begin{theorem}\label{Theorem-Squashable}
            Let \(d_x, d_y \in \mathbb{N}\) and let \(\sigma \in \mathcal{S}\) be squashable. Let \(\STEP(x) := \mathbbm{1}_{[0,\infty)}(x)\) be the step function and \(\Id\) denote the identity on \(\mathbb{R}\). Then \(\FNN_{\{\sigma,\,\FLOOR,\,\Id\}}^{(\max\{3,d_x,d_y\},1)}(d_x,d_y)\) and \(\FNN_{\{\STEP,\,\FLOOR,\,\Id\}}^{(\max\{3,d_x,d_y\},1)}(d_x,d_y)\) are uniform universal approximators of \(C^0(\mathbb{R}^{d_x},\mathbb{R}^{d_y})\) on compact sets.
        \end{theorem}
        
    \subsection{LU-decomposable invertible neural networks as \(L^p\) universal approximators and LU-Net as distributional universal approximator}
    
    \begin{definition}\label{Definition-LU_decomposition}\label{Definition-LU(d)}
        We say that an invertible matrix \(A\in \GL(d)\) is lower-upper- (\(LU\)-) decomposable if there exist a lower triangular matrix \(L\) with ones on the diagonal and an upper triangular matrix \(U\) such that \(A=LU\) holds. We define \(\LU(d)\) as the set of all invertible LU-decomposable matrices \(A\in \GL(d)\) and the set of their inverses by \(\LU^{-1}(d):=\{A^{-1}|\,A\in \LU(d)\}\). Note that one can show that all elements in \(\LU^{-1}(d)\) have an upper-lower (UL-) decomposition, thus can also be considered as the set of UL-decomposable matrices.
    \end{definition}

        Note that \(LU\)-decompositions can also be obtained for singular matrices, but for our purposes in this paper it is sufficient to only consider \(LU\)-decompositions of invertible matrices. 

    \begin{remark}\label{Remark-LU_characterization}
        A matrix $A\in \mathbb{R}^{d\times d}$ is said to have non-zero leading principal minors if all leading submatrices $A_k=(a_{i,j})_{i,j\in [k]}$, with $k\in [d]$, have non-zero determinants, and non-zero trailing principal minors if all trailing submatrices $A_k=(a_{i,j})_{i,j\in [k,d]}$, with $k\in [d]$, have non-zero determinants. It can be shown that the invertible $LU$-decomposable matrices are exactly the matrices that have non-zero leading principal minors (Corollary 1 in \cite{okunev2005necessarysufficientconditionsexistence}), and similarly, $\LU^{-1}(d)$ consists of exactly the matrices that have non-zero trailing principal minors.
    \end{remark}
    
    \begin{lemma}\label{Lemma-LU_dense}
    Both $\LU(d)$ and $\LU^{-1}(d)$ are dense subsets of $\mathbb{R}^{d\times d}$ with respect to any operator norm $\|\cdot\|_{\op}$.
    \end{lemma}
    \begin{proof}
    We prove the statement for $\LU(d)$; the argument for $\LU^{-1}(d)$ is analogous using the characterization via trailing principal minors from Remark~\ref{Remark-LU_characterization}, instead.
    
    Let $A\in \mathbb{R}^{d\times d}$ and $B\in\LU(d)$, and let $A_k, B_k$ denote their leading $k \times k$ submatrices for $k\in [d]$. Define $p_k:\mathbb{R}\rightarrow \mathbb{R}$ by $p_k(t):=\det((1-t)A_k+tB_k)$ for all $k\in [d]$. Each $p_k$ is a non-zero polynomial since $p_k(1)=\det(B_k)\neq 0$ (as $B$ is LU-decomposable), implying that each $p_k$ has only finitely many real zeros. Let $Z := \bigcup_{k=1}^{d} \{t > 0 : p_k(t) = 0\}$ denote the set of all positive zeros of the polynomials $p_1,\ldots,p_d$. Since each $p_k$ has finitely many real zeros, $Z$ is a finite set. Define $t_* := \inf Z$ if $Z \neq \emptyset$, and $t_* := 1$ otherwise. Since $Z$ is finite, we have $t_* > 0$. It follows that for any sequence $(t_n)_{n\in \mathbb{N}}\subset (0,t_*)$ with $\lim_{n\to \infty}t_n=0$, the matrices $A^{(n)}:=(1-t_n)A+t_n B$ satisfy $A^{(n)}\in \LU(d)$, since all leading principal minors are non-zero: for each $k\in[d]$, we have $\det(A_k^{(n)}) = p_k(t_n) \neq 0$ because $t_n \in (0,t_*)$ and $t_*$ was defined to exclude all positive zeros of the $p_k$. Moreover, these matrices satisfy
    \begin{align*}
        \lVert A-A^{(n)}\rVert_{\op}=t_n\lVert A-B\rVert_{\op} \xrightarrow{n\to\infty} 0\;.
    \end{align*}
    Thus, for every matrix $A\in \mathbb{R}^{d\times d}$, there exists a sequence of LU-decomposable matrices converging to it in operator norm, establishing density.
    \end{proof}    
        
    \begin{definition}\label{Definition-Affine_LU_transformations}
        We define the set of $d$-dimensional $LU$-decomposable affine transformations as $\Aff_{\LU}(d)=\{W:\mathbb{R}^{d}\rightarrow \mathbb{R}^{d}, W(x)=Ax+b \,|A\in \LU(d), b\in \mathbb{R}^{d} \}$ and $\Aff_{\LU^{-1}}(d)=\{W:\mathbb{R}^{d}\rightarrow \mathbb{R}^{d}, W(x)=Ax+b \,|A\in \LU^{-1}(d), b\in \mathbb{R}^{d} \}$.
    \end{definition}
    
    With the set of affine LU-decomposable transformations and their inverses, we now generalize Definition~\ref{Definition-FNN} of general FNNs to LU-decomposable neural networks and their inverses.
    
    \begin{definition}\label{Definition-LU_neural_networks}
        Let $\mathcal{A}$ be a set of activation functions and let $\phi\in \FNN_{\mathcal{M}}^{(w)}(d,d)$ be of the form $\phi=W_n\circ \sigma_{n-1}\circ W_{n-1}\circ \ldots \circ \sigma_1\circ W_1$ for some $n\in \mathbb{N}$. If $W_i\in \Aff_{\LU}(d)$ for all $i\in [n]$, then we call $\phi$ a $d$-dimensional LU-decomposable neural network, or if instead $W_i\in \Aff_{\LU^{-1}}(d)$ for all $i\in [n]$, then we call $\phi$ a $d$-dimensional UL-decomposable neural network, as the inverse of a LU-decomposable matrix is a UL-decomposable matrix. We denote the set of all $d$-dimensional $LU$- and $UL$-decomposable neural networks with activations in $\mathcal{A}$ by $\LU_{\mathcal{A}}(d)$ and $\LU^{-1}_{\mathcal{A}}(d)$, respectively. Note that if $\mathcal{A}$ consists of invertible functions, then these sets consist of invertible functions as compositions of invertible functions.
    \end{definition}
    
    We generalize the universal approximation for $L^p$ integrable functions on compact sets from Theorem~\ref{Theorem-Main1} for leaky ReLU activations and Theorem 1 of \cite{shin2025minimumwidthuniversalapproximation} for squashable activations to LU-decomposable neural networks with these activations in the following theorems.
    
    \begin{theorem}\label{Theorem-Main2}
         Let $d\in \mathbb{N}$, $\alpha\in (0,1)\cup(1,\infty)$, and let $\sigma_\alpha$ be the corresponding invertible leaky ReLU. Then, $\LU_{\{\sigma_\alpha\}}(d)$ and $\LU^{-1}_{\{\sigma_\alpha\}}(d)$ are $L^p$ universal approximators of $L^p(\mathbb{R}^{d},\mathbb{R}^{d})$ on compact sets.
     \end{theorem}
     
     \begin{theorem}\label{Theorem-LU_Squashable}
         Let $d\in \mathbb{N}\setminus \{1\}$ and let $\sigma\in \mathcal{S}$ be a squashable activation. Then, $\LU_{\{\sigma\}}(d)$ and $\LU^{-1}_{\{\sigma\}}(d)$ are $L^p$ universal approximators of $L^p(\mathbb{R}^{d},\mathbb{R}^{d})$ on compact sets. If, additionally, $\sigma$ is bijective, then $\LU_{\{\sigma\}}(d)$ and $\LU^{-1}_{\{\sigma\}}(d)$ are also sets of bijective functions.
     \end{theorem}
     
     We give the proofs of Theorem~\ref{Theorem-Main2} and Theorem~\ref{Theorem-LU_Squashable} in Section~\ref{Section-Proof_main2}. They give us helpful tools to conclude strong approximation results for the normalizing flow LU-Net (see Definition~\ref{Definition-LU-Net}), whose elements take the form of LU-decomposable invertible neural networks. Furthermore, we use Theorem~\ref{Theorem-Main2} as an important building block to generalize the $L^p$ universal approximation result of Theorem~\ref{Theorem-Main1} to suitable classes of neural networks that are smooth diffeomorphisms, which we do later in the proof of Theorem~\ref{Theorem-Main4}.

   \begin{definition}\label{Definition-Diffeomorphisms}
   For $k\in \mathbb{N}_{0}\cup \{\infty\}$, a $C^k$-diffeomorphism from $\mathbb{R}^d$ to $\mathbb{R}^d$ is an invertible and $k$-times continuously differentiable function $f:\mathbb{R}^d\rightarrow \mathbb{R}^d$ with a $k$-times continuously differentiable inverse $f^{-1}$, and we call a $C^\infty$-diffeomorphism a smooth diffeomorphism. For $k\in \mathbb{N}_{0}\cup \{\infty\}$, we define $\mathcal{D}^k(\mathbb{R}^d,\mathbb{R}^d)$ as the set of all $C^k$-diffeomorphisms from $\mathbb{R}^d$ to $\mathbb{R}^d$. As an important special case, 
    $\mathcal{D}^0(\mathbb{R}^d, \mathbb{R}^d)$ consists of all $C^0$-diffeomorphisms, also known as homeomorphisms, i.e. it contains exactly those maps $\mathbb{R}^d \to \mathbb{R}^d$ that are continuous, bijective, and have a continuous inverse. Equivalently, this is simply the set of continuous bijections $\mathbb{R}^d \to \mathbb{R}^d$, since the inverse of any such map is automatically continuous (see Remark~\ref{Remark-D0_Inverse_continuous}).
\end{definition}

\begin{definition}\label{Definition-Normalizing_flow}
    Let $d\in \mathbb{N}$ and $\mathcal{M}$ be a set of invertible functions from $\mathbb{R}^d$ to $\mathbb{R}^d$. The corresponding normalizing flow (NF) of dimension $d$ is then defined by $\NF({\mathcal{M}}):=\{(f,f^{-1})|\,f\in \mathcal{M}\}$ and we call $\mathcal{M}$ the set of normalizing directions and $\mathcal{M}^{-1}:=\{f^{-1}|f\in \mathcal{M}\}$ the set of generative directions of $\NF(\mathcal{M})$. Additionally, the normalizing flow consists of $C^k$-diffeomorphisms if $\mathcal{M}$ consists of $C^k$-diffeomorphisms. 
\end{definition}

Note that although NFs can be defined for invertible activations, it is usually required that the activations are also differentiable almost everywhere. This is necessary to be able to obtain the densities of the push-forward measures corresponding to the normalizing flow almost everywhere with the change of variables formula (see e.g. Theorem 3.7.1 in \cite{Bogachev2007}). For these and other details to normalizing flows and their application we refer to a survey by \citet{Kobyzev_2021}.

In the following we define the NF LU-Net, introduced in 2023 by \cite{chan2023lunet}, for which we show the distributional universal approximation property (see Definition~\ref{Definition-DUAP}).

\begin{definition}\label{Definition-LU-Net}
	Let $d\in \mathbb{N}$ denote the dimension, $\mathcal{A}$ a set of invertible real-valued activations and $\LU_{\mathcal{A}}(d)$ the corresponding set of LU-decomposable neural networks of dimension $d$ with activations in $\mathcal{A}$ as introduced in Definition~\ref{Definition-LU_neural_networks}. Then we define the $d$-dimensional LU-Net with activations in $\mathcal{A}$ to be the normalizing flow given by $\NF(\LU_{\mathcal{A}}(d))$. Thus, the normalizing directions are elements of $\LU_{\mathcal{A}}(d)$ and take the form of $LU$-decomposable neural networks with activations in $\mathcal{A}$.
\end{definition}

\begin{remark}\label{Remark-LU-Net}
    LU-Net offers several practical advantages: First, it can be initialized with LU-decomposed matrices, enabling efficient determinant computation of the normalizing directions (see equation (8) in \citet{chan2023lunet}). This allows efficient maximum likelihood training of the corresponding normalizing flow (see equation (9) in \citet{chan2023lunet}). Second, its FNN structure makes practical stabilization techniques readily applicable and enables implementation using highly optimized matrix multiplication kernels.
\end{remark}

Since the normalizing and generative directions of LU-Net have the structure of LU-de\-composable networks, they inherit the strong approximation capabilities of neural networks with expressive activation functions and embed them into a normalizing flow framework. This structural property further allows us to establish strong approximation results for the LU-Net by invoking universal approximation theorems for neural networks.

The following definition introduces distributional universal approximators, which essentially follows the notion of Definition 3 by \citet{teshima2020couplingbased}.

\begin{definition}\label{Definition-Distributional_universality}
Let $\mathcal{P}(\mathbb{R}^d)$ and $\mathcal{P}_{\text{abs}}(\mathbb{R}^d)$ denote the set of all probability and absolutely continuous probability measures, respectively, and denote convergence in distribution by $\stackrel{\mathcal{D}}{\longrightarrow}$. For $d\in \mathbb{N}$, let $\mathcal{M}$ be a function class consisting of measurable mappings $g:\mathbb{R}^d\rightarrow \mathbb{R}^d$. We say that $\mathcal{M}$ is a distributional universal approximator, or that it achieves/attains distributional universal approximation (DUA) if for every $\mu\in \mathcal{P}_{\text{abs}}(\mathbb{R}^d)$ and $\nu \in\mathcal{P}(\mathbb{R}^d)$, there exists a sequence $(g_n)_{n\in\mathbb{N}}\subset\mathcal{M}$ such that $(g_n)_{*}\mu:=\mu \circ g_n^{-1} \stackrel{\mathcal{D}}{\rightarrow}\nu$. 
\end{definition}

\begin{definition}\label{Definition-DUAP}
We say that a normalizing flow $\NF(\mathcal{M})$ of dimension $d\in \mathbb{N}$ has the distributional universal approximation property (DUAP) if the corresponding set of generative directions is a distributional universal approximator according to Definition~\ref{Definition-Distributional_universality}.
\end{definition}

In other words, a function class achieves DUA if, for any absolutely continuous probability measure and any target probability measure, there exists a sequence of transformations in the class such that the corresponding sequence of push-forward measures converges to the target measure in distribution. Therefore, NFs with the DUAP can transform any absolutely continuous probability measure into any target measure with respect to the topology induced by convergence in distribution. This guarantees that, at least in theory, the architecture is capable of learning mappings to arbitrary probability distributions in the sense of weak convergence.

To provide intuition for what function classes need to universally approximate in order to attain DUA, we introduce the following function classes.

\begin{definition}\label{Definition-T_infty}
A mapping $\tau=(\tau_1,\ldots,\tau_d)^T: \mathbb{R}^d\rightarrow \mathbb{R}^d$ is called increasing triangular if each component $\tau_i$ depends only on the first $i$ coordinates $x_{1:i}$ and is strictly increasing with respect to $x_i$ for all $i\in[d]$. We define $\mathcal{T}^\infty(d)$ as the function class of all $C^\infty$-increasing triangular mappings from $\mathbb{R}^d$ to $\mathbb{R}^d$.
\end{definition}

The following lemma, whose first part corresponds to Lemma 1 in \cite{teshima2020couplingbased}, is our essential tool for establishing the DUAP. It shows that any function class that universally approximates $\mathcal{T}^\infty(d)$ on compact sets with respect to the $L^p$ norm (for any $p\in [1,\infty)$) is automatically a distributional universal approximator. In other words, the universal approximation of a sufficiently rich function class like $\mathcal{T}^\infty(d)$ directly implies that the function class achieves DUA, which in turn implies the DUAP for any normalizing flow architecture whose generative directions are given by this function class.

\begin{lemma}\label{Lemma-T_infinity_approximators_universal}
Let $\mathcal{M}$ be a function class from $\mathbb{R}^d$ to $\mathbb{R}^d$ that is an $L^p$ universal approximator of $\mathcal{T}^\infty(d)$ on compact sets for any $p\in [1,\infty)$, then $\mathcal{M}$ achieves DUA. Consequently, any function class $\mathcal{M}$ from $\mathbb{R}^d$ to $\mathbb{R}^d$ that is a $L^p$ universal approximator of $L^p(\mathbb{R}^d,\mathbb{R}^d)$ for any $p\in [1,\infty)$ on compact sets achieves DUA.
\end{lemma}

\begin{proof}
For the first part we refer to the proof of Lemma 1 in \cite{teshima2020couplingbased}. For the second part, let $\mathcal{K}\subset \mathbb{R}^d$ be compact and note that $\mathcal{T}^{\infty}(d)|_\mathcal{K}:=\{f|_\mathcal{K} \mid f\in \mathcal{T}^\infty(d)\}\subset L^p(\mathcal{K},\mathbb{R}^d)$ since triangular transforms are smooth. Combining this with the assumption, we obtain that for $f\in \mathcal{T}^{\infty}(d)|_\mathcal{K}$ there exists $(\phi_n)_{n\in \mathbb{N}}\subset \mathcal{M}$ such that $\lim_{n\to\infty}\lVert f-\phi_n\rVert_{\mathcal{K},p}=0$. Since the compact set was chosen arbitrarily, it follows that $\mathcal{M}$ is an $L^p$ universal approximator of $\mathcal{T}^{\infty}(d)$ on compact sets and by the first part, it achieves DUA.
\end{proof}

\begin{remark}\label{Remark-T_infty_strengthening}
By Theorem 1(A) in \cite{teshima2020couplingbased}, Lemma~\ref{Lemma-T_infinity_approximators_universal} can be further strengthened: for invertible neural networks whose activations are all piecewise $C^1$ diffeomorphisms, it suffices to universally approximate a smaller class of $C^\infty$-diffeomorphisms called single-coordinate transforms. These are diffeomorphisms $\tau: \mathbb{R}^d\rightarrow \mathbb{R}^d$ of the form $\tau(x)=(x_1,\ldots,x_{d-1},\tau_d(x_d))^T$ that are equal to the identity outside of a compact set. Denoting this class by $\mathcal{S}_c^\infty(d)$, we have $\mathcal{S}_c^\infty(d)\subset \mathcal{T}^\infty(d)$, showing that approximating only transformations that alter the last coordinate suffices for DUA under these conditions. This generalization can be useful when analyzing normalizing flows with weaker approximation properties. However, for our architectures such as LU-Net, we establish universal approximation of all $L^p$ integrable functions on compact sets, which directly implies universal approximation of $\mathcal{T}^\infty(d)$ and thus the DUAP.
\end{remark}

    The following two theorems establish the DUAP for LU-Nets with either invertible leaky ReLU or more general squashable activations. Their proofs essentially rely on the underlying universal approximation results for $L^p$-integrable functions, namely Theorem~\ref{Theorem-Main1} and Theorem~1 of \cite{shin2025minimumwidthuniversalapproximation}, together with their generalizations to LU- or UL-decomposable networks as provided by Theorem~\ref{Theorem-Main2} and Theorem~\ref{Theorem-LU_Squashable}. We present their proofs in Section~\ref{Section-Proof_main3}.

\begin{theorem}\label{Theorem-Main3}
    Let $d\in \mathbb{N}$, $\alpha\in (0,1)\cup (1,\infty)$, and let $\sigma_\alpha$ be the corresponding leaky ReLU. Then $\NF(\LU_{\{\sigma_{\alpha}\}}(d))$ has the DUAP, i.e., LU-Net with any non-identical leaky ReLU activation has the DUAP.
\end{theorem}

\begin{theorem}\label{Theorem-DUAP_squashable}
    Let $d\in \mathbb{N}\setminus\{1\}$ and let $\sigma \in \mathcal{S}\cap \mathcal{D}^0(\mathbb{R},\mathbb{R})$, i.e., a squashable and bijective function from $\mathbb{R}$ to $\mathbb{R}$. Then $\NF(\LU_{\{\sigma^{-1}\}}(d))$ has the DUAP.
\end{theorem}

    \subsection{Smooth almost leaky ReLU neural networks are universal approximators}

    The following definition of standard mollifiers is equivalent to Definition 4.1.27 in \cite{PapageorgiouWinkert+2018}. As usual, we use mollifiers in our proofs to smooth out functions when necessary.

    \begin{definition}\label{Definition-Mollifier}
        We define \( p:\mathbb{R}\rightarrow \mathbb{R},\;p(x):=\begin{cases}
               c\cdot\exp\left(\frac{-1}{1-|x|^2}\right) & \text{if }|x|<1\\
               0 & \text{if } |x|\geq 1
           \end{cases}\), where the constant \(c>0\) is chosen such that \(\int_{-\infty}^{\infty}p(x)\,dx=1\). For \(n\in \mathbb{N}\) we define the standard mollifiers corresponding to \(p\) by \(p_n(x):=n\cdot p(nx)\), which then satisfy \(\int_{-\infty}^{\infty}p_n(x)\,dx=1\) and \(\operatorname{supp}(p_n) = \left[-\frac{1}{n},\frac{1}{n}\right]\).
    \end{definition}
    
    \begin{lemma}\label{Lemma-Convolution}
        We call \(f:\mathbb{R}^d\rightarrow \mathbb{R}^k\) locally \(L^p\), or equivalently write \(f\in L_{\loc}^p(\mathbb{R}^d,\mathbb{R}^k)\), if \(\lVert f \rVert_{\mathcal{K},p}<\infty\) for all compact \(\mathcal{K}\subset \mathbb{R}^d\). For \(n\in \mathbb{N}\), the convolution operator \((\cdot *p_n):L^p_{\loc}(\mathbb{R},\mathbb{R})\rightarrow C^\infty(\mathbb{R},\mathbb{R})\) defined by \((f *p_n)(x):=\int_{-\infty}^{\infty} f(y) p_n(y-x)\, dy\) is well-defined, where \(C^\infty(\mathbb{R},\mathbb{R})\) denotes the set of smooth real-valued functions.
    \end{lemma}
    \begin{proof}
        See Proposition 4.1.28 in \cite{PapageorgiouWinkert+2018}.
    \end{proof}
    
    \begin{definition}[Regularizing functions]\label{Definition-Mollifier_2}
        For any function \(f\in L^1_{\loc}(\mathbb{R},\mathbb{R})\), the regularizing functions of \(f\) are defined as \(f_n:=f * p_n\) for \(n\in \mathbb{N}\).
    \end{definition}
    
    We use their properties shown in \cite{PapageorgiouWinkert+2018} to regularize leaky ReLUs in the proof of Theorem~\ref{Theorem-Main4}.
    
    \begin{remark}\label{Remark-Mollifier}
        One can show that \(p\) is smooth, which also implies that \(p_n\) is smooth for all \(n\in \mathbb{N}\) as a composition of smooth functions. Moreover, for \(f\in L^1_{\loc}(\mathbb{R},\mathbb{R})\), the sequence \((f_n)_{n\in \mathbb{N}}\) defined by \(f_n:=f*p_n\) consists of smooth functions, i.e., \((f_n)_{n\in \mathbb{N}}\subset C^\infty(\mathbb{R},\mathbb{R})\), by Lemma~\ref{Lemma-Convolution}. Additionally, by definition, \(\operatorname{supp}(p)=[-1,1]\) and therefore \(\operatorname{supp}(p_n)=[-\frac{1}{n}, \frac{1}{n}]\) for all \(n\in \mathbb{N}\).    
    \end{remark}
    
    \begin{lemma}[\citet{PapageorgiouWinkert+2018}]\label{Lemma-Mollifier}
        For any compact \(\mathcal{K}\subset \mathbb{R}\) and \(f\in C^0(\mathcal{K},\mathbb{R})\), it holds that \(\lim_{n\to \infty}\lVert f - (f*p_n) \rVert_{\mathcal{K},\sup}=0\).
    \end{lemma}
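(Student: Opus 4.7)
The plan is to reduce the claim to two classical facts about the mollifier family: that $p_n\geq 0$ with $\int p_n=1$, and that $\supp(p_n)=[-1/n,1/n]$ shrinks to the origin. Combined with the uniform continuity of $f$ on a compact neighborhood of $\mathcal{K}$, these give the uniform convergence directly.

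First I would address a domain issue: the integral $(f\ast p_n)(x)=\int f(y)p_n(y-x)\,dy$ for $x$ near the boundary of $\mathcal{K}$ formally involves values of $f$ outside $\mathcal{K}$, where $f\in C^0(\mathcal{K},\mathbb{R})$ is a priori undefined. I would invoke the Tietze extension theorem to obtain a continuous extension $\tilde f\in C^0(\mathbb{R},\mathbb{R})$ with $\tilde f|_{\mathcal{K}}=f$. Setting $\mathcal{K}_1:=\{x\in\mathbb{R}:\mathrm{dist}(x,\mathcal{K})\leq 1\}$, the set $\mathcal{K}_1$ is compact, so the Heine--Cantor theorem yields uniform continuity of $\tilde f$ on $\mathcal{K}_1$.

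Second, after the substitution $z=y-x$, the convolution takes the form $(f\ast p_n)(x)=\int_{-1/n}^{1/n}\tilde f(x+z)\,p_n(z)\,dz$. Exploiting $\int p_n(z)\,dz=1$ and $p_n\geq 0$, I would estimate
\begin{align*}
|f(x)-(f\ast p_n)(x)|
=\left|\int_{-1/n}^{1/n}\bigl(\tilde f(x)-\tilde f(x+z)\bigr)p_n(z)\,dz\right|
\leq \int_{-1/n}^{1/n}|\tilde f(x)-\tilde f(x+z)|\,p_n(z)\,dz.
\end{align*}
Fixing $\epsilon>0$, uniform continuity of $\tilde f$ on $\mathcal{K}_1$ provides $\delta\in(0,1)$ such that $|\tilde f(a)-\tilde f(b)|<\epsilon$ for all $a,b\in\mathcal{K}_1$ with $|a-b|<\delta$. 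Choosing $N$ with $1/N<\delta$, for every $n\geq N$, $x\in\mathcal{K}$ and $|z|\leq 1/n$, both $x$ and $x+z$ lie in $\mathcal{K}_1$ with $|z|<\delta$, so the integrand is dominated by $\epsilon\,p_n(z)$. Integration against $p_n$ then gives $|f(x)-(f\ast p_n)(x)|\leq\epsilon$ uniformly in $x\in\mathcal{K}$. Passing to the supremum and letting $\epsilon\to 0$ yields $\lim_{n\to\infty}\lVert f-(f\ast p_n)\rVert_{\mathcal{K},\sup}=0$.

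The main obstacle is essentially bookkeeping rather than conceptual: one must be careful that the convolution is well-defined on all of $\mathcal{K}$, which requires extending $f$ beyond $\mathcal{K}$, and one must choose the auxiliary neighborhood $\mathcal{K}_1$ large enough (radius $\geq 1/n$) so that uniform continuity applies to every $(x,x+z)$ appearing in the integrand. Once this is set up, the argument is the standard mollifier convergence estimate.
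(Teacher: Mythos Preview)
Your proof is correct and is precisely the standard mollifier convergence argument; the paper does not give its own proof but simply cites Proposition~4.1.29 in \cite{PapageorgiouWinkert+2018}, whose argument is essentially the one you wrote. Your explicit handling of the domain issue via Tietze extension is a worthwhile addition, since the statement as written has $f\in C^0(\mathcal{K},\mathbb{R})$ while the convolution samples values in a neighborhood of $\mathcal{K}$.
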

    \begin{proof}
        See Proposition 4.1.29 in \cite{PapageorgiouWinkert+2018}.
    \end{proof}

    \begin{definition}\label{Definition-Smoothed_LReLUs}
        Let \(p_n\) be the mollifiers from Definition~\ref{Definition-Mollifier} and for \(\alpha\in (0,1)\cup (1,\infty)\) let \(\sigma_\alpha\) be the corresponding invertible LReLU. We define \(\mathcal{F}_{\alpha}^{\infty}:=\{\sigma_{\alpha}*p_n\mid n\in \mathbb{N}\}\) as the set of smoothed versions of this LReLU and note that by Lemma~\ref{Lemma-Convolution} it holds that \(\mathcal{F}_{\alpha}^\infty\subset C^\infty(\mathbb{R},\mathbb{R})\). Moreover, for \(d\in \mathbb{N}\) we call \(\LU_{\mathcal{F}_{\alpha}^{\infty}}(d)\) the set of LU-decomposable, smoothed LReLU networks with parameter \(\alpha\).
    \end{definition}
    
    Now, we show that smooth diffeomorphisms in the form of LU-decomposable, smoothed LReLU networks with parameter \(\alpha\), for any \(\alpha\in (0,1)\cup (1,\infty)\), can universally approximate \(L^p\)-integrable functions on compact sets. We present the proof of Theorem~\ref{Theorem-Main4} later in Section~\ref{Section-Proof_main4}.
    
    \begin{theorem}\label{Theorem-Main4}
        Let \(\alpha\in (0,1)\cup (1,\infty)\) and let \(\LU_{\mathcal{F}_{\alpha}^\infty}(d)\) be the set of LU-decomposable, smoothed LReLU networks from Definition~\ref{Definition-Smoothed_LReLUs} with parameter \(\alpha\). Then \(\LU_{\mathcal{F}_{\alpha}^\infty}(d)\subset\)\\ 
        \(\mathcal{D}^{\infty}(\mathbb{R}^d,\mathbb{R}^d)\) and it is an \(L^p\)-universal approximator of \(L^p(\mathbb{R}^{d},\mathbb{R}^{d})\) on compact sets. More precisely, for any \(f\in L^p(\mathbb{R}^{d},\mathbb{R}^{d})\) and any compact set \(\mathcal{K}\subset \mathbb{R}^d\), there exists a sequence of LU-decomposable, smoothed LReLU networks with parameter \(\alpha\), denoted by \((\phi_n)_{n\in\mathbb{N}}\subset \LU_{\mathcal{F}_{\alpha}^\infty}(d)\), such that \(\lVert f-\phi_n \rVert_{\mathcal{K},p}\overset{n \to \infty}{\longrightarrow}0\).
    \end{theorem}
    
    Note that Theorem~\ref{Theorem-Main4} shows that smooth diffeomorphisms achieve UA of \(L^p(\mathbb{R}^d,\mathbb{R}^d)\) with respect to the \(L^p\) norm on compact sets, which also implies the famous Theorem 2.5(i) of \citet{Brenier_Gangbo}. Moreover, unlike \citet{Brenier_Gangbo}, we actually obtain an explicit form for the approximating sequences of diffeomorphisms, namely that they can be constructed as LU-decomposable, smoothed LReLU networks with any parameter \(\alpha\in (0,1)\cup (1,\infty)\). This shows that neural networks are not only interesting for various applications due to their flexibility and strong performance, but they are also a versatile theoretical tool for proving rigorous results in the field of mathematical approximation theory.

\subsection{Minimal width for universal approximation of continuous functions with monotone continuous activations}

         Theorem \ref{Theorem-Main4} demonstrates that diffeomorphisms serve as universal approximators of \\\(L^p(\mathbb{R}^d,\mathbb{R}^d)\) with respect to the induced \(L^p\) norm. However, the following lemma shows that the set of continuous bijections, which is a superset of the set of diffeomorphisms, is not a universal approximator of \(C^0(\mathbb{R}^d,\mathbb{R}^d)\) with respect to the supremum norm. More importantly, continuous functions can even be found such that the supremum norm distance to any continuous bijection is arbitrarily large. It follows that a result similar to that of Theorem \ref{Theorem-Main4}, which constructs a dense set of diffeomorphisms given by neural networks, cannot hold for uniform universal approximation. 

    \begin{remark}\label{Remark-D0_Inverse_continuous}
        Note that by the invariance of domain theorem (see e.g. Corollary 19.8 in \cite{Bredon1993}), every continuous bijection \(\phi:\mathbb{R}^d\to\mathbb{R}^d\) is an open mapping. Therefore, for any open set \(U\subset \mathbb{R}^d\), the preimage \((\phi^{-1})^{-1}(U)=\phi(U)\) is open, showing that the inverse \(\phi^{-1}\) is continuous. Hence \(\phi\) is a homeomorphism, i.e., \(\mathcal{D}^0(\mathbb{R}^d,\mathbb{R}^d)\) is exactly the set of all continuous bijections from \(\mathbb{R}^d\) to \(\mathbb{R}^d\).
    \end{remark}

    \begin{lemma}\label{Lemma-Diffeos_not_dense_in_C0}
        Let \(\mathcal{K}\subset \mathbb{R}^{d}\) be a compact set with non-empty interior, then for any \(c>0\) there exists \(f\in C^0(\mathbb{R}^{d},\mathbb{R}^{d})\) such that for all \(\phi\in \mathcal{D}^0(\mathbb{R}^{d},\mathbb{R}^d)\) it holds that
        \begin{align*}
            \lVert f-\phi \rVert_{\mathcal{K},\sup} \geq \lVert f_1-\phi_1 \rVert_{\mathcal{K},\sup} \geq c\;,
        \end{align*}where \(\phi_1,f_1\) denote the first components of \(\phi\) and \(f\). Consequently, the set \(\mathcal{D}^0(\mathbb{R}^{d},\mathbb{R}^d)\), which consists of all continuous bijections from \(\mathbb{R}^d\) to \(\mathbb{R}^d\), is not a universal approximator of \(C^0(\mathbb{R}^{d},\mathbb{R}^{d})\) on compact sets w.r.t. the supremum norm.
    \end{lemma} 

    \begin{lemma}\label{Lemma-Relation_one_to_one_mon}
        It holds that \(\sigma\in C_{\mon}^{0}(\mathbb{R},\mathbb{R})\) if and only if \(\sigma\) is continuous and there exists a sequence of one-to-one functions, i.e. they are continuous and injective functions from \(\mathbb{R}\) to \(\mathbb{R}\), converging uniformly to \(\sigma\) on \(\mathbb{R}\).
    \end{lemma}

    \begin{proof}
        Let \(\sigma\in C_{\mon}^{0}(\mathbb{R},\mathbb{R})\) and w.l.o.g. assume that it is monotonically increasing, then \(\sigma_n(x):=\sigma(x)+\frac{1}{n}\tanh(x)\), \(n\in \mathbb{N}\), is monotonically increasing, hence one-to-one, and it holds that \(\lVert\sigma_n-\sigma\rVert_{\sup}=\frac{1}{n}\lVert\tanh\rVert_{\sup}=\frac{1}{n}\xrightarrow{n\to \infty}0\). For the other direction, let \((\sigma_n)_{n\in \mathbb{N}}\) be a sequence of one-to-one functions converging uniformly to \(\sigma\). Then for \(x\leq y\) we have \(\sigma(x)=\lim\limits_{n\to\infty}\sigma_n(x)\leq \lim\limits_{n\to\infty}\sigma_n(y)=\sigma(y)\), thus \(\sigma\) is monotone. The decreasing case follows analogously using \(\sigma_n(x):=\sigma(x)-\frac{1}{n}\tanh(x)\).
    \end{proof}
    \bigskip
    The following Theorem \ref{Theorem-Main5} combines 
        Lemma \ref{Lemma-Diffeos_not_dense_in_C0} with auxiliary results concerning activations in \(C_{\mon}^{0}(\mathbb{R},\mathbb{R})\) to conclude that neural networks with continuous and monotone activations cannot be uniform universal approximators of continuous functions with width less than or equal to \(\max\{d_x,d_y\}\) if \(d_y\leq 2d_x\). 

    \begin{theorem}\label{Theorem-Main5}
         Let \(\mathcal{A}\subset C_{\mon}^{0}(\mathbb{R},\mathbb{R})\), i.e. the activations are  monotone and continuous. Let \(d_x,d_y\in \mathbb{N}\), \(d_y \leq  2d_x\), then for any compact \(\mathcal{K}\subset \mathbb{R}^{d_x}\) with non-empty interior, there exists \(c>0\), \(f\in C^0(\mathbb{R}^{d_x},\mathbb{R}^{d_y})\) s.t. \(\lVert f- \phi \rVert_{\mathcal{K},\sup}\geq c\) for all \(\phi \in \FNN_{\mathcal{A}}^{(\max\{d_x,d_y\})}(d_x,d_y)\). Therefore, \(\FNN_{\mathcal{A}}^{(\max\{d_x,d_y\})}(d_x,d_y)\) does not universally approximate \(C^0(\mathbb{R}^{d_x},\mathbb{R}^{d_y})\) on compact sets with respect to the supremum norm, and consequently it is not a global universal approximator either.
    \end{theorem}

    We present the proof of Lemma \ref{Lemma-Diffeos_not_dense_in_C0} and Theorem \ref{Theorem-Main5} in Section \ref{Section-Proof_main5}.

    \begin{remark}\label{Remark-Johnson}
        Theorem~1 of \cite{johnson2018deep} establishes the same lower bound as Theorem~\ref{Theorem-Main5} for the case \(d_x\leq d_y\), but requires the activation to be uniformly continuous and uniformly approximable by one-to-one functions. We strengthen this in two ways: relaxing uniform continuity to mere continuity --- e.g., allowing activations with unbounded derivative such as \(\sigma(x)=\exp(x)\) --- and extending to networks using arbitrary unions of such activations. Both of these combined, by Lemma \ref{Lemma-Relation_one_to_one_mon}, are equivalent to the set of activations being a subset of \(C_{\mon}^{0}(\mathbb{R},\mathbb{R})\). Upon reviewing Johnson's proof, we believe it may already work for continuous activations without relying on uniform continuity, suggesting the original theorem was formulated more restrictively than necessary. Our proof adapts Johnson's approach via unbounded level sets, providing a detailed and rigorous treatment through properties of homeomorphisms that definitively establishes the result for all continuous monotone activations.

        Theorem~3 of \cite{kim2024minimumwidthuniversalapproximation} already handles continuous activations for the case \(d_x < d_y \leq 2d_x\), but considers only single-activation networks. We extend their result to arbitrary unions of activations in \(C_{\mon}^{0}(\mathbb{R},\mathbb{R})\), adapting key elements of their proof while relying on their main argument that injections are not dense in \(C^0\) for this regime.
        
        Thus, Theorem~\ref{Theorem-Main5} unifies and strengthens both prior results under a common framework for multiple-activation networks.
    \end{remark}

    Also note that the Whitney embedding theorem (Proposition 1.0 in Chapter 2 of \cite{hirsch1976differential}) shows that embeddings are dense in $C^0(\mathcal{K},\mathbb{R}^{d_y})$ for any compact $\mathcal{K}\subset \mathbb{R}^{d_x}$ when $d_y > 2d_x$. This is exactly where the proof structure of \cite{kim2024minimumwidthuniversalapproximation} breaks, and therefore also where our proof breaks, as we follow their idea. Their argument constructs a continuous function that cannot be uniformly approximated by injective functions, which exists precisely because embeddings are not dense for $d_x < d_y \leq 2d_x$. For $d_y > 2d_x$, no such counterexample can exist, as every continuous function can be approximated by embeddings. Establishing lower bounds on the minimal width in this regime would therefore require finding structural obstructions specific to neural network function classes, rather than relying on the topological argument that embeddings are not dense.

    Theorem~\ref{Theorem-Main5} applies to most common activation functions \citep{dubey2022activationfunctionsdeeplearning,Activation_functions}, as these are typically continuous and monotone (e.g., sigmoids, ReLU variants). Importantly, it holds even when combining multiple such activations, showing that uniform universal approximation of continuous functions requires networks of width at least \(\max\{d_x,d_y\}+1\) for \(d_y\leq 2d_x\). Since leaky ReLUs with parameters in \((0,1) \cup (1,\infty)\) belong to \(C_{\mon}^{0}(\mathbb{R},\mathbb{R})\), this demonstrates that the minimal width from Theorem~\ref{Theorem-Main1} cannot be achieved for uniform approximation without introducing discontinuous or non-monotone activations in this case.

    Therefore, Theorem~\ref{Theorem-Main5} tells us that achieving optimal minimal width for uniform approximation requires discontinuous activations (e.g., FLOOR or STEP) or non-monotone activations (e.g., \(\swish(x):=\frac{x}{1+\exp(-\beta x)}\)), at least when \(d_y\leq 2d_x\). For the discontinuous case, \citet{park2020minimum,cai2023achieve} showed that FNNs with ReLU+STEP or ReLU+FLOOR can uniformly approximate \(C^0(\mathbb{R}^{d_x},\mathbb{R}^{d_y})\) on compact sets with width \(\max\{d_x+1,d_y\}\) or \(\max\{d_x,d_y\}\) respectively. Moreover, Theorem~\ref{Theorem-Main_sup} shows that FNNs using FLOOR+G-LReLU or SG-LReLU achieve uniform universal approximation on compact sets with width \(\max\{d_x,d_y\}\), while Theorem~\ref{Theorem-Squashable} extends this to FNNs combining FLOOR with squashable or STEP activations, achieving width \(\max\{3,d_x,d_y\}\).

\section{Coding scheme}
\label{Section-coding_scheme}

    The idea of our proof of Theorem \ref{Theorem-Main_sup}, Theorem \ref{Theorem-Main1} and Theorem \ref{Theorem-Squashable} are based on a coding scheme, which was originally presented by \citet{park2020minimum} to show that FNNs with ReLU activations are global approximators of \(L^p\) functions (for details see Theorem 1 in \cite{park2020minimum}). In the following sections we construct the coding scheme to obtain the result that FNNs with leaky ReLU activations universally approximate \(L^p\) functions on compact sets.

\begin{definition}[Coding scheme, section 3 in \citet{park2020minimum}]\label{Definition-Coding_scheme}
		Let \(f^{*}:[0,1]^{d_x}\rightarrow [0,1]^{d_y}\) be a continuous function, which we want to approximate. In what follows, we define the coding scheme for such an arbitrary function with input dimension \(d_x\in \mathbb{N}\) and output dimension \(d_y\in \mathbb{N}\).\\

    \begin{enumerate}
        \item \textbf{Encoder}: For \(n\in \mathbb{N}\) we set \(\mathcal{C}_n:=\{0\cdot 2^{-n}, 2^{-n}, 2\cdot 2^{-n},...,1-2^{-n}\}\) and define quantization functions by \(q_n:\mathbb{R}\rightarrow \mathbb{R},\) \(q_n(x):=\max\{z\in \mathcal{C}_n| z\leq x\}\). For \(K\in \mathbb{N}\) the encoder \(\mathfrak{e}_K:\mathbb{R}^{d_x}\rightarrow \mathcal{C}_{Kd_x}\) is a function that, beginning at the \((K+1)\)-th bit, drops all bits of the binary representation of each component of a vector and then pushes all components into one-dimensional binary representation with a large amount of bits. It is defined by
		\begin{align}\label{formula-encode}
			\mathfrak{e}_K(x):= \sum_{i=1}^{d_x}2^{-(i-1)K} \cdot q_K(x_i)\,.
		\end{align}
        \item \textbf{Memorizer:} The memorizer applies the continuous function \(f^{*}\) to the places of the encoded value corresponding to each dimension, quantizes the results and encodes it again. Thus for \(M\in \mathbb{N}\) representing the places of quantization
		\begin{align}\label{formula-Memorizer}
			\mathfrak{m}_{K,M,f^{*}}:\mathcal{C}_{Kd_x}\longrightarrow \mathcal{C}_{Md_y},\;\mathfrak{m}_{K,M,f^{*}}(\mathfrak{e}_K(x)):=\mathfrak{e}_M(f^{*}(q_K(x))),
		\end{align}where \(q_K(x):=(q_K(x_1),...,q_K(x_{d_x}))\) is the component-wise application of the quantization. As any encoded value corresponds to an unique quantization \(q_K(x)\), the memorizer is well-defined for each \(\mathfrak{e}(x)\in \mathcal{C}_{Kd_x}\).
        \item \textbf{Decoder:} The decoder is used to transform the memorized message back to a quantized vector in \(\mathbb{R}^{d_y}\). For \(M\in \mathbb{N}\) it is given by 
		\begin{align}
			\mathfrak{d}_M:\mathcal{C}_{Md_y}\rightarrow \mathcal{C}_M^{d_y}, \mathfrak{d}_M(c):=\mathfrak{e}_M^{-1}(c)\cap \mathcal{C}_M^{d_y},
		\end{align}where \(\mathcal{C}_M^{d_y}:=\underbrace{\mathcal{C}_M\times ... \times \mathcal{C}_M}_{d_y\text{-times}}\) and \(\mathfrak{e}_M^{-1}\) is the pre-image of the encoder.\\~\\Now we define the coding scheme by composing the three functions from above for a fixed continuous \(f^{*}\), hence for \(K,M\in \mathbb{N}\) we set 
		\begin{align*}
			\mathfrak{c}_{K,M,f^{*}}:\mathbb{R}^{d_x}\rightarrow \mathbb{R}^{d_y}, \mathfrak{c}_{K,M,f^{*}}(x):=\mathfrak{d}_M\circ \mathfrak{m}_{K,M,f^{*}}\circ \mathfrak{e}_K(x)\,.
		\end{align*}
    \end{enumerate}
    
	\end{definition}

\tikzset{
    startstop/.style =
        {rectangle, rounded corners, minimum width=5cm, minimum height=1cm, text centered, draw=black, fill=red!30},
    io/.style = 
        {trapezium, trapezium left angle=70, trapezium right angle=110, minimum width=5cm, minimum height=1cm, text centered, text width=3cm,draw=black, fill=blue!30},
    process/.style = {rectangle, rounded corners, minimum width=1cm, minimum height=1cm, text centered, text width=2.9cm, draw=black, fill=violet!10},
    process_equation/.style = {rectangle, rounded corners, minimum width=3cm, minimum height=1cm, text centered, text width=4.9cm, draw=black, fill=violet!10},
    process_equation_1/.style = {rectangle, rounded corners, minimum width=3cm, minimum height=1cm, text centered, text width=3.4cm, draw=black, fill=violet!10},
    decision/.style = {diamond, minimum width=3cm, minimum height=1cm, text centered, draw=black, fill=green!30},
    arrow/.style = {thick,->,>=stealth},
    myfit/.style={draw,dashed,blue, inner xsep=7pt, inner ysep=12pt, rounded corners=5pt},
    mytitle/.style={draw,densely dashed,blue, fill=violet!50, inner sep=5pt, right, xshift=50pt}
    }

     \begin{figure}[htbp]
\centering
\begin{tikzpicture}[
    node distance=3.4cm
  ]
\node (input) [process, xshift=0cm, yshift=0cm] {x=\(\begin{pmatrix}
    \textcolor{red}{0.1101111}\\
    \textcolor{green}{0.0100010}\\
    \textcolor{blue}{0.1010101}
\end{pmatrix}\)};
\node (quant_input) [process, right of=input, xshift=1.8cm] {
\(\begin{pmatrix}
    \textcolor{red}{0.1101}\\
    \textcolor{green}{0.0100}\\
    \textcolor{blue}{0.1010}
\end{pmatrix}\)};
\node (combine) [process, right of=quant_input, xshift=2.5cm] {0.\textcolor{red}{1101}\textcolor{green}{0100}\textcolor{blue}{1010}};

\node (memorize_decode) [process_equation, below of=combine, xshift=-0.95cm, yshift=0cm] {\(f^{*}\bigg(\begin{pmatrix}
    \textcolor{red}{0.1101}\\
    \textcolor{green}{0.0100}\\
    \textcolor{blue}{0.1010}
\end{pmatrix}\bigg)=\begin{pmatrix}
     \textcolor{red}{0.011}\\
    \textcolor{green}{0.001}\\
    \textcolor{blue}{0.111}\\
    \textcolor{orange}{0.110}
\end{pmatrix}\)};

\node (encode_memorize) [process, left of=memorize_decode, xshift=-1.7cm] {0.\textcolor{red}{01}\textcolor{green}{00}\textcolor{blue}{11}\textcolor{orange}{11}};
\node (decode) [process_equation_1, left of=encode_memorize, xshift= -1.5cm] {\(\mathfrak{c}_{4,2,f^{*}}(x)=\begin{pmatrix}
    \textcolor{red}{0.01}\\
    \textcolor{green}{0.00}\\
    \textcolor{blue}{0.11}\\
    \textcolor{orange}{0.11}
\end{pmatrix}\)};

\draw [arrow] (input) -- node[anchor=south] {\(\mathfrak{q}_{4}(\cdot)\)} (quant_input);
\draw [arrow] (quant_input) -- node[anchor=south] {\textbf{combine}} (combine);
\draw [arrow] (combine) -- node[anchor=south east] {\(f^{*}(\cdot)\)} (memorize_decode);
\draw [arrow] (memorize_decode) -- node[anchor=south] {\(\mathfrak{e}_2(\cdot)\)} (encode_memorize);
\draw [arrow] (encode_memorize) -- node[anchor=south] {\(\mathfrak{d}_2(\cdot)\)} (decode);

\node[fit=(quant_input)(combine),myfit] (encoder) {};
\node[mytitle] at (encoder.north west) {\(\mathfrak{e}_{4}(\cdot)\)};

\node[fit=(memorize_decode)(encode_memorize),myfit] (memorizer) {};
\node[mytitle] at (memorizer.north west) {\(\mathfrak{m}_{4,2,f^{*}}(\cdot)\)};

\node[fit=(decode),myfit] (decoder) {};
\node[mytitle] at (decoder.north west) {\(\mathfrak{d}_{2}(\cdot)\)};

\end{tikzpicture}

\caption{A visualization of the coding scheme on an explicit example: Note that the considered \(f^{*}\) fulfills \(f^{*}(\mathfrak{q}_4(x))=(0.011,\hdots,0.110)^T\), which leads to the shown output of the memorizer. Moreover the parameter of the encoder needs to coincide with the first parameter of the memorizer and the second parameter of the memorizer needs to coincide with the parameter of the decoder for the coding scheme to be well-defined. For details about the different parts of the coding scheme we refer the reader to Definition \ref{Definition-Coding_scheme}.\label{fig:coding_scheme}}
\end{figure}

        The general idea of the coding scheme can be broken down in several key parts, which are visualized in figure \ref{fig:coding_scheme}. First we note that we can bijectively transform points on a multi-dimensional grid into points on a one-dimensional grid that has the same amount of grid points as the multi-dimensional grid, which is done by the \textit{encoder}. Points that are not on the grid are usually transformed to suitable approximations of some grid point, which depends on the explicit construction of the encoder. By defining a function between the encoded points on the multi-dimensional grid and their corresponding encoded values under a target function \(f^*:[0,1]^{d_x}\rightarrow [0,1]^{d_y}\) one obtains the memorizer corresponding to \(f^*\) and the grids. Therefore the memorizer of \(f^*\) is a one-dimensional real-valued function that holds the information of all values of \(f^*\) on a multi-dimensional grid. The last part of the coding scheme is given by the decoder, which reconstructs the corresponding multi-dimensional function values of \(f^*\) given their corresponding encoded values as obtained by the memorizer. Choosing the multi-dimensional grids (thus also the one-dimensional) sufficiently fine, any such function \(f^*:[0,1]^{d_x}\rightarrow [0,1]^{d_y}\) can be approximated with arbitrary precision in terms of the \(L^p\) norm for any \(p\in [1,\infty)\) as we show shortly in Lemma \ref{Lemma-Accuracy_coding_scheme}. This is possible as the function \(f^*\) is uniformly continuous and therefore the perfect reconstruction of \(f^*\) on the grids, achieved by the coding scheme is sufficient to approximate it well on the majority of the volume. Altogether this makes the coding scheme an enormously helpful tool for the approximation of multi-dimensional continuous functions w.r.t. the \(L^p\) norms for \(p\in[1,\infty)\).

    \begin{definition}[Modulus of continuity]\label{Definition-Modulus_of_cont}
        Let \(\lVert \cdot \rVert_{(1)}\) be a norm on \(\mathbb{R}^{d_x}\), \(\lVert \cdot \rVert_{(2)}\) a norm on \(\mathbb{R}^{d_y}\), \(\mathcal{K} \subseteq \mathbb{R}^{d_x}\), and \(f:\mathcal{K}\rightarrow \mathbb{R}^{d_y}\) a continuous function. For \(r\geq 0\) we define the modulus of continuity of \(f\) on \(\mathcal{K}\) by
        \[\omega_{f,\mathcal{K}}(r) = \sup_{\substack{x,y \in \mathcal{K} \\ \|x-y\|_{(1)} \leq r}} \|f(x) - f(y)\|_{(2)}.\]
    \end{definition}
    
    \begin{remark}
        The modulus of continuity is monotone increasing and satisfies \(\omega_{f,\mathcal{K}}(0) = 0\). Moreover, \(f\) is uniformly continuous on \(\mathcal{K}\) if and only if \(\omega_{f,\mathcal{K}}(r) \to 0\) as \(r \to 0\). In particular, if \(\mathcal{K}\) is compact, then every continuous function \(f:\mathcal{K} \to \mathbb{R}^{d_y}\) is uniformly continuous, hence \(\omega_{f,\mathcal{K}}\) is continuous at \(0\).
    \end{remark}
    
        The following lemma demonstrates that the coding scheme is capable of uniformly approximating any continuous function on a multidimensional unit cube with arbitrary precision and determines its accuracy. 
        By applying suitable linear transformations, this result can usually be extended to functions defined on and taking values in arbitrary compact sets, 
        a generalization that we establish later in Lemma \ref{Lemma-Reduce_universal_approximation_to_unit_cube}.

        \begin{lemma}\label{Lemma-Accuracy_coding_scheme}
        For compactly-supported, continuous \(f^{*}:\mathbb{R}^{d_x}\rightarrow \mathbb{R}^{d_y}\), with support in \([0,1]^{d_x}\) it holds that 
        \begin{align*}
            \lVert f^{*}-\mathfrak{c}_{K,M,f^{*}}\rVert_{[0,1]^{d_x},\sup} \leq  \omega_{f^{*},[0,1]^{d_x}}(2^{-K})+2^{-M}\;,
        \end{align*}
        where \(\omega_{f^{*},[0,1]^{d_x}}\) is the modulus of continuity of \(f^{*}\) on \([0,1]^{d_x}\) w.r.t. the \(\ell^\infty\)-norm (maximum norm) on both \(\mathbb{R}^{d_x}\) and \(\mathbb{R}^{d_y}\). Thus for any \(\epsilon>0\), by choosing \(K,M\) sufficiently large, we find 
        \begin{align*}
            \lVert f^{*}-\mathfrak{c}_{K,M,f^{*}}\rVert_{[0,1]^{d_x},\sup}<\epsilon\;\,. 
        \end{align*}
        This additionally implies that for any \(\epsilon>0\), by choosing \(K,M\) sufficiently large, we find 
        \begin{align*}
            \lVert f^{*}-\mathfrak{c}_{K,M,f^{*}}\rVert_{[0,1]^{d_x},p}<\epsilon\;\,. 
        \end{align*}
        \end{lemma}
        
        \begin{proof}
        Let \(\epsilon>0\), first we note that for \(x_i\in[0,1]\) the quantizer \(q_K(x_i)\) discards all but the first \(K\) places of the binary representation, thus for \(K\in \mathbb{N}\)
        \begin{align}\label{formula-Quantization1}
            \sup_{x_i\in[0,1]}|x_i-q_K(x_i)|\leq \sum_{i=K+1}^{\infty}2^{-i}=\frac{2^{-K-1}}{2^{-1}}=2^{-K}\,.
        \end{align}
        Hence, by choosing \(K,M\) large enough s.t. \(\omega_{f^{*},[0,1]^{d_x}}(2^{-K})<\frac{\epsilon}{2}\), \(2^{-M}<\frac{\epsilon}{2}\), we obtain
        
        \begin{align*}
            &\lVert f^{*}-\mathfrak{c}_{K,M,f^{*}}\rVert_{[0,1]^{d_x},\sup}\leq \lVert f^{*}-f^{*}\circ q_K  \rVert_{[0,1]^{d_x},\sup} +\lVert f^{*}\circ q_K -q_M\circ f^{*}\circ q_K \rVert_{[0,1]^{d_x},\sup}\\
            &=\sup_{x\in [0,1]^{d_x}}\lVert f^{*}(x)-f^{*}\circ q_K(x) \rVert_{\max} +\sup_{x\in [0,1]^{d_x}}\lVert f^{*}\circ q_K(x) -q_M\circ f^{*}\circ q_K(x) \rVert_{\max}\\
            &\leq \sup_{x\in [0,1]^{d_x}}\omega_{f^{*},[0,1]^{d_x}}(\underbrace{\lVert x  - q_K(x)\rVert_{\max}}_{\leq  2^{-K}  \text{ by (\ref{formula-Quantization1})}}) +\sup_{x\in [0,1]^{d_x}}\underbrace{\lVert f^{*}\circ q_K(x) -q_M\circ f^{*}\circ q_K(x)\rVert_{\max}}_{\leq 2^{-M} \text{ by (\ref{formula-Quantization1})}}\\
            &\leq \omega_{f^{*},[0,1]^{d_x}}(2^{-K})+ 2^{-M}<\frac{\epsilon}{2}+\frac{\epsilon}{2}=\epsilon\;.
        \end{align*}
        \end{proof}

    Lemma \ref{Lemma-Accuracy_coding_scheme} serves as a central tool for constructing universal approximators. 
    In our approaches, we first construct feedforward neural networks that accurately approximate the coding scheme itself on arbitrary continuous functions. 
    Subsequently, we use these coding scheme approximators and their structural properties in conjunction with the lemma to obtain universal approximation results for larger classes of functions.

    \begin{remark}\label{Remark-Practice_coding_scheme}
        As already discussed in Remark~\ref{Remark-First_layer_depth_coding_scheme} (see also Remark~\ref{Remark-Layer_depth_coding_scheme} for a detailed explanation), 
        the depth of the approximations constructed via the coding scheme increases exponentially with the desired approximation accuracy. 
        Consequently, this behavior may cause rapid growth in runtime in practical implementations. 
        The present remark highlights another significant limitation of the coding scheme in practice, namely the issue of numerical representation: 
        a datatype would require an impractically high level of precision to accurately encode the scheme. 
        The intention here is to emphasize that, although the coding scheme serves as a powerful theoretical tool for establishing various intricate universal approximation results, 
        its explicit constructions are not intended for direct practical use and will typically fail unless adapted suitably. 
    \end{remark}

\section{Expressing piecewise linear functions with leaky ReLUs}
\label{sec:piecewise-linear}\label{Section-Piecewise_linear}
	
		In this section, we thoroughly investigate the expressive power of width-one FNNs with LReLU activations, where we also look at G-LReLUs, allowing negative slopes, and at S-LReLUs with steps. Moreover, we show that one-dimensional FNNs with LReLU activations can be used to exactly construct monotone piecewise linear continuous functions. Furthermore, any piecewise linear function can be decomposed into a sum of two strictly monotone, i.e. either strictly increasing or decreasing, piecewise linear functions. Therefore, we show that one can express arbitrary one-dimensional piecewise linear functions with two streams of one-dimensional leaky ReLU FNNs. These results are fundamental for obtaining suitable approximations of the coding schemes in the following sections.

      \begin{definition}
        For \(a,b,c,d\in \mathbb{R}\) we define 
        \begin{align*}
            \sigma_{a,b}^{c,d}(x):=\begin{cases}
                a(x-c)+d & x<c\\
                b(x-c)+d & x\geq c
            \end{cases},
        \end{align*}
        which are all continuous piecewise linear functions with two pieces. We set \(\sigma_{a,b}:=\sigma_{a,b}^{0,0}\) and additionally define 
        \begin{align*}
            \rho_{a, s}^{c,d}(x):=\begin{cases}
                a x + d & x < c\\
                x + d + s & x\geq c
            \end{cases}.
        \end{align*}
    \end{definition}
    
    In the first result of this section, we prove some of the most fundamental properties that leaky ReLUs satisfy. Conceptually, part (2.) of the next lemma shows that leaky ReLUs can be rescaled with affine functions such that the slopes for positive and negative values can be arbitrary positive numbers. The reader should also note that leaky ReLUs with non-zero slope parameters are invertible functions, which is obvious from their structure, but conceptually important for the generalization of our results to diffeomorphisms in the later sections of this paper.

        \begin{lemma}\label{Lemma-LReLU_properties}
        ~\\(1.) LReLUs are positively homogenous, thus for \(a>0\), \(\alpha\in (0,\infty)\) it holds that \(\sigma_{\alpha}(ax)=a\sigma_{\alpha}(x)\).
        \\~\\(2.) For any fixed \(\alpha\in (0,1)\cup (1,\infty)\) it holds that \(\Id=\sigma_1\in \FNN_{\{\sigma_{\alpha}\}}^{(1)}(1,1;3)\) and \(\ABS(\cdot)=\sigma_{\shortminus 1}\in \FNN_{\{\sigma_{\shortminus\alpha},\sigma_\alpha\}}^{(1)}(1,1;5)\).
        \\~\\(3.)  For \(a, b\in \mathbb{R}\setminus \{0\}\), \(c,d\in \mathbb{R}\) it holds that \(\sigma_{a,b}^{c,d}\in \FNN^{(1)}_{\mathcal{F}_{\pm}}(1,1)\) and if \(\sgn(a)=\sgn(b)\), then we also have \(\sigma_{a,b}^{c,d}\in \FNN^{(1)}_{\mathcal{F}_{+}}(1,1)\).
            ~\\(4.) Let \(\alpha\in (0,\infty)\), \(s\in [0,1]\), then we have \(\rho_{1,s}\in \FNN^{(1)}_{\mathcal{F}_{\alpha,\mathfrak{s}}}(1,1)\) and \(\rho_{1,s}\in \FNN^{(1)}_{\mathcal{F}_{+,1}^{*}}(1,1)\).
        ~\\(5.) Let \(\alpha\in (0,1)\cup (1,\infty)\), \(n\in \mathbb{N}\) and denote with \(\sigma_\alpha^n:=\underbrace{\sigma_\alpha\circ \hdots \sigma_\alpha}_{n\text{ times}}\in \FNN_{\{\sigma_{\alpha}\}}^{(1)}(1,1;n+1)\), then, we have \(\sigma_\alpha^n=\sigma_{\alpha^n}\) and there exists \((\sigma_n)_{n\in \mathbb{N}}\subset \FNN_{\{\sigma_{\alpha}\}}^{(1)}(1,1;n+1)\), such that \(\lim\limits_{n\to\infty}\lVert \sigma_n -\ReLU \rVert_{\mathcal{K},\sup}=0 \) for any compact \(\mathcal{K}\subset \mathbb{R}\).
    \end{lemma}
        \begin{proof}
            \textbf{(Proof of (1.)):} This is immediately clear as each piecewise linear function is positively homogenous and as \(\sgn(ax)=\sgn(x)\) for all \(x\in \mathbb{R}\), \(a>0\).
            \\\textbf{(Proof of (2.)):} This follows from
            \begin{align*}
                &\Id=\frac{1}{\alpha}\sigma_{\alpha}(-\sigma_{\alpha}(-x))\overset{1.}{=}\sigma_{\alpha}(-\alpha^{-1}\sigma_{\alpha}(-x))\in \FNN_{\{\sigma_{\alpha}\}}^{(1)}(1,1;3)\;,\\
                &\ABS(x)\overset{(1.)}{=} \sigma_{-\alpha}\circ \sigma_{\alpha}\circ (- \alpha^{-2}\sigma_{\alpha})\circ\sigma_{\alpha}(-x))\in \FNN_{\{\sigma_{\shortminus\alpha},\sigma_\alpha\}}^{(1)}(1,1;5)\;.
            \end{align*}
            \\\textbf{(Proof of (3.)):} Let \(a,b\in \mathbb{R}\setminus\{0\}\), set \(\alpha_1:=\frac{\min\{|a|,|b|\}}{\max\{|a|,|b|\}}\), \(\alpha_2:=\sgn(a)\sgn(b)\), \(\beta_1:=\max\{|a|,|b|\}\) and \(\beta_2:=\begin{cases}
                \sgn(b) & |b| \geq |a| \\
                -\sgn(a) & |b| < |a|
            \end{cases}\). Then it holds that
            \begin{align*}
                \sigma_{a,b}^{c,d} = (\beta_2\Id+ d)\circ \sigma_{\alpha_2} \circ (\beta_1\Id) \circ \sigma_{\alpha_1}\circ (\Id - c) \in \FNN^{(1)}_{\mathcal{F}_{\pm}}(1,1)\;,
            \end{align*}where we used that \(\sigma_1,\sigma_{-1}\in \FNN^{(1)}_{\mathcal{F}_{\pm}}(1,1)\) by 2., which is necessary to construct \(\sigma_{\alpha_i},i\in [2]\), as \(\alpha_1\in (0,1]\), \(\alpha_2\in \{-1,1\}\). Additionally we see that \(\sigma_{a,b}^{c,d}\in \FNN^{(1)}_{\mathcal{F}_{+}}(1,1)\) if \(\sgn(a)=\sgn(b)\), because then \(\alpha_2=1\), hence we don't need negative LReLU parameters for the construction anymore.
            \\\textbf{(Proof of (4.)):} For \(\alpha\in (0,\infty)\), \(s\in [0,1]\) we have 
            \begin{align*}
                \rho_{1,s}=(-\Id)\circ\sigma_{\alpha}\circ (-\alpha^{-1}\Id)\circ \rho_{\alpha,s}\in \FNN_{\mathcal{F}_{\alpha,\mathfrak{s}}}^{(1)}(1,1)\;.
            \end{align*}Moreover, for \(a\in\mathbb{R}\), \(c,d\in \mathbb{R}\) we have that 
            \begin{align*}
                \rho_{1,s} = (-s\Id+s)\circ \sigma_{s^{-1}}\circ (-\Id+1)\circ \rho_{s^{-1},1} \in \FNN^{(1)}_{\mathcal{F}^{*}_{+,1}}(1,1) \;.
            \end{align*}
            \\\textbf{(Proof of (5.)):} One can directly see that \(\sigma_\alpha^n = \sigma_{\alpha^n}\), now, if \(\alpha \in (0,1)\) we set \(\sigma_n:=\sigma_{\alpha^n}\in \FNN_{\{\sigma_{\alpha}\}}^{(1)}(1,1;n+1)\) and if \(\alpha \in (1,\infty)\) we set \(\sigma_n:=\sigma_{\alpha^{-n}}=(-\alpha^{-n}\Id)\circ \sigma_{\alpha^n}\circ (-\Id)\) with \(\sigma_n\in \FNN_{\{\sigma_{\alpha}\}}^{(1)}(1,1;n+1)\) for all \(n\in \mathbb{N}\). Hence, if \(\alpha \in (0,1)\) we have \(\lim \limits_{n\to\infty}\alpha^n=0\) and if \(\alpha \in (1,\infty)\) we have \(\lim \limits_{n\to\infty}\alpha^{-n}=0\), therefore, for any compact \(\mathcal{K}\subset \mathbb{R}\) it holds that
            \begin{align*}
                \lim\limits_{n\to\infty}\lVert \sigma_n -\ReLU \rVert_{\mathcal{K},\sup}=|\inf(\mathcal{K})|\lim\limits_{n\to\infty} | \beta_n|=0\;.
            \end{align*}
        \end{proof}      

       \begin{definition}[Zig-zag functions]\label{Definition-Zig_zag}
            Let \(a, b \in \mathbb{R}\), \(a<b\), for some \(n\in \mathbb{N}\) let \(a_0<b_0<a_1<\hdots <b_{n-1}<a_n \) and define \(I_0:=(-\infty,a_0]\), \(I_i:=[a_i,b_i]\), \(i\in [n]\), \(J_i:=[b_i,a_{i+1}]\), \(i\in [0,n-1]\), \(J_n:=[b_n,\infty)\). Then we call the function 
            \begin{align*}
                h(x):=\begin{cases}
                    \frac{b - a}{b_i - a_i} (x - a_i) + a\;, & \forall i\in [0,n],\;\forall x\in I_i,\\
                    \frac{a-b}{a_{i+1} - b_i} (x - b_i) + b\;, & \forall i\in [0,n],\;\forall x\in J_i,
                \end{cases}
            \end{align*}
            a zig-zag function between \(a\) and \(b\) with \(n\) bumps (corresponding to the points \(a_i,b_i\)). Additionally we define \(\mathcal{Z}^{(n)}_{a,b}:=\{h:\mathbb{R}\rightarrow \mathbb{R}\,|\,h \text{ is a zig-zag function between \(a\) and \(b\) with \(n\) bumps}\}\) and \(\mathcal{Z}_{a,b}:=\bigcup_{n=1}^{\infty}\mathcal{Z}^{(n)}_{a,b}\).
        \end{definition}

        \begin{lemma}\label{Lemma-Zig_zag_LReLU}
            For \(n\in \mathbb{N}\), \(a,b\in \mathbb{R}\), \(a<b\) and any \(h\in \mathcal{Z}_{a,b}\) corresponding to \(a_0<b_0<\hdots <b_{n-1}<a_{n}\), there exists \(f\in\FNN_{\mathcal{F}_{\pm}}^{(1)}(1,1)\) such that for \(I:=[a_0,\infty)\) it holds that \(f\vert_{I}=h\vert_{I}\).
        \end{lemma}
        
        \begin{proof}
            We show by induction over \(n\in \mathbb{N}\), that for all \(a,b\in \mathbb{R}\), \(a<b\) and any \(h\in \mathcal{Z}^{(n)}_{a,b}\), w.r.t. \(a_0,b_0,\hdots, b_{n-1},a_{n}\), there exists \(f\in\FNN_{\mathcal{F}_{\pm}}^{(1)}(1,1)\) such that \(f\vert_{[a_0,\infty)}=h\vert_{[a_0,\infty)}\), which directly implies the claim by definition of \(\mathcal{Z}_{a,b}\).
            
            \textbf{(n=1):} For \(a_0,b_0,a_1\), any zig-zag function between \(a\) and \(b\) with one bump is given by
            \begin{align*}
                h(x)=\begin{cases}
                    \frac{b-a}{b_0-a_0}(x-a_0) + a & x\in [a_0, b_0] \\
                    \frac{a-b}{a_1-b_0}(x-b_0) + b & x\in [b_0, \infty),
                \end{cases}
            \end{align*}
            which, by setting \(\alpha:=\frac{b-a}{b_0-a_0}\), \(\beta:=\frac{a-b}{a_1-b_0}\), is equal to \(\sigma_{\alpha, \beta}^{b_0,b}\), which is an element of \(\FNN_{\mathcal{F}_{\pm}}^{(1)}(1,1)\) by Lemma \ref{Lemma-LReLU_properties} (3), since \(\alpha > 0\) and \(\beta < 0\) (as \(a < b\)).
            
            \textbf{(n → n+1):} Consider a zig-zag function \(h\) with \(n+1\) bumps between \(a\) and \(b\) corresponding to \(a_0,b_0,\hdots, b_{n},a_{n+1}\), and let \(g\) be the zig-zag function with only the first \(n\) bumps of \(h\) fulfilling \(h|_{[a_0, a_n]} = g|_{[a_0, a_n]}\). Then by induction hypothesis there is \(\psi_1\in \FNN_{\mathcal{F}_{\pm}}^{(1)}(1,1)\) such that \(\psi_1\vert_{[a_0,\infty)} = g\vert_{[a_0,\infty)}\). 
            
            We adapt \(\psi_1\) by applying generalized leaky ReLUs such that it fits \(h\) on \([a_0,\infty)\). First, we fold \(\psi_1\) up at \(a_{n}\) such that it runs through \(b_{n}\), which can be done by applying 
            \begin{align*}
                \sigma_{\alpha,1}^{a,a} = \begin{cases}
                    \alpha (y-a) + a & y< a\\
                    (y-a) + a = y & y\geq a
                \end{cases}
            \end{align*}
            with \(\alpha = \frac{-(a_n-b_{n-1})}{b_n-a_n}<0\). 
            
            Set \(\psi_2:=\sigma_{\alpha,1}^{a,a} \circ \psi_1\), which is in \(\FNN_{\mathcal{F}_{\pm}}^{(1)}(1,1)\) by Lemma \ref{Lemma-LReLU_properties} (3) and fulfills
            \begin{align*}
                \psi_2(x)=\sigma_{\alpha,1}^{a,a}(\psi_1(x))=\psi_1(x)=g(x)=h(x) \;\forall x\in [a_0,a_n]
            \end{align*}
            as \(\psi_1(x)\geq a\) on \([a_0,a_n]\). For \(x\geq a_n\), we have \(g(x) = \frac{a-b}{a_n - b_{n-1}}(x-a_n)+a < a\), thus
            \begin{align*}
                \psi_2(x)&=\sigma_{\alpha,1}^{a,a}(g(x))=\alpha(g(x)-a) + a\\
                &=\alpha\left(\frac{a-b}{a_n - b_{n-1}}(x-a_n)\right) + a\\
                &=\frac{b-a}{b_n-a_n}(x-a_n) + a\;.
            \end{align*}
            
            For constructing the \((n+1)\)-th bump, we fold down by applying 
            \begin{align*}
                \sigma_{1,\beta}^{b,b}=\begin{cases}
                    (y-b) + b = y & y\leq b\\
                    \beta (y-b) + b & y> b
                \end{cases}
            \end{align*}
            with \(\beta:=\frac{-(b_n-a_n)}{a_{n+1}-b_n}<0\). Set \(\psi:=\sigma_{1,\beta}^{b,b}\circ \psi_2\), which is in \(\FNN_{\mathcal{F}_{\pm}}^{(1)}(1,1)\) by Lemma \ref{Lemma-LReLU_properties} (3). It fulfills
            \begin{align*}
                \psi(x)=\sigma_{1,\beta}^{b,b}(\psi_2(x))=\psi_2(x)=h(x) \;\forall x\in [a_0,b_n]
            \end{align*}
            as \(\psi_2(x)\leq b\) on \([a_0,b_n]\). For \(x\geq b_n\), we have \(\psi_2(x) = \frac{b-a}{b_n-a_n}(x-a_n)+a\), and at \(x=b_n\): \(\psi_2(b_n) = b\). For \(x > b_n\), \(\psi_2(x) > b\), thus
            \begin{align*}
                \psi(x)&=\beta(\psi_2(x)-b) + b=\beta\left(\frac{b-a}{b_n-a_n}(x-a_n)+a-b\right) + b\\
                &=\beta\left(\frac{b-a}{b_n-a_n}(x-b_n)\right) + b=\frac{a-b}{a_{n+1}-b_n}(x-b_n) + b\;,
            \end{align*}
            therefore \(\psi \in \FNN_{\mathcal{F}_{\pm}}^{(1)}(1,1)\) fulfills \(\psi\vert_{[a_0,\infty)}=h\vert_{[a_0,\infty)}\), which concludes the proof.
        \end{proof}

    \bigskip
    Now, we introduce our main classes of piecewise linear functions used in the upcoming proofs.
    
    \begin{definition}\label{Definition-PLC}
        We define \(\PLC\) as the set of all piecewise linear and continuous functions \(f:\mathbb{R}\rightarrow \mathbb{R}\) with finitely many slopes. For \(n\in \mathbb{N}\), let \(\PLC_n\subset \PLC\) be the set of piecewise linear and continuous functions with \(n\) slopes. Thus for \(f\in \PLC_n\) there exist slopes \(a_1,...,a_n\in \mathbb{R}\) with \(a_{i}\neq a_{i+1}\) for \(i\in [n-1]\), and partition points \(-\infty=:x_0<x_1<...<x_{n-1}<x_n:=\infty\), where \(x_1,...,x_{n-1}\in\mathbb{R}\) are called the kinks of \(f\), such that \(f\vert_{(x_{i-1},x_{i})}(x)=a_ix+b_i\) for \(i\in [n]\) with suitable \(b_i\in \mathbb{R}\) ensuring \(f\) is continuous. By definition it holds that \(\PLC=\bigcup_{n=1}^{\infty}\PLC_n\).
    \end{definition}
    
    \begin{definition}\label{Definition-PLCSM}
        We define \(\PLCSM\subset \PLC\) as the set of all piecewise linear, continuous and strictly monotone functions \(f:\mathbb{R}\rightarrow \mathbb{R}\) with finitely many slopes. For \(n\in \mathbb{N}\), let \(\PLCSM_n\subset \PLCSM\) be the set of piecewise linear, continuous and strictly monotone functions with \(n\) slopes. Thus for \(f\in \PLCSM_n\) there exist slopes \(a_1,...,a_n\in \mathbb{R}\setminus \{0\}\) with \(a_{i}\neq a_{i+1}\) for \(i\in[n-1]\), which are either all positive or all negative, and partition points \(-\infty=:x_0<x_1<...<x_{n-1}<x_n:=\infty\), where \(x_1,...,x_{n-1}\in\mathbb{R}\) are called the kinks of \(f\), such that \(f\vert_{(x_{i-1},x_{i})}(x)=a_ix+b_i\) for \(i\in [n]\) with suitable \(b_i\in \mathbb{R}\) ensuring \(f\) is continuous. By definition it holds that \(\PLCSM=\bigcup_{n=1}^{\infty}\PLCSM_n\).
    \end{definition}
    
    \begin{remark}\label{Remark-Piecewise_linear_LU_FNN}
        For one-dimensional functions, LU-decomposable weight matrices correspond to invertible scalars in \(\mathbb{R}\setminus\{0\}\). Thus \(\LU_{\mathcal{M}}(1)\) denotes the set of width-1 FNNs with activations in \(\mathcal{M}\) and invertible (non-zero) weight scalars. Note that for each set of activation functions \(\mathcal{M}\) we have \(\LU_{\mathcal{M}}(1)\subset \FNN_\mathcal{M}(1,1)\) by definition and \(\LU_{\mathcal{M}}(1)\subset \FNN_\mathcal{M}^{(1)}(1,1)\), as each \(f\in \LU_{\mathcal{M}}(1)\) corresponds to a width-1 FNN with activations in \(\mathcal{M}\).
    \end{remark}
    
    In the following lemma, we obtain strong approximation results for affine concatenations of invertible leaky ReLUs for piecewise linear, continuous, strictly monotone functions by using the properties from Lemma \ref{Lemma-LReLU_properties}. This result helps us significantly in further proofs, since it allows us to perfectly fit any function in \(\PLCSM\) with a width-1 FNN using invertible leaky ReLU activations. Note that by comparison, ReLUs can approximate any piecewise linear continuous function, but only with a FNN of width \(2\). This difference in the necessary width for fitting piecewise linear functions is the crucial reason that allows us to reduce the width of the considered FNNs with leaky ReLU activations by \(1\) in many cases compared to the similar Theorem 1 in \cite{park2020minimum}, which considers only FNNs with ReLU activations.

	\begin{lemma}\label{Lemma-PLCSM_in_LU}
        For the set \(\mathcal{F}_{+}=\{\sigma_\alpha\,|\alpha\in (0,1)\}\) of invertible leaky ReLUs it holds that \(\PLCSM\subset \LU_{\mathcal{F}_{+}}(1)\subset \FNN_{\mathcal{F}_{+}}^{(1)}(1,1)\).
    \end{lemma}
    
    \begin{proof}
        We show by induction over \(n\in \mathbb{N}\) that for every strictly increasing \(f_n\in \PLCSM_n\) it holds that \(f_n\in \LU_{\mathcal{F}_{+}}(1)\). As for every strictly decreasing function \(f_n\in \PLCSM_n\), \(-f_n\) is strictly increasing and as \(\LU_{\mathcal{F}_{+}}(1)\) is closed under scalar multiplication by non-zero scalars, this implies that \(\PLCSM_n\subset \LU_{\mathcal{F}_{+}}(1)\). Because \(\PLCSM=\bigcup_{n=1}^{\infty}\PLCSM_n\) and by definition \(\LU_{\mathcal{F}_{+}}(1)\subset \FNN_{\mathcal{F}_{+}}^{(1)}(1,1)\), this finishes the proof.
        
        \textbf{(n=1):} \(f\in \PLCSM_1\) is given by \(f(x)=ax+b\) for some \(a\in (0,\infty)\), \(b\in\mathbb{R}\), hence choosing \(g(x)=ax+b\in \Aff(1)\) we find that \(f(x)=g(x)\circ \sigma_1 (x)\in \LU_{\mathcal{F}_{+}}(1)\), where \(\sigma_1\in\FNN_{\mathcal{F}_+}^{(1)}(1,1)\) by Lemma \ref{Lemma-LReLU_properties} (2).
        
        \textbf{(n → n+1):} Let \(f_{n+1}\in \PLCSM_{n+1}\) be strictly increasing, hence as written in Definition \ref{Definition-PLCSM}, it has positive slopes \(a_1,...,a_{n+1}\in (0,\infty)\) with \(a_{i}\neq a_{i+1}\) for \(i\in [n]\), and kinks \(x_1<...<x_n\) such that \(f_{n+1}\vert_{(x_{i-1},x_{i})}(x)=a_ix+b_i\) for \(i\in [n+1]\) with suitable \(b_i\in \mathbb{R}\) ensuring \(f_{n+1}\) is continuous. 
        
        We define 
        \begin{align*}
            f_n(x):=\begin{cases}
                a_2(x-x_2)+f_{n+1}(x_2)=a_2x+b_2 & x\in (-\infty,x_2)\\
                f_{n+1}(x) & x\in [x_2,\infty)
            \end{cases},
        \end{align*}
        which has \(n\) slopes \(a_2,...,a_{n+1}\) with corresponding kinks \(x_2<...<x_n\), thus \(f_n\in \PLCSM_n\subset \LU_{\mathcal{F}_{+}}(1)\), where the last inclusion follows from the induction hypothesis. 
        
        As \(a_1,a_2>0\) by assumption, it follows from Lemma \ref{Lemma-LReLU_properties} (3) that \(\sigma_{a_1,a_2}^{0,0}\in\FNN_{\mathcal{F}_{+}}^{(1)}(1,1)\). Note that because \(f_n\) is strictly increasing, we know that \(f_n(x)-f_n(x_1)\geq 0\) if and only if \(x\geq x_1\). Hence we obtain 
        \begin{align*}
            g(x):=\left(\frac{1}{a_2}\cdot \Id\right)\circ \sigma_{a_1,a_2}^{0,0} \circ (f_n(x)-f_n(x_1))=\begin{cases}
                \frac{a_1}{a_2}(f_n(x)-f_n(x_1)) & x\in (-\infty,x_1)\\
                f_n(x)-f_n(x_1) & x\in [x_1,\infty)
            \end{cases}.
        \end{align*}
        
        Since \(f_n\in \LU_{\mathcal{F}_+}(1)\) by the induction hypothesis, and \(g\) is obtained by composing \(f_n\) with affine transformations (subtracting the constant \(f_n(x_1)\)), then applying \(\sigma_{a_1,a_2}^{0,0}\in\FNN_{\mathcal{F}_+}^{(1)}(1,1)\), and finally scaling by the non-zero scalar \(1/a_2\), we have \(g\in \FNN_{\mathcal{F}_{+}}^{(1)}(1,1)\). Moreover, all weight scalars in this composition are non-zero, preserving the LU property, so \(g\in\LU_{\mathcal{F}_+}(1)\).
        
        We compute the slopes of \(g\): 
        \(g'\vert_{(-\infty,x_1)}(x)=\frac{a_1}{a_2} \cdot a_2 = a_1\), and 
        \(g'\vert_{(x_i,x_{i+1})}(x)=f_n'\vert_{(x_i,x_{i+1})}(x)=a_{i+1}\) for \(i\in [n]\), 
        meaning that \(g\in \PLCSM_{n+1}\) has the same slopes as \(f_{n+1}\). 
        
        Combining this with the continuity of \(f_{n+1}\), we verify that \(g(x)+f_{n}(x_1)=f_{n+1}(x)\) for all \(x\in\mathbb{R}\):
        
        For \(x\in (-\infty,x_1)\):
        \begin{align*}
            g(x)+f_{n}(x_1)&=\frac{a_1}{a_2}(f_n(x)-f_n(x_1))+f_n(x_1)\\
            &=\frac{a_1}{a_2}(a_2x+b_2-a_2x_1-b_2)+a_2x_1+b_2\\
            &=a_1(x-x_1)+a_2x_1+b_2\\
            &\stackrel{f_{n+1}\text{ cont.}}{=}a_1(x-x_1)+a_1x_1+b_1=a_1x+b_1=f_{n+1}(x).
        \end{align*}
        
        For \(x\in [x_1,\infty)\):
        \begin{align*}
            g(x)+f_n(x_1)=f_n(x)-f_n(x_1)+f_n(x_1)=f_n(x)=f_{n+1}(x),
        \end{align*}
        where the last equality follows from the definition of \(f_n\) for \(x\in [x_2,\infty)\), and from continuity at \(x_2\) which ensures \(f_n(x)=f_{n+1}(x)\) also for \(x\in[x_1,x_2)\).
        
        Therefore \(f_{n+1}(x)=g(x)+f_n(x_1)\in \LU_{\mathcal{F}_{+}}(1)\), which completes the induction.
    \end{proof}

        We now use the previous lemma to show that any real-valued function defined on finitely many points can be expressed by a one-dimensional FNN with leaky ReLU activations. 

        \begin{lemma}\label{Lemma-PLCSM_exactly_fit_finite_points}
            For \(n\in \mathbb{N}\) let \(\mathcal{X}=\{x_1,...,x_n\}\subset \mathbb{R}\) with \(x_1<...<x_n\) be a set of finitely many points and \(g:\mathcal{X}\rightarrow \mathbb{R}\). Then there exists \(f\in \PLC_{n-1}\) such that \(f(x)=g(x)\) for all \(x\in \mathcal{X}\). Additionally if \(g\) is either strictly decreasing or strictly increasing, then \(f\) can be chosen such that \(f\in \PLCSM_{n-1}\subset \LU_{\mathcal{F}_{+}}(1) \subset \FNN_{\mathcal{F}_{+}}^{(1)}(1,1)\).
        \end{lemma}
        
        \begin{proof}
            W.l.o.g. we can assume that \(|\mathcal{X}|\geq 3\), as otherwise we can add two arbitrary elements \(x_1<x_2<x_3\) and define \(g(x_2), g(x_3)\) arbitrarily. For \(n\geq 3\) define
            \begin{align*}
                f(x)=
                \begin{cases}
                    \frac{g(x_{2})-g(x_1)}{x_{2}-x_1}(x-x_1) + g(x_1) & x\in (-\infty, x_2)\\
                    \frac{g(x_{i+1})-g(x_i)}{x_{i+1}-x_i}(x-x_i) + g(x_i)  & x\in [x_i,x_{i+1}),\;i\in[2,n-1]\\
                    \frac{g(x_{n})-g(x_{n-1})}{x_{n}-x_{n-1}}(x-x_{n-1}) + g(x_{n-1}) & x\in [x_n,\infty)
                \end{cases},
            \end{align*}
            which is piecewise-linear and continuous by construction, hence \(f\in \PLC_{n-1}\), and \(f(x_i)=g(x_i)\) holds for all \(i\in [n]\) by direct verification. If \(g\) is strictly monotone, i.e. strictly decreasing or strictly increasing, then either \(\frac{g(x_{i+1})-g(x_i)}{x_{i+1}-x_i}<0\) or \(\frac{g(x_{i+1})-g(x_i)}{x_{i+1}-x_i}>0\) for all \(i\in [n-1]\), thus \(f\) is also strictly decreasing or strictly increasing, respectively. Therefore, we have \(f\in \PLCSM_{n-1}\) if \(g\) is strictly monotone and by Lemma \ref{Lemma-PLCSM_in_LU} it follows that \(f\in \LU_{\mathcal{F}_{+}}(1)\subset \FNN_{\mathcal{F}_{+}}^{(1)}(1,1)\), which concludes the proof.
        \end{proof}

       Note that the memorizer \(\mathfrak{m}_{K,M,f^{*}}(\cdot)\) for fixed grid sizes \(K,M\in \mathbb{N}\) and a continuous \(f^{*}: \mathbb{R}^{d_x}\rightarrow \mathbb{R}^{d_y}\) is itself a one-dimensional function that is defined on a finite set and therefore could be expressed exactly by an element in \(\FNN_{\mathcal{F}_{+}}^{(1)}(1,1)\) by Lemmas \ref{Lemma-PLCSM_exactly_fit_finite_points} and \ref{Lemma-PLCSM_in_LU}, if the memorizer were a monotone function. However, since this is not the case for most non-monotone functions \(f^{*}\), we need a way to express the memorizer in terms of strictly monotone functions, even if it is not monotone itself. For this, we introduce the monotone function decomposition for functions in \(\PLC\), which later allows us to express any piecewise linear continuous function with a FNN with leaky ReLU activations of width \(2\) by decomposing it into two monotone functions.

    \begin{definition}
        We say that a function \(f:\mathbb{R}\rightarrow \mathbb{R}\) has a monotone function decomposition if there exist \(f_1:\mathbb{R}\rightarrow \mathbb{R}\) strictly decreasing and \(f_2:\mathbb{R}\rightarrow \mathbb{R}\) strictly increasing, such that \(f(x)=f_1(x)+f_2(x)\) for all \(x\in \mathbb{R}\).
    \end{definition}
    
    \begin{lemma}\label{Lemma-function_decomposition_PLC}
        Let \(f:\mathbb{R}\rightarrow \mathbb{R}\in\PLC\) be a piecewise-linear and continuous function. Then it has a monotone function decomposition with piecewise-linear, continuous, strictly decreasing \(f_1\in \PLCSM\) and piecewise-linear, continuous and strictly increasing \(f_2\in \PLCSM\).
    \end{lemma}
    
    \begin{proof}
        Let \(f:\mathbb{R}\rightarrow \mathbb{R}\) be a piecewise-linear and continuous function with \(n\in \mathbb{N}_0\) kinks \(x_1,...,x_{n}\in \mathbb{R}\), where \(x_0:=-\infty\) and \(x_{n+1}:=\infty\), and slopes \(a_i\in \mathbb{R}\) for \(i\in [0,n]\), such that \(f'(x)|_{(x_i,x_{i+1})}=a_i\) for \(i\in [0,n]\). 
        
        For \(n=0\), the function \(f\) is linear, i.e., \(f(x) = a_0 x + b\) for some \(b \in \mathbb{R}\). Let \(\alpha > 0\) and define \(f_1(x) := -\alpha x + b/2\) and \(f_2(x) := (a_0+\alpha)x + b/2\). Then \(f_1\) is strictly decreasing, \(f_2\) is strictly increasing, and \(f_1(x) + f_2(x) = a_0 x + b = f(x)\).
        
        For \(n \geq 1\), let \(J_1, J_2\) be two disjoint index sets with \(J_1\cup J_2=[0,n]\) such that \(i\in J_1\) if \(a_i<0\) and \(i\in J_2\) if \(a_i\geq 0\). Let \(\alpha>0\), and define the slopes \(a_{j,i}:= a_i+(-1)^{j} \alpha\) if \(i\in J_j\) and \(a_{j,i}:=(-1)^j \alpha\) if \(i\in [0,n]\setminus J_j\) for \(j\in [2]\). Then we inductively define 
        \begin{align*}
            f_j(x)=\begin{cases}
                a_{j,0}(x-x_1) +\frac{f(x_1)}{2} & x\in (-\infty,x_1]\\
                a_{j,i}(x-x_{i}) + f_j(x_i) & x\in [x_i,x_{i+1}],\;i\in [n-1]\\
                a_{j,n}(x-x_{n}) + f_j(x_n) & x\in [x_n,\infty)
            \end{cases}
        \end{align*}
        for \(j\in [2]\). Moreover, by definition of \(J_1,J_2\), the slopes satisfy \(a_{1,i}<0\) and \(a_{2,i}>0\) for all \(i\in [0,n]\), thus \(f_1\) is strictly decreasing and \(f_2\) is strictly increasing. 
        
        We show by induction over the intervals \(I_0:=(-\infty,x_1]\), \(I_i:=[x_i,x_{i+1}]\) for \(i\in [n-1]\), and \(I_n:=[x_{n},\infty)\) that \(f(x)=f_1(x)+f_2(x)\) for all \(x\in \mathbb{R}\). For \(i=0\) we directly compute that
        \begin{align*}
            f_1(x)+f_2(x)=(a_{1,0}+a_{2,0})(x-x_1) +\frac{2f(x_1)}{2}=a_0 (x-x_1) + f(x_1)=f(x)
        \end{align*}
        for all \(x\in I_0\). For the induction step, we assume that \(f(x)=f_1(x)+f_2(x)\) for all \(x\in \bigcup_{i=0}^{m}I_i\), where \(m\in [0,n-1]\), and show that \(f(x)=f_1(x)+f_2(x)\) on \(I_{m+1}\). This holds since \(x_{m+1}\in I_m\) and therefore by the induction hypothesis:
        \begin{align*}
            &f_1(x)+f_2(x)=(a_{1,m+1}+a_{2,m+1})(x-x_{m+1}) + f_1(x_{m+1}) + f_2(x_{m+1})\\
            &\stackrel{\text{Ind. hyp.}}{=} a_{m+1}(x-x_{m+1}) + f(x_{m+1})=f(x)
        \end{align*}
        for all \(x\in I_{m+1}\).
    \end{proof}

        Since neural networks are given by compositions of functions, the question naturally arises under which conditions compositions of converging sequences converge to the composition of limit functions. The following Lemma \ref{Lemma-Approximate_function_compositions_sup_unif_cont} classifies this and gives us a helpful tool for many of the upcoming proofs.

    \begin{lemma}\label{Lemma-Approximate_function_compositions_sup_unif_cont}
        For \(k\in \mathbb{N}\), \(d_i\in \mathbb{N}\), \(i\in [0,k+1]\) and a compact set \(\mathcal{K}\subset \mathbb{R}^{d_{0}}\), let \(g:\mathcal{K}\rightarrow \mathbb{R}^{d_1}\) be bounded and \((g^{(n)})_{n\in \mathbb{N}}\) a sequence of bounded functions converging uniformly to \(g\) on \(\mathcal{K}\). By the boundedness and uniform convergence of the functions, the set \(\mathcal{K}_1:=\overline{g(\mathcal{K})\cup\bigcup_{n\in \mathbb{N}}g^{(n)}(\mathcal{K})}\subset \mathbb{R}^{d_1}\) is compact. Moreover, for \(i\in [k]\), let \(f_i:\mathcal{K}_i\rightarrow \mathbb{R}^{d_{i+1}}\) and \(f_i^{(n)}:\mathcal{K}_i\rightarrow \mathbb{R}^{d_{i+1}}\) be continuous functions, where the compact sets \(\mathcal{K}_{i+1}:=\overline{f_i(\mathcal{K}_i)\cup \bigcup_{n\in \mathbb{N}}f_i^{(n)}(\mathcal{K}_i)}\subset \mathbb{R}^{d_{i+1}}\) are defined recursively for \(i\in [k-1]\), which are indeed compact as the functions are continuous and the uniform convergence ensures boundedness of the union. Moreover, assume that \(\lim\limits_{n\to\infty}\lVert f_i - f_i^{(n)}\rVert_{\mathcal{K}_i,\sup} = 0\) for all \(i\in [k]\). Then, it holds that \(f_k^{(n)}\circ \hdots \circ f_1^{(n)} \circ g^{(n)}\) converges uniformly to \(f_k\circ \hdots \circ f_1\circ g\) on \(\mathcal{K}\) for \(n\to\infty\).
    \end{lemma}
    
    \begin{proof}
        We show the proof iteratively and begin with \(k=1\). Choose an arbitrary but fixed \(\epsilon>0\). By the uniform convergence of \(f_1^{(n)}\) there exists \(n_1\in \mathbb{N}\) with \(\lVert f_1^{(m)}-f_1\rVert_{\mathcal{K}_1,\sup}<\frac{\epsilon}{2}\) for all \(m\geq n_1\), and as \(f_1\) is uniformly continuous on the compact set \(\mathcal{K}_1\) there exists \(\delta>0\) such that for \(x,y\in \mathcal{K}_1\) with \(y\in B_{\delta}(x)\) we have \(\lVert f_1(x) -f_1(y)\rVert<\frac{\epsilon}{2}\). By the uniform convergence of \(g^{(n)}\) we can choose \(n_2\geq n_1\) large enough such that \(\lVert g^{(n)}-g \rVert_{\mathcal{K},\sup}<\delta\) for \(n\geq n_2\). Since \(g(\mathcal{K}), g^{(n)}(\mathcal{K})\subset \mathcal{K}_1\) by construction, combining all of these we obtain
        \begin{align*}
            \lVert f_1^{(n)}\circ g^{(n)}-f_1\circ g \rVert_{\mathcal{K},\sup}\leq \underbrace{\lVert f_1^{(n)}\circ g^{(n)}-f_1\circ g^{(n)} \rVert_{\mathcal{K},\sup}}_{<\frac{\epsilon}{2} \text{ (unif. conv.)}} + \underbrace{\lVert f_1\circ g^{(n)}-f_1\circ g \rVert_{\mathcal{K},\sup}}_{<\frac{\epsilon}{2} \text{ (unif. cont.)}}<\epsilon\;.
        \end{align*}
        For \(k\geq 2\) we iterate the argument by applying the first part to \(f_k^{(n)}\) and \(\tilde{g}_k^{(n)}:=f_{k-1}^{(n)}\circ \hdots \circ f_1^{(n)}\circ g^{(n)}\), \(\tilde{g}_k:=f_{k-1}\circ \hdots \circ f_1\circ g\), beginning at \(k=2\). The boundedness of \((\tilde{g}_k^{(n)})_{n\in \mathbb{N}}\) and \(\tilde{g}_k\) follows from the continuity of \(f_i^{(n)}\) and \(f_i\) for \(i\in [k-1]\), and by the recursive construction we have \(\tilde{g}_k(\mathcal{K}), \tilde{g}_k^{(n)}(\mathcal{K})\subset \mathcal{K}_k\). The uniform convergence of \((\tilde{g}_k^{(n)})_{n\in \mathbb{N}}\) to \(\tilde{g}_k\) follows from the result for \(k-1\).
    \end{proof}

        \medskip 
        Lemma~\ref{Lemma-Approximate_function_compositions_sup_unif_cont} implies that substituting uniformly convergent FNN sequences into a fixed FNN yields uniform convergence of the composition. This allows us to replace activations or subnetworks by approximating FNNs while preserving convergence—a key step that we use in our single-parameter reduction arguments.

    Now, we prove Lemma \ref{Lemma-Single_Parameter_LReLU_approx_mon}, which is essential to generalize the universal approximation results for FNNs with many LReLU activations to ones that just use one or two LReLUs with fixed parameters as activations.
    \bigskip
    \begin{proof}[Proof of Lemma \ref{Lemma-Single_Parameter_LReLU_approx_mon}]
        Let \(f\in C^0_{\mon}(\mathbb{R},\mathbb{R})\) and let \(\mathcal{K}\subset\mathbb{R}\) be compact. There exists an interval \(I\subset \mathbb{R}\) such that \(\mathcal{K}\subset I\), and by Lemma 1 in \cite{duan2024vanillafeedforwardneuralnetworks} there exists \((\phi_n)_{n\in \mathbb{N}} \subset \FNN_{\{\sigma_{\alpha}\}}^{(1)}(1,1)\) such that \(\lim\limits_{n\to\infty}\lVert \phi_n - f\rVert_{I,\sup}=0\). 
        
        By following the proofs of Lemmas 2 and 3 in \cite{duan2024vanillafeedforwardneuralnetworks}, we recognize that the construction is done via \(\alpha\)-power piecewise linear functions with slopes in \(\{c\alpha^k\,|\,c\in\mathbb{R},\,k\in \mathbb{Z}\}\), which can be constructed by width-1 LReLU FNNs. By the construction of Lemma 2 in \cite{duan2024vanillafeedforwardneuralnetworks}, there exists a sequence \((\tilde{\phi}_n)_{n\in \mathbb{N}}\subset \FNN_{\{\sigma_\alpha\}}^{(1)}(1,1;m_n)\) of \(\alpha\)-power functions converging to \(f\) uniformly on \(I\), with \((m_n)_{n\in \mathbb{N}}\subset \mathbb{N}\) and \(\lim_{n\to \infty}m_n=\infty\). If \(m_n\) is odd, by adapting the power of a single piece of \(\tilde{\phi}_n\) by one and adjusting the double slice in their construction appropriately, we obtain \((\phi_n)_{n\in\mathbb{N}}\subset\FNN_{\{\sigma_\alpha\}}^{(1)}(1,1;2k_n)\) with \((k_n)_{n\in \mathbb{N}}\subset \mathbb{N}\), \(\lim_{n\to\infty}k_n=\infty\), such that \(\lim_{n\to\infty}\lVert f-\phi_n\rVert_{I,\sup}=0\). This proves the first part. 
        
        For the second part, let \(\beta \in (0,1)\cup(1,\infty)\). Since \(\sigma_{\beta}\) is monotone increasing, applying the first part yields a sequence \((\phi_n)_{n\in \mathbb{N}} \subset \FNN_{\{\sigma_{\alpha}\}}^{(1)}(1,1)\) such that \(\lim_{n\to\infty}\lVert \phi_n - \sigma_{\beta}\rVert_{I,\sup}=0\). By Lemma \ref{Lemma-LReLU_properties} (2.) we have \(\sigma_{-1}\in \FNN_{\{\sigma_{\shortminus\alpha},\sigma_{\alpha}\}}^{(1)}(1,1)\). Since \(\sigma_{-1}\circ \sigma_{\beta}=\sigma_{-\beta}\), we define \(\psi_n:=\sigma_{-1}\circ \phi_n\in \FNN_{\{\sigma_{\shortminus\alpha},\sigma_{\alpha}\}}^{(1)}(1,1)\) for all \(n\in \mathbb{N}\). As \(\sigma_{-1}\) is continuous, Lemma \ref{Lemma-Approximate_function_compositions_sup_unif_cont} implies that \(\lim_{n\to\infty}\lVert \psi_n - \sigma_{-\beta}\rVert_{\mathcal{K},\sup}=0\).
    \end{proof}

\section{Approximating the coding scheme with leaky ReLU FNNs}
        \label{sec:coding-scheme-lReLU}\label{Section-Approximating_coding_scheme}
            In this section, we construct an approximation of the coding scheme for FNNs with leaky ReLU activations. We introduce lemmas with appropriate implementations of the three parts of the coding scheme, namely the encoder, the decoder, and the memorizer. Later, these can be combined with Lemma \ref{Lemma-Accuracy_coding_scheme}, by which the coding scheme can approximate any continuous function arbitrarily well in the \(L^p\) norm, to show the universal approximation of \(L^p(\mathbb{R}^{d_x},\mathbb{R}^{d_y})\).
        
            The following Lemma \ref{Lemma-Approximate_q_k} shows that leaky ReLU FNNs of width \(1\) can approximate the quantizations while being exact on the set \(\mathcal{C}_{K}\). This is possible because FNNs with leaky ReLU activations of width \(1\) can exactly express functions in \(\PLCSM\), as proved in Lemma \ref{Lemma-PLCSM_in_LU}. However, it should be noted that when using FNNs with ReLU activations, as demonstrated in Lemma 8 in \cite{park2020minimum}, a width of \(2\) is necessary.
        
        \begin{lemma}\label{Lemma-Approximate_q_k}
            Let \(K\in \mathbb{N}\), \(\epsilon>0\) and for \(\gamma\in (0,1)\) consider the Borel-measurable set \(D_\gamma:= \bigcup_{i=1}^{2^{K}}[i\cdot 2^{-K}-\beta, i\cdot 2^{-K})\) with \(\beta\in (0,\frac{\gamma}{2^K})\), which fulfills \(\lambda(D_\gamma)<\gamma\). Then there exists \(\phi\in \FNN_{\mathcal{F}_{+}}^{(1)}(1,1)\) such that
            \begin{align*}
                q_K(c)=\phi(c)\;\,\forall  c\in \mathcal{C}_{K},\;\lVert q_K-\phi\rVert_{[0,1]\setminus D_\gamma,\sup}<\epsilon \;,
            \end{align*}
            \(\phi([0,1])\subset [0,1]\) and \(\phi([0,1]\setminus D_\gamma)\subset [0,1]\setminus D_\gamma\). 
        \end{lemma}
        
        \begin{proof}
            We define piecewise linear, continuous functions approximating the quantizer \(q_K\). Let \(\alpha,\beta\in (0,2^{-K})\) and for all \(i\in [2^K]\) define the piecewise function
            \begin{align*}
                \phi_{\alpha,\beta}(x):=\begin{cases}
                    \frac{\alpha}{2^{-K}-\beta} (x-(i-1)2^{-K}) + (i-1)2^{-K} & x\in [(i-1)2^{-K}, i\cdot 2^{-K}-\beta)\\
                    \frac{2^{-K}-\alpha}{\beta}(x- (i\cdot 2^{-K}-\beta)) + (i-1)2^{-K} + \alpha & x\in [i\cdot 2^{-K}-\beta, i\cdot 2^{-K}]
                \end{cases},
            \end{align*}
            where the last interval for \(i=2^K\) is \([1-\beta, 1]\). By construction, \(\phi_{\alpha,\beta}\) is continuous on \([0,1]\), satisfies \(\phi_{\alpha,\beta}([0,1])\subset [0,1]\), and \(\phi_{\alpha,\beta}(c)=q_K(c)\) for all \(c\in \mathcal{C}_{K}\). Choose \(\beta<\frac{\gamma}{2^{K}}\) and \(\alpha < \min\{\epsilon, 2^{-K}-\beta\}\), and set \(\phi:=\phi_{\alpha,\beta}\). Defining \(D_\gamma:= \bigcup_{i=1}^{2^{K}}[i\cdot 2^{-K}-\beta, i\cdot 2^{-K})\) yields \(\lambda(D_\gamma)=2^{K}\beta<\gamma\). Moreover, we obtain \(\lVert q_K-\phi\rVert_{[0,1] \setminus D_\gamma,\sup} \leq \alpha <\epsilon\) and note that 
            \begin{align*}
                \phi([0,1]\setminus D_\gamma)=\bigcup_{i=1}^{2^K}[(i-1)2^{-K}, (i-1)2^{-K}+\alpha)\subset \bigcup_{i=1}^{2^K}[(i-1)2^{-K}, i\cdot 2^{-K}-\beta)=[0,1]\setminus D_\gamma\;.
            \end{align*}
            By construction, \(\phi\) is piecewise linear, continuous, and strictly increasing with \(2^{K+1}\) pieces and alternating positive slopes, hence \(\phi\in \PLCSM_{2^{K+1}}\). By Lemma \ref{Lemma-PLCSM_in_LU} it follows that \(\phi\in \LU_{\mathcal{F}_{+}}(1)\subset\FNN^{(1)}_{\mathcal{F}_{+}}(1,1)\), which concludes the proof.
        \end{proof}

        \begin{figure}[h]
            \includegraphics[width=0.5\textwidth]{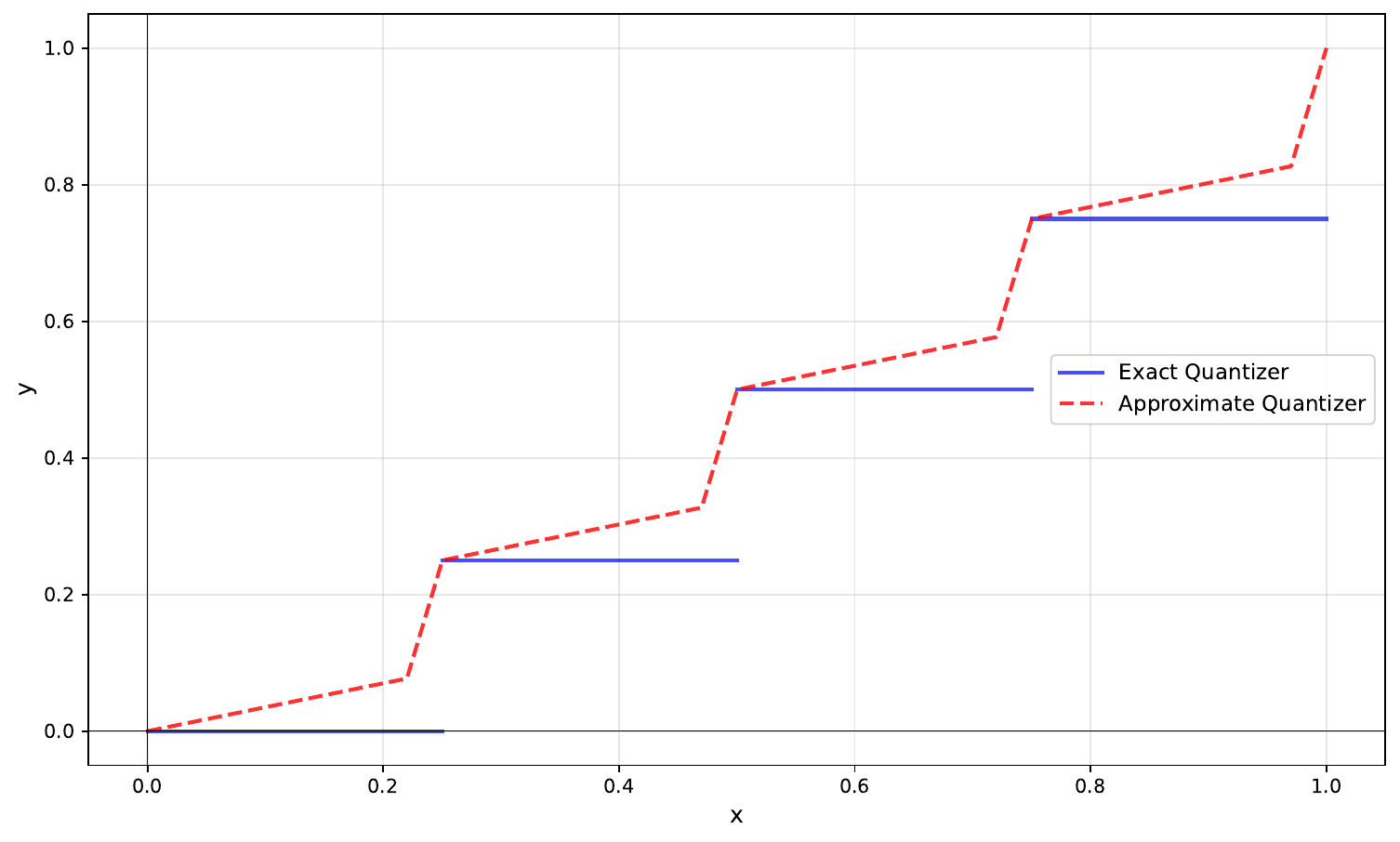}%
            \hfill
            \includegraphics[width=0.5\textwidth]{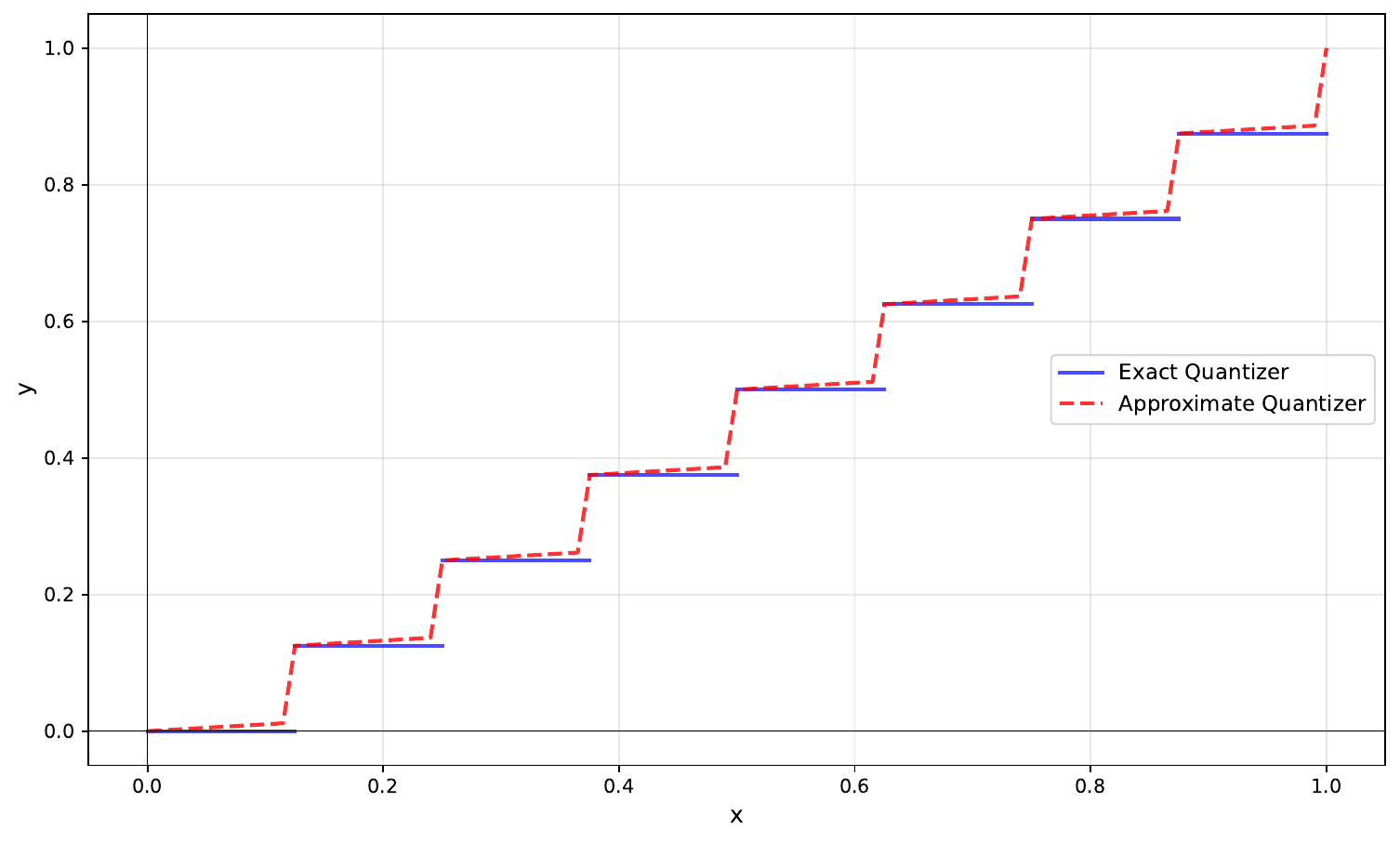}
            \caption{Quantizer approximations from Lemma~\ref{Lemma-Approximate_q_k}. Left: \(K=2\), \(\alpha=0.35\), \(\gamma=0.03\). Right: \(K=3\), \(\alpha=0.1\), \(\gamma=0.01\). Here \(\alpha\) is the flat slope where the approximation closely matches the true quantizer, and \(\gamma\) is the width of the exceptional intervals with larger error. To obtain our approximating quantizers, we decrease \(\alpha\) and \(\gamma\) to \(0\) while increasing \(K\) to infinity.}
            \label{Figure-q_K}
        \end{figure}

    With Lemma \ref{Lemma-Approximate_q_k}, we can construct the approximation of the encoder, which is the first component of the coding scheme, in the following lemma. It is important to note that the width of the encoder can be reduced by one by using leaky ReLU activations, in comparison to the results of Lemma 11 in \cite{park2020minimum} that considers FNNs with the ReLU activation.

    \begin{lemma}\label{Lemma-Approximate_Encoder}
        For any \(d,K\in \mathbb{N}\), \(\gamma>0\), \(\epsilon>0\) there exists \(\phi\in \FNN_{\mathcal{F}_{+}}^{(d)}(d,1)\) and \(S_\gamma\subset [0,1]^{d}\) with \(\lambda(S_\gamma)<\gamma\) such that
        \begin{align*}
            \mathfrak{e}_K(c)=\phi(c)\,\;\forall c\in \mathcal{C}_{K}^{d},\;\lVert \mathfrak{e}_K - \phi \rVert_{[0,1]^{d}\setminus S_\gamma,\sup}<\epsilon,
        \end{align*}
        and \(\phi([0,1]^d)\subset [0,2)\). Moreover, for any \(\alpha\in (0,1)\cup (1,\infty)\) there exists \((\phi_n)_{n\in \mathbb{N}}\subset \FNN_{\{\sigma_\alpha\}}^{(d)}(d,1)\) such that \(\lim\limits_{n\to\infty}\lVert \phi_n -\phi \rVert_{[0,1]^d,\sup} =0\).
    \end{lemma}
    
    \begin{proof}
        Let \(d,K\in \mathbb{N}\), \(\epsilon>0\), \(\gamma>0\) be arbitrary and for \(\delta, \bar{\gamma}>0\) choose \(\psi\in \FNN_{\mathcal{F}_{+}}^{(1)}(1,1)\) to be the approximation of \(q_K\) from Lemma \ref{Lemma-Approximate_q_k} such that
        \begin{align}\label{formula-Approximate_encoder_1}
            q_K(c)=\psi(c)\,\;\forall c\in \mathcal{C}_{K},\;\lVert q_K-\psi\rVert_{[0,1]\setminus D_{\bar{\gamma}},\sup}<\delta
        \end{align}
        and \(\psi([0,1])\subset [0,1]\). Define \(\zeta:\mathbb{R}^{d}\rightarrow \mathbb{R}^{d}\) by \(\zeta(x):=(\psi(x_1),...,\psi(x_d))^{T}\), which applies \(\psi\) component-wise. By stacking \(d\) copies of the width-1 network \(\psi\) in parallel, we obtain \(\zeta\in \FNN_{\mathcal{F}_{+}}^{(d)}(d,d)\). 
        
        Set \(S_\gamma:=[0,1]^{d}\setminus ([0,1]\setminus D_{\bar{\gamma}})^d\in \mathcal{B}(\mathbb{R}^{d})\) and let \(\bar{\gamma} < \gamma/d\). Since \(\lambda(D_{\bar{\gamma}})<\bar{\gamma}\) by Lemma \ref{Lemma-Approximate_q_k}, we have \(\lambda([0,1]\setminus D_{\bar{\gamma}})> 1-\bar{\gamma}\), and by Bernoulli's inequality \((1-\bar{\gamma})^{d}\geq 1-d\bar{\gamma}\), thus
        \begin{align*}
            \lambda(S_\gamma)=1-\lambda(([0,1]\setminus D_{\bar{\gamma}})^d)< 1-(1-\bar{\gamma})^{d}\leq d\bar{\gamma}<\gamma \;.
        \end{align*}
        By choice of \(S_\gamma\), for \(x\in [0,1]^{d}\setminus S_\gamma\) we have \(x_i\in [0,1]\setminus D_{\bar{\gamma}}\) for all \(i\in [d]\). Denoting by \(q_K^{\otimes d}(x):=(q_K(x_1),...,q_K(x_d))^T\) the component-wise application of \(q_K\), we obtain
        \begin{align}\label{formula-Approximate_encoder_2}
            \lVert q_K^{\otimes d}-\zeta\rVert_{[0,1]^{d}\setminus S_\gamma,\sup} = \sup_{x\in [0,1]^d\setminus S_\gamma} \max_{i\in[d]}|q_K(x_i)-\psi(x_i)|\stackrel{(\ref{formula-Approximate_encoder_1})}{<}\delta.
        \end{align}
        Consider \(A:\mathbb{R}^{d}\rightarrow \mathbb{R}\), \(A(x)=\sum_{i=1}^{d}2^{-K(i-1)}\cdot x_i\), which is linear and therefore Lipschitz-continuous with constant \(\mathcal{L}=\lVert A \rVert_{\op}\), and let \(\delta<\frac{\epsilon}{\mathcal{L}}\). Define \(\phi:=A\circ \zeta\in\FNN_{\mathcal{F}_{+}}^{(d)}(d,1)\). Since \(\mathfrak{e}_K = A \circ q_K^{\otimes d}\), we have
        \begin{align*}
            \lVert \mathfrak{e}_K - \phi \rVert_{[0,1]^{d}\setminus S_\gamma,\sup}&=\lVert A\circ q_K^{\otimes d} - A\circ \zeta \rVert_{[0,1]^{d}\setminus S_\gamma,\sup}\\
            &\leq \mathcal{L}\lVert q_K^{\otimes d} - \zeta \rVert_{[0,1]^{d}\setminus S_\gamma,\sup}\stackrel{(\ref{formula-Approximate_encoder_2})}{<}\mathcal{L}\delta<\epsilon.
        \end{align*}
        Furthermore, \(\mathfrak{e}_K(c)=A\circ q_K^{\otimes d}(c)\stackrel{(\ref{formula-Approximate_encoder_1})}{=}A\circ \zeta(c)=\phi(c)\) for all \(c\in \mathcal{C}_K^{d}\), and since \(\psi([0,1])\subset [0,1]\) we obtain
        \begin{align*}
            0\leq \phi(x)=\sum_{i=1}^{d}2^{-K(i-1)}\psi(x_i)\leq \sum_{i=1}^{d}2^{-K(i-1)}< \sum_{i=0}^{\infty}2^{-iK}=\frac{1}{1-2^{-K}}\leq 2\;,
        \end{align*}
        where the last inequality is strict for \(x\in [0,1]^d\), thus \(\phi([0,1]^d)\subset [0,2)\).
        
        For the second part, denote \(\psi=W_L\circ \sigma_{\alpha_{L-1}}\circ W_{L-1}\circ \hdots \circ \sigma_{\alpha_1}\circ W_1\) for some \(L\in \mathbb{N}\) with \(\alpha_j\in (0,1)\). By Lemma \ref{Lemma-Single_Parameter_LReLU_approx_mon}, for each \(j\in [L-1]\) there exist sequences \((\sigma_j^{(n)})_{n\in \mathbb{N}}\subset \FNN_{\{\sigma_\alpha\}}^{(1)}(1,1)\) converging uniformly to \(\sigma_{\alpha_j}\) on any compact set. Define \(\psi_n:=W_L\circ \sigma_{L-1}^{(n)}\circ W_{L-1}\circ \hdots \circ \sigma_1^{(n)}\circ W_1\) and \(\zeta_n(x):=(\psi_n(x_1),...,\psi_n(x_d))^T\). Set \(\phi_n:=A\circ \zeta_n\in \FNN_{\{\sigma_\alpha\}}^{(d)}(d,1)\) for all \(n\in \mathbb{N}\). By Lemma \ref{Lemma-Approximate_function_compositions_sup_unif_cont}, \(\psi_n\to\psi\) uniformly on \([0,1]\), which implies \(\zeta_n\to\zeta\) uniformly on \([0,1]^d\). Since \(A\) is Lipschitz continuous, we obtain \(\lim\limits_{n\to\infty}\lVert \phi_n -\phi \rVert_{[0,1]^d,\sup} =0\).
    \end{proof}
    
    \begin{lemma}\label{Lemma-Approximate_q_k_stepped}
        Let \(\alpha\in (0,1)\cup(1,\infty)\), \(K\in \mathbb{N}\), \(\epsilon>0\), then there exists \(\phi \in \FNN_{\mathcal{F}_{\alpha,\mathfrak{s}}}^{(1)}(1,1)\cap \FNN^{(1)}_{\mathcal{F}^{*}_{+,1}}(1,1)\) such that 
        \begin{align*}
            q_K(c)=\phi(c) \;\forall c\in \mathcal{C}_{K},\; \lVert q_K-\phi \rVert_{[0,1],\sup} < \epsilon\;.
        \end{align*}
    \end{lemma}
    
    \begin{proof}
        For \(a \in (0,1)\) we define the piecewise linear function \(\phi_{a}\) by
        \begin{align*}
            \phi_{a}(x):= a (x-(i-1)2^{-K})+(i-1)2^{-K} \quad \text{for } x\in [(i-1)2^{-K}, i\cdot 2^{-K}),\; i\in [2^{K}]\;,
        \end{align*}
        and \(\phi_a(1):= 1 - (1-a)2^{-K}\). By setting \(a < 2^K \epsilon\), we have for \(x \in [(i-1)2^{-K}, i\cdot 2^{-K})\):
        \begin{align*}
            |\phi_{a}(x) - q_K(x)| = |a(x-(i-1)2^{-K})| < a \cdot 2^{-K} < \epsilon\;,
        \end{align*}
        and similarly \(|\phi_a(1) - q_K(1)| \leq a \cdot 2^{-K} < \epsilon\), thus \(\lVert \phi_{a} - q_K \rVert_{[0,1],\sup} < \epsilon\).
        
        We now construct \(\phi_{a}\) as an FNN with stepped LReLU activations. Let \(b:=(1-a)2^{-K} \in (0,2^{-K}) \subset [0,1]\) and \(\gamma_j:=(a+j-1)\cdot 2^{-K}\) for \(j\in [2^{K}-1]\). By Lemma \ref{Lemma-LReLU_properties} (2.) and (4.), we have \(\Id\in \FNN_{\mathcal{F}_{\alpha,\mathfrak{s}}}^{(1)}(1,1)\cap \FNN^{(1)}_{\mathcal{F}^{*}_{+,1}}(1,1)\) and \(\rho_{1,b} \in \FNN_{\mathcal{F}_{\alpha,\mathfrak{s}}}^{(1)}(1,1) \cap \FNN^{(1)}_{\mathcal{F}^{*}_{+,1}}(1,1)\). Since
        \[
            \rho_{1,b}^{\gamma_j,0} = (\Id + \gamma_j) \circ \rho_{1,b} \circ (\Id - \gamma_j)\;,
        \]
        it follows that \(\rho_{1,b}^{\gamma_j,0} \in \FNN_{\mathcal{F}_{\alpha,\mathfrak{s}}}^{(1)}(1,1) \cap \FNN^{(1)}_{\mathcal{F}^{*}_{+,1}}(1,1)\). Therefore
        \begin{align*}
            \phi_{a} = \rho_{1, b}^{\gamma_{{2^K}-1},0}\circ \hdots \circ \rho_{1,b}^{\gamma_1,0}\circ (a \cdot\Id) \in \FNN_{\mathcal{F}_{\alpha,\mathfrak{s}}}^{(1)}(1,1)\cap \FNN^{(1)}_{\mathcal{F}^{*}_{+,1}}(1,1)\;.
        \end{align*}
        The initial scaling by \(a\) sets the slope, and each \(\rho_{1,b}^{\gamma_j,0}\) shifts the function up by \(b\) when the accumulated value exceeds \(\gamma_j\), which occurs precisely at \(x = j \cdot 2^{-K}\). By direct computation, for \(c = j \cdot 2^{-K} \in \mathcal{C}_K\) where \(j \in [0, 2^K-1]\):
        \begin{align*}
            \phi_{a}(c) = a \cdot j \cdot 2^{-K} + j(1-a)2^{-K} = j \cdot 2^{-K} = q_K(c)\;.
        \end{align*}
        Setting \(\phi := \phi_a\) with \(a < 2^K \epsilon\) completes the proof.
    \end{proof}

    \begin{figure}[h]
        \includegraphics[width=0.5\textwidth]{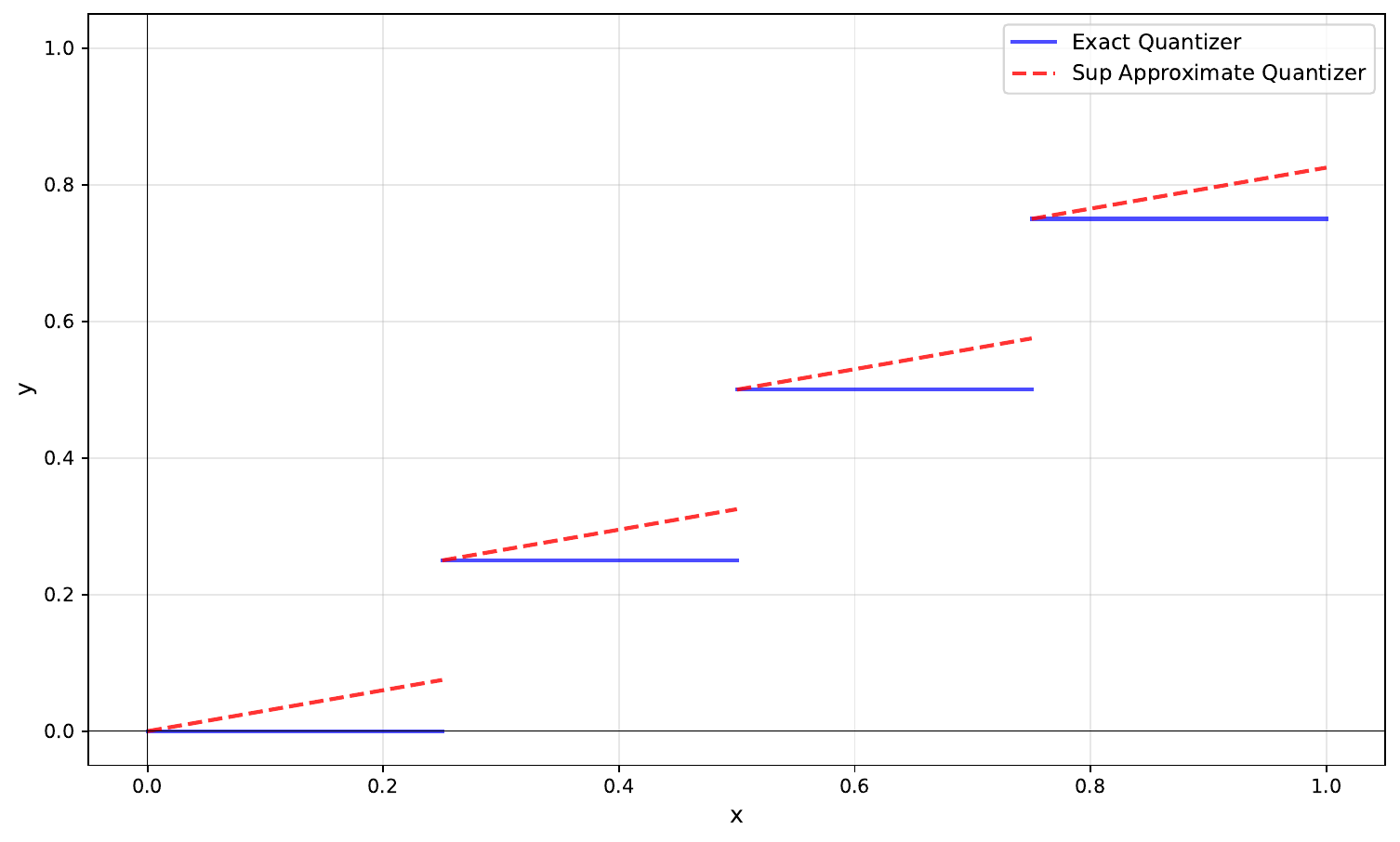}%
        \hfill
        \includegraphics[width=0.5\textwidth]{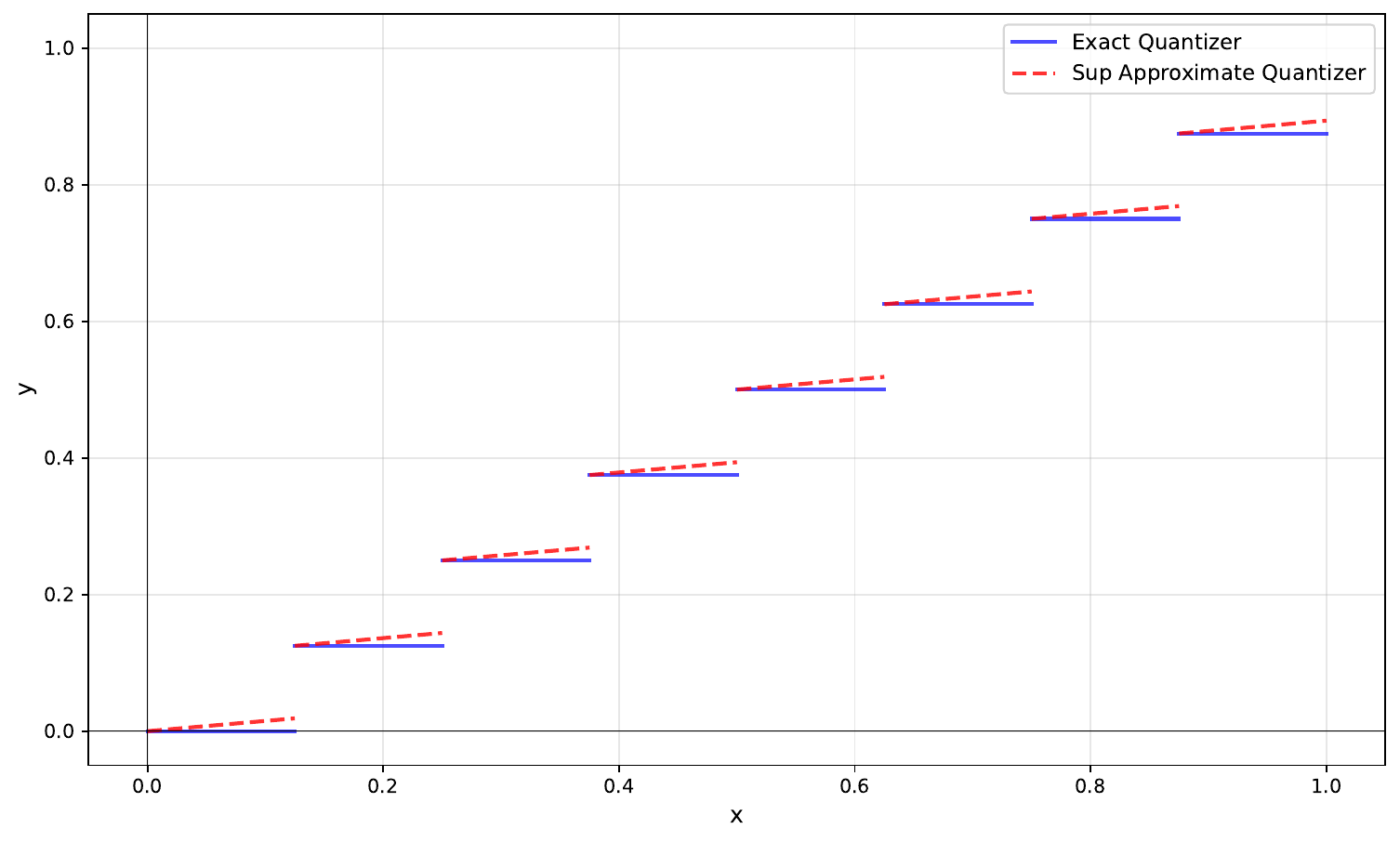}
        \caption{Discontinuous quantizer approximations from Lemma~\ref{Lemma-Approximate_q_k_stepped}. Left: \(K=2\), \(\alpha=0.3\). Right: \(K=3\), \(\alpha=0.15\). Here \(\alpha\) is the flat slope of the approximation, ensuring uniform closeness to the true encoder on the entire domain. To obtain our approximating quantizers, we decrease \(\alpha\) to \(0\) while increasing \(K\) to infinity.}
        \label{Figure-q_K_stepped}
    \end{figure}

    \begin{lemma}\label{Lemma-Approximate_encoder_stepped}
        Let \(\alpha\in (0,1)\cup(1,\infty)\). For any \(d,K\in \mathbb{N}\), \(\epsilon>0\) there exists \(\phi\in \FNN_{\mathcal{F}_{\alpha,\mathfrak{s}}}^{(d)}(d,1)\cap \FNN^{(d)}_{\mathcal{F}^{*}_{+,1}}(d,1)\) of width \(d\) such that
        \begin{align*}
            \mathfrak{e}_K(c)=\phi(c)\,\;\forall c\in \mathcal{C}_{K}^{d},\;\lVert \mathfrak{e}_K - \phi \rVert_{[0,1]^{d},\sup}<\epsilon\;,
        \end{align*}
        with \(\phi([0,1]^d)\subset [0,2)\). Moreover, if \(\alpha\in (0,1)\), then \(\phi\in \FNN_{\mathcal{F}_{\alpha,\mathfrak{s}}}^{(d)}(d,1)\subset \FNN_{\mathcal{F}_{+,\mathfrak{s}}}^{(d)}(d,1)\).
    \end{lemma}
    
    \begin{proof}
        Let \(d,K\in \mathbb{N}\), \(\epsilon>0\) be arbitrary. For \(\delta > 0\) choose \(\psi\in \FNN_{\mathcal{F}_{\alpha,\mathfrak{s}}}^{(1)}(1,1) \cap \FNN^{(1)}_{\mathcal{F}^{*}_{+,1}}(1,1)\) to be the approximation of \(q_K\) from Lemma \ref{Lemma-Approximate_q_k_stepped} such that
        \begin{align}\label{Formula-Approximate_encoder_1_stepped}
            q_K(c)=\psi(c)\,\;\forall c\in \mathcal{C}_{K},\;\lVert q_K-\psi\rVert_{[0,1],\sup}<\delta\;,
        \end{align}
        and note that by the construction in Lemma \ref{Lemma-Approximate_q_k_stepped} we have \(\psi([0,1])\subset [0,1]\). Define \(\zeta:\mathbb{R}^{d}\rightarrow \mathbb{R}^{d}\) by \(\zeta(x):=(\psi(x_1),...,\psi(x_d))^{T}\), which applies \(\psi\) component-wise. By stacking \(d\) copies of the width-1 network \(\psi\) in parallel, we obtain \(\zeta\in \FNN_{\mathcal{F}_{\alpha,\mathfrak{s}}}^{(d)}(d,d)\cap \FNN^{(d)}_{\mathcal{F}^{*}_{+,1}}(d,d)\). Denoting by \(q_K^{\otimes d}(x):=(q_K(x_1),...,q_K(x_d))^T\) the component-wise application of \(q_K\), we have
        \begin{align}\label{Formula-Approximate_encoder_2_stepped}
            \lVert q_K^{\otimes d}-\zeta\rVert_{[0,1]^{d},\sup} = \sup_{x\in [0,1]^d} \max_{i\in[d]}|q_K(x_i)-\psi(x_i)|\stackrel{(\ref{Formula-Approximate_encoder_1_stepped})}{<}\delta\;.
        \end{align}
        Consider \(A:\mathbb{R}^{d}\rightarrow \mathbb{R}\), \(A(x)=\sum_{i=1}^{d}2^{-K(i-1)}\cdot x_i\), which is linear and therefore Lipschitz-continuous with constant \(\mathcal{L}=\lVert A \rVert_{\op}\), and let \(\delta<\frac{\epsilon}{\mathcal{L}}\). Define \(\phi:=A\circ \zeta\in\FNN_{\mathcal{F}_{\alpha,\mathfrak{s}}}^{(d)}(d,1)\cap \FNN^{(d)}_{\mathcal{F}^{*}_{+,1}}(d,1)\). Since \(\mathfrak{e}_K = A \circ q_K^{\otimes d}\), we have
        \begin{align*}
            \lVert \mathfrak{e}_K - \phi \rVert_{[0,1]^{d},\sup}&=\lVert A\circ q_K^{\otimes d} - A\circ \zeta \rVert_{[0,1]^{d},\sup}\\
            &\leq \mathcal{L}\lVert q_K^{\otimes d} - \zeta \rVert_{[0,1]^{d},\sup}\stackrel{(\ref{Formula-Approximate_encoder_2_stepped})}{<}\mathcal{L}\delta<\epsilon\;.
        \end{align*}
        Furthermore, \(\mathfrak{e}_K(c)=A\circ q_K^{\otimes d}(c)\stackrel{(\ref{Formula-Approximate_encoder_1_stepped})}{=}A\circ \zeta(c)=\phi(c)\) for all \(c\in \mathcal{C}_K^{d}\). Since \(\psi([0,1])\subset [0,1]\), we obtain
        \begin{align*}
            0\leq \phi(x)=\sum_{i=1}^{d}2^{-K(i-1)}\psi(x_i)\leq \sum_{i=1}^{d}2^{-K(i-1)}= \frac{1-2^{-Kd}}{1-2^{-K}}<\frac{1}{1-2^{-K}}\leq 2\;,
        \end{align*}
        with strict inequality for all \(x \in [0,1]^d\), thus \(\phi([0,1]^d)\subset [0,2)\). If \(\alpha \in (0,1)\), then \(\mathcal{F}_{\alpha,\mathfrak{s}} \subset \mathcal{F}_{+,\mathfrak{s}}\), hence \(\phi\in \FNN_{\mathcal{F}_{\alpha,\mathfrak{s}}}^{(d)}(d,1)\subset\FNN_{\mathcal{F}_{+,\mathfrak{s}}}^{(d)}(d,1)\).
    \end{proof}
    
    Additionally, for our theorems that use FLOOR activations, we use the fact that the encoder can be implemented exactly with FLOOR FNNs, which the following lemma establishes.

    \begin{lemma}\label{Lemma-Encoder_FLOOR}
        For any \(K\in \mathbb{N}\), \(d\in \mathbb{N}\) it holds that \(q_K\in \FNN_{\{\FLOOR\}}^{(1)}(1,1;2)\). Moreover, the encoder \(\mathfrak{e}_K\) on \(\mathbb{R}^d\) satisfies \(\mathfrak{e}_K\in \FNN_{\{\FLOOR\}}^{(d)}(d,1;2)\).
    \end{lemma}
    
    \begin{proof}
        It holds that \(q_K = (2^{-K}\cdot\Id) \circ \FLOOR \circ (2^{K}\cdot\Id) \in \FNN_{\{\FLOOR\}}^{(1)}(1,1;2)\). By defining \(A:\mathbb{R}^{d}\rightarrow \mathbb{R}\), \(A(x)=\sum_{i=1}^{d}2^{-K(i-1)}\cdot x_i\), we obtain \(\mathfrak{e}_K = A\circ (q_K,\hdots,q_K)^T\in \FNN_{\{\FLOOR\}}^{(d)}(d,1;2)\).
    \end{proof}
    
    \bigskip
    Now, we provide the different constructions of the memorizers that we use in the proof of Theorem \ref{Theorem-Main_sup} and Theorem \ref{Theorem-Main1}. The following lemma shows that we can construct it with a width-1 FNN using generalized LReLUs with negative parameters.

    \begin{lemma}\label{Lemma-Memorizer_zig_zag}
        Let \(d_x,d_y\in \mathbb{N}\), \(f^{*}\in C^0([0,1]^{d_x},[0,1]^{d_y})\) and let \(K,M\in \mathbb{N}\). Then there exists \(\phi\in \FNN_{\mathcal{F}_{\pm}}^{(1)}(1,1)\) such that \(\phi(c)=\mathfrak{m}_{K,M,f^{*}}(c)\) for all \(c\in \mathcal{C}_{d_xK}\). Additionally, for any \(\alpha\in(0,1)\cup (1,\infty)\) there exists \((\phi_m)_{m\in \mathbb{N}}\subset \FNN_{\{\sigma_{-\alpha},\sigma_{\alpha}\}}^{(1)}(1,1)\) such that \(\lim \limits_{m\to\infty}\lVert \phi_m-\phi \rVert_{\mathcal{K},\sup}=0\) for any compact \(\mathcal{K}\subset\mathbb{R}\).
    \end{lemma}  
    
    \begin{proof}
        For \(f^{*}\in C^0([0,1]^{d_x},[0,1]^{d_y})\) we construct a zig-zag function \(h\) such that \(h(c)=\mathfrak{m}_{K,M,f^{*}}(c)\) for all \(c\in \mathcal{C}_{d_xK}\). Denote the increasingly sorted elements of the finite set \(\mathcal{C}_{d_xK}\) by \(c_1<c_2<\hdots<c_n\) for some \(n\in \mathbb{N}\) and set \(y_i:=\mathfrak{m}_{K,M,f^{*}}(c_i)\in [0,1]\) for \(i\in [n]\). 
        
        We construct \(h\) iteratively, adding one bump per point, proceeding from right to left. Each bump rises from anchor \(a_i\) to peak \(b_i\) (where \(h(b_i) = 1\)), then falls to the next anchor \(a_{i+1}\) (where \(h(a_{i+1}) = 0\)). By choosing \(a_i\) appropriately, we ensure that \(c_i\) lies on the rising part of bump \(i\) at height \(y_i\).
        
        Fix an arbitrary \(a_{n+1} > c_n\) and set \(c_0 := -\infty\). For \(i = n, n-1, \ldots, 1\), define \(a_i\) and \(b_i\) as follows:
        \begin{itemize}
            \item If \(y_i = 0\): Set \(a_i := c_i\) and choose any \(b_i \in (c_i, a_{i+1})\).
            \item If \(y_i = 1\): Choose any \(a_i \in (c_{i-1}, c_i)\) and set \(b_i := c_i\).
            \item If \(y_i \in (0,1)\): Choose \(a_i\) in the interval
            \[
                a_i \in \left( \max\left\{c_{i-1}, \frac{c_i - y_i a_{i+1}}{1-y_i}\right\}, c_i \right)
            \]
            and set \(b_i := a_i + \frac{c_i - a_i}{y_i}\).
        \end{itemize}
        
        We verify that for \(y_i \in (0,1)\), this interval is non-empty. Since \(a_{i+1} > c_i\) (which holds inductively, starting from \(a_{n+1} > c_n\)), we have
        \[
            \frac{c_i - y_i a_{i+1}}{1-y_i} < \frac{c_i - y_i c_i}{1-y_i} = c_i\;,
        \]
        so the interval is non-empty. Note that for small \(y_i > 0\), the lower bound approaches \(c_i\), requiring \(a_i\) to be chosen close to \(c_i\). This produces a steep slope, which is necessary to reach a small target value \(y_i\) before the peak.
        
        On each interval \([a_i, a_{i+1}]\), define
        \begin{align*}
            h\vert_{[a_i, a_{i+1}]}(x) := \begin{cases}
                \frac{x - a_i}{b_i - a_i} & x \in [a_i, b_i] \\[4pt]
                \frac{a_{i+1} - x}{a_{i+1} - b_i} & x \in [b_i, a_{i+1}]
            \end{cases}\;.
        \end{align*}
        
        We verify \(h(c_i) = y_i\):
        \begin{itemize}
            \item If \(y_i = 0\): Then \(a_i = c_i\), so \(h(c_i) = (c_i - a_i)/(b_i - a_i) = 0\).
            \item If \(y_i = 1\): Then \(b_i = c_i\), so \(h(c_i) = (c_i - a_i)/(c_i - a_i) = 1\).
            \item If \(y_i \in (0,1)\): Since \(a_i < c_i \leq b_i\), we have
            \[
                h(c_i) = \frac{c_i - a_i}{b_i - a_i} = \frac{c_i - a_i}{(c_i - a_i)/y_i} = y_i\;.
            \]
        \end{itemize}
        
        We verify \(b_i < a_{i+1}\):
        \begin{itemize}
            \item If \(y_i = 0\): By choice, \(b_i \in (c_i, a_{i+1})\).
            \item If \(y_i = 1\): Then \(b_i = c_i < a_{i+1}\).
            \item If \(y_i \in (0,1)\): By the constraint \(a_i > \frac{c_i - y_i a_{i+1}}{1-y_i}\), we obtain
            \[
                b_i = \frac{c_i - (1-y_i)a_i}{y_i} < \frac{c_i - (c_i - y_i a_{i+1})}{y_i} = a_{i+1}\;.
            \]
        \end{itemize}
        
        Finally, \(a_i > c_{i-1}\) by construction, ensuring that \(c_{i-1}\) lies in the interval \([a_{i-1}, a_i]\) where \(h(c_{i-1}) = y_{i-1}\) was set in the previous iteration.
        
        By Lemma \ref{Lemma-Zig_zag_LReLU}, there exists \(\phi \in \FNN_{\mathcal{F}_{\pm}}^{(1)}(1,1)\) such that \(\phi\vert_{[a_1,\infty)} = h\vert_{[a_1,\infty)}\). Since \(\mathcal{C}_{d_xK} \subset [a_1, a_{n+1}]\), we have \(\phi(c) = \mathfrak{m}_{K,M,f^{*}}(c)\) for all \(c \in \mathcal{C}_{d_xK}\).
        
        For the second part, note that \(\phi = W_L \circ \sigma_{-\beta_{L-1}} \circ W_{L-1} \circ \hdots \circ \sigma_{-\beta_1} \circ W_1\) with \(\beta_j \in (0,1) \cup (1,\infty)\) for all \(j \in [L-1]\). By Lemma \ref{Lemma-Single_Parameter_LReLU_approx_mon}, for each \(j \in [L-1]\) there exist sequences \((\varphi_j^{(m)})_{m \in \mathbb{N}} \subset \FNN_{\{\sigma_{-\alpha}, \sigma_\alpha\}}^{(1)}(1,1)\) such that \(\lim_{m \to \infty} \lVert \varphi_j^{(m)} - \sigma_{-\beta_j} \rVert_{\mathcal{K}_j, \sup} = 0\), where the compact sets \(\mathcal{K}_j \subset \mathbb{R}\) are chosen large enough to contain the images of all intermediate compositions. Define \(\phi_m := W_L \circ \varphi_{L-1}^{(m)} \circ W_{L-1} \circ \hdots \circ \varphi_1^{(m)} \circ W_1 \in \FNN_{\{\sigma_{-\alpha}, \sigma_\alpha\}}^{(1)}(1,1)\). By Lemma \ref{Lemma-Approximate_function_compositions_sup_unif_cont}, \(\lim_{m \to \infty} \lVert \phi_m - \phi \rVert_{\mathcal{K}, \sup} = 0\) for any compact \(\mathcal{K} \subset \mathbb{R}\).
    \end{proof}
    
        \begin{figure}[h]
            \includegraphics[width=0.5\textwidth]{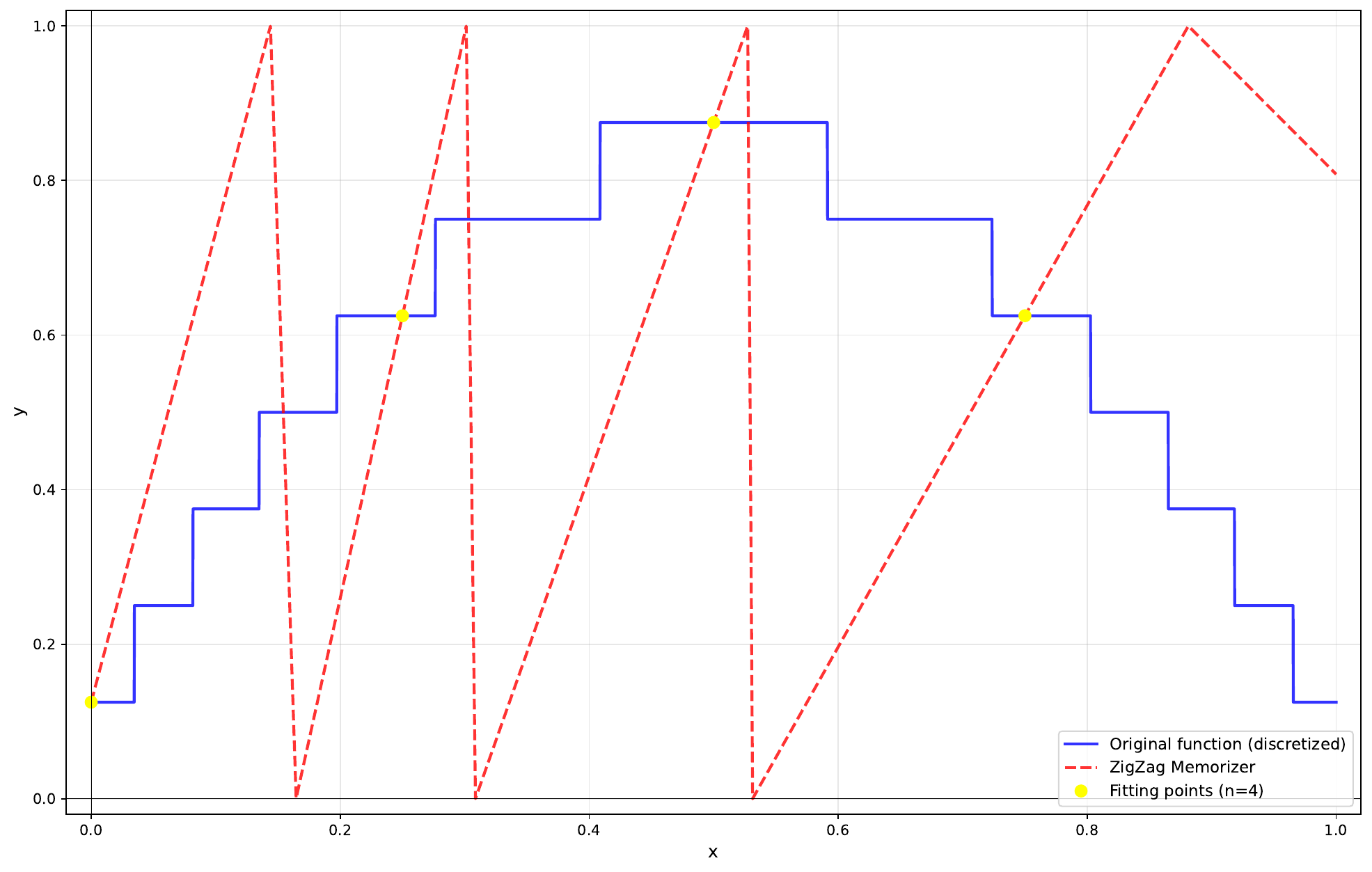}%
            \hfill
            \includegraphics[width=0.5\textwidth]{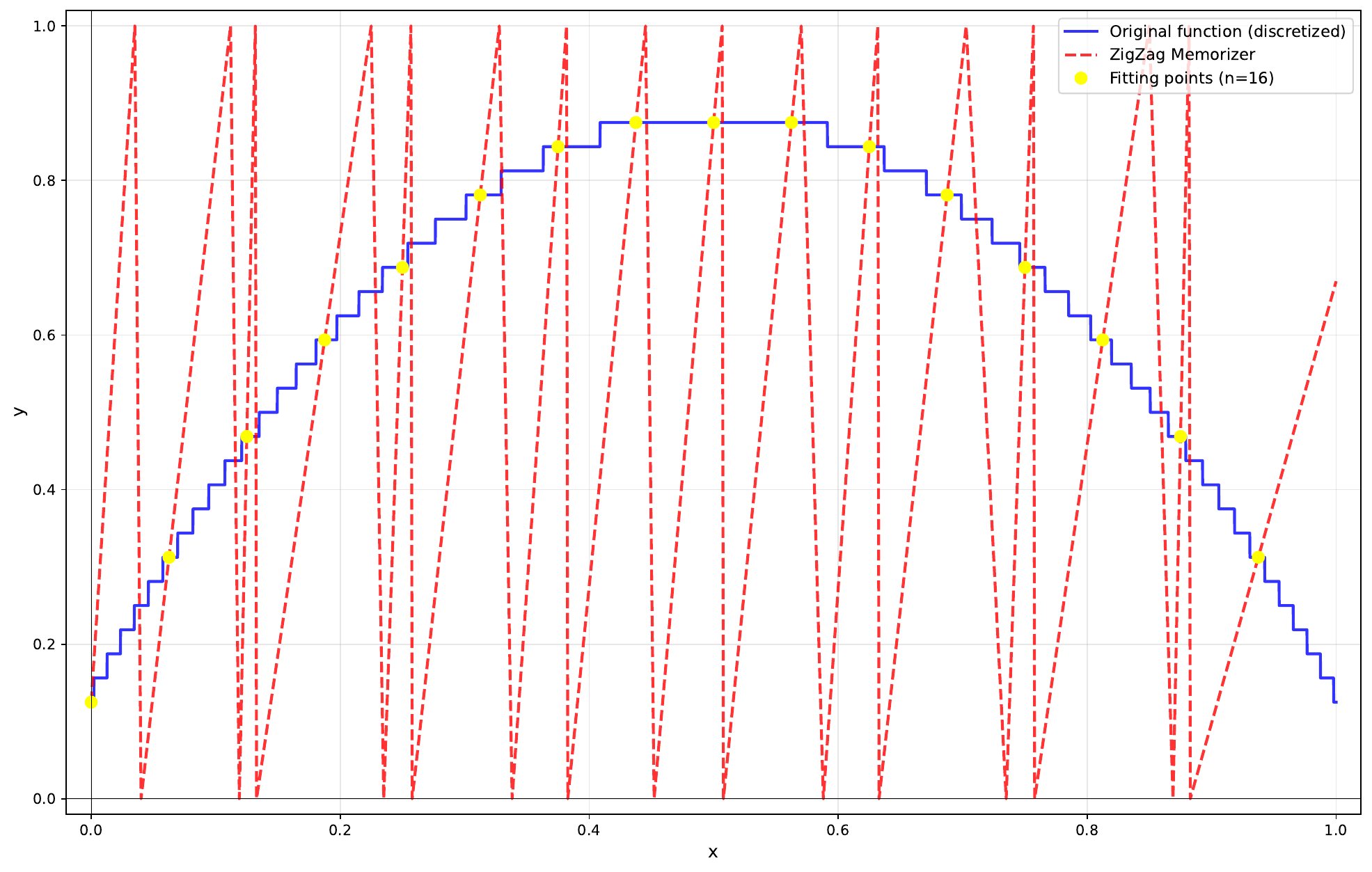}
            \caption{Exact memorizer constructions from Lemma~\ref{Lemma-Memorizer_zig_zag}. Left: \(K=2\), \(M=3\). Right: \(K=4\), \(M=5\). The underlying function is \(f(x)=-3(x - 0.5)^2 + 0.9\), but the memorizer fits \(q_M \circ f\), which produces the stepped structure.}
            \label{Figure-Memorizers}
        \end{figure}

        \bigskip
		 The following lemma shows that we can implement the memorizer with FNNs of width \(2\) with leaky ReLU activations. The proof is based on the idea that the memorizer, as a one-dimensional function defined on a finite set, can be extended to a continuous piecewise linear function on the real numbers. This function can then be written as a sum of continuous monotone piecewise linear functions by the monotone function decomposition of Lemma \ref{Lemma-function_decomposition_PLC}, each of which can be expressed by a FNN of width \(1\) with leaky ReLU activations.

        \begin{lemma}\label{Lemma-Memorizer-Lp}
            Let \(d_x,d_y\in \mathbb{N}\), \(f^{*}\in C^0([0,1]^{d_x},[0,1]^{d_y})\) and let \(K,M\in \mathbb{N}\). Then there exists \(\phi\in \FNN_{\mathcal{F}_{+}}^{(2)}(1,1)\) such that \(\phi(c)=\mathfrak{m}_{K,M,f^{*}}(c)\) for all \(c\in \mathcal{C}_{d_xK}\). Moreover, for any \(\alpha\in (0,1)\cup (1,\infty)\) there exists \((\phi_n)_{n\in \mathbb{N}}\subset \FNN_{\{\sigma_\alpha\}}^{(2)}(1,1)\) such that \(\lim\limits_{n\to\infty}\lVert \phi_n-\phi \rVert_{[0,1],\sup}=0\).
        \end{lemma}
        
        \begin{proof}
            For \(f^{*}\in C^0([0,1]^{d_x},[0,1]^{d_y})\) we apply Lemma \ref{Lemma-PLCSM_exactly_fit_finite_points} to \(\mathfrak{m}_{K,M,f^{*}}\) with \(\mathcal{X}=\mathcal{C}_{d_xK}\) and obtain \(\psi\in \PLC\) such that \(\mathfrak{m}_{K,M,f^{*}}(c)=\psi(c)\) for all \(c\in \mathcal{C}_{d_xK}\). By Lemma \ref{Lemma-function_decomposition_PLC}, there exist \(\psi_1,\psi_2\in \PLCSM\) satisfying \(\psi(x)=\psi_1(x)+\psi_2(x)\), and by Lemma \ref{Lemma-PLCSM_in_LU} we have \(\psi_1,\psi_2\in \FNN_{\mathcal{F}_{+}}^{(1)}(1,1;L)\) for some depth \(L\in \mathbb{N}\). Now, we can define \(\phi\in \FNN_{\mathcal{F}_{+}}^{(2)}(1,1)\) by 
            \[
                \phi(x):=(1,1) \begin{pmatrix} \psi_1(x)\\ \psi_2(x) \end{pmatrix}=\psi_1(x)+\psi_2(x)=\psi(x)\;,
            \]
            which shows the first claim.
            
            For the second part, let \(\alpha\in (0,1)\cup (1,\infty)\). By Lemma \ref{Lemma-Single_Parameter_LReLU_approx_mon}, for \(i\in [2]\) there exist sequences \((\psi_i^{(n)})_{n\in \mathbb{N}}\subset \FNN_{\{\sigma_\alpha\}}^{(1)}(1,1;2k_i^{(n)})\) with \(\lim_{n\to\infty}k_i^{(n)}=\infty\) such that \(\lim_{n\to\infty}\lVert \psi_i^{(n)} - \psi_i \rVert_{[0,1],\sup}=0\). 
            
            To stack these networks, we need them to have the same depth. By Lemma \ref{Lemma-LReLU_properties} (2.), \(\Id \in \FNN_{\{\sigma_\alpha\}}^{(1)}(1,1;3)\), and composing a depth-\(L\) FNN with \(\Id\) yields a depth-\((L+2)\) FNN that computes the same function, as we add two activations, increasing depth by two. 
            
            For each \(n\), let \(L_n := \max\{2k_1^{(n)}, 2k_2^{(n)}\}\). For \(i \in [2]\), if the depth \(2k_i^{(n)} < L_n\), we compose \(\psi_i^{(n)}\) with \(\Id\) repeatedly until the depth equals \(L_n\). Since the depth difference \(L_n - 2k_i^{(n)}\) is even, this is always possible. Denote the resulting networks by \(\tilde{\psi}_i^{(n)} \in \FNN_{\{\sigma_\alpha\}}^{(1)}(1,1;L_n)\), which satisfy \(\tilde{\psi}_i^{(n)} = \psi_i^{(n)}\) as functions.
            
            Since both \(\tilde{\psi}_1^{(n)}\) and \(\tilde{\psi}_2^{(n)}\) have the same depth \(L_n\) we can stack them to define
            \[
                \phi_n := (1,1) \circ \begin{pmatrix} \tilde{\psi}_1^{(n)}\\ \tilde{\psi}_2^{(n)} \end{pmatrix} \circ \begin{pmatrix} 1 \\ 1 \end{pmatrix} \in \FNN_{\{\sigma_\alpha\}}^{(2)}(1,1)\;.
            \]
            Then
            \begin{align*}
                &\lVert \phi_n - \phi \rVert_{[0,1],\sup} \leq \lVert \tilde{\psi}_1^{(n)} - \psi_1 \rVert_{[0,1],\sup} + \lVert \tilde{\psi}_2^{(n)} - \psi_2 \rVert_{[0,1],\sup} 
                \\&= \lVert \psi_1^{(n)} - \psi_1 \rVert_{[0,1],\sup} + \lVert \psi_2^{(n)} - \psi_2 \rVert_{[0,1],\sup} \xrightarrow[]{n\to\infty} 0\;.
            \end{align*}
        \end{proof}

    \begin{figure}[h]
        \includegraphics[width=0.5\textwidth]{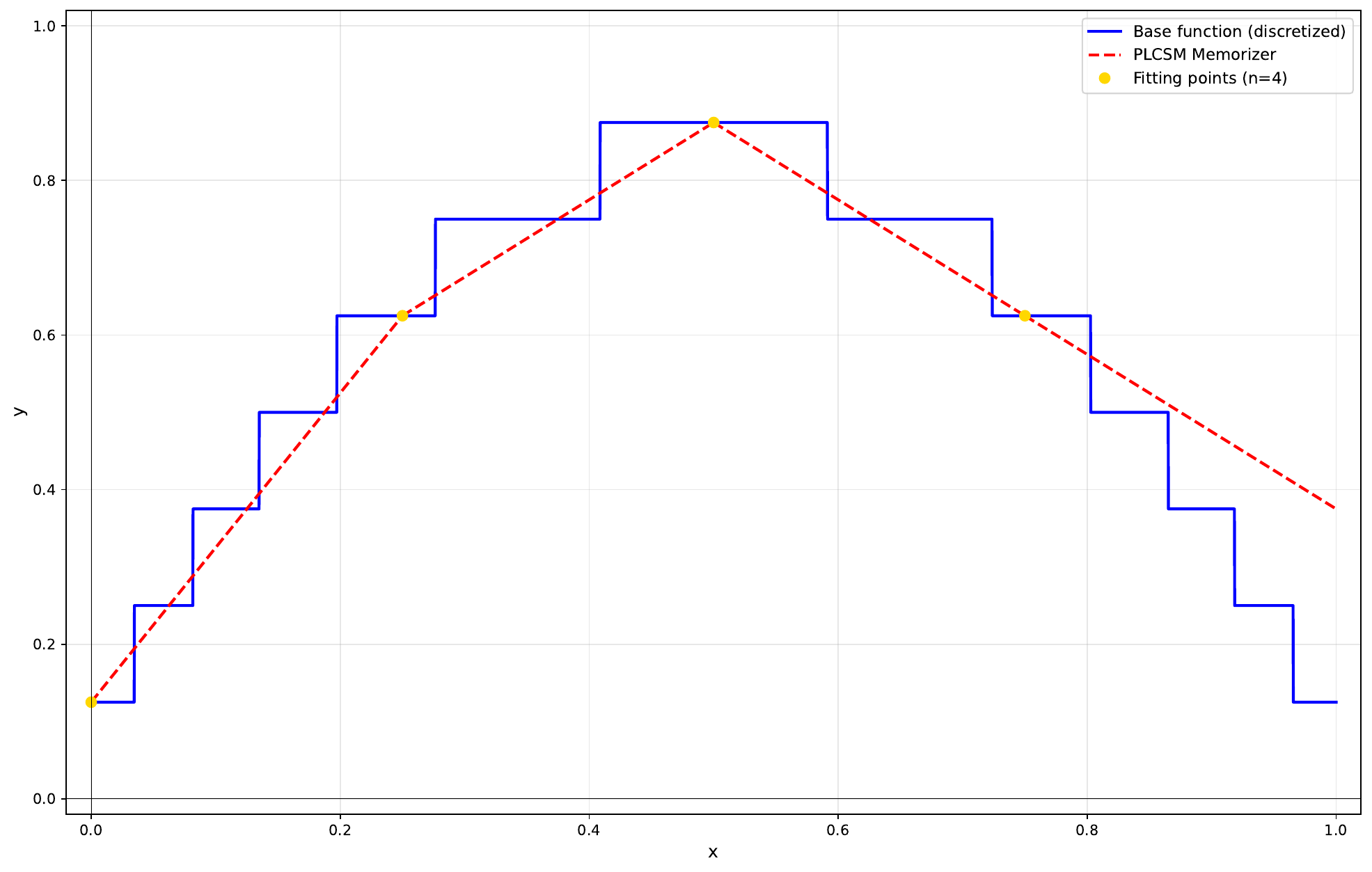}%
        \hfill
        \includegraphics[width=0.5\textwidth]{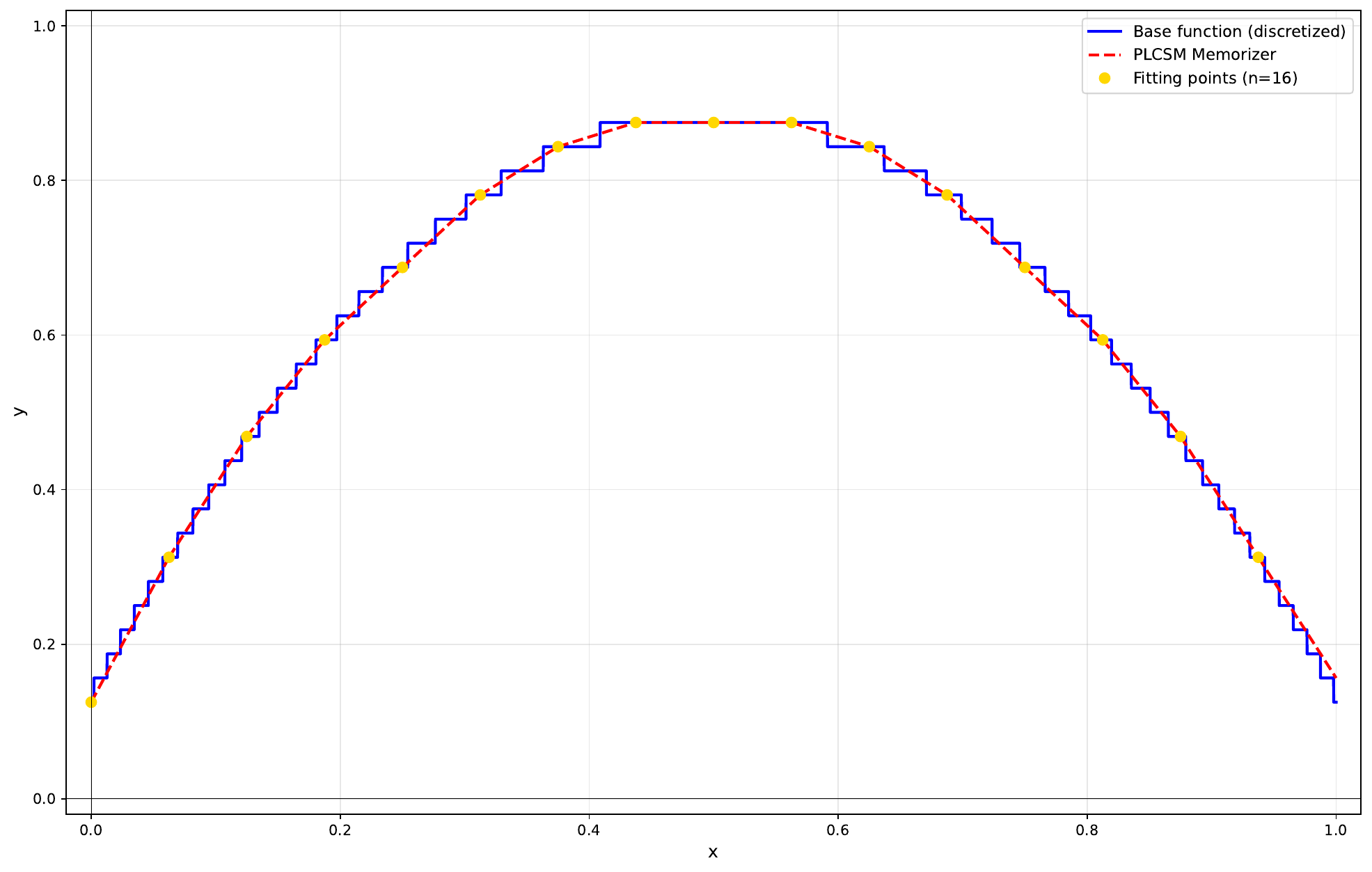}
        \caption{Exact memorizer constructions from Lemma~\ref{Lemma-Memorizer-Lp}. Left: \(K=2\), \(M=3\). Right: \(K=4\), \(M=5\). The underlying function is \(f(x)=-3(x - 0.5)^2 + 0.9\), but the memorizer fits \(q_M \circ f\), which produces the stepped structure.}
        \label{Figure-Memorizers-Lp}
    \end{figure}

       In their Lemma 10, \citet{park2020minimum} obtained an exact implementation of the decoder by FNNs of width \(d_y\) with ReLU activations. Since invertible leaky ReLUs can uniformly approximate the ReLU function on compact sets, we can use their result to obtain an FNN of width \(d_y\) that approximates the decoder arbitrarily well. Moreover, Lemma 1 of \citet{cai2023achieve} shows that \(\max\{d_x,d_y\}\) is a lower bound on the minimum width for the universal approximation of \(L^p(\mathcal{K},\mathbb{R}^{d_y})\) for compact \(\mathcal{K}\subset \mathbb{R}^{d_x}\). Since it is not possible to achieve universal approximation of \(L^p\) integrable functions on compact sets using FNNs with invertible leaky ReLU activations of width less than \(d_y\), approximating the decoder (rather than implementing it exactly) is sufficient for our purposes.

        \medskip
        Thus, for completeness, we present below the proof of Lemma 10 and the therein used Lemma 15 and Lemma 17 (Proposition 2 in \citet{hanin2018approximating}) as introduced in \cite{park2020minimum}, and use them together with our improved results for encoder and memorizer in the proofs of some of our main theorems.
        
        \medskip
        The following definition corresponds to Definition 1 in \citet{hanin2018approximating} and is necessary to formulate Lemma \ref{Lemma-Min_max_strings}, which corresponds to Proposition 2 in \citet{hanin2018approximating}.

	\begin{definition}\label{Definition-Max_min_string}
        A function \(h:\mathbb{R}^{d_x}\rightarrow \mathbb{R}^{d_y}\) is called a max-min string of length \(L\), if there exist affine transformations \(h_i\in \Aff(d_x,d_y)\) for \(i\in [L]\) such that
        \begin{align*}
            h(x)=\tau_{L-1}(h_L(x),\tau_{L-2}(h_{L-1}(x),\tau_{L-3}(\ldots,\tau_2(h_3(x),\tau_1(h_2(x),h_1(x)))\ldots))), 
        \end{align*}
        where each \(\tau_{i}\), \(i\in [L-1]\), is either a coordinate-wise \(\max\{\cdot,\cdot\}\) or \(\min\{\cdot,\cdot\}\).
    \end{definition}
    
    \begin{lemma}\label{Lemma-Min_max_strings}
        For any max-min string \(f^{*}:\mathbb{R}^{d_x}\rightarrow \mathbb{R}^{d_y}\) of length \(L\) and any compact \(\mathcal{K}\subset \mathbb{R}^{d_x}\), there exists \(\phi\in \FNN_{\{\ReLU\}}^{(d_x+d_y)}(d_x,d_x+d_y)\) of depth \(L\) such that for all \(x\in \mathcal{K}\) it holds that
        \begin{align*}
            \phi(x)=(\phi_1(x),\phi_2(x))^T, \;\text{ where } \;\phi_1(x)=x\;\text{ and }\; \phi_2(x)=f^{*}(x)\;.
        \end{align*}
    \end{lemma}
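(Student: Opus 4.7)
The plan is to prove the statement by induction on the length $L$ of the min-max string, carrying through the stronger invariant that after $\ell$ ReLU layers, on $\mathcal{K}$ the network outputs a pair $(x,\,g_\ell(x))$ (possibly up to a coordinate shift absorbed by the next affine map), where $g_\ell$ denotes the min-max string of length $\ell$ built from $h_1,\dots,h_\ell$. The base case $L=1$ reduces to producing $(x,h_1(x))$, which is affine; I realize this with a single layer by exploiting compactness of $\mathcal{K}$ (and continuity of $h_1$) to shift all coordinates into the non-negative orthant so that $\ReLU$ acts as the identity, and by compensating for the shift in the next layer's affine map.

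For the inductive step, the key tools are the coordinate-wise identities
\begin{align*}
\max(a,b) &= \ReLU(a-b) + b,\\
\min(a,b) &= a - \ReLU(a-b),
\end{align*}
which show that each $\tau_\ell$ can be implemented as an affine preprocessing producing a shifted $h_{\ell+1}(x)-g_\ell(x)$, followed by one $\ReLU$, followed by an affine post-processing that either adds back $g_\ell$ or subtracts the $\ReLU$ output from $h_{\ell+1}$. Since $h_{\ell+1}$ is itself affine in $x$, it can be recomputed on the fly inside the incoming affine map from the preserved identity channels. Concretely, at the $(\ell{+}1)$-th layer the incoming affine map uses the first $d_x$ channels to rebuild the shifted identity and to evaluate $h_{\ell+1}(x)$, subtracts $g_\ell(x)$ read from the last $d_y$ channels, and adds a shift chosen large enough that $\ReLU$ acts as the identity on the $d_x$ preserved channels and returns a non-negative value on the $d_y$ output channels; the next layer's affine part then installs the $+g_\ell$ or $h_{\ell+1}-$ correction together with a fresh shift. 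The final affine map of the $L$-th layer cancels all accumulated shifts so that on $\mathcal{K}$ the output is exactly $(x, f^{*}(x))$.

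The main obstacle is that the width budget $d_x+d_y$ is tight: there is no spare room to store $h_{\ell+1}(x)$ alongside $g_\ell(x)$ during the computation. The $\ReLU$ identities for $\max$ and $\min$ are precisely what collapse these two vectors into a single shifted difference channel, so the $d_y$ ``output'' channels suffice for the min-max bookkeeping while the remaining $d_x$ channels carry the identity. The secondary source of bookkeeping is the shift constants: at every stage one needs a constant large enough that the vector going into $\ReLU$ stays non-negative on $\mathcal{K}$, which is always available because affine images of $\mathcal{K}$ are compact, hence bounded, but the matching subtraction must be scheduled into the correct subsequent affine map. Once these choices are laid out for the generic inductive layer, the remainder is routine verification.
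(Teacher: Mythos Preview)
Your inductive construction is exactly the Hanin--Sellke argument that the paper defers to: carry $(x,g_\ell(x))$ through $d_x$ shifted identity channels plus $d_y$ computation channels, and realize each $\tau_\ell$ via the identities $\max(a,b)=\ReLU(a-b)+b$, $\min(a,b)=a-\ReLU(a-b)$, recomputing the affine $h_{\ell+1}(x)$ from the preserved $x$-channels. The only point to watch is the very last shift-cancellation, since in the paper's FNN convention a layer ends with an activation rather than an affine map; in practice this is harmless because the lemma is always used post-composed with further affine maps, but if you want the literal conclusion $\phi(x)=(x,f^*(x))$ on $\mathcal{K}$ you should arrange the final layer's pre-activation to already be non-negative with the correct values.
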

    
    \begin{proof}
        We refer to the proof of Proposition 2 in \cite{hanin2018approximating}.
    \end{proof}
    
    \medskip
    The following lemma was originally formulated as Lemma 15 in \citet{park2020minimum}.
    
    \begin{lemma}\label{Lemma-Min_max_string_by_FNN}
        For any \(M\in \mathbb{N}\), \(\delta>0\), there exists \(\phi\in \FNN_{\{\ReLU\}}^{(2)}(1,2)\) such that for all \(x\in [0,1]\setminus P_{M,\delta}\), with \(P_{M,\delta}:=\bigcup_{i=1}^{2^M-1}(i2^{-M}-\delta, i2^{-M})\), it holds that
        \begin{align}\label{formula-Min_max_string_by_FNN_1}
            \phi(x)=(q_M(x), 2^M(x-q_M(x)))^T,
        \end{align}
        and furthermore \(\phi(\mathbb{R})\subset [0,1-2^{-M}]\times [0,1]\).
    \end{lemma}
    
    \begin{proof}
        The proof structure and idea is taken from the proof of Lemma 15 in \cite{park2020minimum}. We define 
        \begin{align*}
            \zeta(x)=\min\{\max\{x,0\},1\}=1-\sigma(1-\sigma(x))\in \FNN_{\{\ReLU\}}^{(1)}(1,1)
        \end{align*}
        to obtain only values in \([0,1]\) after applying \(\zeta\). Moreover, we define \(\psi_l:[0,1]\rightarrow [0,1]^2\) by \((\psi_l(x))_1:=x\) and
        \begin{align*}
            (\psi_l(x))_2:=\begin{cases}
                0 & x\in [0,2^{-M}-\delta]\\
                \delta^{-1}2^{-M}(x-2^{-M}+\delta) & x\in (2^{-M}-\delta, 2^{-M})\\
                2^{-M} & x\in [2^{-M},2\cdot 2^{-M}-\delta]\\
                \delta^{-1}2^{-M}(x-2\cdot 2^{-M}+\delta) + 2^{-M} & x\in (2\cdot 2^{-M}-\delta, 2\cdot 2^{-M})\\
                \vdots & \vdots \\
                (l-1)2^{-M} & x\in [(l-1)2^{-M},1]
            \end{cases},
        \end{align*}
        and note that \((\psi_{2^M}(x))_2 = q_M(x)\) for \(x\in [0,1]\setminus P_{M,\delta}\). Moreover, we observe that \((\psi_1(x))_2=0\) and that we can rewrite 
        \begin{align*}
            (\psi_{l+1}(x))_2=\min\{l\cdot 2^{-M},\max\{\delta^{-1}2^{-M}(x-l\cdot 2^{-M}+\delta)+(l-1)2^{-M}, (\psi_l(x))_2\}\}
        \end{align*}
        for all \(l\in \mathbb{N}\), hence \(\psi_{2^M}\) is a max-min string as introduced in Definition \ref{Definition-Max_min_string}. Therefore, by Lemma \ref{Lemma-Min_max_strings}, there exists \(\tilde{\phi}\in \FNN_{\{\ReLU\}}^{(2)}(1,2)\) satisfying 
        \begin{align*}
            \tilde{\phi}(x)=(x, q_M(x))^T \quad \forall x\in [0,1]\setminus P_{M,\delta},
        \end{align*}
        and we obtain
        \begin{align*}
            \phi(x):=\begin{pmatrix}
                0 & 1\\
                2^{M} & -2^{M}
            \end{pmatrix}
            \circ \tilde{\phi} \circ \zeta(x)=
            \begin{pmatrix}
                (\tilde{\phi}(\zeta(x)))_2\\
                2^{M}((\tilde{\phi}(\zeta(x)))_1-(\tilde{\phi}(\zeta(x)))_2)
            \end{pmatrix},
        \end{align*}
        thus \(\phi(x)=(q_M(x), 2^{M}(x-q_M(x)))^T\) for all \(x\in [0,1]\setminus P_{M,\delta}\) and \(\phi \in \FNN_{\{\ReLU\}}^{(2)}(1,2)\). Moreover, \(\phi(\mathbb{R})\subset [0,1-2^{-M}]\times [0,1]\) since
        \begin{align*}
            \max\{0,\zeta(x)-2^{-M}\}\leq (\psi_{2^M}(\zeta(x)))_2\leq \zeta(x) = (\psi_{2^M}(\zeta(x)))_1
        \end{align*}
        and \(\zeta(\mathbb{R})\subset [0,1]\).
    \end{proof}

    The following lemma corresponds to Lemma 10 in \cite{park2020minimum}.

    \begin{lemma}\label{Lemma-Decoder}
        For any \(d,M\in \mathbb{N}\), there exists \(\phi\in \FNN_{\{\ReLU\}}^{(d)}(1,d)\) such that for all \(c\in \mathcal{C}_{dM}\) it holds that \(\phi(c)=\mathfrak{d}_M(c)\) and \(\phi(\mathbb{R})\subset [0,1]^{d}\).
    \end{lemma}
    
    \begin{proof}
        The proof structure and idea is taken from the proof of Lemma 10 in \cite{park2020minimum}. We show the claim by iteratively applying Lemma \ref{Lemma-Min_max_string_by_FNN}. For \(M\in \mathbb{N}\) and \(\delta<2^{-dM}\), let \(\psi\in \FNN_{\{\ReLU\}}^{(2)}(1,2)\) be the FNN from Lemma \ref{Lemma-Min_max_string_by_FNN} and note that \(\mathcal{C}_{dM}\subset [0,1] \setminus P_{M,\delta}\). Hence formula (\ref{formula-Min_max_string_by_FNN_1}) holds for \(c\in \mathcal{C}_{dM}\) and additionally \(\psi(\mathbb{R})\subset [0,1-2^{-M}]\times [0,1]\). 
        
        We define \(\phi^{(0)}:=\Id\), \(\phi^{(1)}:=\psi\), and for \(i\in [2,d-1]\) we define \(\phi^{(i)}:\mathbb{R}\rightarrow [0,1-2^{-M}]^{i}\times [0,1]\) by 
        \begin{align}
            \phi^{(i)}(x):=\begin{pmatrix}
                \sigma_0((\phi^{(i-1)}(x))_1)\\
                \vdots \\
                \sigma_0((\phi^{(i-1)}(x))_{i-1})\\
                \psi((\phi^{(i-1)}(x))_i)
            \end{pmatrix}
            \stackrel{\phi^{(i-1)}(x)\geq 0}{=}\begin{pmatrix}
                (\phi^{(i-1)}(x))_1\\
                \vdots \\
                (\phi^{(i-1)}(x))_{i-1}\\
                \psi((\phi^{(i-1)}(x))_i)
            \end{pmatrix},\label{formula-Decoder_1}
        \end{align}
        where we used that \(\phi^{(i-1)}(x)\geq 0\) for \(i\in [2,d-1]\) and that \(\sigma_0\) equals the identity on \([0,\infty)\). We set \(\phi:=\phi^{(d-1)}\) and note that by definition and choice of \(\psi\) we have \(\phi^{(i)}\in \FNN_{\{\ReLU\}}^{(i+1)}(1,i+1)\). 
        
        For \(c\in \mathcal{C}_{dM}\), the component \((\psi(c))_2=2^{M}(c-q_M(c))\) is the number where the first \(M\) bits of \(c\) were extracted and the remaining bits were shifted \(M\) places to the left. By construction, \(\phi^{(i)}\) applies this procedure \(i\) times to an input \(c\in \mathcal{C}_{dM}\), hence \((\phi^{(i)}(c))_{i+1}\in \mathcal{C}_{(d-i)M}\) is the binary number obtained by extracting the first \(i\cdot M\) bits from \(c\) and shifting the remaining \((d-i)M\) bits by \(i \cdot M\) binary places to the left. Since \(\mathcal{C}_{(d-i)M}\subset [0,1]\setminus P_{M,\delta}\), formula (\ref{formula-Min_max_string_by_FNN_1}) applies at each step.
        
        By definition of the decoder, for \(c\in \mathcal{C}_{dM}\) we have
        \begin{align}\label{formula-Decoder_2}
            (\phi^{(i)}(c))_{i}=(\psi((\phi^{(i-1)}(c))_i))_1=q_M((\phi^{(i-1)}(c))_i)=(\mathfrak{d}_M(c))_i\quad \forall i\in [d-1]\;,
        \end{align}
        where for \(i=1\) we use \((\phi^{(0)}(c))_1 = c\). Since \((\phi^{(d-2)}(c))_{d-1}\in \mathcal{C}_{2M}\), it follows that 
        \begin{align}\label{formula-Decoder_3}
            (\phi^{(d-1)}(c))_{d}=(\psi((\phi^{(d-2)}(c))_{d-1}))_2=2^{M}((\phi^{(d-2)}(c))_{d-1}-q_M((\phi^{(d-2)}(c))_{d-1}))=(\mathfrak{d}_M(c))_{d}\;.
        \end{align}
        Combining (\ref{formula-Decoder_2}) and (\ref{formula-Decoder_3}), and noting that by (\ref{formula-Decoder_1}) the function \(\phi^{(i)}\) keeps the values of \(\phi^{(i-1)}\) in the first \(i-1\) coordinates unchanged, we obtain for all \(c\in \mathcal{C}_{dM}\):
        \begin{align*}
            \phi(c)
            \stackrel{(\ref{formula-Decoder_1}),\, (\ref{formula-Decoder_2})}{=}
            \begin{pmatrix}
                (\mathfrak{d}_M(c))_1\\
                \vdots \\
                (\mathfrak{d}_M(c))_{d-1}\\
                (\psi((\phi^{(d-2)}(c))_{d-1}))_2
            \end{pmatrix}
            \stackrel{(\ref{formula-Decoder_3})}{=}
            \begin{pmatrix}
                (\mathfrak{d}_M(c))_1\\
                \vdots \\
                (\mathfrak{d}_M(c))_{d-1}\\
                (\mathfrak{d}_M(c))_{d}
            \end{pmatrix}
            =\mathfrak{d}_M(c)\;.
        \end{align*}
    \end{proof}

		The following lemma ensures that we can uniformly approximate the decoder with FNNs that use a single non-identical invertible LReLU activation.

        \begin{lemma}\label{Lemma-Decoder_leaky}
            For any \(\alpha\in (0,1)\cup (1,\infty)\) and \(d,M\in \mathbb{N}\), there exists \((\phi_n)_{n\in \mathbb{N}}\subset \FNN^{(d)}_{\{\sigma_{\alpha}\}}(1,d)\) such that \(\lim_{n\to \infty}\lVert \phi_n - \mathfrak{d}_M \rVert_{\mathcal{C}_{dM},\sup}=0\).
        \end{lemma}
        
        \begin{proof}
            Let \(\phi\in \FNN^{(d)}_{\{\ReLU\}}(1,d)\) be the ReLU network of width \(d\) from Lemma \ref{Lemma-Decoder}, satisfying \(\phi(c) = \mathfrak{d}_M(c)\) for all \(c\in \mathcal{C}_{dM}\), and let its structure be given by 
            \begin{align*}
                \phi=W_L\circ \ReLU \circ W_{L-1}\circ \hdots \circ W_2 \circ \ReLU \circ W_1\;,
            \end{align*}
            where \(L\in \mathbb{N}\), \(W_i\in \Aff(d_{i-1}, d_i)\) with \(d_0:=1\), \(d_L:=d\), and \(d_i\leq d\) for all \(i\in [L]\). By Lemma \ref{Lemma-LReLU_properties} (5.), there exists \((\sigma^{(n)})_{n\in \mathbb{N}}\subset \FNN^{(1)}_{\{\sigma_{\alpha}\}}(1,1)\) such that \(\lim_{n\to \infty}\lVert \sigma^{(n)} - \ReLU \rVert_{\mathcal{K},\sup}=0\) for all compact \(\mathcal{K}\subset \mathbb{R}\). Define \(\phi_n:=W_L\circ \sigma^{(n)} \circ W_{L-1} \circ \hdots \circ W_2 \circ \sigma^{(n)}\circ W_1\in \FNN^{(d)}_{\{\sigma_{\alpha}\}}(1,d)\), where \(\sigma^{(n)}\) is applied component-wise. By Lemma \ref{Lemma-Approximate_function_compositions_sup_unif_cont}, \(\lim_{n\to \infty}\lVert \phi_n - \phi \rVert_{[0,1],\sup}=0\), and since \(\phi(c) = \mathfrak{d}_M(c)\) for all \(c\in \mathcal{C}_{dM}\), we conclude \(\lim_{n\to \infty}\lVert \phi_n - \mathfrak{d}_M \rVert_{\mathcal{C}_{dM},\sup}=0\).
        \end{proof}

\section{Universal approximation of FNNs with LReLU activations}
\label{Section-Proof_Main1}

    The following lemma is used to generalize our universal approximation results from the unit cube to compact sets with non-empty interior.

    \begin{lemma}\label{Lemma-Reduce_universal_approximation_to_unit_cube}
        Let \(k\in \mathbb{N}\), \(n\in [\max\{d_x,d_y\}]\), then: 
        \begin{enumerate}
            \item[(1.)] Let \(\mathcal{A}\) be a set of Lebesgue-measurable activations from \(\mathbb{R}\) to \(\mathbb{R}\) and assume that \(\FNN_{\mathcal{A}}^{(k,n)}(d_x,d_y)\) is a universal approximator of \(C^0([0,1]^{d_x},[0,1]^{d_y})\) on compact sets with respect to the \(L^p\) norm. Then \(\FNN_{\mathcal{A}}^{(k,n)}(d_x,d_y)\) is a universal approximator of \(C^0(\mathcal{K},\mathbb{R}^{d_y})\) on compact sets with respect to the \(L^p\) norm. 
            \item[(2.)] Let \(\mathcal{A}\) be a set of activations from \(\mathbb{R}\) to \(\mathbb{R}\), such that each element of \(\mathcal{A}\) is bounded on any compact set. Additionally, assume that \(\FNN_{\mathcal{A}}^{(k,n)}(d_x,d_y)\) is a universal approximator of \(C^0([0,1]^{d_x},[0,1]^{d_y})\) on compact sets with respect to the supremum norm. Then \(\FNN_{\mathcal{A}}^{(k,n)}(d_x,d_y)\) is a universal approximator of \(C^0(\mathcal{K},\mathbb{R}^{d_y})\) on compact sets with respect to the supremum norm. 
        \end{enumerate}
    \end{lemma}
    
    \begin{proof}
        Let \(\mathcal{K}\subset \mathbb{R}^{d_x}\) be a compact set, which without loss of generality is assumed to have non-empty interior (otherwise we consider a compact superset with non-empty interior), and let \(\epsilon>0\), \(f\in C^0(\mathcal{K},\mathbb{R}^{d_y})\). There exists an invertible affine mapping \(V\in \Aff_{\GL}(d_x)\) such that \(V(\mathcal{K})\subset [0,1]^{d_x}\), and \(W(x)=Ax+b\), \(W\in \Aff_{\GL}(d_y)\), such that \(f(\mathcal{K})\subset W([0,1]^{d_y})\), which is possible since \(f(\mathcal{K})\) is compact as the image of a compact set under a continuous function. We define \(g:=W^{-1}\circ f \circ V^{-1}:V(\mathcal{K})\rightarrow [0,1]^{d_y}\), which is well-defined since \(g(V(\mathcal{K}))=W^{-1}(f(\mathcal{K}))\subset [0,1]^{d_y}\) by the choice of \(W\). By the Tietze extension theorem, there exists a continuous extension \(g^*:[0,1]^{d_x}\rightarrow [0,1]^{d_y}\) of \(g\).
        
        \medskip
        \textbf{(Proof of (1.))} We assume that our class of FNNs is an \(L^p\) universal approximator for functions in \(C^0([0,1]^{d_x},[0,1]^{d_y})\), which \(g^{*}\) is an element of. Thus, for all \(\epsilon>0\) we find \(\psi_\epsilon\in \FNN_{\mathcal{A}}^{(k,n)}(d_x,d_y)\) such that 
        \begin{align}\label{Formula-Inequality_matrix_norms}
            \lVert g^{*}-\psi_\epsilon \rVert_{V(\mathcal{K}),p}\leq \lVert g^{*}-\psi_\epsilon \rVert_{[0,1]^{d_x},p}<\frac{|\det(V)|\cdot\epsilon}{\lVert A \rVert_{\op}}\;.
        \end{align}
        Note that \(W(x)=Ax+b\) with \(A\in \GL(d_y)\), thus 
        \begin{align}\label{formula-Affine-operatornorm}
            \lVert W(x)-W(y) \rVert_p = \lVert A(x-y) \rVert_p\leq \lVert A \rVert_{\op} \lVert x-y \rVert_p\;,
        \end{align}
        where the operator norm of \(A\) is chosen with respect to the \(p\)-norms. Moreover, \(V\) is a \(C^{\infty}\) diffeomorphism, which makes the change of variables formula (see Theorem 3.7.1 in \cite{Bogachev2007}) applicable. For \(\epsilon>0\) define \(\phi_\epsilon:=W\circ \psi_\epsilon \circ V\in \FNN_{\mathcal{A}}^{(k,n)}(d_x,d_y)\) and we obtain 
        \begin{align*}
            &\lVert f-\phi_\epsilon \rVert_{\mathcal{K},p}=\bigg(\int_{\mathcal{K}}\lVert W(g(V(x)))-W(\psi_\epsilon(V(x))) \rVert_{p}^p\,dx\bigg)^{1/p}\\
            &\stackrel{(\ref{formula-Affine-operatornorm})}{\leq} \lVert A \rVert_{\op} \bigg(\int_{\mathcal{K}}\lVert g(V(x))-\psi_\epsilon(V(x)) \rVert_{p}^p\,dx\bigg)^{1/p}\\
            &=\frac{\lVert A \rVert_{\op}}{|\det(V)|}\bigg(\int_{V(\mathcal{K})}\lVert g^{*}(y)-\psi_\epsilon(y) \rVert_{p}^p\,dy\bigg)^{1/p}\\
            &=\frac{\lVert A \rVert_{\op}}{|\det(V)|} \lVert g^{*}-\psi_\epsilon \rVert_{V(\mathcal{K}),p} \stackrel{(\ref{Formula-Inequality_matrix_norms})}{<}\frac{\lVert A \rVert_{\op}}{|\det(V)|}\cdot\frac{|\det(V)|\cdot\epsilon}{\lVert A \rVert_{\op}}=\epsilon\;.
        \end{align*}
        
        \medskip
        \textbf{(Proof of (2.))} We assume that our class of FNNs is a uniform universal approximator for functions in \(C^0([0,1]^{d_x},[0,1]^{d_y})\), which \(g^{*}\) is an element of. Since \(V(\mathcal{K})\subset [0,1]^{d_x}\), there exists \((\psi_n)_{n\in \mathbb{N}}\subset \FNN_{\mathcal{A}}^{(k,n)}(d_x,d_y)\), which is bounded on compact sets since the same holds for each activation, such that
        \begin{align*}
            \lim_{n\to\infty}\lVert g - \psi_n \rVert_{V(\mathcal{K}),\sup} \leq \lim_{n\to\infty}\lVert g^{*} - \psi_n \rVert_{[0,1]^{d_x},\sup} = 0\;.
        \end{align*}
        Let \((\phi_n)_{n\in\mathbb{N}}:=(W\circ \psi_n\circ V)_{n\in \mathbb{N}}\subset \FNN_{\mathcal{A}}^{(k,n)}(d_x,d_y)\). Then
        \begin{align*}
            \lVert f-\phi_n \rVert_{\mathcal{K},\sup}=\lVert W\circ g\circ V- W\circ \psi_n\circ V\rVert_{\mathcal{K},\sup}\leq \lVert A \rVert_{\op}\lVert g- \psi_n\rVert_{V(\mathcal{K}),\sup}\xrightarrow{n\to \infty} 0\;.
        \end{align*}
    \end{proof}

    \bigskip
    Having established the necessary constructions for encoder, memorizer, and decoder, along with their approximation properties, we are now ready to prove our main results.
    \bigskip

    \begin{proof}[Proof of Theorem \ref{Theorem-Main1}]
        We lead the proof for both function classes \(\FNN_{\mathcal{F}_{\pm}}^{(\max\{d_x,d_y\},1)}(d_x,d_y)\) and \(\FNN_{\mathcal{F}_{+}}^{(\max\{2,d_x,d_y\}, 1)}(d_x,d_y)\), as most steps work analogously. If a step in the proof needs different treatment, we note this explicitly. Let \(\mathcal{A}\in \{\mathcal{F}_{\pm}, \mathcal{F}_{+}\}\) be the set of activations used and the corresponding class of neural networks be \(\FNN_{\mathcal{A}}^{(k(\mathcal{A}), 1)}(d_x,d_y)\), where \(k(\mathcal{A})=\max\{d_x,d_y\}\) if \(\mathcal{A}=\mathcal{F}_{\pm}\) and \(k(\mathcal{A})=\max\{2,d_x,d_y\}\) otherwise. 
        
        \medskip
        \textbf{(1.)} For \(\delta>0\), \(\gamma>0\), \(K,M\in \mathbb{N}\), let \(\mathfrak{e}_K^\dag\in \FNN^{(d_x)}_{\mathcal{F}_{+}}(d_x,1)\) be our approximation of the encoder from Lemma \ref{Lemma-Approximate_Encoder} that fulfills 
        \begin{align}\label{formula-FNN_dense_in_M_1}
            \mathfrak{e}_K(c)=\mathfrak{e}_K^\dag(c)\;\forall c\in \mathcal{C}_{K}^{d_x},\quad\lVert \mathfrak{e}_K - \mathfrak{e}_K^\dag \rVert_{[0,1]^{d_x}\setminus S_\gamma,\sup}<\delta,
        \end{align}
        with \(\lambda(S_\gamma)<\gamma\). Furthermore, let \(\mathfrak{m}_{K,M,f^{*}}^\dag\) be the exact implementation of the memorizer, which is in \(\FNN_{\mathcal{F}_{\pm}}^{(1,1)}(1,1)\) by Lemma \ref{Lemma-Memorizer_zig_zag} if \(\mathcal{A}=\mathcal{F}_{\pm}\), or in \(\FNN_{\mathcal{F}_{+}}^{(2,1)}(1,1)\) by Lemma \ref{Lemma-Memorizer-Lp} if \(\mathcal{A}=\mathcal{F}_{+}\). Let \(\mathfrak{d}_M^\dag \in \FNN_{\{\ReLU\}}^{(d_y)}(1,d_y)\) be the exact decoder from Lemma \ref{Lemma-Decoder}, and let \((\mathfrak{d}_M^{(n)})_{n\in\mathbb{N}} \subset \FNN_{\{\sigma_\alpha\}}^{(d_y)}(1,d_y)\) be the approximating sequence from Lemma \ref{Lemma-Decoder_leaky}. For \(n\) large enough, we have
        \begin{align}\label{formula-FNN_dense_in_M_2}
            \lVert \mathfrak{d}_M^{(n)} - \mathfrak{d}_M^\dag \rVert_{\mathcal{C}_{d_yM},\sup}<\delta\;.
        \end{align}
        In steps (2.)–(4.) we show that 
        \begin{align}\label{formula-FNN_dense_in_M_3}
            \phi:=\mathfrak{d}_M^{(n)}\circ \mathfrak{m}_{K,M,f^{*}}^\dag\circ \mathfrak{e}_K^\dag\in \FNN_\mathcal{A}^{(k(\mathcal{A}), 1)}(d_x,d_y)
        \end{align}
        is a suitable approximation of \(f^{*}\). Note that the different widths of the memorizer for the two activation classes lead to the different widths \(k(\mathcal{A})\) of the entire network. The encoder and decoder are constructed equivalently for both classes.
        
        \medskip
        \textbf{(2.)} First, we note that the function \(\psi:=\mathfrak{d}_M^{(n)}\circ \mathfrak{m}^{\dag}_{K,M,f^{*}}\) from \(\mathbb{R}\) to \(\mathbb{R}^{d_y}\) is uniformly continuous on \([0,1]\) as a continuous function on a compact interval. Thus there exists \(\nu>0\) such that for \(|x-y|<\nu\) it holds
        \begin{align}\label{formula-FNN_dense_in_M_5}
            \lVert \psi(x) - \psi(y) \rVert_{p}<\frac{\epsilon}{8}\;.
        \end{align}
        Let \(\mathfrak{c}_{f^{*},K,M}\) be the exact coding scheme from Definition \ref{Definition-Coding_scheme}. We show that if \(\delta \leq \min\{\nu,\frac{\epsilon}{8d_y^{1/p}}\}\), then \(\lVert \phi-\mathfrak{c}_{f^{*},K,M}\rVert_{[0,1]^{d_x}\setminus S_\gamma,p} < \frac{\epsilon}{2}\).
        
        For \(i_j\in [0,2^{K}-1]\), \(j\in [d_x]\), and \(i:=(i_1,\ldots,i_{d_x})\), define
        \begin{align*}
            &S_{i,\gamma}:=[i_1\cdot 2^{-K}, (i_1+1)\cdot 2^{-K})\times\cdots\times [i_{d_x}\cdot 2^{-K}, (i_{d_x}+1)\cdot 2^{-K})\setminus S_\gamma\subset [0,1]^{d_x}\setminus S_\gamma\;,\\
            &c(i_1,\ldots,i_{d_x}):=(i_1\cdot 2^{-K}, i_2\cdot 2^{-K},\ldots,i_{d_x}\cdot 2^{-K})\in \mathcal{C}^{d_x}_{K}\;.
        \end{align*}
        Since \(\dot{\bigcup}_{i_1,\ldots,i_{d_x}=0}^{2^{K}-1}S_{i,\gamma}=[0,1]^{d_x}\setminus S_\gamma\), the sets \(S_{i,\gamma}\) form a disjoint partition of \([0,1]^{d_x}\setminus S_\gamma\). Each \(c\in \mathcal{C}^{d_x}_{K}\) can be written as \(c(i_1,\ldots,i_{d_x})\) for some \(i_j\in [0,2^{K}-1]\), \(j\in [d_x]\), and it holds that
        \begin{align}\label{formula-FNN_dense_in_M_6}
            \mathfrak{m}_{K,M,f^{*}}^\dag\circ \mathfrak{e}_K^\dag(c(i_1,\ldots,i_{d_x}))=\mathfrak{m}_{K,M,f^{*}}\circ \mathfrak{e}_K(c(i_1,\ldots,i_{d_x}))\;,
        \end{align}
        since our implementations of encoder and memorizer are exact on \(\mathcal{C}_{K}^{d_x}\) and \(\mathcal{C}_{d_xK}\), respectively. Furthermore, by definition of the coding scheme,
        \begin{align}\label{formula-FNN_dense_in_M_7}
            \mathfrak{c}_{K,M,f^{*}}(c(i_1,\ldots,i_{d_x}))=\mathfrak{c}_{K,M,f^{*}}(x)\quad\forall x\in S_{i,\gamma}
        \end{align}
        and \(i_j\in [0,2^K-1]\), \(j\in [d_x]\). For \(\delta \leq \min\{\nu,\frac{\epsilon}{8d_y^{1/p}}\}\) we obtain
        \begin{align*}
            &\lVert \phi - \mathfrak{c}_{K,M,f^{*}}\rVert_{[0,1]^{d_x}\setminus S_\gamma,p}=\bigg(\sum_{i_1,\ldots,i_{d_x}=0}^{2^{K}-1}\int_{S_{i,\gamma}}\lVert \phi(x)-\mathfrak{c}_{K,M,f^{*}}(x)\rVert_p^p\,dx \bigg)^{1/p}\\
            &\stackrel{(\ref{formula-FNN_dense_in_M_7})}{=}\bigg(\sum_{i_1,\ldots,i_{d_x}=0}^{2^{K}-1}\int_{S_{i,\gamma}}\lVert \phi(x)-\mathfrak{c}_{K,M,f^{*}}(c(i_1,\ldots,i_{d_x}))\rVert_p^p\,dx \bigg)^{1/p}  \\
            &=\bigg(\sum_{i_1,\ldots,i_{d_x}=0}^{2^{K}-1}\int_{S_{i,\gamma}}\lVert \mathfrak{d}_M^{(n)} \circ \mathfrak{m}_{K,M,f^{*}}^\dag\circ \mathfrak{e}_K^\dag(x)
            -\mathfrak{d}_M^\dag \circ \mathfrak{m}_{K,M,f^{*}}\circ \mathfrak{e}_K(c(i_1,\ldots,i_{d_x}))\rVert_p^p\, dx \bigg)^{1/p}\\
            &\stackrel{(\ref{formula-FNN_dense_in_M_6})}{=}\bigg(\sum_{i_1,\ldots,i_{d_x}=0}^{2^{K}-1}\int_{S_{i,\gamma}}\lVert \mathfrak{d}_M^{(n)} \circ \mathfrak{m}_{K,M,f^{*}}^\dag\circ \mathfrak{e}_K^\dag(x)-\mathfrak{d}_M^\dag \circ \mathfrak{m}_{K,M,f^{*}}^\dag\circ \mathfrak{e}_K^\dag(c(i_1,\ldots,i_{d_x}))\rVert_p^p\, dx \bigg)^{1/p}\\
            &\leq \bigg(2^p\sum_{i_1,\ldots,i_{d_x}=0}^{2^{K}-1}\int_{S_{i,\gamma}}\lVert \mathfrak{d}_M^{(n)} \circ \mathfrak{m}_{K,M,f^{*}}^\dag\circ \mathfrak{e}_K^\dag(x)- \mathfrak{d}_M^{(n)} \circ \mathfrak{m}_{K,M,f^{*}}^\dag\circ \mathfrak{e}_K^\dag(c(i_1,\ldots,i_{d_x}))\rVert_p^p \\
            &\quad+ \lVert  \mathfrak{d}_M^{(n)} \circ \mathfrak{m}_{K,M,f^{*}}^\dag\circ \mathfrak{e}_K^\dag(c(i_1,\ldots,i_{d_x}))-\mathfrak{d}_M^\dag \circ \mathfrak{m}_{K,M,f^{*}}^\dag\circ \mathfrak{e}_K^\dag(c(i_1,\ldots,i_{d_x}))\rVert_p^p\,dx \bigg)^{1/p}\\
            &\stackrel{(\ref{formula-FNN_dense_in_M_5}),\, (\ref{formula-FNN_dense_in_M_2})}{<} \bigg(\sum_{i_1,\ldots,i_{d_x}=0}^{2^{K}-1}\int_{S_{i,\gamma}}  \Big(\frac{2\epsilon}{8}\Big)^{p} + d_y \Big(\frac{2\epsilon}{8d_y^{1/p}}\Big)^{p}\, dx \bigg)^{1/p}\\
            &\leq \bigg(\sum_{i_1,\ldots,i_{d_x}=0}^{2^{K}-1}\int_{S_{i,\gamma}}  \Big(\frac{\epsilon}{2}\Big)^p\, dx \bigg)^{1/p}=\frac{\epsilon}{2} (1-\lambda(S_\gamma))^{1/p}\leq \frac{\epsilon}{2}\;,
        \end{align*}
        where we used that \((a+b)^p\leq (2\max\{a,b\})^p\leq 2^p(a^p+b^p)\) for \(a,b\geq 0\), \(p\geq 1\).
        
        \medskip
        \textbf{(3.)} Since \(f^{*}\in C^0([0,1]^{d_x},[0,1]^{d_y})\), by choosing \(M,K\) large enough, we can apply Lemma \ref{Lemma-Accuracy_coding_scheme} to obtain
        \begin{align}\label{formula-FNN_dense_in_M_8}
            \lVert f^{*}-\mathfrak{c}_{K,M,f^{*}}\rVert_{[0,1]^{d_x},p}< \frac{\epsilon}{4}\;.
        \end{align}
        Set \(r_1:=\max_{x\in [0,1]^{d_x}}\lVert \phi(x)\rVert_p<\infty\), which exists since \(\phi\) is continuous. Note that \(0\leq (\mathfrak{c}_{K,M,f^{*}})_j(x)\leq 1\) for \(x\in [0,1]^{d_x}\), \(j\in [d_y]\), hence 
        \begin{align*}
            r_2:=\max_{x\in [0,1]^{d_x}}\lVert \mathfrak{c}_{f^{*},K,M}(x)\rVert_p\leq  d_y^{1/p}<\infty\;.
        \end{align*}
        Let \(M,K\) be sufficiently large such that (\ref{formula-FNN_dense_in_M_5}) and (\ref{formula-FNN_dense_in_M_8}) hold, and let \(\gamma< \big(\frac{\epsilon}{4(r_1+r_2)}\big)^p\). Then
        \begin{align*}
            &\lVert f^{*}-\phi\rVert_{[0,1]^{d_x},p}\leq \lVert f^{*}-\mathfrak{c}_{K,M,f^{*}}\rVert_{[0,1]^{d_x},p}+\lVert \phi-\mathfrak{c}_{K,M,f^{*}}\rVert_{[0,1]^{d_x}\setminus S_\gamma,p}+\lVert \phi-\mathfrak{c}_{K,M,f^{*}}\rVert_{S_\gamma,p}\\
            &\stackrel{(\ref{formula-FNN_dense_in_M_8}), (2.)}{<} \frac{\epsilon}{4}+\frac{\epsilon}{2}+ \bigg(\int_{S_\gamma}(\lVert \phi(x)\rVert_p+\lVert \mathfrak{c}_{K,M,f^{*}}(x)\rVert_p)^p\,dx \bigg)^{1/p}\\
            &\leq  \frac{3\epsilon}{4} +\bigg(\int_{S_\gamma}(r_1+r_2)^p\,dx \bigg)^{1/p}=  \frac{3\epsilon}{4} +\lambda(S_\gamma)^{1/p}(r_1+r_2)< \epsilon\;.
        \end{align*}
        
        \medskip
        \textbf{(4.)} Let \(\alpha\in (0,1)\cup(1,\infty)\). Define \(\mathcal{A}_1:=\{\sigma_{-\alpha},\sigma_\alpha\}\) if \(\mathcal{A}=\mathcal{F}_{\pm}\), and \(\mathcal{A}_1:=\{\sigma_\alpha\}\) otherwise. By Lemma \ref{Lemma-Approximate_Encoder}, there exists \((\mathfrak{e}_K^{(n)})_{n\in \mathbb{N}}\subset \FNN_{\{\sigma_\alpha\}}^{(d_x)}(d_x,1)\) such that \(\lim_{n\to \infty}\lVert \mathfrak{e}^\dag_K - \mathfrak{e}_K^{(n)}\rVert_{[0,1]^{d_x},\sup}=0\). By Lemma \ref{Lemma-Memorizer_zig_zag} (if \(\mathcal{A}=\mathcal{F}_{\pm}\)) or Lemma \ref{Lemma-Memorizer-Lp} (if \(\mathcal{A}=\mathcal{F}_{+}\)), there exists \((\mathfrak{m}_{K,M,f^{*}}^{(n)})_{n\in \mathbb{N}}\subset \FNN_{\mathcal{A}_1}^{(k(\mathcal{A}),1)}(1,1)\) such that \(\lim_{n\to \infty}\lVert \mathfrak{m}_{K,M,f^{*}}^\dag - \mathfrak{m}_{K,M,f^{*}}^{(n)}\rVert_{\mathcal{K}_1,\sup}=0\) for an appropriately large compact set \(\mathcal{K}_1 \subset \mathbb{R}\). By Lemma \ref{Lemma-Decoder_leaky}, there exists \((\mathfrak{d}_M^{(n)})_{n\in \mathbb{N}}\subset \FNN_{\{\sigma_\alpha\}}^{(d_y)}(1,d_y)\) such that \(\lim_{n\to \infty}\lVert \mathfrak{d}^\dag_M - \mathfrak{d}_M^{(n)}\rVert_{\mathcal{K}_2,\sup}=0\) for an appropriately large compact set \(\mathcal{K}_2\subset \mathbb{R}\).
        
        Since all functions are continuous, we can apply Lemma \ref{Lemma-Approximate_function_compositions_sup_unif_cont} to obtain that 
        \[
            \phi_n:=\mathfrak{d}_M^{(n)}\circ \mathfrak{m}_{K,M,f^{*}}^{(n)}\circ \mathfrak{e}_K^{(n)}\in \FNN_{\mathcal{A}_1}^{(k(\mathcal{A}),1)}(d_x,d_y)
        \]
        satisfies \(\lim_{n\to \infty}\lVert \phi - \phi_n\rVert_{[0,1]^{d_x},\sup}=0\), where \(\phi\) is as in step (3.). Combining this with step (3.), we find that \(\FNN_{\mathcal{A}_1}^{(k(\mathcal{A}),1)}(d_x,d_y)\) is an \(L^p\) universal approximator of \(C^0([0,1]^{d_x},[0,1]^{d_y})\). By Lemma \ref{Lemma-Reduce_universal_approximation_to_unit_cube} (1.), \(\FNN_{\mathcal{A}_1}^{(k(\mathcal{A}),1)}(d_x,d_y)\) is an \(L^p\) universal approximator of \(C^0(\mathbb{R}^{d_x},\mathbb{R}^{d_y})\) on compact sets. 
        
        \medskip
        \textbf{(5.)} For given \(\epsilon>0\) and compact \(\mathcal{K}\subset\mathbb{R}^{d_x}\), let \(f\in L^p(\mathcal{K},\mathbb{R}^{d_y})\). There exists \(f^{*}\in C^0(\mathcal{K},\mathbb{R}^{d_y})\) such that \(\lVert f-f^{*} \rVert_{\mathcal{K},p}<\frac{\epsilon}{2}\), since continuous functions are dense in \(L^p\). By step (4.), there exists \(\phi \in \FNN_{\mathcal{A}_1}^{(k(\mathcal{A}), 1)}(d_x,d_y)\) satisfying \(\lVert f^{*}-\phi \rVert_{\mathcal{K},p}<\frac{\epsilon}{2}\). Thus
        \begin{align*}
            \lVert f-\phi \rVert_{\mathcal{K},p}\leq \lVert f-f^{*} \rVert_{\mathcal{K},p}+\lVert f^{*}-\phi \rVert_{\mathcal{K},p}<\frac{\epsilon}{2}+\frac{\epsilon}{2}=\epsilon\;,
        \end{align*}
        which concludes the proof by the definition of \(\mathcal{A}_1\).
    \end{proof}

    \bigskip
    \begin{proof}[Proof of Theorem \ref{Theorem-Main_sup}]
        We denote \(\mathcal{S}_1:=\{\mathcal{F}_{\pm,\mathfrak{s}}, \mathcal{F}_{+,\mathfrak{s}}\}\) and \(\mathcal{S}_2:=\{\mathcal{F}_+\cup \{\FLOOR\}, \mathcal{F}_{\pm}\cup \{\FLOOR\}\}\). Choose \(\mathcal{A}\in \mathcal{S}_1\cup\mathcal{S}_2\) and set \(k(\mathcal{A}):=\max\{d_x,d_y,2\}\), \(m(\mathcal{A}):=2\) if \(\mathcal{A}\in \{\mathcal{F}_{+,\mathfrak{s}}, \mathcal{F}_+\cup \{\FLOOR\}\}\), and \(k(\mathcal{A}):=\max\{d_x,d_y\}\), \(m(\mathcal{A}):=1\) otherwise. We show the result by distinguishing between the cases \(\mathcal{A}\in \mathcal{S}_1\) and \(\mathcal{A}\in \mathcal{S}_2\), proving the claim for two classes of activations simultaneously in each case.
        
        \medskip
        \textbf{(1.)} We show that \(\FNN_{\mathcal{A}}^{(k(\mathcal{A}),1)}(d_x,d_y)\) is a universal approximator of \(C^0([0,1]^{d_x},[0,1]^{d_y})\) on compact sets with respect to the supremum norm.
        
        \textbf{(Case: \(\mathcal{A}\in \mathcal{S}_1\)):} Let \(f\in C^0([0,1]^{d_x},[0,1]^{d_y})\), \(\epsilon>0\), and choose \(K,M\) large enough such that the coding scheme fulfills
        \begin{align}\label{Formula-main_sup_conv_1}
            \lVert \mathfrak{c}_{K,M,f} - f \rVert_{[0,1]^{d_x},\sup} < \frac{\epsilon}{3}\;,
        \end{align}
        which is possible by Lemma \ref{Lemma-Accuracy_coding_scheme}. By Lemma \ref{Lemma-Decoder_leaky}, choose \(\mathfrak{d}^{\dag}_{M}\in \FNN_{\mathcal{F}_{+}}^{(d_y)}(1,d_y)\) as the approximation of the decoder such that
        \begin{align}\label{Formula-main_sup_conv_2}
            \lVert \mathfrak{d}^{\dag}_{M} - \mathfrak{d}_{M}\rVert_{\mathcal{C}_{d_yM},\sup}<\frac{\epsilon}{6}\;,
        \end{align}
        and note that it is Lipschitz continuous with Lipschitz constant \(\mathcal{L}_M\) as a composition of Lipschitz continuous functions. Let \(\mathfrak{m}^{\dag}_{K,M,f}\in \FNN_{\mathcal{A}}^{(m(\mathcal{A}),1)}(1,1)\) be our exact implementation of the memorizer of width 1 from Lemma \ref{Lemma-Memorizer_zig_zag} if \(\mathcal{A}=\mathcal{F}_{\pm,\mathfrak{s}}\), or of width 2 from Lemma \ref{Lemma-Memorizer-Lp} otherwise. Note that \(\mathfrak{m}^{\dag}_{K,M,f}\) is Lipschitz continuous with Lipschitz constant \(\mathcal{L}_{K,M}\) and fulfills
        \begin{align}\label{Formula-main_sup_conv_3}
            \lVert \mathfrak{m}^{\dag}_{K,M,f} - \mathfrak{m}_{K,M,f} \rVert_{\mathcal{C}_{d_x K},\sup} = 0\;.
        \end{align}
        Let \(\mathfrak{e}^{\dag}_{K}\in \FNN_{\mathcal{F}_{+,\mathfrak{s}}}^{(d_x)}(d_x,1)\) be our approximation of the encoder from Lemma \ref{Lemma-Approximate_encoder_stepped}, chosen such that
        \begin{align}\label{Formula-main_sup_conv_4}
            \lVert \mathfrak{e}_{K} -\mathfrak{e}^{\dag}_{K} \rVert_{[0,1]^{d_x},\sup}<\frac{\epsilon}{6\mathcal{L}_M\mathcal{L}_{K,M}}\;.
        \end{align}
        We denote our coding scheme approximation by \(\phi:= \mathfrak{d}^{\dag}_{M} \circ \mathfrak{m}^{\dag}_{K,M,f} \circ \mathfrak{e}^{\dag}_{K}\in \FNN_{\mathcal{A}}^{(k(\mathcal{A}),1)}(d_x,d_y)\).
        
        For \(i_j\in [0,2^{K}-1]\), \(j\in [d_x]\), and \(i:=(i_1,\ldots,i_{d_x})\), define
        \begin{align*}
            &S_{i}:=[i_1\cdot 2^{-K}, (i_1+1)\cdot 2^{-K})\times\cdots\times [i_{d_x}\cdot 2^{-K}, (i_{d_x}+1)\cdot 2^{-K})\subset [0,1]^{d_x}\;,\\
            &c(i_1,\ldots,i_{d_x}):=(i_1\cdot 2^{-K}, i_2\cdot 2^{-K},\ldots,i_{d_x}\cdot 2^{-K})\in \mathcal{C}^{d_x}_{K}\;,\\
            &I:=\{i=(i_1,\ldots,i_{d_x})\mid i_j\in [0,2^{K}-1],\, j\in [d_x]\}\;.
        \end{align*}
        Then we obtain
        \begin{align*}
            &\lVert \phi - \mathfrak{d}^{\dag}_{M} \circ \mathfrak{m}_{K,M,f} \circ \mathfrak{e}_{K} \rVert_{[0,1]^{d_x},\sup}\\
            &\leq \mathcal{L}_{M}\max_{i\in I}\sup_{x\in S_i}\lVert \mathfrak{m}^{\dag}_{K,M,f} \circ \mathfrak{e}^{\dag}_{K}(x) - \mathfrak{m}_{K,M,f} \circ \mathfrak{e}_{K}(c(i_1,\ldots,i_{d_x})) \rVert \\
            &\leq \mathcal{L}_{M}\max_{i\in I}\sup_{x\in S_i}\bigg(\lVert \mathfrak{m}^{\dag}_{K,M,f} \circ\mathfrak{e}^{\dag}_{K}(x) - \mathfrak{m}^{\dag}_{K,M,f} \circ \mathfrak{e}_{K}(c(i_1,\ldots,i_{d_x})) \rVert \\
            &\quad+ \lVert \mathfrak{m}^{\dag}_{K,M,f} \circ \mathfrak{e}_{K}(c(i_1,\ldots,i_{d_x})) - \mathfrak{m}_{K,M,f} \circ \mathfrak{e}_{K}(c(i_1,\ldots,i_{d_x})) \rVert \bigg)\\
            &\leq \mathcal{L}_{M}\bigg(\mathcal{L}_{K,M}\underbrace{\lVert \mathfrak{e}^{\dag}_{K} - \mathfrak{e}_{K} \rVert_{[0,1]^{d_x},\sup}}_{\stackrel{(\ref{Formula-main_sup_conv_4})}{<}\frac{\epsilon}{6\mathcal{L}_M\mathcal{L}_{K,M}}} + \underbrace{\lVert \mathfrak{m}^{\dag}_{K,M,f} - \mathfrak{m}_{K,M,f} \rVert_{\mathcal{C}_{d_xK},\sup}}_{\stackrel{(\ref{Formula-main_sup_conv_3})}{=}0} \bigg) < \frac{\epsilon}{6}\;,
        \end{align*}
        and since the encoder \(\mathfrak{e}_K\) is constant on each \(S_i\),
        \begin{align*}
            &\lVert \mathfrak{d}^{\dag}_{M} \circ \mathfrak{m}_{K,M,f} \circ \mathfrak{e}_{K} - \mathfrak{c}_{K,M,f}\rVert_{[0,1]^{d_x},\sup}\\
            &= \max_{i\in I} \lVert \mathfrak{d}^{\dag}_{M} \circ \mathfrak{m}_{K,M,f} \circ \mathfrak{e}_{K}(c(i_1,\ldots,i_{d_x})) - \mathfrak{c}_{K,M,f}(c(i_1,\ldots,i_{d_x})) \rVert\\
            &= \lVert \mathfrak{d}^{\dag}_{M} - \mathfrak{d}_{M} \rVert_{\mathcal{C}_{d_yM},\sup} \stackrel{(\ref{Formula-main_sup_conv_2})}{<} \frac{\epsilon}{6}\;.
        \end{align*}
        Therefore,
        \begin{align*}
            \lVert \phi - \mathfrak{c}_{K,M,f}\rVert_{[0,1]^{d_x},\sup} < \frac{\epsilon}{6} + \frac{\epsilon}{6} = \frac{\epsilon}{3}\;.
        \end{align*}
        
        \textbf{(Case: \(\mathcal{A}\in \mathcal{S}_2\)):} We use exactly the same memorizer construction as in the first case: for \(\mathcal{A}=\mathcal{F}_+\cup\{\FLOOR\}\) we use the memorizer of width 2 from Lemma \ref{Lemma-Memorizer-Lp}, and for \(\mathcal{A}=\mathcal{F}_{\pm}\cup\{\FLOOR\}\) we use the memorizer of width 1 from Lemma \ref{Lemma-Memorizer_zig_zag}. For the decoder, we use the exact implementation \(\mathfrak{d}_M^\dag = \mathfrak{d}_M \in \FNN_{\{\ReLU\}}^{(d_y)}(1,d_y)\) from Lemma \ref{Lemma-Decoder}. Instead of the LReLU approximation of the encoder, we construct it exactly using Lemma \ref{Lemma-Encoder_FLOOR}, which shows that \(\mathfrak{e}_{K}\in \FNN_{\{\FLOOR\}}^{(d_x)}(d_x,1)\). 
        
        Define \(\phi:=\mathfrak{d}_M^\dag\circ \mathfrak{m}_{K,M,f}^\dag\circ \mathfrak{e}_K\). Since we use the exact encoder and a memorizer that coincides with the memorizer on its domain \(\mathcal{C}_{d_xK}\), we have
        \begin{align*}
            \mathfrak{m}_{K,M,f}^\dag\circ \mathfrak{e}_K(c)=\mathfrak{m}_{K,M,f}\circ \mathfrak{e}_K(c)\quad\forall c\in \mathcal{C}_{K}^{d_x}\;.
        \end{align*}
        Therefore,
        \begin{align*}
            \lVert \phi - \mathfrak{c}_{K,M,f} \rVert_{[0,1]^{d_x},\sup} = 0\;.
        \end{align*}
        
        \medskip
        \textbf{(2.)} We show that we can reduce the sets of activations to single-parameter families and still approximate the coding scheme uniformly. Let \(\alpha \in (0,1)\cup (1,\infty)\) be chosen arbitrarily.
        
        \textbf{(Case: \(\mathcal{A}\in \mathcal{S}_1\)):} Set \(\mathcal{A}_1(\mathcal{A}):=\mathcal{F}_{\alpha,\mathfrak{s}}\) if \(\mathcal{A}=\mathcal{F}_{+,\mathfrak{s}}\), and \(\mathcal{A}_1(\mathcal{A}):=\mathcal{F}_{\pm\alpha,\mathfrak{s}}\) if \(\mathcal{A}=\mathcal{F}_{\pm,\mathfrak{s}}\). By the second parts of Lemma \ref{Lemma-Approximate_encoder_stepped}, Lemma \ref{Lemma-Memorizer-Lp} or Lemma \ref{Lemma-Memorizer_zig_zag}, and Lemma \ref{Lemma-Decoder_leaky}, we can find sequences \((\mathfrak{e}_K^{(n)})_{n\in \mathbb{N}}\subset \FNN_{\{\sigma_\alpha\}}^{(d_x)}(d_x,1)\), \((\mathfrak{m}_{K,M,f}^{(n)})_{n\in \mathbb{N}}\subset \FNN_{\mathcal{A}_1(\mathcal{A})}^{(m(\mathcal{A}),1)}(1,1)\), and \((\mathfrak{d}_{M}^{(n)})_{n\in \mathbb{N}}\subset \FNN_{\{\sigma_\alpha\}}^{(d_y)}(1,d_y)\) converging uniformly to their respective counterparts on appropriately large compact sets. Then we set 
        \begin{align*}
            (\phi_n)_{n\in \mathbb{N}}:=(\mathfrak{d}_{M}^{(n)}\circ \mathfrak{m}_{K,M,f}^{(n)}\circ \mathfrak{e}_{K}^{(n)})_{n\in \mathbb{N}}\subset \FNN_{\mathcal{A}_1(\mathcal{A})}^{(k(\mathcal{A}),1)}(d_x,d_y)
        \end{align*}
        and apply Lemma \ref{Lemma-Approximate_function_compositions_sup_unif_cont}, which is applicable since only the encoder (as the first function of the composition) is bounded but not necessarily continuous, to obtain \(\lim_{n\to\infty}\lVert \phi_n -\phi \rVert_{[0,1]^{d_x},\sup}=0\).
        
        \textbf{(Case: \(\mathcal{A}\in \mathcal{S}_2\)):} Set \(\mathcal{A}_2(\mathcal{A}):=\{\sigma_\alpha,\FLOOR\}\) if \(\mathcal{A}=\mathcal{F}_{+}\cup \{\FLOOR\}\), and \(\mathcal{A}_2(\mathcal{A}):=\{\sigma_{-\alpha},\sigma_\alpha,\FLOOR\}\) if \(\mathcal{A}=\mathcal{F}_{\pm}\cup \{\FLOOR\}\). Note that we can still construct the encoder exactly since FLOOR is in the set of activations, i.e., \(\mathfrak{e}_K\in \FNN_{\mathcal{A}_2(\mathcal{A})}^{(d_x)}(d_x,1)\). By Lemma \ref{Lemma-Memorizer-Lp} or Lemma \ref{Lemma-Memorizer_zig_zag} and Lemma \ref{Lemma-Decoder_leaky}, we can find sequences \((\mathfrak{m}_{K,M,f}^{(n)})_{n\in \mathbb{N}}\subset \FNN_{\mathcal{A}_2(\mathcal{A})}^{(m(\mathcal{A}),1)}(1,1)\) and \((\mathfrak{d}_{M}^{(n)})_{n\in \mathbb{N}}\subset \FNN_{\{\sigma_\alpha\}}^{(d_y)}(1,d_y)\) converging uniformly to their respective counterparts on appropriately large compact sets. Then we set 
        \begin{align*}
            (\phi_n)_{n\in \mathbb{N}}:=(\mathfrak{d}_{M}^{(n)}\circ \mathfrak{m}_{K,M,f}^{(n)}\circ \mathfrak{e}_{K})_{n\in \mathbb{N}}\subset \FNN_{\mathcal{A}_2(\mathcal{A})}^{(k(\mathcal{A}),1)}(d_x,d_y)
        \end{align*}
        and apply Lemma \ref{Lemma-Approximate_function_compositions_sup_unif_cont} to obtain \(\lim_{n\to\infty}\lVert \phi_n -\phi \rVert_{[0,1]^{d_x},\sup}=0\).
        
        \medskip
        \textbf{(3.)} We choose \(K,M\) large enough such that in step (1.) it holds that \(\lVert \phi - \mathfrak{c}_{K,M,f}\rVert_{[0,1]^{d_x},\sup}<\frac{\epsilon}{3}\) and such that \(\lVert \mathfrak{c}_{K,M,f} - f\rVert_{[0,1]^{d_x},\sup}<\frac{\epsilon}{3}\) by Lemma \ref{Lemma-Accuracy_coding_scheme}. By step (2.), there exists \(\psi\in \FNN_{\mathcal{A}_i(\mathcal{A})}^{(k(\mathcal{A}),1)}(d_x,d_y)\) for \(i\in [2]\) such that \(\lVert \psi - \phi\rVert_{[0,1]^{d_x},\sup}<\frac{\epsilon}{3}\). Combining these three inequalities:
        \begin{align*}
            \lVert \psi - f\rVert_{[0,1]^{d_x},\sup}&\leq \lVert \psi - \phi\rVert_{[0,1]^{d_x},\sup}+ \lVert \phi - \mathfrak{c}_{K,M,f}\rVert_{[0,1]^{d_x},\sup}+\lVert \mathfrak{c}_{K,M,f} - f\rVert_{[0,1]^{d_x},\sup}\\
            &< \frac{\epsilon}{3} + \frac{\epsilon}{3}+ \frac{\epsilon}{3}=\epsilon\;,
        \end{align*}
        which shows that \(\FNN_{\mathcal{A}_i(\mathcal{A})}^{(k(\mathcal{A}),1)}(d_x,d_y)\) for \(i\in [2]\) are universal approximators of \\\(C^0([0,1]^{d_x},[0,1]^{d_y})\) with respect to the supremum norm.
        
        \medskip
        \textbf{(4.)} All sets of activations that we consider are bounded on any compact set, therefore, Lemma \ref{Lemma-Reduce_universal_approximation_to_unit_cube} (2.) is applicable. This implies that \(\FNN_{\mathcal{A}_i(\mathcal{A})}^{(k(\mathcal{A}),1)}(d_x,d_y)\) are universal approximators of \(C^0(\mathbb{R}^{d_x},\mathbb{R}^{d_y})\) on compact sets with respect to the supremum norm. Since \(\mathcal{A}\in \mathcal{S}_1\cup \mathcal{S}_2\) together with \(\mathcal{A}_i(\mathcal{A})\) cover exactly the four sets of activations for which Theorem \ref{Theorem-Main_sup} is formulated, this concludes the proof.
    \end{proof}

    \bigskip
    Note that the set of activations considered in the previous theorem includes a continuous range of step sizes at zero for the LReLUs. This feature renders standard gradient descent ineffective for training feedforward neural networks with these activations, since variations in the step size do not influence the gradients. The following corollary therefore refines the previous construction to yield a set of LReLUs with only two distinct step sizes, achieved by strategically employing a continuous range of slopes elsewhere in the construction.

    \bigskip
    \begin{proof}[Proof of Corollary\ref{Corollary-UAP_sup_fixed_stepsize}]
        By Lemma \ref{Lemma-Approximate_encoder_stepped}, the encoder approximation can be constructed via \(\mathfrak{e}_K^{\dag}\in \FNN_{\mathcal{F}_{+,1}^{*}}^{(d_x)}(d_x,1)\). By definition, \(\mathcal{F}_+\subset \mathcal{F}_{+,1}^{*}\) and \(\mathcal{F}_{\pm}\subset \mathcal{F}_{\pm,1}^{*}\). Since our memorizer and decoder constructions use only standard LReLUs (without steps), these inclusions simply relax the assumptions on the activation sets. Therefore, we can follow the proof of Theorem \ref{Theorem-Main_sup} to conclude that \(\FNN_{\mathcal{F}_{\pm,1}^{*}}^{(\max\{d_x,d_y\},1)}(d_x,d_y)\) and \(\FNN_{\mathcal{F}_{+,1}^{*}}^{(\max\{2,d_x,d_y\},1)}(d_x,d_y)\) are universal approximators of \(C^0(\mathbb{R}^{d_x},\mathbb{R}^{d_y})\) on compact sets with respect to the supremum norm.
    \end{proof}
    \bigskip

    \begin{remark}\label{Remark-Layer_depth_coding_scheme}
    Our coding scheme constructions in Theorems~\ref{Theorem-Main1}, \ref{Theorem-Main_sup}, and \ref{Theorem-Squashable} require a number of layers that grows exponentially with the approximation parameters \(K\) and \(M\). Table~\ref{Table-Coding_scheme_depth} summarizes the layer counts for each component across different activation classes. We emphasize that these are upper bounds achieved by our constructions; whether more depth-efficient implementations exist at minimal width remains an open question.
    
    \begin{table}[h]
    \centering
    \begin{tabular}{lccc}
    \toprule
    \textbf{Component} & \(\mathcal{F}_+, \mathcal{F}_{\pm}, \mathcal{F}_{+,\mathfrak{s}}, \mathcal{F}_{\pm,\mathfrak{s}}\) & \textbf{with FLOOR} & \(\{\sigma_\alpha\}, \{\sigma_{-\alpha}, \sigma_\alpha\}\) \\
    \midrule
    Encoder & \(\mathcal{O}(2^K)\) & \(2\) & \(\mathcal{O}(2^K \cdot g_1(\epsilon))\) \\
    Memorizer & \(\mathcal{O}(2^{d_xK})\) & \(\mathcal{O}(2^{d_xK})\) & \(\mathcal{O}(2^{d_xK} \cdot g_2(\epsilon))\) \\
    Decoder & \(\mathcal{O}(d_y 2^M)\) & \(\mathcal{O}(d_y 2^M)\) & \(\mathcal{O}(d_y 2^M \cdot g_3(\epsilon))\) \\
    \bottomrule
    \end{tabular}
    \caption{Layer counts for our coding scheme constructions. For single-parameter activations, \(g_i(\epsilon) \to \infty\) as \(\epsilon \to 0\) reflects additional depth from LReLU approximations.}
    \label{Table-Coding_scheme_depth}
    \end{table}
    
    For the activation classes \(\mathcal{F}_+\), \(\mathcal{F}_{\pm}\), \(\mathcal{F}_{+,\mathfrak{s}}\), and \(\mathcal{F}_{\pm,\mathfrak{s}}\), our constructions directly build the required piecewise linear functions:
    \begin{itemize}
        \item The encoder from Lemmas~\ref{Lemma-Approximate_Encoder} and \ref{Lemma-Approximate_encoder_stepped} approximates the quantizer \(q_K\), which has \(2^K\) linear pieces. By Lemma~\ref{Lemma-PLCSM_in_LU}, this requires \(\mathcal{O}(2^K)\) layers.
        \item The memorizer from Lemmas~\ref{Lemma-Memorizer-Lp} and \ref{Lemma-Memorizer_zig_zag} interpolates \(2^{d_xK}\) grid points via a piecewise linear function with a comparable number of pieces.
        \item The decoder from Lemma~\ref{Lemma-Decoder} applies min-max strings of length \(2^M\) across \(d_y\) output components.
    \end{itemize}
    
    When FLOOR is available, the encoder can be implemented exactly in depth \(2\) by Lemma~\ref{Lemma-Encoder_FLOOR}, while the memorizer and decoder constructions remain unchanged.
    
    For squashable+FLOOR+Id or STEP+FLOOR+Id activations (Theorem~\ref{Theorem-Squashable}), the encoder and decoder have improved depth: the encoder is exact in depth \(2\) using FLOOR, and the decoder requires only \(\mathcal{O}(d_y)\) layers as it is directly constructed from FLOOR and Id activations. However, the memorizer still requires \(\mathcal{O}(2^{d_xK})\) layers, which still dominates the overall depth and leads to exponential growth of its depth.
    
    For single-parameter activations \(\{\sigma_\alpha\}\) or \(\{\sigma_{-\alpha}, \sigma_\alpha\}\), additional layers are needed to approximate the various LReLUs used in our constructions. By Lemma~\ref{Lemma-Single_Parameter_LReLU_approx_mon}, these approximations introduce accuracy-dependent factors \(g_i(\epsilon)\) that grow as the target accuracy increases.
    
    In summary, the depth of our constructions scales exponentially (or worse) with the approximation parameters. Since achieving approximation accuracy \(\epsilon\) typically requires \(K, M = \mathcal{O}(\log(1/\epsilon))\), the depth grows polynomially in \(1/\epsilon\), which limits direct practical applicability. Our results are primarily of theoretical interest for establishing minimal width universal approximation.
\end{remark}

\section{Approximating the coding scheme with squashable and FLOOR activations}\label{Section-Squash_floor_coding_scheme}

    \begin{lemma}\label{Lemma-Squashable-Construct_STEP}
        Let \(\sigma \in \mathcal{S}\) be a squashable function, then, for all compact intervals \(I=[a,b]\subset \mathbb{R}\), \(c\in (a,b)\), \(\alpha\in \mathbb{R}\), there exists \(\phi \in \FNN_{\{\sigma, \FLOOR\}}^{(1)}(1,1)\) s.t. \(\phi\vert_{[a,b]}(x) = \alpha \cdot\STEP(x-c)\).
    \end{lemma}

    \begin{proof}
            For \(\chi,\epsilon>0\), by rescaling the width one approximation of step with \(1+2\epsilon\) from the definition of a squashable functions, there exists a strictly increasing \(\psi_{\epsilon, \chi} \in \FNN_{\{\sigma\}}^{(1)}(1,1)\) s.t. \(\lVert (1+2\epsilon)\STEP-\psi_{\epsilon, \chi} \rVert_{I\setminus (-\chi,\chi), \sup}<\epsilon\), \(\psi_{\epsilon, \chi}(I)\subset [0,1+2\epsilon]\). Because of the difference in supremum norm, \([\epsilon,1+\epsilon]\subset \psi_{\epsilon, \chi}(I)\) and as \(\psi_{\epsilon, \chi} \) is strictly increasing, by the mean value theorem there exists \(z\in I\) with \(\psi_{\epsilon, \chi}(z)=1\) s.t. for all \(x_1<z<x_2\) it holds that \(0\leq \psi_{\epsilon, \chi}(x_1)<1=\psi_{\epsilon, \chi}(z)\), \(\psi_{\epsilon, \chi}(z)=1<\psi_{\epsilon, \chi}(x_2)\). Therefore, we obtain that \(\phi(x):=(\alpha \Id)\circ\FLOOR \circ \psi_{\epsilon, \chi} \circ (\Id(x)+c-z)\) with \(\phi\in \FNN_{\{\sigma,\,\FLOOR\}}^{(1)}(1,1)\) fulfills \(\phi\vert_{I}(x)=\alpha\mathbbm{1}_{[z,b]}(x+z-c)=\alpha\cdot\STEP(x-c)\).
        \end{proof}

    \begin{lemma}\label{Lemma-Memorizer_STEP}
        Assume that \(\mathcal{A}\) is a set of activations with \(\Id\in \mathcal{A}\) such that for \(\alpha,c\in \mathbb{R}\) it holds that there exists \(\psi_{[0,1],\alpha,c}\in \FNN_{\mathcal{A}}^{(1)}(1,1)\) that fulfills \(\psi_{[0,1],\alpha,c}\vert_{[0,1]}(x)=\alpha\STEP(x-c)\). Then, for all \(K,M\in \mathbb{N}\), \(f\in C^0([0,1]^{d_x},[0,1]^{d_y})\) there exists \(\phi\in \FNN_{\mathcal{A}}^{(3)}(1,1)\) such that 
        \begin{align*}
            \phi(c)=\mathfrak{m}_{K,M,f}(c)\;\;\forall c\in \mathcal{C}_{d_xK}\;.
        \end{align*}
    \end{lemma}

    \begin{proof}
            We show that for \(0\leq x_1 < x_2<\hdots <x_n\), \(y_i\in [0,1], i\in [n]\) for \(n\in \mathbb{N}\) we can construct \(\phi\in \FNN_{\mathcal{A}}^{(3)}(1,1)\) such that \(\phi(x_i)=y_i\) for \(i\in [n]\), which implies the statement that we want to show. We lead the proof by induction: Set \(\phi_0(x):=(1,1,1)^T x\) with \(\phi_0\in \FNN_{\mathcal{A}}^{(3)}(1,3)\) as the identity is part of the set of activations. We show that for \(i\in [n]\) we can construct \(\phi_i\in \FNN_{\mathcal{A}}^{(3)}(3,3)\) and \(\Phi_i:=\phi_i\circ \hdots \circ \phi_0(x)\) with \(\Phi_i\in \FNN_{\mathcal{A}}^{(3)}(1,3)\) fulfills \(\Phi_i(x)_1=\sum_{k=1}^{i}\alpha_k\STEP(x-x_k)\) with some \(\alpha_k\in [-1,1]\) for all \(k\in [i]\) such that \(\Phi_i(x_i)_1=y_i\) for all \(i\in [n]\) and \(\Phi_i(x)_j=x\) for \(j\in \{2,3\}\). 
            
            \textbf{(\(i=1\)):} We choose \(\nu_1^{(1)}:=\psi_{[0,1],y_1,x_1}\), which is of some depth \(L_1\in \mathbb{N}\) and for \(\nu_2^{(1)},\nu_3^{(1)}\) we append affine identities and identity activations exactly as often such that \((\nu_i)_{i\in[3]}\subset \FNN_{\mathcal{A}}^{(3)}(1,1;L_1)\) and set \(\phi_1\left(\begin{pmatrix}
                v_1\\
                v_2\\
                v_3
            \end{pmatrix}\right):=\begin{pmatrix}
                    \nu_1(v_1) \\
                    \nu_2(v_2)\\
                    \nu_3(v_3)
                \end{pmatrix}\) with \(\phi_1\in\FNN_{\mathcal{A}}^{(3)}(3,3)\). Then we have \(\Phi_1(x)=\phi_1\circ \phi_0(x)=(\psi_{[0,1],y_1,x_1}(x),x,x)^T\), thus \(\Phi_1 (x_1)_1=\psi_{[0,1],y_1,x_1}(x_1)=y_1\cdot\STEP(0)=y_1\) and \(\Phi_1(x)_j=\nu_j(x)=x\) for \(j\in \{2,3\}\).
            
            \textbf{(\(i-1\rightarrow i\)):} Assume that the induction assumption (IA) is true for \(i-1\) with \(i\in [2,n]\), then we define \(z_i:=y_i-\sum_{j=1}^{i-1}y_j\) and set \(\nu_2^{(i)}:=\psi_{[0,1],z_i,x_i}\), which is of some depth \(L_i\in \mathbb{N}\) and for \(\nu_1^{(i)},\nu_3^{(i)}\) we append affine identities and identity activations exactly as often such that \((\nu_i)_{i\in[3]}\subset \FNN_{\mathcal{A}}^{(3)}(1,1;L_i)\) and define \(\phi_i\left(\begin{pmatrix}
                v_1\\
                v_2\\
                v_3
            \end{pmatrix}\right):= \begin{pmatrix}
                    1 & 1 & 0\\
                    0 & 0 & 1\\
                    0 & 0 & 1
                \end{pmatrix}\circ 
                \begin{pmatrix}
                    \nu_1(v_1) \\
                    \nu_2(v_2)\\
                    \nu_3(v_3)
                \end{pmatrix}\in \FNN_{\mathcal{A}}^{(3)}(3,3)\). 
            Then, by the IA we have \(\Phi_i(x)=(\Phi_{i-1}(x)+\psi_{[0,1],z_i,x_i}(x),x,x)^T\) and, moreover, for \(k\in [i]\) we have
            \begin{align*}
                \Phi_i(x_k)_1\overset{\text{IA}}{=}\begin{cases}
                    \Phi_{i-1}(x_k)=y_k &, k \in [i-1]\\
                    \Phi_{i-1}(x_i)+\psi_{[0,1],z_i,x_i}(x_i)=\bigg(\sum_{j=1}^{i-1}y_j\bigg) + z_i=y_i &, k=i
                \end{cases}\;.
            \end{align*}
            \\As the induction is concluded, we can define \(\bar{\Phi}_n:=(1,0,0)\circ\Phi_n\in \FNN_{\mathcal{A}}^{(3)}(1,1)\), which fulfills \(\bar{\Phi}_n(x_i)=\Phi_n(x_i)_1=y_i\) for all \(i\in [n]\) and shows the original claim.
        \end{proof}

    In the next lemma we construct the decoder using FNNs with \(\FLOOR\) and \(\Id\). Note that this follows a similar idea to that of \cite{park2020minimum}, extracting the quantized function values of the memorizer one dimension at a time; however, they constructed it using ReLU.

    \begin{lemma}\label{Lemma-Decoder_FLOOR}
        For \(M\in \mathbb{N}\), \(d\in \mathbb{N}\) let \(\mathfrak{d}_{M}\) be the decoder that decodes to \(\mathbb{R}^d\), then it holds that there exists \(\phi\in \FNN_{\{\FLOOR,\Id\}}^{(d)}(1,d)\) such that \(\phi(c)=\mathfrak{d}_M(c)\) for all \(c\in \mathcal{C}_{dM}\).
    \end{lemma}

    \begin{proof}
            First we note that for all \(n\) \(\mathfrak{d}_M(c)_n = q_M(2^{nM}c - \FLOOR(2^{nM}c))\).
            We construct a suitable \(\FNN\) of depth \(d\), i.e. \(\phi \in \FNN_{\{\FLOOR,\Id\}}^{(d)}(1,d;d)\). First, we define \(c_n:=2^{nM}c-\sum_{k=1}^{n-1}2^{(n+1-k)M}\mathfrak{d}_M(c)_k\) for all \(n\in [d]\) and observe that
            \begin{align}\label{Formular-Decoder_1}
            &\mathfrak{d}_M(c)_n 
                  = 2^{-M}\left(2^{nM}q_{nM}(c)-\sum_{k=1}^{n-1}2^{(n+1-k)M}\mathfrak{d}_{M}(c)_k\right) \nonumber \\
                & = 2^{-M}\left(\FLOOR(2^{nM}c)-\sum_{k=1}^{n-1}2^{(n+1-k)M}\mathfrak{d}_{M}(c)_k\right)=2^{-M}\FLOOR(c_n)\;,
            \end{align}where \(q_k\) denotes the quantizer for some \(k\in \mathbb{N}\). For all \(n\in [d-1]\) we have that \(c_{n+1}=2^{M}c_n-2^{2M}\mathfrak{d}_M(c)_n\). 
            
            For illustration of the idea of our construction, consider \(M=2\), \(d=2\), and \(c=0.1101\) (in binary). Then \(c_1 = 2^2c = 11.01\) (shifting left by \(2\) bits), thus \(2^{-2}\cdot\FLOOR(c_1) = 2^{-2}\cdot 11=0.11\) and \(\mathfrak{d}_M(c)_1 = 0.11\) (shifting right by \(2\) bits). Next, \(c_2 = 2^2\cdot 11.01 - 2^4\cdot0.11=1101.0-1100.0= 1.0\), thus \(2^{-2}\cdot\FLOOR(c_2) = 0.01\) and \(\mathfrak{d}_M(c)_2 = 0.01\). Hence \(\mathfrak{d}_M(0.1101) = (0.11, 0.01)^T=(2^{-2}\FLOOR(c_1),2^{-2}\FLOOR(c_2))^T\), where we extracted the \(2\) bits for each dimension with this procedure. 
            
            Now, we define
            \(W_n:=\begin{pmatrix}
                \Id_{1:n-1,1:n-1} & 0_{1:n-1,1}\\
                w_n^T\\
                w_n^T
            \end{pmatrix}\in \mathbb{R}^{n+1\times n}\) with \(w_1=2^{M}\in \mathbb{R}\) and \(w_n=(0,\hdots,0,-2^{2M},2^M)^T\in \mathbb{R}^n\) if \(n\in [2,d-1]\), and \(W_d=\begin{pmatrix}
                \Id_{1:d-1,1:d-1} & 0_{1:d-1,1}\\
                w_d^T
            \end{pmatrix}\in \mathbb{R}^{d\times d}\). Additionally, let \(V_n:=\diag(1,\hdots,1,2^{-M},1)\in \mathbb{R}^{n+1\times n+1}\) if \(n\in [d-1]\) and \(V_d=\diag(1,\hdots,1,2^{-M})\in \mathbb{R}^{d\times d}\). Moreover, set \(\sigma_n:=(\Id,\hdots,\Id,\FLOOR,\Id)^T\) defined on \(\mathbb{R}^{n+1}\) if \(n\in [d-1]\) and \(\sigma_d=(\Id, \hdots,\Id,\FLOOR)^T\) on \(\mathbb{R}^d\), otherwise. We set \(\phi_n=V_n\circ \sigma_n\circ W_n\circ \hdots \circ V_1\circ \sigma_1 \circ W_1\in \FNN_{\{\FLOOR,\Id\}}^{(n+1)}(1,n+1)\) for all \(n\in [d-1]\) and \(\phi_{d}:=V_d\circ \sigma_d\circ W_d\circ \phi_{d-1}\in \FNN_{\{\FLOOR,\Id\}}^{(d)}(1,d)\). Now, we show by induction that 
            \begin{align*}
                \phi_n(c)_{1:n}=\mathfrak{d}_M(c)_{1:n}\;\forall n\in [d]\;\;\text{ and }\;\; \phi_n(c)_{n+1}=c_n \;\forall n\in [d-1]\;,
            \end{align*}where the subscripts on vectors denote its corresponding component. Note that this then concludes the proof as \(\phi_d(c)=\mathfrak{d}_M(c)\). 
            
            \bigskip
            
            \textbf{(\(n=1\)):} This follows directly from \(\phi_1(c)_1=2^{-M}\FLOOR(2^Mc)=2^{-M}\FLOOR(c_1)=\mathfrak{d}_M(c)_1\) and \(\phi_1(c)_2=2^Mc=c_1\).

            \textbf{(\(n-1\rightarrow n):\)} Assume that the IA holds for \(n-1\in [d-1]\), then by construction \(\phi_n(c)_{1:n-1}=(V_n\circ \sigma_n\circ W_n\circ \phi_{n-1})(c)_{1:n-1}=\phi_{n-1}(c)_{1:n-1}\overset{\text{IA}}{=}\mathfrak{d}_M(c)_{1:n-1}\). Now, if \(n\in [d-1]\) we have that 
            \begin{align*}
                &W_n\circ \phi_{n-1}(c)_{n}=2^Mc_{n-1}-2^{2M}\phi_{n-1}(c)_{n-1}=2^Mc_{n-1}-2^{2M}\mathfrak{d}_M(c)_{n-1}=c_n
                \\&\overset{n<d}{=}W_n\circ \phi_{n-1}(c)_{n+1}\;,
            \end{align*}where the last equality only holds if \(n<d\). Therefore, it follows that 
            \begin{align*}
                \phi_n(c)_n = 2^{-M}\cdot\FLOOR(c_n)\overset{(\ref{Formular-Decoder_1})}{=}\mathfrak{d}_M(c)_n
            \end{align*}and if \(n<d\), additionally, \(\phi_n(c)_{n+1}=(V_n\circ\sigma_n\circ W_n\circ \phi_{n-1})(c)_{n+1}=c_n\). 
        \end{proof}

\section{Universal approximation with FLOOR and squashable functions and proof of Theorem \ref{Theorem-Squashable}}\label{Section-Proof_Theorem_Squashable}

    Having constructed the key components of our coding scheme using FLOOR+\(\Id\) combined with squashable or STEP activations—the encoder (Lemma \ref{Lemma-Encoder_FLOOR}), decoder (Lemma \ref{Lemma-Decoder_FLOOR}), and memorizer (Lemma \ref{Lemma-Memorizer_STEP})—we are now ready to prove our main universal approximation result. The following lemma provides a general framework that applies to any activation class capable of constructing scaled step functions on compact intervals.

    \begin{lemma}\label{Lemma-Unif_UAP_STEP}
        Let \(d_x,d_y\in \mathbb{N}\), \(\mathcal{A}\) be an activation class with \(\{\FLOOR,\Id\}\subset \mathcal{A}\) such that any activation function is bounded on compact sets and for all \(\alpha,c\in \mathbb{R}\) there exists \(\psi_{[0,1],\alpha,c}\in \FNN_{\mathcal{A}}^{(1)}(1,1)\) with \(\psi_{[0,1],\alpha,c}\vert_{[0,1]}(x)=\alpha\STEP(x-c)\). Then, \(\FNN_{\mathcal{A}}^{(\max\{3,d_x,d_y\},1)}(d_x,d_y)\) is a universal approximator of \(C^0(\mathbb{R}^{d_x},\mathbb{R}^{d_y})\) on compact sets w.r.t. the supremum norm.
    \end{lemma}

    \begin{proof}
            Let \(K,M\in \mathbb{N}\), \(f\in C^0([0,1]^{d_x},[0,1]^{d_y})\) and \(\epsilon>0\). By Lemma \ref{Lemma-Encoder_FLOOR} it holds that \(\mathfrak{e}_{K}\in \FNN_{\{\FLOOR\}}^{(d_x)}(d_x,1)\) and by Lemma \ref{Lemma-Decoder_FLOOR} we know that \(\mathfrak{d}_{M}^\dag\in \FNN_{\{\FLOOR,\,\Id\}}^{(d_y)}(1,d_y)\) with \(\mathfrak{d}_{M}^\dag(c)=\mathfrak{d}_{M}(c)\) for all \(c\in \mathcal{C}_{d_yM}\). Moreover, the assumptions on \(\mathcal{A}\) make Lemma \ref{Lemma-Memorizer_STEP} applicable, hence \(\mathfrak{m}_{K,M,f}\in \FNN_{\mathcal{A}}^{(3)}(1,1)\). This implies that \(\mathfrak{c}_{K,M,f}\in \FNN_{\mathcal{A}}^{(\max\{3,d_x,d_y\},1)}(d_x,d_y)\) and Lemma \ref{Lemma-Accuracy_coding_scheme} implies that there exist \(K,M\) large enough such that \(\lVert \mathfrak{c}_{K,M,f} - f\rVert_{[0,1]^{d_x},\sup}<\epsilon\). Therefore, \(\FNN_{\mathcal{A}}^{(\max\{3,d_x,d_y\},1)}(d_x,d_y)\) is a uniform universal approximator of \\\(C^0([0,1]^{d_x},[0,1]^{d_y})\). Moreover, as all activations are bounded on compact sets we can apply Lemma \ref{Lemma-Reduce_universal_approximation_to_unit_cube} (2.), which implies that \(\FNN_{\mathcal{A}}^{(\max\{3,d_x,d_y\},1)}(d_x,d_y)\) is a uniform universal approximator of \(C^0(\mathbb{R}^{d_x},\mathbb{R}^{d_y})\) on compact sets.
        \end{proof}
    \bigskip
    \begin{proof}[Proof of Theorem \ref{Theorem-Squashable}]
        Let \(\sigma\in \mathcal{S}\), then \(\{\sigma, \FLOOR, \Id \}\) fulfills the assumptions of Lemma \ref{Lemma-Unif_UAP_STEP} as \(\FLOOR\), \(\Id\) are included as elements and as Lemma \ref{Lemma-Squashable-Construct_STEP} showed that we can construct scaled \(\STEP\) functions on \([0,1]\) with width-one \(\sigma\) FNNs. The set of activations \(\{\STEP,\FLOOR,\Id\}\) also fulfills the assumptions of Lemma \ref{Lemma-Unif_UAP_STEP}, as for \(\alpha,c\in \mathbb{R}\) we have \(\alpha\, \STEP(\cdot - c)=(\alpha \,\Id) \circ \STEP \circ (\Id -c)\in \FNN_{\{\STEP\}}^{(1)}(1,1)\). Therefore, we can apply Lemma \ref{Lemma-Unif_UAP_STEP} to both sets of activations and obtain that \(\FNN_{\{\sigma,\, \FLOOR,\, \Id\}}^{(\max\{3,d_x,d_y\},1)}(d_x,d_y)\) and \(\FNN_{\{\STEP,\, \FLOOR,\, \Id\}}^{(\max\{3,d_x,d_y\},1)}(d_x,d_y)\) are universal approximators of \(C^0(\mathbb{R}^{d_x},\mathbb{R}^{d_y})\) on compact sets w.r.t. the supremum norm.
    \end{proof}

\section{\texorpdfstring{$LU$-decomposable}{LU-decomposable} neural networks with leaky ReLU activations are \(L^p\) universal approximators}\label{Section-Proof_main2}

        In this section we show how the universal approximation results for FNNs can be generalized to \(LU\)-decomposable neural networks. The idea is based on the fact that (invertible) \(LU\)-decomposable matrices are dense in the set of all matrices by Lemma \ref{Lemma-LU_dense}. Note that the following lemma can be applied to our UA results for continuous functions, however, we focus on applying it to LReLUs and squashable activations as these are most interesting for constructing invertible LU- or UL-decomposable networks.
    
    \begin{lemma}\label{Lemma-FNN_to_LU_compact}
        Let \(d\in \mathbb{N}\), \(\mathcal{K}\subset \mathbb{R}^d\) be a compact set, \(\mathcal{A}\) be a set of continuous functions from \(\mathbb{R}\) to \(\mathbb{R}\) and \(\psi\in\FNN_{\mathcal{A}}^{(s)}(d,d)\) with \(s\in [d]\). Then there exist \((\phi_n)_{n\in \mathbb{N}}\subset \LU_{\mathcal{A}}(d)\) with \(\lVert \psi-\phi_n \rVert_{\mathcal{K},\sup}\xlongrightarrow{n\to \infty}0\) and \((\nu_n)_{n\in \mathbb{N}}\subset \LU^{-1}_{\mathcal{A}}(d)\) with \(\lVert \psi-\nu_n \rVert_{\mathcal{K},\sup}\xlongrightarrow{n\to \infty}0\). Moreover, as \(\lVert \psi - \phi_n \rVert_{\mathcal{K},p} \leq C\lVert \psi - \phi_n \rVert_{\mathcal{K},\sup}\) on compact sets, it follows that \(\lVert \psi-\phi_n \rVert_{\mathcal{K},p}, \lVert \psi-\nu_n \rVert_{\mathcal{K},p}\xlongrightarrow{n\to \infty}0\) as well. Consequently, \(\LU_{\mathcal{A}}(d)\) and \(\LU^{-1}_{\mathcal{A}}(d)\) inherit all universal approximation properties of \(\FNN_{\mathcal{A}}^{(s)}(d,d)\) w.r.t. supremum and \(L^p\) norm.
    \end{lemma}	
    
    \begin{proof}
        The idea is to fill up the linear layers of the neural network with zeros to obtain matrices \(A_i\in \mathbb{R}^{d\times d}\), for which we can find arbitrarily good \(LU\)-decomposable approximations. By exchanging the filled up matrices with corresponding \(LU\)-decomposable ones, or their inverses, we obtain the desired approximation. Note that the \(\sigma\)'s in the proof correspond to arbitrary activations from \(\mathcal{A}\) and the subscripts do not refer to the slopes of the LReLUs, as they usually did in the proofs of the previous sections.
        
        \medskip
        \noindent(1) Let \(\psi\in \FNN_{\mathcal{A}}^{(s)}(d,d)\) have the form from Definition \ref{Definition-FNN}, thus there is \(L\in \mathbb{N}\) such that for all \(i\in [L]\) we have \(d_0=d=d_L\), \(d_i\in [s]\), \(W_i\in \Aff(d_{i-1},d_i)\), \(W_i(x)=A_ix+b_i\) with \(A_i\in \mathbb{R}^{d_i\times d_{i-1}}\) for \(i\in [L]\) and \(\varphi_i=(\sigma_1^{(i)},\ldots,\sigma_{d_i}^{(i)})\), \(\sigma_{j}^{(i)}\in \mathcal{A}\), \(j\in [d_i]\) for all \(i\in [L-1]\), such that \(\psi=W_L\circ \varphi_{L-1}\circ \cdots\circ W_2\circ \varphi_1\circ W_1\). Let \(0_{n,k}\in \mathbb{R}^{n\times k}\) be the zero matrix and define 
        \begin{align*}
            \bar{A}_i&:=\begin{pmatrix}
                A_i & 0_{d_i,d-d_{i-1}}\\
                0_{d-d_{i},d_{i-1}} & 0_{d-d_{i},d-d_{i-1}} 
            \end{pmatrix}\in \mathbb{R}^{d\times d},\quad \bar{W}_i(x):=\bar{A}_ix+\bar{b}_i,\quad\bar{b}_i:=\begin{pmatrix}
                b_i\\
                0_{d-d_i,1}
            \end{pmatrix},\quad i\in [L],
            \\
            \bar{\varphi}_i&:=(\sigma_1^{(i)},\ldots,\sigma_{d_i}^{(i)}, \bar{\sigma}^{(i)}_{d_i+1},\ldots,  \bar{\sigma}^{(i)}_{d}),\quad i\in [L-1],\quad\bar{\psi}:=\bar{W}_L\circ \bar{\varphi}_{L-1}\circ \cdots\circ \bar{W}_2\circ \bar{\varphi}_1\circ \bar{W}_1,
        \end{align*}
        where \(\bar{\sigma}^{(i)}_{d_i+k}\in \mathcal{A}\) can be chosen arbitrarily for \(k\in [d-d_i]\) and we used that \(d_i\leq d\) as \(\psi\) is of width \(s\). Note that by construction \(\psi=\bar{\psi}\) and that \(\bar{\psi}\in \FNN_{\mathcal{A}}^{(d)}(d,d)\) has layer dimensions \(\bar{d}_i=d\) for \(i\in [L]\).
        
        \medskip
        \noindent(2) Let \(\mathcal{K}\subset \mathbb{R}^d\) be compact. As \(LU\)-decomposable matrices are dense in \(\mathbb{R}^{d\times d}\) with respect to uniform convergence on compact sets by Lemma \ref{Lemma-LU_dense}, we can find sequences of matrices \((B_i^{(n)})_{n\in \mathbb{N}}\subset \LU(d)\) such that \(\lVert \bar{A}_i - B_i^{(n)} \rVert_{\op} \stackrel{n \to \infty}{\longrightarrow} 0\) for \(i\in [L]\), where the operator norm corresponds to the supremum norm for vectors on \(\mathbb{R}^d\). Then, it holds that
        \begin{align*}
            \lVert \bar{A}_i(x) - B_i^{(n)}(x) \rVert_{\max} = \lVert (\bar{A}_i - B_i^{(n)})(x) \rVert_{\max}\leq \lVert \bar{A}_i - B_i^{(n)} \rVert_{\op} \lVert x \rVert_{\max} \stackrel{n \to \infty}{\longrightarrow} 0
        \end{align*}
        for all \(x\in \mathbb{R}^d\), thus the matrices converge uniformly. We set \(V_i^{(n)}(x):=B_i^{(n)}x+\bar{b}_i\), which fulfill \((V_i^{(n)})_{n\in \mathbb{N}}\subset \Aff_{\LU}(d)\) for any \(i\in [L]\), and define
        \begin{align*}
            \phi_n:= V_L^{(n)}\circ \bar{\varphi}_{L-1}\circ\cdots\circ V_2^{(n)} \circ \bar{\varphi}_1\circ V_1^{(n)}\in \LU_{\mathcal{A}}(d)
        \end{align*}
        for all \(n\in \mathbb{N}\). As the sequences \((V_i^{(n)})_{n\in \mathbb{N}}\) converge uniformly to \(\bar{W}_i\) for all \(i\in [L]\) and as all composed functions of \(\phi_n\), \(\bar{\psi}\) are continuous, we can apply Lemma \ref{Lemma-Approximate_function_compositions_sup_unif_cont} to obtain that
        \begin{align*}
            \lVert \psi-\phi_n \rVert_{\mathcal{K},\sup}=\lVert \bar{\psi}-\phi_n \rVert_{\mathcal{K},\sup}\stackrel{n \to \infty}{\longrightarrow} 0\;.
        \end{align*}
        For \(\LU^{-1}_{\mathcal{A}}(d)\) we can use an analogous argument: The set of \(UL\)-decomposable matrices is dense in \(\mathbb{R}^{d\times d}\) by Lemma \ref{Lemma-LU_dense}, thus, similar to before, we can find \((U_i^{(n)})_{n\in \mathbb{N}}\subset \Aff_{\LU^{-1}}(d)\) that converge uniformly to \(\bar{W}_i\). Choosing 
        \begin{align*}
            \nu_n:= U_L^{(n)}\circ \bar{\varphi}_{L-1}\circ\cdots\circ U_2^{(n)} \circ \bar{\varphi}_1\circ U_1^{(n)}\in \LU^{-1}_{\mathcal{A}}(d)
        \end{align*}
        and applying Lemma \ref{Lemma-Approximate_function_compositions_sup_unif_cont} implies 
        \begin{align*}
            \lVert \psi-\nu_n \rVert_{\mathcal{K},\sup}=\lVert \bar{\psi}-\nu_n \rVert_{\mathcal{K},\sup}\stackrel{n \to \infty}{\longrightarrow} 0\;.
        \end{align*}
        \end{proof}
		
	   \bigskip
        With Lemma \ref{Lemma-FNN_to_LU_compact} we are now ready to prove Theorem \ref{Theorem-Main2} and Theorem \ref{Theorem-LU_Squashable}, which generalize the universal approximation results of FNNs with leaky ReLU or squashable activations to LU-decomposable neural networks with said activations.
    \bigskip
    \begin{proof}[Proof of Theorem \ref{Theorem-Main2}]
        Let \(\alpha\in(0,1)\cup(1,\infty)\), \(f\in L^p(\mathbb{R}^{d},\mathbb{R}^{d})\) and for \(\epsilon>0\), \(\mathcal{K}\subset \mathbb{R}^d\) compact choose \(\psi\in \FNN_{\{\sigma_\alpha\}}^{(d)}(d,d)\) as an approximation from Theorem \ref{Theorem-Main1} such that \(\lVert f  - \psi\rVert_{\mathcal{K},p}<\frac{\epsilon}{2}\). By applying Lemma \ref{Lemma-FNN_to_LU_compact} with \(\{\sigma_\alpha\}\), which is continuous, we can find \(\phi\in \LU_{\{\sigma_{\alpha}\}}(d)\) with \(\lVert \psi - \phi\rVert_{\mathcal{K},p}<\frac{\epsilon}{2}\) and obtain that \( \lVert f  - \phi\rVert_{\mathcal{K},p}\leq\lVert f  - \psi\rVert_{\mathcal{K},p}+ \lVert \psi - \phi\rVert_{\mathcal{K},p}<\epsilon\). With the same argument, again by Lemma \ref{Lemma-FNN_to_LU_compact}, we can find a UL-decomposable network \(\phi\in \LU^{-1}_{\{\sigma_{\alpha}\}}(d)\) with \(\lVert \psi - \phi\rVert_{\mathcal{K},p}<\frac{\epsilon}{2}\), thus \(\lVert f  - \phi\rVert_{\mathcal{K},p}<\epsilon\). Moreover, as all affine transformations in LU- or UL-decomposable networks are bijections and as \(\sigma_\alpha\) is bijective, it follows that these sets consist only of compositions of bijections, which are again bijective.
    \end{proof}
    
    \bigskip
    
    \begin{proof}[Proof of Theorem \ref{Theorem-LU_Squashable}]
        We follow the same structure as the previous proof: For \(\sigma\in\mathcal{S}\), i.e., \(\sigma\) is squashable, \(f\in L^p(\mathbb{R}^{d},\mathbb{R}^{d})\) and for \(\epsilon>0\), \(\mathcal{K}\subset \mathbb{R}^d\) compact, by Theorem 1 of \cite{shin2025minimumwidthuniversalapproximation} there exists \(\psi\in \FNN_{\{\sigma\}}^{(d)}(d,d)\) such that \(\lVert f  - \psi\rVert_{\mathcal{K},p}<\frac{\epsilon}{2}\). By applying Lemma \ref{Lemma-FNN_to_LU_compact} with \(\{\sigma\}\), which is continuous by definition, we can find \(\phi\in \LU_{\{\sigma\}}(d)\) with \(\lVert \psi - \phi\rVert_{\mathcal{K},p}<\frac{\epsilon}{2}\) and obtain that \( \lVert f  - \phi\rVert_{\mathcal{K},p}\leq\lVert f  - \psi\rVert_{\mathcal{K},p}+ \lVert \psi - \phi\rVert_{\mathcal{K},p}<\epsilon\). With the same argument, again by Lemma \ref{Lemma-FNN_to_LU_compact}, we can find a UL-decomposable network \(\phi\in \LU^{-1}_{\{\sigma\}}(d)\) with \(\lVert \psi - \phi\rVert_{\mathcal{K},p}<\frac{\epsilon}{2}\), thus \(\lVert f  - \phi\rVert_{\mathcal{K},p}<\epsilon\). Moreover, as all affine transformations in LU- or UL-decomposable networks are bijections, if the squashable function is additionally bijective, it follows that these sets consist only of compositions of bijections, which are again bijective.
    \end{proof}

    \section{LU-Net has the distributional universal approximation property}\label{Section-Proof_main3}

    In this section we show that the LU-Net has the DUAP, which is due to the strong approximation properties of the LU-decomposable neural networks with leaky ReLU activations.

    \begin{lemma}\label{Lemma-DUAP_inverse_set_classification}
        Let \(\sigma\) be an invertible function from \(\mathbb{R}\) to \(\mathbb{R}\), then we have \(\LU^{-1}_{\{\sigma^{-1}\}}(d)= (\LU_{\{\sigma\}}(d))^{-1}:=\{\phi^{-1}|\,\phi \in \LU_{\{\sigma\}}(d)\}\).
    \end{lemma}

    \begin{proof}
        \begin{align*}
            &(\LU_{\{\sigma\}}(d))^{-1}=\{\phi^{-1}|\,\phi=W_L\circ \sigma\circ W_{L-1}\circ \cdots \circ \sigma\circ W_1,\, W_i\in \LU(d), i\in [L], L\in \mathbb{N}\}\\
            &=\{W_1^{-1}\circ \sigma^{-1}\circ W_2^{-1}\circ \cdots \circ \sigma^{-1}\circ W_L^{-1}\,|\,W_i\in \LU(d), i\in [L], L\in \mathbb{N}\}\\
            &=\{V_L\circ \sigma^{-1}\circ V_{L-1}\circ \cdots \circ \sigma^{-1}\circ V_1\,|\,V_i\in \LU^{-1}(d), i\in [L], L\in \mathbb{N}\}=\LU^{-1}_{\{\sigma^{-1}\}}(d)\;,
        \end{align*}
        where the third equality follows because as the \(W_i\) range over all elements of \(\LU(d)\), their inverses \(W_i^{-1}\) range over all elements of \(\LU^{-1}(d)\), and the indexing is arbitrary.
    \end{proof}

    \bigskip

    Using the expressivity result from Theorem~\ref{Theorem-Main2} we have all the tools to prove Theorem~\ref{Theorem-Main3}, i.e., that LU-Net with invertible leaky ReLU activations has the DUAP. This shows that suitable generative directions of LU-Net can transform an arbitrary absolutely continuous probability measure into any probability measure, with respect to weak convergence of measures.

    \bigskip
    
    \begin{proof}[Proof of Theorem \ref{Theorem-Main3}]
    Let $\alpha\in (0,1)\cup (1,\infty)$ and let $\sigma_{\alpha}$ be the corresponding LReLU. We show that $\NF(\LU_{\{\sigma_\alpha\}}(d))$ has the DUAP. By Lemma~\ref{Lemma-DUAP_inverse_set_classification}, this is equivalent to showing that $(\LU_{\{\sigma_{\alpha}\}}(d))^{-1}=\LU^{-1}_{\{\sigma_{1/\alpha}\}}(d)$ is a distributional universal approximator.
    
    \medskip
    \noindent\textbf{Case $d=1$:} By Lemma~\ref{Lemma-Single_Parameter_LReLU_approx_mon}, $\FNN_{\{\sigma_{1/\alpha}\}}^{(1)}(1,1)$ is a uniform universal approximator of $C^{0}_{\mon}(\mathbb{R},\mathbb{R})$ on compact sets and as $\mathcal{T}^{\infty}(1)\subset C^{0}_{\mon}(\mathbb{R},\mathbb{R})$, $\FNN_{\{\sigma_{1/\alpha}\}}^{(1)}(1,1)$ is a uniform universal approximator of $\mathcal{T}^{\infty}(1)$. Now, Lemma~\ref{Lemma-FNN_to_LU_compact} implies that also $\LU^{-1}_{\{\sigma_{1/\alpha}\}}(1)$ is an $L^p$ universal approximator of $\mathcal{T}^{\infty}(1)$ and Lemma~\ref{Lemma-T_infinity_approximators_universal} implies that $\LU^{-1}_{\{\sigma_{1/\alpha}\}}(1)$ is a distributional universal approximator.
    
    \medskip
    \noindent\textbf{Case $d\geq 2$:} By Theorem~\ref{Theorem-Main2} it follows that $\LU^{-1}_{\{\sigma_{1/\alpha}\}}(d)$ is an $L^p$ universal approximator of $L^p(\mathbb{R}^d,\mathbb{R}^d)$ on compact sets as \(\frac{1}{\alpha} \in (0,1)\cup (1,\infty)\) by choice of alpha and the second part of Lemma~\ref{Lemma-T_infinity_approximators_universal} implies that $\LU^{-1}_{\{\sigma_{1/\alpha}\}}(d)$ is a distributional universal approximator.
    
    \medskip
    By combining both cases we find that the set of generative directions, which is given by $(\LU_{\{\sigma_{\alpha}\}}(d))^{-1}=\LU^{-1}_{\{\sigma_{1/\alpha}\}}(d)$, is a distributional universal approximator for all $d\in \mathbb{N}$, which shows that the normalizing flow $\NF(\LU_{\{\sigma_{\alpha}\}}(d))$ has the DUAP.
    \end{proof}
    
    \bigskip
    
    The following proof relies on our Theorem~\ref{Theorem-LU_Squashable}, which is fundamentally based on Theorem~2 of \cite{shin2025minimumwidthuniversalapproximation}, i.e., the fact that FNNs with squashable activations and a width of $d$ are universal approximators of $L^p(\mathbb{R}^d,\mathbb{R}^d)$ on compact sets if $d\geq 2$.
    
    \bigskip
    
    \begin{proof}[Proof of Theorem \ref{Theorem-DUAP_squashable}]
        By Lemma~\ref{Lemma-DUAP_inverse_set_classification} the generative directions are given by $(\LU_{\{\sigma^{-1}\}}(d))^{-1}$$\\$$=\LU^{-1}_{\{\sigma\}}(d)$ and as $d\geq 2$, this is an $L^p$ universal approximator of $L^p(\mathbb{R}^{d},\mathbb{R}^{d})$ on compact sets by Theorem~\ref{Theorem-LU_Squashable}. Now, we can conclude from the second part of Lemma~\ref{Lemma-T_infinity_approximators_universal} that $\LU^{-1}_{\{\sigma\}}(d)$ is a distributional universal approximator. Consequently, $\NF(\LU_{\{\sigma^{-1}\}}(d))$ has the DUAP.
    \end{proof}

\subsection{Numerical LU-Net Experiments}\label{Section-LU_Net_numerical_results}

    In this subsection, we conduct numerical experiments with LU-Net \citep{chan2023lunet} that complement our theoretical results. Theorem~\ref{Theorem-Main3} establishes that LU-Net with a single LReLU parameter suffices to achieve the DUAP. While this implies that larger activation sets are theoretically unnecessary, the following experiments demonstrate that learnable LReLU slopes can improve optimization in practice.

In \Cref{fig:2d-toy-gaussian} we provide results on learning a 2D distribution. 
In \Cref{tab:nll-mnist}, we report numerical results of various LU-Net configurations used as density estimator on the MNIST \citep{deng2012mnist} and FashionMNIST \citep{xiao2017fashionmnist} image datasets.

The vanilla LU-Net architecture from \citet{chan2023lunet} uses a single fixed slope throughout the network, which by Theorem~\ref{Theorem-Main3} already suffices for distributional universal approximation. We compare this with a refined version that uses learnable slopes per component, to examine whether additional flexibility—though theoretically unnecessary—improves practical performance.

\begin{figure}[h!]
    \includegraphics[width=.31\textwidth]{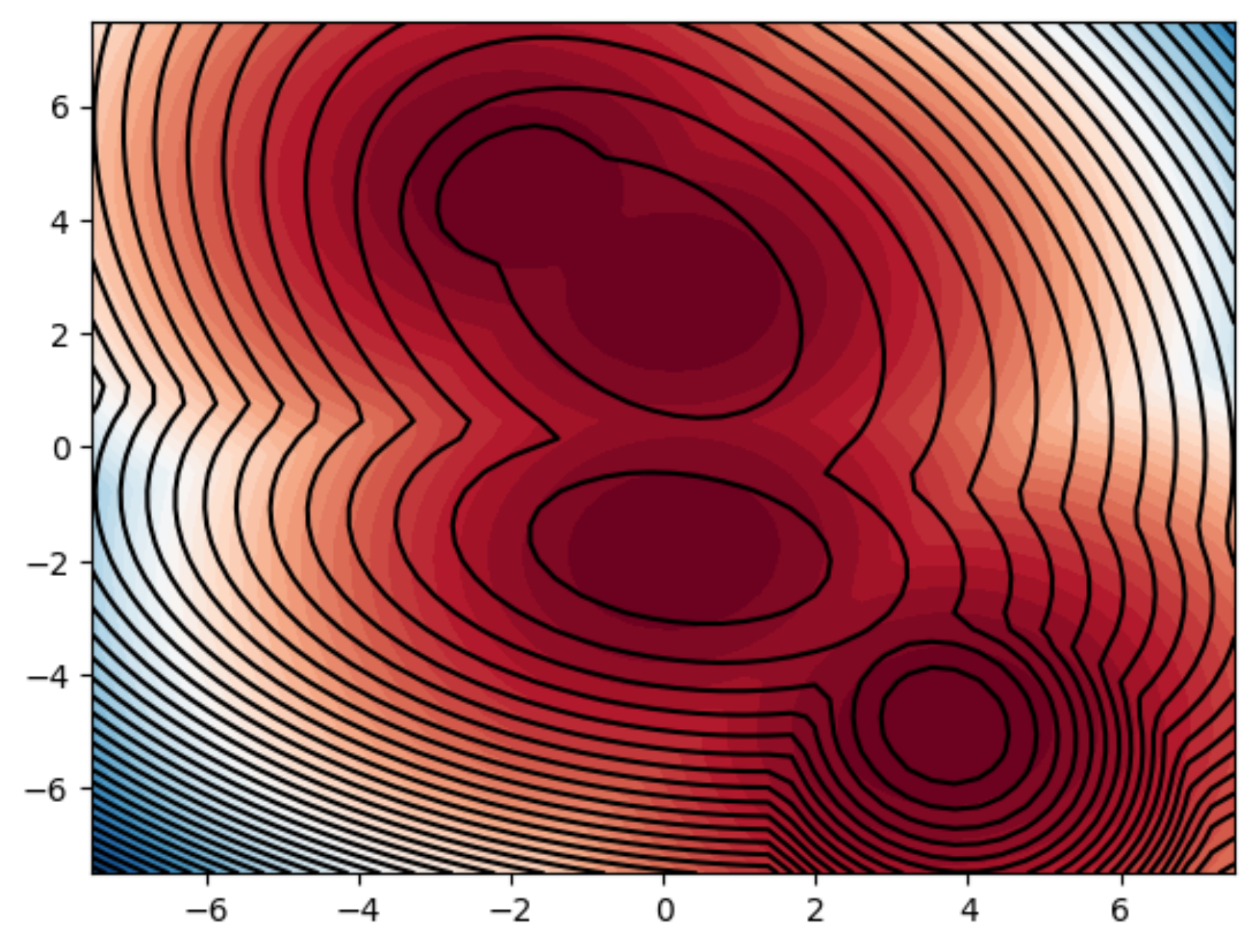}\hfill
    \includegraphics[width=.31\textwidth]{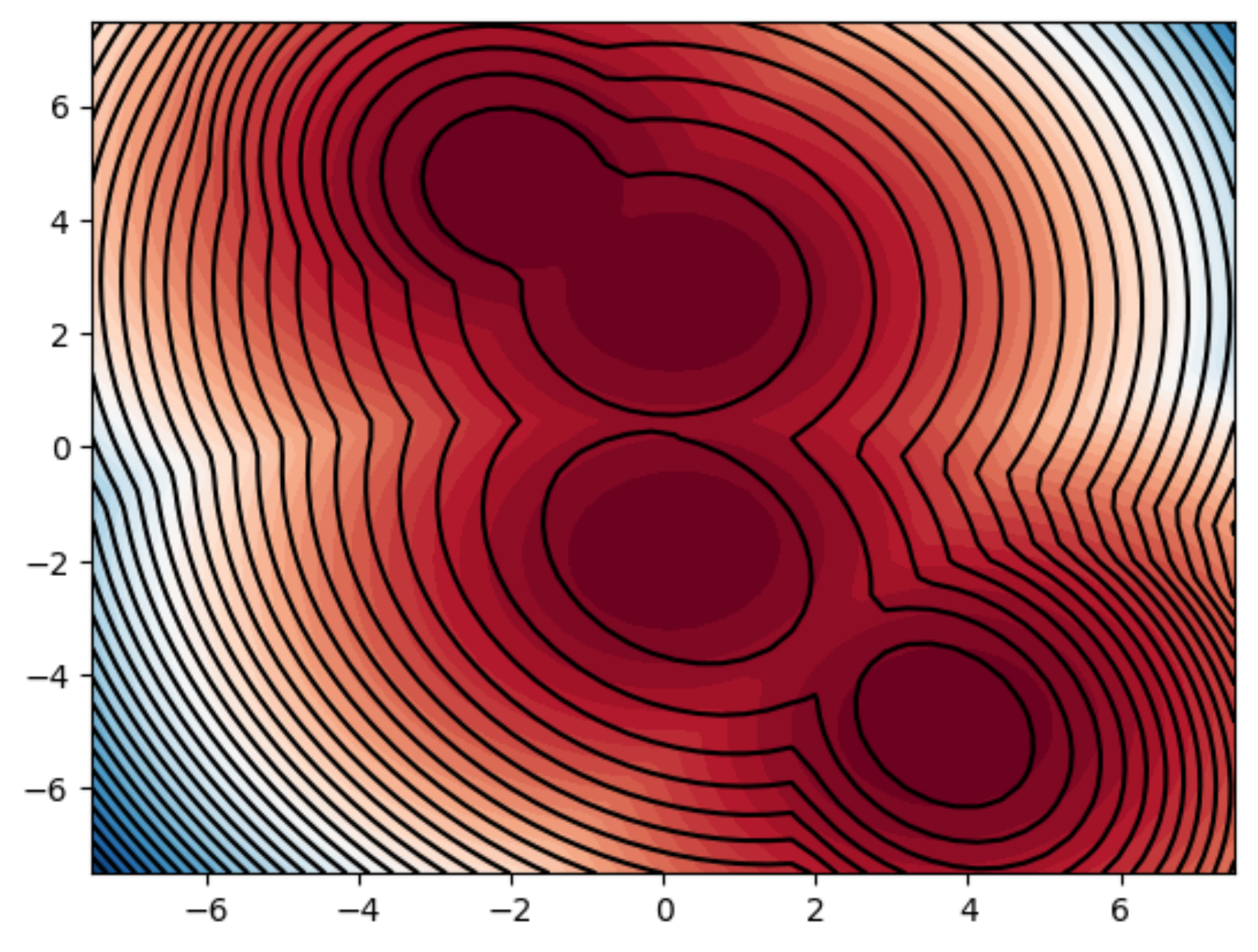}\hfill
    \includegraphics[width=.35\textwidth]{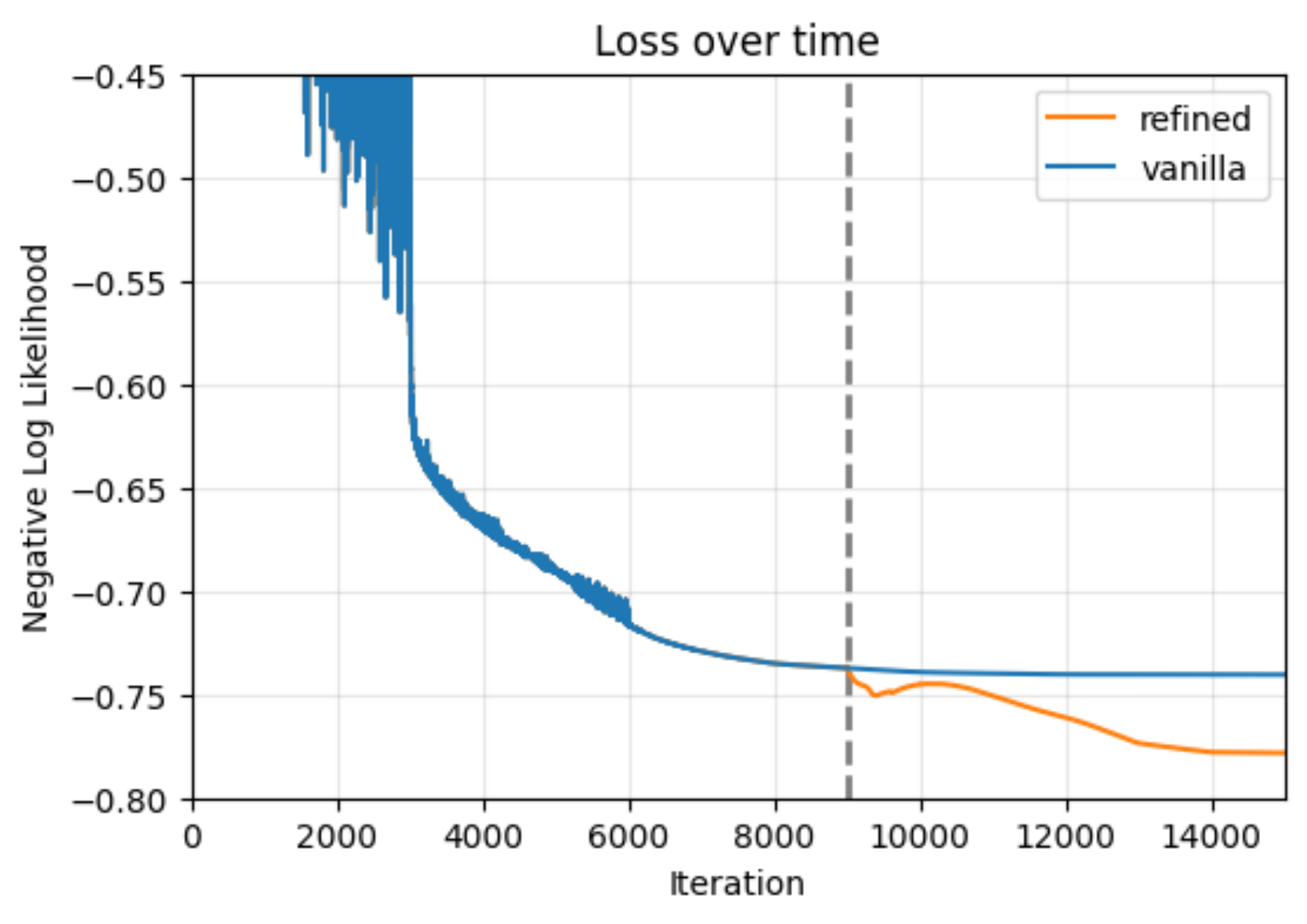}
    \caption{We train LU-Net on two-dimensional data. Starting with the original architecture as proposed in the original work \citep{chan2023lunet}, we then extend the model by making the slopes of the activation functions learnable parameters. Left: target distribution (heatmap) and learned distribution of the vanilla model (level curves). Middle: same target distribution (heatmap) and learned distribution of the refined model (level curves). Right: comparison of training loss over time.
    }\label{fig:2d-toy-gaussian}
\end{figure}

\begin{table}[h!]
\centering
\scalebox{0.8}{
\begin{tabular}{|c|c|c|c||c|c|c|c|}
    \hline
    \multicolumn{4}{|c||}{MNIST Test NLL in Bits / Pixel $\downarrow$} & \multicolumn{4}{|c|}{Fashion-MNIST Test NLL in Bits / Pixel $\downarrow$} \\
    \hline\hline
    Class & RealNVP & Vanilla  & Refined & Class & RealNVP & Vanilla  & Refined \\
    \hline
    Number 0 & 5.26 & 2.71 & \textbf{1.93} & T-Shirt    & 6.08 & 3.77 & \textbf{1.65} \\
    Number 1 & 5.08 & 2.47 & \textbf{1.75} & Trousers   & 5.51 & 7.25 & \textbf{1.62} \\
    Number 2 & 6.07 & 2.93 & \textbf{1.99} & Pullover   & 6.17 & 2.40 & \textbf{1.60} \\
    Number 3 & 5.01 & 2.74 & \textbf{1.95} & Dress      & 6.02 & 2.53 & \textbf{1.83} \\
    Number 4 & 5.09 & 2.77 & \textbf{1.93} & Coat       & 6.13 & 3.34 & \textbf{1.67} \\
    Number 5 & 5.42 & 2.74 & \textbf{2.00} & Sandal     & 5.93 & 2.88 & \textbf{2.09} \\
    Number 6 & 5.46 & 2.85 & \textbf{1.92} & Shirt      & 6.25 & 2.72 & \textbf{1.70} \\
    Number 7 & 5.77 & 2.79 & \textbf{1.90} & Sneaker    & 5.66 & 3.60 & \textbf{1.75} \\
    Number 8 & 5.40 & 2.79 & \textbf{1.96} & Bag        & 6.30 & 4.39 & \textbf{1.92} \\
    Number 9 & 5.36 & 2.65 & \textbf{1.89} & Ankle Boot & 5.87 & 4.44 & \textbf{1.80} \\
    \hline\hline
    Average & 5.37 & 2.75 & \textbf{1.93} & Average & 5.99 & 3.74 & \textbf{1.77} \\
    \hline 
\end{tabular}}
\caption{\small Class-wise negative log likelihood (NLL) when applying different LU-Net variants to MNIST and Fashion-MNIST test dataset. Moreover, the LU-Net variants are compared to \textbf{RealNVP} \cite{dinh2017density}. We report results of LU-Net in its original \textbf{vanilla} variant \citep{chan2023lunet} and \textbf{refined} variant with learnable slope of the leaky softplus activation per network component. Note that the NLL is reported in bits per pixel, which is the NLL with logarithm base 2 averaged over all pixels. Each LU-Net consists of three layers as also investigated in the original work from \cite{chan2023lunet}.}
\label{tab:nll-mnist}
\end{table}

Building on the vanilla model from \cite{chan2023lunet}, we consider the same LU-Net architecture but with additional learnable parameters that control the slopes of leaky softplus activations, which approximate leaky ReLU. In the refined model, each component within a layer has its own slope parameter.

Our experimental setup is as follows: we first train the vanilla model to near convergence. Then we (1) continue training under the vanilla configuration and (2) continue training under the refined configuration with flexible slopes. This allows us to evaluate the effect of introducing learnable slope parameters.

We find that introducing learnable slope parameters consistently enhances model performance. In the 2D experiments, the refined model achieves a lower negative log-likelihood than the vanilla version, yielding Kullback–Leibler divergences of $\mathrm{KL}(P|Q_v)=0.65$ where $P$ denotes the target distribution and $Q_v$ the distribution learned by the vanilla model, and $\mathrm{KL}(P|Q_r)=0.31$ where $Q_r$ is the distribution learned by the refined model. In addition, the LU-Net variant with learnable slope parameters outperforms the vanilla architecture on both MNIST and FashionMNIST datasets (see \Cref{tab:nll-mnist}), illustrating that the additional flexibility of learnable slopes provides optimization benefits beyond what the asymptotic expressivity theory captures.

 \section{Smooth diffemorphisms in form of \(LU\)-decomposable neural networks are universal approximators of \(L^p(\mathbb{R}^{d},\mathbb{R}^{d})\) }\label{Section-Proof_main4}

    In this section we prove Theorem \ref{Theorem-Main4}, i.e. that for any \(\alpha\in (0,1)\cup (1,\infty)\), smoothed LU-decomposable LReLU networks with parameter \(\alpha\) universally approximate \(L^p(\mathbb{R}^{d},\mathbb{R}^{d})\) on compact sets, by applying mollifiers to the approximating sequences obtained in Theorem \ref{Theorem-Main2} in a suitable manner.\\

        \begin{proof}[Proof of Theorem \ref{Theorem-Main4}]\label{Proof-Main4}
            \\\textbf{(1.)} First, we show that, for \(\alpha\in (0,1)\cup (1,\infty)\), \( \LU_{\mathcal{F}_\alpha^{\infty}}(d)\subset \mathcal{D}^\infty(\mathbb{R}^d,\mathbb{R}^d)\):  
            \\Let \((\sigma_\alpha^{(n)})_{n\in \mathbb{N}}:=(\sigma_\alpha * p_n)_{n\in \mathbb{N}}\subset \mathcal{F}_{\alpha}^{\infty} \) be the sequence of smoothed versions of \(\sigma_\alpha\), which are in \(C^{\infty}(\mathbb{R},\mathbb{R})\) by Lemma \ref{Lemma-Convolution}. Now, let \(\psi \in \LU_{\mathcal{F}_\alpha^{\infty}}(d)\), then it can be written as 
            \begin{align*}
                \psi:=W_{N}\circ\varphi_{\alpha,N-1} \circ W_{N-1}\circ ...\circ \varphi_{\alpha,1}\circ W_1\in \LU_{\mathcal{F}_\alpha^{\infty}}(d)
            \end{align*}with \(W_i\in \Aff_{\LU}(d), i\in [N], N\in \mathbb{N}\) and \(\varphi_{\alpha,k}=(\sigma_\alpha^{(m_{k,1})},\hdots,\sigma_{\alpha}^{(m_{k,d})})\) for all \(k\in [N-1]\) and \(m_{k,i}\in \mathbb{N}\) for all \(i\in [d]\). Then, we obtain for any \(\alpha\in (0,1)\cup (1,\infty)\), \(x\in \mathbb{R}\), that
            \begin{align*}
                &\frac{d}{dx}(\sigma_{\alpha} * p_n) (x)=\frac{d}{dx}\int_{B_{\frac{1}{n}}(0)}\sigma_{\alpha}(x-z)p_n(z)dz
                \\&=\lim\limits_{h\to 0}\frac{1}{h}\bigg(\int_{B_{\frac{1}{n}}(0)}\sigma_{\alpha}(x+h-z)p_n(z) dz - \int_{B_{\frac{1}{n}}(0)}\sigma_{\alpha}(x-z)p_n(z) dz \bigg)
                \\&=\lim\limits_{h\to 0}\frac{1}{h}\int_{B_{\frac{1}{n}}(0)\setminus\{x\}}(\sigma_{\alpha}(x+h-z)-\sigma_{\alpha}(x-z))p_n(z) dz
                \\&\stackrel{\text{dom. conv.}}{=}\int_{B_{\frac{1}{n}}(0)\setminus\{x\}}\frac{d}{dx}\sigma_{\alpha}(x-z)p_n(z)dz\geq \alpha \cdot \int_{B_{\frac{1}{n}}(0)\setminus\{x\}}p_n(z)dz
                = \alpha >0,
                \end{align*}where we used that \(\frac{d}{dx}\sigma_{\alpha}(x)\) is well-defined outside of zero with \(\frac{d}{dx}\sigma_{\alpha}(x)=\mathbbm{1}_{[0,\infty)}(x)+\alpha\mathbbm{1}_{(-\infty, 0)}(x)\geq \alpha \) and that the regularizing functions integrate to \(1\). Therefore, we know that functions of the form \(\sigma_\alpha^{(n)}=\sigma_{\alpha}*p_n\) are smooth and have a strictly positive derivative for all \(n\in \mathbb{N}\). Moreover, by the fundamental theorem of calculus, we obtain for \(a\leq x\) that
            \begin{align*}
                &\sigma_\alpha^{(n)}(x)=\sigma_\alpha^{(n)}(a)+\int_{a}^{x} (\sigma_\alpha^{(n)})'(z) dz 
                \geq \sigma_\alpha^{(n)}(a) + \alpha(x-a)\xlongrightarrow{x\to \infty}\infty\;,
            \end{align*}and an analogous inequality for \(a>x\) implies \(\lim\limits_{x\to -\infty} \sigma_\alpha^{(n)}(x)=-\infty\). By the intermediate value theorem, applicable because \(\sigma_\alpha^{(n)}, n\in \mathbb{N}\) is continuous, it follows that \(\sigma_\alpha^{(n)}(\mathbb{R})=\mathbb{R}\) and as it is monotonically increasing, it is a bijection from \(\mathbb{R}\) to \(\mathbb{R}\). Therefore, \(\varphi_{\alpha,k},k\in [N]\) are smooth bijections with globally invertible jacobians, as their jacobians are just diagonal matrices with non-zero elements (the derivatives of \(\sigma_\alpha^{(m_{k,i})}, i\in [d], k\in [N-1]\)) on the diagonal, hence, the Inverse Function Theorem 4.5 in \cite{lee2013smooth} implies that the inverse \((\varphi_{\alpha,k})^{-1}\) is also smooth, i.e. \(\varphi_{\alpha,k}\in \mathcal{D}^\infty(\mathbb{R}^d,\mathbb{R}^d), k\in [N-1]\). Additionally, for all \(i\in [N]\), \(W_i(x)=A_ix+b\), thus \((W_i)^{-1}(x)=A_i^{-1}x-A_i^{-1}b_i\) exists as \(A_i\in \LU(d)\subset \GL(d)\) and is smooth as it is again an affine transformation. Now, it follows that \(\psi:\mathbb{R}^d\rightarrow \mathbb{R}^d\) is a bijection, as a concatenation of bijections from \(\mathbb{R}^d\) to \(\mathbb{R}^d\) and that its inverse is given by 
            \begin{align*}
                \psi^{-1}=W_1^{-1}\circ \varphi_{\alpha,1}^{-1}\circ W_{2}\circ ...\circ \varphi_{\alpha,N-1}^{-1}\circ W_N^{-1}\;,
            \end{align*}which is smooth as a composition of smooth functions, i.e. \(\psi\in \mathcal{D}^\infty(\mathbb{R}^d,\mathbb{R}^d)\) and as it was chosen arbitrarily we have \(\LU_{\mathcal{F}_\alpha^{\infty}}(d)\subset \mathcal{D}^\infty(\mathbb{R}^d,\mathbb{R}^d)\).  
           \\~\\ \textbf{(2.)} Now, let \(\alpha \in (0,1)\cup (1,\infty)\), let \(\mathcal{K}\subset \mathbb{R}^d\) be a compact set and \(f\in L^p(\mathcal{K},\mathbb{R}^d)\). We show how a sequence of LU-decomposable \(\alpha\)-LReLU networks, converging uniformly to \(f\) on \(\mathcal{K}\), can be obtained: 
           \\By Theorem \ref{Theorem-Main2}  there exists a sequence \((\phi_n)_{n\in \mathbb{N}}\subset \LU_{\{\sigma_\alpha\}}(d)\) that satisfies 
            \begin{align}\label{Formular-Diffeos_dense_1}
                \lVert f-\phi_n \rVert_{\mathcal{K},p} \stackrel{n\to \infty}{\longrightarrow} 0 \;\;.
            \end{align}Thus for all \(n\in \mathbb{N}\) there exists \(N_n\in \mathbb{N}\) s.t.
            \begin{align*}
                \phi_n=W_{N_n}^{(n)}\circ \varphi_\alpha \circ W_{N_n-1}^{(n)}\circ ....\circ \varphi_\alpha\circ W_1^{(n)}\;,
            \end{align*}where \(W_i^{(n)}\in \Aff_{\LU}(d), i\in [N_n]\), \(\varphi_\alpha:\mathbb{R}^d\rightarrow \mathbb{R}^d\) and \(\varphi_\alpha(x)=(\sigma_\alpha(x_1),\hdots,\sigma_\alpha(x_d))^T\), i.e. \(\sigma_\alpha\) is applied component-wise. Define \(\varphi_\alpha^{(n)}:\mathbb{R}^d\rightarrow \mathbb{R}^d\) with \(\varphi_\alpha^{(n)}(x)=(\sigma_\alpha^{(n)}(x_1),\hdots,\sigma_\alpha^{(n)}(x_d))^T\), thus, \((\varphi_\alpha^{(n)})_{n\in \mathbb{N}} \subset  C^\infty(\mathbb{R}^d,\mathbb{R}^d)\) and we set \( \phi_{n,k}:=W_{N_n}^{(n)}\circ\varphi_\alpha ^{(k)}\circ W_{N_n-1}^{(n)}\circ ...\circ \varphi_\alpha^{(k)}\circ W_1^{(n)}\in \LU_{\mathcal{F}_\alpha^{\infty}}(d)\). By Lemma \ref{Lemma-Mollifier} we know that \(\sigma_\alpha^{(n)}\) converges uniformly to \(\sigma_\alpha\) on any compact set and consequently, \(\varphi_\alpha^{(n)}\) converges uniformly to \(\varphi_\alpha\). Therefore, as all composed functions are continuous, we can apply Lemma \ref{Lemma-Approximate_function_compositions_sup_unif_cont} for all \(n\in \mathbb{N}\) to obtain that 
            \begin{align}\label{Formular-Diffeos_dense_3}
                \lVert \phi_n -\phi_{n,k} \rVert_{\mathcal{K},\sup}\stackrel{k\to \infty}{\longrightarrow} 0 \;\; \forall n\in \mathbb{N}\;.
            \end{align}Now (\ref{Formular-Diffeos_dense_3}) enables us to obtain an increasing subsequence \((k_n)_{n\in \mathbb{N}}\subset \mathbb{N}\) with \(\lim \limits_{n\to \infty} k_n=\infty\) such that
            \begin{align}\label{Formular-Diffeos_dense_4}
                \lVert \phi_n - \phi_{n,k_n} \rVert_{\mathcal{K},p}\stackrel{n\to \infty} {\longrightarrow} 0 \;.
            \end{align}Therefore, we have found a sequence of LU-decomposable \(\alpha\)-LReLU networks \((\phi_{n,k_n})_{n\in \mathbb{N}}\subset \LU_{\mathcal{F}_\alpha^{\infty}}(d)\overset{(1.)}{\subset} \mathcal{D}^\infty(\mathbb{R}^d,\mathbb{R}^d)\), which consists of smooth diffeomorphisms and fulfills 
            \begin{align*}
                \lVert f-\phi_{n,k_n} \rVert_{\mathcal{K},p}\leq \lVert f-\phi_n \rVert_{\mathcal{K},p}+ \lVert \phi_{n}-\phi_{n,k_n} \rVert_{\mathcal{K},p}\stackrel{n\to \infty}{\longrightarrow} 0 
            \end{align*}because of (\ref{Formular-Diffeos_dense_1}) and (\ref{Formular-Diffeos_dense_4}). 
        \end{proof}

\section{\texorpdfstring{\(w_{\min}\geq \max\{d_x,d_y\}+1\)}{w\_min ≥ max(d\_x,d\_y)+1} for uniform universal approximation of continuous functions with FNNs using continuous monotone activations}
\label{Section-Proof_main5}

        The following two lemmas show some helpful properties with respect to the set \(\mathcal{D}^{0}(\mathbb{R}^d,\mathbb{R}^d)\) of homeomorphisms from \(\mathbb{R}^d\) to \(\mathbb{R}^d\), which are essential tools for the proof of Proposition \ref{Proposition-D0_not_supremum_dense_in_C0}. Recall from Remark \ref{Remark-D0_Inverse_continuous} that by the invariance of domain theorem, every continuous bijection \(\phi:\mathbb{R}^d\to\mathbb{R}^d\) is automatically a homeomorphism, i.e., \(\phi^{-1}\) is also continuous.

    \begin{lemma}\label{Lemma-Unbounded_values_image}
        Let \(\phi\in \mathcal{D}^0(\mathbb{R}^d,\mathbb{R}^d)\), \((x_n)_{n\in\mathbb{N}}\subset \mathbb{R}^d\) be a sequence, and define \((y_n)_{n\in \mathbb{N}}\subset \mathbb{R}^d\) by \(y_n:=\phi(x_n)\) for all \(n\in \mathbb{N}\). Then \((x_n)_{n\in\mathbb{N}}\) is unbounded if and only if \((y_n)_{n\in \mathbb{N}}\) is unbounded.
    \end{lemma}
    \begin{proof}
        By contraposition we can equivalently show that \((x_n)_{n\in\mathbb{N}}\) is bounded if and only if \((y_n)_{n\in\mathbb{N}}\) is bounded. Assume that \((y_n)_{n\in \mathbb{N}}\) is bounded, thus there exists \(r>0\) such that \((y_n)_{n\in \mathbb{N}}\subset \overline{B_r(0)}\). As \(\phi^{-1}\) is continuous and \(\overline{B_r(0)}\) is compact, the preimage \(\phi^{-1}(\overline{B_r(0)})\) is also compact, hence bounded. Since \(\phi^{-1}(\{y_n|\,n\in \mathbb{N}\})=\phi^{-1}(\{\phi(x_n)|\,n\in \mathbb{N}\})=\{x_n|\,n\in \mathbb{N}\} \subset \phi^{-1}(\overline{B_r(0)})\), it follows that \((x_n)_{n\in\mathbb{N}}\) is bounded. The other direction follows analogously by exchanging the roles of \(\phi, \phi^{-1}\) and \(x_n,y_n\), respectively.
    \end{proof}

    \begin{lemma}\label{Lemma-Diffeomorphism_level_set}
        Let \(\phi\in \mathcal{D}^0(\mathbb{R}^d,\mathbb{R}^d)\) and define the level set corresponding to \(y_1\in \mathbb{R}\) and the first component of \(\phi\) by \(\mathcal{L}_{y_1,\phi}:=\{x\in \mathbb{R}^d|\,\phi_1(x)=y_1\}\), where \(\phi_1\) denotes the first component of \(\phi\). Assume that \(\mathcal{L}_{y_1,\phi}\neq \emptyset\) for some \(y_1\in \mathbb{R}\). Then for any \(x_0\in \mathcal{L}_{y_1,\phi}\) there exists a continuous curve \(\gamma:[0,\infty)\rightarrow \mathcal{L}_{y_1,\phi}\) with \(\gamma(0)=x_0\) and \(\lim\limits_{t\to\infty}\lVert \gamma(t) \rVert =\infty\).
    \end{lemma}
    
    \begin{proof}
        By assumption it holds that \(\phi(x_0)=(y_1,\bar{y}_1)\) for some \(\bar{y}_1\in \mathbb{R}^{d-1}\). We define the curve \(\bar{\gamma}:[0,\infty)\rightarrow \mathbb{R}^{d}\) by \(\bar{\gamma}(t)=(y_1,\theta(t))\), where \(\theta:[0,\infty)\rightarrow \mathbb{R}^{d-1}\) is a continuous curve starting at \(\theta(0)=\bar{y}_1\) with \(\lim\limits_{t\to\infty}\lVert \theta(t) \rVert =\infty\), e.g. \(\theta(t):=\bar{y}_1+tw\) with \(w\in \mathbb{R}^{d-1}\setminus \{0\}\). As \(\phi^{-1}\) is continuous, \(\gamma:=\phi^{-1}\circ \bar{\gamma}\) is a continuous curve with \(\gamma(0)=\phi^{-1}(\bar{\gamma}(0))=\phi^{-1}(y_1,\bar{y}_1)=\phi^{-1}(\phi(x_0))=x_0\) and \(\phi(\gamma(t))=\bar{\gamma}(t)=(y_1,\theta(t))\), therefore \(\phi_1(\gamma(t))=y_1\) for all \(t\in[0,\infty)\), which shows \(\gamma:[0,\infty)\rightarrow\mathcal{L}_{y_1,\phi}\). By the choice of \(\theta\) it holds that \(\lim\limits_{t\to\infty}\lVert \bar{\gamma}(t) \rVert =\lim\limits_{t\to\infty}\lVert (y_1,\theta(t))\rVert=\infty\) and Lemma \ref{Lemma-Unbounded_values_image} implies that then also \(\gamma=\phi^{-1}\circ \bar{\gamma}\) must be unbounded, i.e. \(\lim\limits_{t\to\infty}\lVert \gamma(t) \rVert =\infty\).
    \end{proof}
    \bigskip
    The following proposition uses the fact that homeomorphisms do not alter the topology of sets and therefore any level set of any component of a homeomorphism must be unbounded. We use this to show that the set \(\mathcal{D}^0(\mathbb{R}^d,\mathbb{R}^d)\) does not universally approximate the continuous functions with respect to the supremum norm in dimension \(d\geq 2\). Therefore, it is essential to prove the corresponding Lemma \ref{Lemma-Diffeos_not_dense_in_C0}.

    \begin{proposition}\label{Proposition-D0_not_supremum_dense_in_C0}
       Let \(f\in C^0(\mathbb{R}^{d_x},\mathbb{R}^{d_y})\) with \(d_x\geq d_y\) such that there exists \(y_1\in \mathbb{R}\) with a compact level set \(\mathcal{L}_{y_1,f}\neq \emptyset\) of the first component, where \(\mathcal{L}_{y_1,f}:=\{x\in \mathbb{R}^{d_x}|\,f_1(x)=y_1\}\). (For any \(d_x \geq d_y \geq 1\), the proof of Lemma \ref{Lemma-Diffeos_not_dense_in_C0} below constructs such a function explicitly.) Let \(\mathcal{K}\subset \mathbb{R}^{d_x}\) be a compact set that fulfills \(\mathcal{L}_{y_1,f}\subset \interior(\mathcal{K})\) and \( \mathcal{L}_{y_1,f} \cap\partial \mathcal{K}  = \emptyset\). Then \(\epsilon:= \frac{1}{2}\inf_{x\in \partial \mathcal{K}}|f_1(x)-y_1|\) is positive and for all \(\phi \in \mathcal{D}^0(\mathbb{R}^{d_x},\mathbb{R}^{d_x})\) it holds that 
        \begin{align*}
            \lVert f_1 - \phi_1 \rVert_{\mathcal{K},\sup} \geq \epsilon\;.
        \end{align*}
    \end{proposition}
    \begin{proof}
         As \(\mathcal{K}\neq \emptyset\) is compact, \(\partial \mathcal{K}\) is also compact and non-empty. Therefore, the continuous function \(|f_1(x)-y_1|\) takes its minimum on \(\partial \mathcal{K}\) for some \(x^{*}\in \partial \mathcal{K}\). Because of \(\mathcal{L}_{y_1,f} \cap\partial \mathcal{K}  = \emptyset\) it follows that \(f_1(x)\neq y_1\) for all \(x\in\partial\mathcal{K}\), thus 
        \begin{align*}
            \epsilon=\frac{1}{2}\min_{x\in \partial \mathcal{K}}|f_1(x)-y_1|=\frac{1}{2}|f_1(x^{*})-y_1|>0\;.
        \end{align*}
        Now let \(x_0\in \mathcal{L}_{{y_1},f}\neq \emptyset\) and for an arbitrary fixed \(\phi \in \mathcal{D}^0(\mathbb{R}^{d_x},\mathbb{R}^{d_x})\) we set \(z:=\phi(x_0)\) and denote its first component by \(z_1:=\phi_1(x_0)\). Note that \(f_1(x_0) = y_1\) by definition of \(x_0 \in \mathcal{L}_{y_1,f}\). If \(|f_1(x_0) - z_1| = |y_1 - z_1| > \epsilon\) then the claim is true, otherwise \(|y_1 - z_1|\leq \epsilon\) holds and by Lemma \ref{Lemma-Diffeomorphism_level_set} there exists a continuous curve \(\gamma:[0,\infty)\rightarrow \mathcal{L}_{z_1,\phi}\) with \(\gamma(0)=x_0\) such that \(\lim\limits_{t\to \infty} \lVert \gamma(t) \rVert =\infty\), i.e. it holds that \(\phi_1(\gamma(t))=z_1\) for all \(t\in [0,\infty)\). We define \( t^{*}:=\inf\{t>0|\,\gamma(t) \not \in \interior(\mathcal{K})\}\) and proceed to show that \(t^{*}<\infty\) and \(\gamma(t^{*})\in \partial \mathcal{K}\), i.e. that the curve passes through \(\partial \mathcal{K}\). First, we note that \(\gamma(0)=x_0\in \mathcal{L}_{y_1,f}\subset \interior(\mathcal{K})\). Moreover, as \(\lim\limits_{t\to \infty} \lVert \gamma(t) \rVert =\infty\) and as \(\interior(\mathcal{K})\) is bounded as a subset of a compact set, there exists \(t_1>0\) such that \(\gamma(t_1)\not \in \interior(\mathcal{K})\), implying that \(t^{*}\leq t_1<\infty\). Thus there exists a sequence \((t_n)_{n\in\mathbb{N}}\) converging from below to \(t^{*}\) and it holds that \(\gamma(t_n)\in \interior(\mathcal{K})\) for all \(n\in \mathbb{N}\). By the continuity of \(\gamma\), the sequence \((\gamma(t_n))_{n\in \mathbb{N}}\subset \interior(\mathcal{K})\) converges to \(\gamma(t^{*})\not \in \interior(\mathcal{K})\), hence \(\gamma(t^{*})\in \partial \mathcal{K}\). Note that by definition of the curve, \(\phi_1(\gamma(t))=z_1\) for all \(t\in[0,\infty)\). Thus we obtain
        \begin{align*}
             &\lVert f_1 - \phi_1 \rVert_{\mathcal{K},\sup} \geq \sup_{x\in \partial\mathcal{K}}|f_1(x)-\phi_1(x)|
             \overset{\gamma(t^{*})\in \partial\mathcal{K}}{\geq} |f_1(\gamma(t^{*}))-\phi_1(\gamma(t^{*}))|
             \\&\geq |f_1(\gamma(t^{*}))-y_1|- |y_1-\phi_1(\gamma(t^{*}))|
             = |f_1(\gamma(t^{*}))-y_1|- |y_1-z_1| \geq 2\epsilon - \epsilon=\epsilon\;,
        \end{align*}
        which concludes the claim.
    \end{proof}

        \medskip
        Now, we are ready to prove Lemma \ref{Lemma-Diffeos_not_dense_in_C0}, which states that the set of continuous bijections with continuous inverse is not a uniform universal approximator of the continuous functions.

    \medskip
    \begin{proof}[Proof of Lemma \ref{Lemma-Diffeos_not_dense_in_C0}]
        We provide separate proofs for the cases \(d=1\) and \(d\geq 2\). While the proof for \(d\geq 2\) could be adapted to also cover the case \(d=1\), we give an elementary and more intuitive direct proof for the one-dimensional case.
        \\\textbf{(Case: \(d=1\)):} Let \(c>0\), \(\mathcal{K}\subset \mathbb{R}\) with non-empty interior, then there exists \([z-r,z+r]\subset \mathcal{K}\) with \(z\in \mathbb{R},r>0\). Choose \(f(x):=(\frac{\sqrt{2c}}{r}(x-z))^2\) which is in \(f\in C^0(\mathbb{R},\mathbb{R})\) and assume that there exists \(\phi\in \mathcal{D}^0(\mathbb{R},\mathbb{R})\) such that \(\lVert f-\phi \rVert_{\mathcal{K},\sup}<c\). Note that \(\phi\) must be strictly increasing or strictly decreasing as a continuous bijective function in dimension \(d=1\). Furthermore, it must hold that \(|\phi(z\pm r)-f(z\pm r)|=|\phi(z+\pm r)-2|<c\) and additionally \(|\phi(z)-f(z)|=|\phi(z)|<c\). If \(\phi\) is strictly increasing then \(\phi(z-r)<\phi(z)<c\) implies \(|\phi(z-r)-f(z-r)|=2c-\phi(z-r)> c\), which contradicts the assumption. If \(\phi\) is strictly decreasing then \(\phi(z+r)<\phi(z)=|\phi(z)-f(z)|<c\) implies \(|\phi(z+r)-f(z+r)|=2c-\phi(z+r)> c\), which also contradicts the assumption. Therefore it holds that \(\lVert f-\phi \rVert_{\mathcal{K},\sup} \geq \lVert f-\phi \rVert_{[z-r,z+r],\sup} > c\) for all \(\phi \in \mathcal{D}^0(\mathbb{R},\mathbb{R})\).
        
        ~\\~\textbf{(Case \(d\geq 2\)):} Let \(c>0\), \(\mathcal{K}\subset \mathbb{R}^d\) a compact set with non-empty interior, then there exist \(z\in \mathbb{R}^d\), \(r>0\) such that \(\overline{B_r(z)}\subset \mathcal{K}\). Let \(p:\mathbb{R}^d\rightarrow \mathbb{R}\), \(p(x):=(2\pi)^{-\frac{d}{2}} \cdot\exp(-\frac{1}{2}(x-z)^T(x-z))\) be the density of a multivariate standard normal distribution with identity covariance matrix \(I\) and mean \(z\). Then, for \(c_1:=(2\pi)^{-\frac{d}{2}}\) we have \(p(x)=c_1\) if and only if \(x=z\) and \(p(x)<c_1\) for all \(x\in \mathbb{R}^d\setminus\{z\}\) and \(p(z_r)=c_2<c_1\) for all \(z_r\in \partial\overline{B_r(z)}=\{x\in \mathbb{R}^d|\,\lVert z-x\rVert=r\}\) as the density is symmetric around \(z\). Furthermore, define \(f\in C^0(\mathbb{R}^{d},\mathbb{R}^{d})\) by \(f(x)=\left(\frac{2c}{c_1-c_2}p(x),x_2,...,x_d\right)\) and for \(c_3:=\frac{2cc_1}{c_1-c_2}\) it follows that \(\mathcal{L}_{c_3,f}=\{z\}\subset \interior(\overline{B_r(z)})\) and \(\mathcal{L}_{c_3,f}\cap \partial \overline{B_r(z)}=\emptyset\). Therefore, \(f\) fulfills the assumptions of Proposition \ref{Proposition-D0_not_supremum_dense_in_C0} with 
        \begin{align*}
        \epsilon = \frac{1}{2}\inf_{x\in \partial \overline{B_r(z)}}|f_1(x)-c_3| = \frac{1}{2}\left|\frac{2cc_2}{c_1-c_2}-\frac{2cc_1}{c_1-c_2}\right| = c\;,
        \end{align*}
        and thus, for all \(\phi \in \mathcal{D}^0(\mathbb{R}^d,\mathbb{R}^d)\) by Proposition \ref{Proposition-D0_not_supremum_dense_in_C0} it follows that 
        \begin{align*}
        \lVert f-\phi \rVert_{\mathcal{K},\sup} \geq \lVert f_1 - \phi_1 \rVert_{\mathcal{K},\sup} \geq \epsilon = c\;.
        \end{align*}
        Moreover, this shows that \(\mathcal{D}^0(\mathbb{R}^{d},\mathbb{R}^d)\) is not a universal approximator of \(C^0(\mathbb{R}^{d},\mathbb{R}^{d})\) w.r.t. the supremum norm.

    \end{proof}

        The following Lemma \ref{Lemma-Mon_cont_to_bijective} is essential for extending continuous monotone activations to bijections, which is necessary for proving Theorem \ref{Theorem-Main5}.

    \begin{lemma}\label{Lemma-Mon_cont_to_bijective}
        For any \(f\in C^{0}_{\mon}(\mathbb{R},\mathbb{R})\) there exists a sequence \((\varphi_n)_{n\in \mathbb{N}}\subset \mathcal{D}^0(\mathbb{R},\mathbb{R})\) such that for all \(\mathcal{K}\subset \mathbb{R}\) compact it holds that \(\lim \limits_{n\to \infty} \lVert f-\varphi_n \rVert_{\mathcal{K},\sup}=0\).
    \end{lemma}
    
    \begin{proof}
        For \(n\in \mathbb{N}\) we define \(\varphi_n:= f + \frac{x}{n}\) and assume w.l.o.g. that \(f\) is monotonically increasing (otherwise defining \(\varphi_n:= f - \frac{x}{n}\) works analogously), then \(\varphi_n\) is strictly monotonically increasing, continuous and bijective, thus \((\varphi_n)_{n\in \mathbb{N}}\subset \mathcal{D}^0(\mathbb{R},\mathbb{R})\). Moreover, we have
        \begin{align*}
            \lVert f-\varphi_n \rVert_{\mathcal{K},\sup} = \frac{1}{n}\lVert x \rVert_{\mathcal{K},\sup}= \frac{1}{n} (\max\{x|x\in \mathcal{K}\}-\min\{x|x\in \mathcal{K}\})\xlongrightarrow{n\to \infty} 0\;.
        \end{align*}
    \end{proof}
    
    With Proposition \ref{Proposition-D0_not_supremum_dense_in_C0} and Lemma \ref{Lemma-Mon_cont_to_bijective}, we can now prove Theorem \ref{Theorem-Main5}.

    \bigskip
    \begin{proof}[Proof of Theorem \ref{Theorem-Main5}]\label{Proof-Theorem-main5}
    \textbf{(Case \(d_x\geq d_y):\)} Let \(c>0\), \(\mathcal{K}\subset \mathbb{R}^{d_x}\) compact with non-empty interior, then by Lemma \ref{Lemma-Diffeos_not_dense_in_C0} there exists \(g\in C^0(\mathbb{R}^{d_x},\mathbb{R}^{d_x})\) such that for all \(\phi\in \mathcal{D}^0(\mathbb{R}^{d_x},\mathbb{R}^{d_x})\) it holds that \(\lVert g_1- \phi_1\rVert_{\mathcal{K},\sup} \geq c\). Define \(f:\mathbb{R}^{d_x}\rightarrow \mathbb{R}^{d_y}\), \(f(x):=(g_1(x),\hdots,g_{d_y}(x))^T\), where \(g_i\) denotes the \(i\)-th component of \(g\), then \(f\in C^0(\mathbb{R}^{d_x},\mathbb{R}^{d_y})\) and for all \(\phi\in \mathcal{D}^0(\mathbb{R}^{d_x},\mathbb{R}^{d_x})\) it also fulfills 
        \begin{align}\label{Formula-Proof_Theorem_5_1}
            \lVert f_1- \phi_1\rVert_{\mathcal{K},\sup} \geq c \;.
        \end{align}Now, let \(\psi\in \FNN_{\mathcal{A}}^{(d_x)}(d_x,d_y)\) of length \(L\in \mathbb{N}\), then, by filling up the last affine transformation of \(\psi\), which is in \(\Aff(d_{L-1},d_y)\), for some \(d_{L-1}\leq d_x\), with zeros to \(\Aff(d_{L-1},d_x)\) we can obtain \(\bar{\psi}\in \FNN_{\mathcal{A}}^{(d_x)}(d_x,d_x)\). Now, by Lemma \ref{Lemma-FNN_to_LU_compact}, as \(\mathcal{A}\) consists of continuous functions, there exists a sequence \((\Phi_{n})_{n\in \mathbb{N}}\subset \LU_\mathcal{A}(d_x)\) such that \(\lim \limits_{n\to\infty}\lVert \Phi_{n}-\bar{\psi} \rVert_{\mathcal{K},\sup}=0\). Moreover, by Lemma \ref{Lemma-Mon_cont_to_bijective}, for any activation \(\sigma\in \mathcal{A}\) we can find \((\sigma_k)_{k\in \mathbb{N}}\subset \mathcal{D}^0(\mathbb{R},\mathbb{R})\) that converge uniformly to \(\sigma\) on any compact set. Now, let \(\phi_n^{(k)} \in \LU_{\mathcal{D}^0(\mathbb{R},\mathbb{R})}(d_x)\) be the LU-decomposable network that we obtain when exchanging every activation \(\sigma\in \mathcal{A}\) in \(\Phi_n\) by the corresponding sequence element \(\sigma_k\in \mathcal{D}^0(\mathbb{R},\mathbb{R})\) for any \(k,n\in \mathbb{N}\). Then, by Lemma \ref{Lemma-Approximate_function_compositions_sup_unif_cont} it follows that \(\lim \limits_{k\to\infty}\lVert \phi_{n}^{(k)}-\Phi_n \rVert_{\mathcal{K},\sup}=0\). By choosing a suitable subsequence \((k_n)_{n\in \mathbb{N}}\) with \(\lim \limits_{n\to\infty}k_n=\infty\) and setting \(\phi_n:=\phi_n^{(k_n)}\) we can find \((\phi_n)_{n\in \mathbb{N}} \subset \LU_{\mathcal{D}^0(\mathbb{R},\mathbb{R})}(d_x)\subset \mathcal{D}^0(\mathbb{R}^{d_x},\mathbb{R}^{d_x})\) with \(\lim \limits_{n\to\infty}\lVert \phi_n- \Phi_n\rVert_{\mathcal{K},\sup} =0\), where the networks are bijections as they are compositions of bijective LU-decomposable affine transformations and of component-wise applications of bijective activations. We obtain 
        \begin{align*}
            &\lVert (\phi_n)_1 - \psi_1 \rVert_{\mathcal{K},\sup} \leq  \lVert (\phi_n)_1 - (\Phi_n)_1\rVert_{\mathcal{K},\sup} + \lVert (\Phi_n)_1- \psi_1\rVert_{\mathcal{K},\sup} 
            \\&\overset{\psi_1=\bar{\psi}_1}{=} \lVert (\phi_n)_1 - (\Phi_n)_1\rVert_{\mathcal{K},\sup} + \lVert (\Phi_n)_1- \bar{\psi}_1\rVert_{\mathcal{K},\sup} \xrightarrow{n\to\infty} 0\;.
        \end{align*}and by using this we can conclude 
        \begin{align*}
            &0<c \overset{(\ref{Formula-Proof_Theorem_5_1})}{\leq }\lVert (\phi_n)_1- f_1\rVert_{\mathcal{K},\sup} \leq \lVert (\phi_n)_1- \psi_1\rVert_{\mathcal{K},\sup} + \lVert \psi_1- f_1\rVert_{\mathcal{K},\sup}
            \\&\Longrightarrow c \leq \lim \limits_{n\to\infty} \left(\lVert (\phi_n)_1- \psi_1\rVert_{\mathcal{K},\sup}  + \lVert \psi_1- f_1\rVert_{\mathcal{K},\sup}\right)= \lVert \psi_1- f_1\rVert_{\mathcal{K},\sup}\leq \lVert \psi - f\rVert_{\mathcal{K},\sup}\;.
        \end{align*}This implies immediately that \(\FNN_{\mathcal{A}}^{(d_x)}(d_x,d_y)\) cannot be a universal approximator of the class \( C^0(\mathbb{R}^{d_x},\mathbb{R}^{d_y})\) w.r.t. the supremum norm on compact sets and therefore also not globally. Note that what we have shown in this case is even stronger: For any \(c>0\) and any \(\mathcal{K}\subset \mathbb{R}^{d_x}\) with non-empty interior there exists  \( f\in C^0(\mathbb{R}^{d_x},\mathbb{R}^{d_y})\) such that \(\lVert f -\phi \rVert_{\mathcal{K},\sup}\geq c\) for all \(\phi \in \FNN_{\mathcal{A}}^{(d_x)}(d_x,d_y)\).

        \bigskip

        \textbf{(Case \(d_x<d_y\)):} We lead the proof by contradiction. Suppose that \(\FNN_\mathcal{A}^{(d_y)}(d_x,d_y)\) is a universal approximator of \(C^0(\mathbb{R}^{d_x},\mathbb{R}^{d_y})\) on compact sets. Then for any \(f\in C^0(\mathbb{R}^{d_x},\mathbb{R}^{d_y})\) and any compact set \(\mathcal{K}\subset \mathbb{R}^{d_x}\) with non-empty interior, there exists a sequence \((\phi_n)_{n\in \mathbb{N}}\subset \FNN_\mathcal{A}^{(d_y)}(d_x,d_y)\) such that \(\lim\limits_{n\to\infty}\lVert f-\phi_n\rVert_{\mathcal{K},\sup}=0\). 
        
        As \(\mathcal{A}\subset C^0_{\mon}(\mathbb{R},\mathbb{R})\), we can apply Lemma \ref{Lemma-Relation_one_to_one_mon} to find, for each \(n\in\mathbb{N}\), a set \(\mathcal{I}_n\) of continuous injections such that for any \(\sigma\in \mathcal{A}\) there exists \(\sigma_n\in \mathcal{I}_n\) with \(\lim\limits_{n\to\infty}\lVert \sigma_n - \sigma \rVert_{\sup}=0\). Set \(\mathcal{I}:=\bigcup_{n\in\mathbb{N}}\mathcal{I}_n\). Then for any \(\phi_n \in \FNN_\mathcal{A}^{(d_y)}(d_x,d_y)\), by exchanging each of its activations \(\sigma\in\mathcal{A}\) with the corresponding convergent sequence \((\sigma_k)_{k\in \mathbb{N}}\) and approximating its affine transformations by sequences of invertible affine transformations converging uniformly on sufficiently large compact sets, Lemma \ref{Lemma-Approximate_function_compositions_sup_unif_cont} implies that there exists \((\phi_n^{(k)})_{k\in \mathbb{N}}\subset \FNN_\mathcal{I}^{(d_y)}(d_x,d_y)\) for all \(n\in \mathbb{N}\), which are injective and continuous as compositions of such functions, such that \(\lim\limits_{k\to\infty}\lVert \phi_n^{(k)}- \phi_n \rVert_{\mathcal{K},\sup}=0\). By extracting a suitable diagonal subsequence \((k_n)_{n\in \mathbb{N}}\) with \(\lim\limits_{n\to\infty} k_n=\infty\) such that \(\lim \limits_{n\to\infty}\lVert \phi_n -\phi_{n}^{(k_n)}\rVert_{\mathcal{K},\sup}=0\), our assumption implies that also
        \begin{align}\label{Formula-Injective_FNN_lower_bound_1}
            \lim \limits_{n\to\infty}\lVert f -\phi_{n}^{(k_n)}\rVert_{\mathcal{K},\sup} \leq \lim \limits_{n\to\infty}\left(\lVert f -\phi_n\rVert_{\mathcal{K},\sup} + \lVert \phi_n -\phi_{n}^{(k_n)}\rVert_{\mathcal{K},\sup}\right)=0\;.
        \end{align}
        
        Now, we note that the proof of Theorem 3 in \cite{kim2024minimumwidthuniversalapproximation} (in their Section 5) shows that one can construct \(f\in C^0([0,1]^{d_x},[0,1]^{d_y})\) such that 
        \begin{align}\label{Formula-Injective_FNN_lower_bound_2}
            \lVert f-\psi \rVert_{[0,1]^{d_x},\sup}\geq \frac{1}{3}\;,
        \end{align}
        for any continuous injection \(\psi:[0,1]^{d_x}\rightarrow [0,1]^{d_y}\). As \(\mathcal{K}\) has non-empty interior, there exists \(V\in \Aff_{\GL}(d_x)\) with \([0,1]^{d_x}\subset V(\mathcal{K})\). By the Tietze Extension Theorem there exists a continuous extension \(f^{*}:\mathbb{R}^{d_x}\rightarrow \mathbb{R}^{d_y}\) of \(f\). Set \(g^{*}:=f^{*}\circ V:\mathcal{K}\rightarrow \mathbb{R}^{d_y}\), then \(g^{*}\in C^0(\mathbb{R}^{d_x}, \mathbb{R}^{d_y})\). By \eqref{Formula-Injective_FNN_lower_bound_1}, there exists \((\psi_n)_{n\in \mathbb{N}}\subset \FNN_\mathcal{I}^{(d_y)}(d_x,d_y)\) such that \(\lim \limits_{n\to\infty}\lVert g^{*}-\psi_n \rVert_{\mathcal{K},\sup}=0\). Then \(\phi_n:=\psi_n\circ V^{-1}\) is again a continuous injection. For \(y \in [0,1]^{d_x}\), setting \(x = V^{-1}(y) \in \mathcal{K}\), we have
        \begin{align*}
            &\lim \limits_{n\to\infty}\lVert f -\phi_n\rVert_{[0,1]^{d_x},\sup} = \lim \limits_{n\to\infty}\sup_{y\in[0,1]^{d_x}}|f(y)-\phi_n(y)|
            = \lim \limits_{n\to\infty}\sup_{y\in[0,1]^{d_x}}|f(y)-\psi_n(V^{-1}(y))|
            \\&= \lim \limits_{n\to\infty}\sup_{x\in V^{-1}([0,1]^{d_x})}|f(V(x))-\psi_n(x)|
            = \lim \limits_{n\to\infty}\sup_{x\in V^{-1}([0,1]^{d_x})}|f^{*}(V(x))-\psi_n(x)|
            \\&= \lim \limits_{n\to\infty}\sup_{x\in V^{-1}([0,1]^{d_x})}|g^{*}(x)-\psi_n(x)|
            \leq \lim \limits_{n\to\infty}\lVert g^{*} -\psi_n\rVert_{\mathcal{K},\sup}=0\;.
        \end{align*}
        However, this contradicts \eqref{Formula-Injective_FNN_lower_bound_2}, and therefore \(\FNN_\mathcal{A}^{(d_y)}(d_x,d_y)\) cannot be a universal approximator of \(C^0(\mathbb{R}^{d_x},\mathbb{R}^{d_y})\) on compact sets.
    \end{proof}

\section{Conclusion and outlook}\label{Section-Conclusion_and_outlook}

         In this paper, we have established various universal approximation results for $L^p$ integrable functions and uniform approximation of continuous functions on compact sets. In Theorem \ref{Theorem-Main1}, we have derived several activation sets achieving the minimal widths $\max\{2,d_x,d_y\}$ with LReLUs and $\max\{d_x,d_y\}$ with G-LReLUs for $L^p$ approximation, providing alternative proofs to the results of \cite{cai2023achieve}. Theorem \ref{Theorem-Main_sup} has extended these results to S-LReLUs and FLOOR+LReLU networks at width $\max\{2,d_x,d_y\}$, and to SG-LReLUs and FLOOR+G-LReLUs at width $\max\{d_x,d_y\}$. We have further generalized these results to activations (including discontinuous and non-monotone functions) that can be trained in practice with standard gradient descent. For uniform approximation of continuous functions, we have shown that FNNs combining FLOOR, $\Id$, and a squashable activation achieve the minimal width $\max\{3,d_x,d_y\}$. On the other hand, we have proven in Theorem \ref{Theorem-Main5} that monotone and continuous activations cannot achieve uniform universal approximation of continuous functions with minimal width $\max\{d_x,d_y\}$, making it necessary to combine standard activations with discontinuities or non-monotone functions to reach the optimal minimal width.
        \medskip
        
        We have also extended our $L^p$ results to LU-decomposable networks and applied mollification to LReLU activations, showing that smoothed LReLU networks with fixed parameter $\alpha \in (0,1) \cup (1,\infty)$ are universal approximators of $L^p$ integrable functions on compact sets. Since these networks are diffeomorphisms, this directly implies Theorem 2.5(i) of Brenier and Gangbo \cite{Brenier_Gangbo}, demonstrating that neural networks can serve as theoretical tools for obtaining results in approximation theory. Furthermore, our generalization to LU-decomposable networks has established that LU-Nets with LReLU or bijective squashable activations possess the DUAP, i.e., they can approximate any probability distribution in the sense of weak convergence with appropriate depth. Since the class of bijective squashable functions is large, this raises the question of whether LU-Nets or INNs with such activations can achieve strong practical performance beyond their theoretical optimality. While most invertible squashable activations lack closed-form inverses, notable exceptions such as LReLU and its smooth approximations (investigated empirically in Section \ref{Section-Proof_main4}) have explicit inverses and could serve as practical bases for normalizing flows.
        \medskip
        
        Overall, the intensive theoretical study of narrow deep neural networks has yielded substantial insight into minimal width requirements for various activation functions. Squashable activations, ReLU-like variants, and specifically G-LReLUs have been shown to achieve the optimal minimal width for universal approximation of $L^p$ integrable functions in almost all dimensions, and with our results, we have extended this to uniform universal approximation of continuous functions by incorporating suitable discontinuous activations. In this sense, large classes of activations covering those most commonly used in practice \citep{Activation_functions,dubey2022activationfunctionsdeeplearning} are now known to be optimal in terms of minimal width (e.g., every non-affine analytic function is squashable, including non-monotone ones such as Swish). More broadly, however, universal approximation results remain predominantly theoretical in nature, and several open directions warrant further investigation. Whether networks with strictly bounded minimal widths can be made numerically scalable and achieve strong performance in practice remains an open question. Similarly, adapting activations with discontinuities to remain suitable for gradient-based training is challenging, as current constructions rely on FLOOR and STEP functions with high information loss or on trainable step sizes lacking gradient signals. Finally, most universal approximation proofs are either abstract or rely on constructions that, like the coding scheme of \cite{park2020minimum} used here, scale exponentially in depth as accuracy increases. Constructive proofs with more practical depth scaling would be valuable, though given the generality of the target function classes, such exponential scaling may be unavoidable.
        
\vskip 0.2in
\bibliography{JMLR_reviewed}

\end{document}